\newcommand{\mb}{\mathbb}
\newcommand{\mc}{\mathcal}
\DeclareMathOperator{\supp}{supp}
\newcommand{\simplex}{
  \mathchoice
    {\includegraphics[height=1.4ex, trim= 0pt 30pt 0pt 0pt]{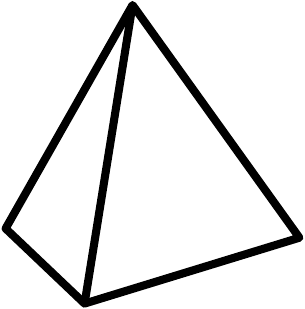}} 
    {\includegraphics[height=1.4ex, trim= 0pt 25pt 0pt 0pt]{simplex.pdf}} 
    {\includegraphics[height=1.1ex, trim=0pt 30pt 0pt 0pt]{simplex.pdf}} 
    {\includegraphics[height=.8ex, trim=0pt 30pt 0pt 0pt]{simplex.pdf}} 
}
\newcommand*\R[0]{\mathbb{R}}
\newcommand \yes {\ding{51}}
\newcommand \no  {\ding{55}}
\begin{document}
\title{On the Limitations and Possibilities of Nash Regret Minimization in Zero-Sum Matrix Games under Noisy Feedback}
\titlerunning{Limitations and Possibilities of Nash Regret Minimization in Matrix Games}
%
\author{Arnab Maiti\orcidID{0000-0002-9142-6255} \and
Kevin Jamieson\orcidID{0000-0003-2054-2985} \and
Lillian J. Ratliff\orcidID{0000-0001-8936-0229}}
\authorrunning{A. Maiti et al.}
%
\institute{University of Washington\\
\email{arnabm2@uw.edu}\\
\email{jamieson@cs.washington.edu}\\
\email{ratliffl@uw.edu}\\
}
\maketitle              
\begin{abstract}
This paper studies a variant of two-player zero-sum matrix games, where, at each timestep, the row player selects row $i$, the column player selects column $j$, and the row player receives a noisy reward with expected value $A_{i,j}$, along with noisy feedback on the input matrix $A$. The row player’s goal is to maximize their total reward against an adversarial column player. Nash regret, defined as the difference between the player’s total reward and the game’s Nash equilibrium value scaled by the time horizon $T$, is often used to evaluate algorithmic performance in zero-sum games.

We begin by studying the limitations of existing algorithms for minimizing Nash regret in zero-sum games. We show that standard algorithms—including Hedge, FTRL, and OMD—as well as the strategy of playing the Nash equilibrium of the empirical matrix—all incur $\Omega(\sqrt{T})$ Nash regret, even when the row player receives noisy feedback on the entire matrix $A$. Furthermore, we show that UCB for matrix games, a natural adaptation of the well-known bandit algorithm, also suffers $\Omega(\sqrt{T})$ Nash regret under bandit feedback. Notably, these lower bounds hold even in the simplest case of $2 \times 2$ matrix games, where the instance-dependent matrix parameters are constant. This highlights a fundamental limitation of existing methods: they fail to leverage favorable problem structure to achieve lower regret.

Motivated by this limitation, we ask whether instance-dependent $\operatorname{polylog}(T)$ Nash regret is achievable against adversarial opponents. We answer this affirmatively. In the full-information setting, we present the first algorithm for general $n \times m$ matrix games that achieves instance-dependent $\operatorname{polylog}(T)$ Nash regret. In the bandit feedback setting, we design an algorithm with similar guarantees for the special case of $2 \times 2$ games—the same regime in which existing algorithms provably suffer $\Omega(\sqrt{T})$ regret despite the simplicity of the instance. Finally, we validate our theoretical results with empirical evidence.

\keywords{Nash Regret
\and
Zero-Sum Matrix Games
\and
Adversarial Learning
\and
Noisy Feedback
\and
Instance-Dependent Regret}
\end{abstract}
\newpage
\section{Introduction}

This paper considers a two-player zero-sum repeated game where the environment, or game matrix, is unknown before the start of play. At the beginning of each round, each player simultaneously chooses from a finite set of actions and then receives their respective rewards, which are equal and opposite, along with stochastic feedback on the environment. For example, consider two firms in a duopoly competing for customers. On any given week, prospective clients submit bids to both firms, and the first firm may offer lower prices while the second firm may offer a more valuable warranty. At the end of the week, both firms know which actions the other took and the state of the market. This game is challenging for two reasons: first, neither firm knows precisely what outcome will occur for any given pair of actions due to uncertainty and stochasticity in the market, and second, they are competing against each other where one firm's gain is the other firm's loss.

When interacting in a stochastic environment, each player is motivated to play the high-reward actions in the environment. However, unlike in a multi-armed bandit game where agents can choose from which arm they receive a reward, in this competitive game, an opponent can actually prevent the agent from receiving a reward from part of the game matrix by simply refusing to play certain actions. For example, in a game of rock-paper-scissors, if the second player never plays "rock," the first player can never receive a reward from the "rock" column of the game matrix. Therefore, instead of a notion of best entry of the game matrix (analogous to the best arm in bandits), the notion of Nash equilibrium is used to assess the performance of each player. For instance, in the same rock-paper-scissors game, the second player not playing "rock" is bad for them, as the first player can play a randomized strategy of $(1/3,2/3)$ over ("rock", "scissors") to get a higher expected reward than they would by playing the equilibrium strategy $(1/3,1/3,1/3)$ over ("rock," "paper," "scissors").

This paper formally studies situations like those illustrated above in a problem setting at the intersection of multi-armed bandits and game theory.
Specifically, we are interested in the limits of an agent acting in a stochastic environment against an adversarial opponent trying to obtain as much reward as possible. 
Similar questions have been posed before and so-called ``no-regret'' algorithms have played a vital role in understanding game dynamics and computing various equilibrium concepts in multi-player games (see \cite{daskalakis2011near,chen2020hedging,cai2011minmax,freund1999adaptive,rakhlin2013optimization,syrgkanis2015fast,weilinear}). 
A key feature of these algorithms in the repeated-play setting is that the players adapt based on the feedback they receive. 
This paper diverges from this work by assuming that the game matrix is unknown and only noisy observations of its entries are available.
Outside of game theory, multi-armed bandits for regret minimization has been widely studied in scenarios where we observe noisy samples of the input (see \cite{auer2002finite,agrawal2017near,kaufmann2012thompson,kaufmann2016complexity}). 
This paper works at the intersection of the repeated-play setting and the noisy observation setting to focus on a regret minimization problem in a two-player zero-sum matrix game, where players observe the elements of the matrix with some additional sub-Gaussian noise. 
Our work is inspired by the a recent work of \citet{o2021matrix} that introduced this setting.


\subsection{Problem Setting}
Let us first establish some notion for  zero-sum matrix games. 
Consider a matrix $A\in [-1,1]^{n\times m}$. 
Let $\simplex_{m}$ denote the $m$-dimensional simplex, and  let $e_i$ denote the usual canonical vector whose $i$-th coordinate is one and rest of the coordinates are zero.
A pair of strategies $(x^*,y^*)\in \simplex_{n}\times\simplex_{m}$ is a Nash equilibrium if the following inequalities hold:
\begin{equation*}
    \langle x,Ay^*\rangle\leq  \langle x^*,Ay^*\rangle \leq  \langle x^*,Ay\rangle\quad \forall (x,y)\in \simplex_{n}\times\simplex_{m}.
\end{equation*}
For the sake of clarity and technical novelty, we focus on scenarios where the Nash equilibrium is unique within the main body. Instances where the equilibrium is non-unique merely represent degenerate cases of the unique scenario. Given space limitations, the details for the non-unique case are presented in the appendix.

When the Nash equilibrium $(x^*,y^*)$ is unique, we have $|\supp(x^*)|=|\supp(y^*)|$ where $\supp(v)=\{i:v_i>0\}$ is the support of a vector $v$. Let the value of the game defined by $A$ be given by
\begin{align*}
    V_A^*:= x^{*\top} A y^* =\max_{x\in\simplex_n}\min_{y\in\simplex_m}\langle x,Ay\rangle = \min_{y\in\simplex_m} \max_{x\in\simplex_n}\langle x,Ay\rangle, 
\end{align*}
where the exchange of the min and max is guaranteed by Von Neumann's minimax theorem.


For any fixed $A \in [-1,1]^{n \times m}$ with a unique Nash eqiulibrium $(x^*,y^*)$, consider the following two-player repeated-game:
In each round $t=1,\dots,T$, the row player chooses a strategy $x_t \in \simplex_{n}$ against an adversarial column player that chooses a strategy $y_t \in \simplex_m$.
Then, a random matrix $\mathbf{A}_t \in [-1,1]^{n \times m}$ is drawn IID where $\mathbb{E}[\mathbf{A}_t] = A$. 
 Next, a row index $i_t$ is sampled from the distribution $x_t$, and a column index $j_t$ is sampled from the distribution $y_t$. 
 At the end of round $t$, the row player observes the column index $j_t$ and receives one of the following forms of feedback:
\begin{itemize}
    \item \textbf{Full information feedback:} The row player observes $\mathbf{A}_t$. 
    \item \textbf{Bandit feedback:} The row player observes only the entry of $\mathbf{A}_t$ corresponding to row and column indices played: $[\mathbf{A}_t]_{i_t,j_t}$.
\end{itemize}

Now under the above setting, we are interested in only controlling the row-player and aim to maximize its expected cumulative reward $\mathbb{E}[ \sum_{t=1}^T [ \mathbf{A}_t ]_{i_t,j_t} ] = \mathbb{E}[ \sum_{t=1}^T x_t^\top A y_t ]$ while playing against a potentially adversarial column-player.
Before the start of the game, the matrix $A$ is unknown to the row-player. 
We make no assumptions about the adversarial column-player, and even allow them to have knowledge of $x_t$ and the matrix $A$. 
After $T$ time periods, we consider two measures of performance for the row-player: the \emph{external regret}, namely
    \begin{align*}
        R^E(A,T) &= \mathbb{E}\left[\max_{x \in \simplex_n} \sum_{t=1}^T \left(x^\top A y_t - x_t^\top A y_t\right)\right],
    \end{align*}
    and the \emph{Nash regret} defined with respect to a mixed strategy Nash equilibrium $(x^*,y^*)$, namely
    \begin{align*}
        R^N(A,T) &= \mathbb{E}\left[\sum_{t=1}^T \left(x^{*\top} A y^* - x_t^\top A y_t\right)\right]\\
        &=\mathbb{E}\left[\sum_{t=1}^T \left(V_A^* - x_t^\top A y_t\right)\right].
    \end{align*}
    By adding and subtracting $x^{*\top} A y_t$ in each term of $R^N(A,T)$ and applying  properties of Nash equilibrium, it follows that $R^N(A,T) \leq R^E(A,T)$.
    Moreover, it is known that there exists a row-player strategy (e.g., Hedge and its bandit variants) such that $\sup_A R^E(A,T) \leq \sqrt{T\cdot\text{poly}(n)}$ against any column-player adversary.
    However, we argue next that external regret does not adequately capture incentives of the two players. 
    \begin{claim}[Informal; see~Appendix \ref{appendix:lower:ext-nash}] \label{claim:main-nash-external}
        Fix $A=[ 0, -1, 1; 1, 0, -1; -1, 1, 0]$ to encode rock-paper-scissors so that $V_A^* = 0$. 
        For any row-player that enjoys $R^E(A,T) \leq c_1T$ against any adversary, there exists an adversarial column-player such that $R^E(A,T) \geq c_2\sqrt{T}$ and $\mathbb{E}[\sum_{t=1}^T \left( x_t^\top A y_t-x^{*\top} A y^* \right)] \geq c_3T$, where  $c_1,c_2,c_3$ are absolute constants.
    \end{claim}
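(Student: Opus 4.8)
The plan is to produce a single oblivious, randomized column strategy that is at once (i) \emph{uniformly exploitable by a constant margin}, so that any row player with sub-linear external regret is forced to accumulate reward growing linearly above the value $V_A^*=0$, and (ii) \emph{statistically indistinguishable from a near-copy of itself at resolution $1/\sqrt T$}, so that the row player nevertheless cannot avoid $\Omega(\sqrt T)$ external regret. The structural fact that makes this possible is that rock--paper--scissors has points on the boundary of the column simplex at which two distinct rows are simultaneously best responses with a large value. Concretely, indexing coordinates by (rock, paper, scissors), take the boundary strategy $\bar y_0=(\tfrac23,0,\tfrac13)$; then $A\bar y_0=(\tfrac13,\tfrac13,-\tfrac23)^\top$, so rock and paper are \emph{both} best responses worth $\tfrac13>0$ while scissors is worth $-\tfrac23$. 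Fix a small absolute constant $c$, set $\delta:=c/\sqrt T$, and let $\bar y_{\pm}:=\bar y_0\pm\delta\,(-1,0,1)$; a one-line computation gives $A\bar y_{+}=(\tfrac13+\delta,\tfrac13-2\delta,-\tfrac23+\delta)^\top$ and $A\bar y_{-}=(\tfrac13-\delta,\tfrac13+2\delta,-\tfrac23-\delta)^\top$, so in ``scenario $+$'' the unique best fixed row is rock (by a margin $3\delta$ over paper), and in ``scenario $-$'' it is paper (by a margin $3\delta$ over rock). The adversary draws $\epsilon\in\{+,-\}$ uniformly before play begins and then plays the fixed strategy $\bar y_\epsilon$ in every round.

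Write $x_t=(a_t,b_t,c_t)$ for the row player's mixed strategy, and let $R^E_\epsilon$ be the external regret conditioned on scenario $\epsilon$, so that $R^E=\tfrac12(R^E_{+}+R^E_{-})$. Since $x_t$ depends only on the history before round $t$ and $y_t=\bar y_\epsilon$ is fixed, $R^E_\epsilon\ge T\max_k(A\bar y_\epsilon)_k-\mathbb{E}_\epsilon[\sum_t x_t^\top A\bar y_\epsilon]$; substituting the vectors above and using $a_t+b_t+c_t=1$, a short calculation reduces this to
\begin{gather*}
R^E_{+}\ \ge\ \mathbb{E}_{+}\Big[\sum_t c_t\Big]+3\delta\,\mathbb{E}_{+}\Big[\sum_t b_t\Big],\\
R^E_{-}\ \ge\ \mathbb{E}_{-}\Big[\sum_t c_t\Big]+3\delta\Big(T-\mathbb{E}_{-}\Big[\sum_t b_t\Big]\Big).
\end{gather*}
Averaging and dropping the nonnegative $c_t$ terms, $R^E\ge\tfrac{3\delta}{2}\big(T-\big(\mathbb{E}_{-}[\sum_t b_t]-\mathbb{E}_{+}[\sum_t b_t]\big)\big)$, so it suffices to show the row player's total paper-mass cannot differ much across the two scenarios. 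For this I would invoke the standard change-of-measure argument: the law of the entire observed history — the row player's own draws $i_t$, the revealed column indices $j_t$, and the feedback (bandit or full) — depends on $\epsilon$ \emph{only} through the distribution of the $j_t$'s, since the noisy matrices $\mathbf{A}_t$ have the same law in both scenarios; hence the KL divergence between the two history laws is at most $T\cdot\mathrm{KL}(\bar y_{+}\|\bar y_{-})=O(T\delta^2)$, and Pinsker gives total-variation distance $O(\sqrt T\,\delta)=O(c)$, which is below $\tfrac12$ once $c$ is small. As $\sum_t b_t\in[0,T]$, the paper-mass difference is then at most $T/2$, yielding $R^E\ge\Omega(\delta T)=\Omega(\sqrt T)$.

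For the second conclusion, note $x_t^\top A\bar y_{\pm}=\tfrac13-c_t+O(\delta)$ for every $x_t$, so the total expected reward is at least $\tfrac T3-\tfrac12\big(\mathbb{E}_{+}[\sum_t c_t]+\mathbb{E}_{-}[\sum_t c_t]\big)-O(\delta T)$. But scissors is strictly dominated in both scenarios, so any mass on it is charged directly against external regret: the displayed bounds give $\mathbb{E}_{+}[\sum_t c_t]+\mathbb{E}_{-}[\sum_t c_t]\le R^E_{+}+R^E_{-}=2R^E\le 2c_1 T$. Hence the total reward is at least $(\tfrac13-c_1)T-O(\sqrt T)$, which exceeds $c_3 T$ for any fixed $c_1<\tfrac13$ and all large $T$, the range of small $T$ being absorbed into the constants; this proves both claimed inequalities for the constructed adversary.

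I expect the delicate point to be the joint calibration of the single scale $\delta\asymp1/\sqrt T$: it must be small enough that the two scenarios stay $1/\sqrt T$-indistinguishable over the \emph{entire} horizon, forcing a constant-fraction misplay of the rock-versus-paper decision and hence $\Omega(\delta T)=\Omega(\sqrt T)$ external regret, yet the $O(\delta T)=O(\sqrt T)$ slack this introduces into the reward estimate must stay negligible against the $\Theta(T)$ ``macro'' advantage supplied by the boundary point $\bar y_0$. A secondary check is that neither the sub-Gaussian observation noise nor (in the full-feedback variant) observing all of $\mathbf{A}_t$ offers the row player any shortcut around the $\Omega(1/\delta^2)$-sample cost of distinguishing $\bar y_{+}$ from $\bar y_{-}$ — which is exactly the point of holding $A$ fixed across the two scenarios, so that all the extra information is scenario-independent.
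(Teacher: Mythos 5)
Your proposal is correct and follows essentially the same route as the paper's Theorem~\ref{external-nash:thm}: two fixed column strategies obtained by perturbing a boundary point of $\simplex_3$ where two rows tie at value $\tfrac13$ by $\pm\Theta(1/\sqrt{T})$, indistinguishability of the two scenarios via a KL chain rule plus Pinsker, and the observation that either strategy hands the row player reward at least $T\max_i\langle e_i,A\tilde y\rangle - R^E \geq \tfrac{T}{3}-c_1T$. The only differences are cosmetic — you use the boundary face $(\tfrac23,0,\tfrac13)$ instead of the paper's $(\tfrac13,\tfrac23,0)$ and track the expected paper-mass $\sum_t b_t$ rather than counting rounds in the sets $S_1,S_2$ — so no further commentary is needed.
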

    
    On the one hand, given that the Hedge algorithm achieves $R^E(A,T) \leq \sqrt{T}$ on this instance, the adversary of the claim appears to be applying maximum pain to the row-player. 
   On the other hand,  we observe that the column-player is suffering \emph{linear} Nash regret. 
    This is the equivalent of two firms competing for market share in a game where, if playing optimally at the Nash equilibrium, both players would split the market 50/50. 
    But instead, the column-player that aims to maximize the external regret of the row-player is ceding market share and splitting the market 75/25 in the row-player's favor!   
    This example shows that for two strategic agents who are self-interested and solely focused on maximizing their own sum of rewards without any cooperation, Nash regret is a far more representative notion of regret. Moreover, the notion of the best fixed strategy in hindsight, used in external regret, is a relative measure, as it does not serve as a fixed baseline when the column player is allowed to be adaptive. This is not the case with Nash regret, as we are comparing the performance of an algorithm against the fixed value of the game $V_A^*$.
    



In addition to our argument for studying Nash regret, we refer the reader to \citet{o2021matrix,tian2021online,jin2022power}, who have likewise adopted Nash regret as their metric of interest. They analyzed algorithms that play the Nash equilibrium of empirical matrices (or variants thereof). While their results demonstrate a Nash regret of $\sqrt{T}$, analogous instance-dependent logarithmic regret, as seen in Stochastic Multi-Armed Bandits, has not been established for matrix games. This naturally leads us to the following question:
\begin{quote}
\emph{Are the current algorithmic designs fundamentally limited to $\sqrt{T}$ Nash regret, and is there an algorithm that can achieve instance-dependent poly-logarithmic Nash regret against an adversary with respect to the time horizon $T$?}
\end{quote}

\subsection{Our Contributions}
\begin{table}[h]
\centering
\begin{tabular}{|l|l|c|c|l|}
\hline
\textbf{Feedback type} & \textbf{Algorithm} & \textbf{polylog($T$) Regret} & \textbf{2×2 Case} & \textbf{Section} \\
\hline
Full Information & Hedge / FTRL / OMD & \no & \no & Sec.~\ref{sec:myopic-online} \\
\hline
Full Information & Emp. Nash Equilibrium & \no & \no & Sec.~\ref{sec:nasheq-failure} \\
\hline
Bandit & UCB (from \cite{o2021matrix}) & \no & \no & Sec.~\ref{sec:ucb-failure} \\
\hline
Full Information & Our Algorithm $(n\times m)$ & \yes & \yes & Sec.~\ref{sec:algo} \\
\hline
Bandit & Our Algorithm $(2\times2)$ & \yes & \yes & Sec.~\ref{appendix:bandit-feedback} \\
\hline
\end{tabular}
\caption{Summary of Nash regret guarantees under full information and bandit feedback in zero-sum matrix games. The table presents which algorithms can—and cannot—achieve polylog($T$) Regret, even in the 2×2 case.}
\label{tab:main-results}
\end{table}

We summarize our main contributions below, addressing the question posed at the end of the previous section:
\begin{itemize}
\item In \Cref{sec:myopic-online}, we show that Hedge, FTRL, OMD, and any other myopic online learning algorithm cannot achieve better than $\sqrt{T}$ Nash regret, and in particular, cannot attain instance-dependent $\operatorname{polylog}(T)$ Nash regret. This limitation holds even in the special case of $2 \times 2$ matrix games under full-information feedback.
\item In \Cref{sec:nasheq-failure}, we show that the algorithm which plays the Nash equilibrium of the empirical matrix in each round also fails to achieve better than \(\sqrt{T}\) Nash regret, and cannot attain instance-dependent \(\operatorname{polylog}(T)\) regret. This again holds even for \(2 \times 2\) matrices under full-information feedback.
\item In \Cref{sec:ucb-failure}, we show that the UCB algorithm proposed by \citet{o2021matrix} for matrix games under bandit feedback cannot outperform \(\sqrt{T}\) Nash regret or achieve instance-dependent \(\operatorname{polylog}(T)\) regret, even in the \(2 \times 2\) case.
\item In \Cref{sec:algo}, we present a novel algorithm for general \(n \times m\) matrix games that achieves instance-dependent \(\operatorname{polylog}(T)\) Nash regret against any adversary under full-information feedback.
\item In \Cref{appendix:bandit-feedback}, we show that for the important special case of $2 \times 2$ matrix games, there exists an algorithm that achieves instance-dependent $\operatorname{polylog}(T)$ Nash regret under bandit feedback.
\end{itemize}

Additionally, in \Cref{sec:experiments-full,sec:bandit-experiments}, we provide empirical evidence supporting the theoretical guarantees above. Finally, in \Cref{sec:instance-dep-external}, we show that no algorithm can achieve instance-dependent $\operatorname{polylog}(T)$ external regret, even in $2 \times 2$ games, thereby highlighting a fundamental separation between external and Nash regret in this setting.


\subsection{Related Work}

\textbf{Multi-Armed Bandits.} Instance-dependent regret minimization and best-arm identification has been studied extensively in stochastic multi-armed bandits. \citet{auer2002finite} showed an instance-dependent upper bound of $O(\sum_{i:\mu_i<\mu^*}\frac{\log T}{\Delta_i})$ for the Upper confidence bound algorithm where $\Delta_i=\mu^*-\mu_i$. \citet{agrawal2017near} and \citet{kaufmann2012thompson} subsequently showed a similar instance-dependent upper bound for the Thompson sampling. \citet{jamieson2014lil} showed an instance-dependent upper bound of $O(\sum_{i:\mu_i<\mu^*}\frac{\log\log(1/\Delta_i^2)}{\Delta_i^2})$ for the best-arm identification (BAI) problem. \citet{kaufmann2016complexity} showed instance-dependent lower bounds for the BAI problem. 


\textbf{Learning in Games.}
\citet{o2021matrix} were the first to study matrix games under the bandit feedback setting, providing a $\sqrt{T}$ cumulative regret upper bound for the UCB algorithm adapted to this setting. There has also been  work on analyzing Nash regret in time-varying zero-sum matrix games (see \cite{cardoso2019competing,zhang2022no}). Recently, \citet{maiti2023instance} initiated a study on instance-dependent sample complexity bounds for computing approximate equilibrium concepts in two-player zero-sum matrix games. 

No-regret learning has also been an area of significant interest in the non-stochastic setting as the average-iterate dynamics converge to a Nash equilibrium. Worst-case external regret bound in repeated play setting is known to be $\Omega(\sqrt{T})$ (see \cite{cesa2006prediction}). However, when the players are not adversarial and instead are competing with each other, there exists strongly uncoupled dynamics that converge at a $O(\frac{\log T}{T})$ rate in two-player zero-sum matrix games (see \cite{daskalakis2011near}). \citet{daskalakis2011near} also achieved a Nash regret of $O\left((\log T)^{3/2}\right)$ when both the players play the same algorithm. 





\textbf{Multi-agent Reinforcement Learning.} 
Recently, several works have focused on minimizing Nash regret in various Multi-agent Reinforcement Learning (MARL) settings. For example, \citet{tian2021online} proposed a \( T^{2/3} \) Nash regret algorithm for two-player zero-sum Markov games where a player cannot observe the actions of the opponent. They also developed a \( T^{1/2} \) Nash regret algorithm for the case where a player can observe the opponent's actions. In other MARL settings, \citet{jin2022power} demonstrated similar \( T^{1/2} \) Nash regret guarantees. Meanwhile, \citet{liu2022learning} established fundamental limits for minimizing external regret in two-player zero-sum Markov games.

\section{Failure of various algorithmic designs and impossibility results}\label{sec:failure}
In this section, we consider matrices $A = \begin{bmatrix} a & b \\ c & d \end{bmatrix}$ such that $\Delta_{\min} := \min\{|a - b|, |a - c|, |d - b|, |d - c|\}$ is a positive constant. These matrices are considered "easy" instances from a sample complexity perspective (see \cite{maiti2023instance}), and we show in later sections that this is also the case for Nash regret minimization. We first examine why common algorithmic approaches fail to achieve polylogarithmic Nash regret even on such easy instances. We also present an impossibility result showing that instance-dependent logarithmic external regret is not achievable in this setting.
\subsection{The failure of myopic online learning algorithms}\label{sec:myopic-online}
Consider playing an myopic online learning algorithm over the $n$ actions in the game as the row player for $T$ rounds, such as Hedge, FTRL, and OMD. By myopic algorithm, we refer to one that updates solely based on the observed reward vector $\mathbf{A}_t e_{j_t}$, without exploiting the structure of the matrix or the information that a specific column $j_t$ was played in round $t$. For the matrix $A = \begin{bmatrix} 
   0.75 & 0.25\\
   0 & 1\\
    \end{bmatrix}$, if the column-player plays $y^{(0)}=(1/2,1/2)^\top$ for all rounds, then from the perspective of the online learning algorithm, this is just a two-armed stochastic multi-armed bandit instance with Bernoulli arms having means $(1/2,1/2)$.
    In this case the algorithm will play each arm an equal number of times in expectation. 
    However, if the column-player alternatively played $y^{(1)}=(\frac{1}{2}+\frac{1}{100\sqrt{T}},\frac{1}{2}-\frac{1}{100\sqrt{T}})$ and the algorithm continued to play both arms equally in expectation, then the algorithm would receive $T/2 - \sqrt{T}/400$ total expected reward, translating to a Nash regret of $\sqrt{T}/400$ since the value of the game $V_A^*=1/2$. 
    Since $T$ rounds are not sufficient to distinguish between $y^{(0)}$ and $y^{(1)}$, we can argue that any myopic online learning algorithm will incur $\sqrt{T}$ Nash regret in expectation.
    However, for this same $A$ matrix, our algorithms in \Cref{sec:algo,appendix:bandit-feedback} have $\log(T)^2$ Nash regret as our matrix-dependent parameters are constants.
    We refer the reader to Appendix \ref{appendix:exp3:lower} for a formal proof that leads to the following result.
    \begin{theorem}\label{thm:main-myopic}
        Under the full-information setting, Hedge, FTRL, OMD or any other myopic online learning algorithm cannot perform better than $\sqrt{T}$ Nash regret and achieve instance-dependent $\text{polylog}(T)$ Nash regret.
    \end{theorem}

\subsection{The failure of action-ignorant algorithms}\label{sec:nasheq-failure}
In the full information setting, we have that the row-player observes both the random reward matrix $\mathbf{A}_t$, and the action $j_t \in [m]$ that the column-player played. However, observing the column-player's action $j_t \in [m]$ is not always available in all settings.
For example, in a duopoly of ride-sharing firms we may observe the outcome of a competitor's actions, but we do not know precisely which actions they took to match riders with drivers on any particular day. 
Unfortunately, we show in the following theorem (stated formally in \Cref{appendix:lower}) that observing the opponent's action is necessary to achieve logarithmic Nash regret under the full information setting.
\begin{theorem}[Informal]\label{thm-impossible-no-action}
    Consider an algorithm that constructs $x_t$ from $\{ \mathbf{A}_s \}_{s < t}$ alone. Then such an algorithm cannot perform better than $\sqrt{T}$ Nash regret and achieve instance-dependent $\text{polylog}(T)$ Nash regret, even for $2\times 2$ games.
\end{theorem}
The main idea is to choose an adversary that always best responds. Since the learner does not observe the adversary’s actions, it behaves similarly on two $2 \times 2$ matrices whose entries differ by $1/\sqrt{T}$, and consequently incurs high regret on one of them. We refer the reader to \Cref{appendix:lower} for the formal proof of the above theorem. The theorem also leads to the following corollary.
\begin{corollary}
    Under the full-information setting, the algorithm that plays the Nash equilibrium of the empirical matrix in each round cannot perform better than $\sqrt{T}$ Nash regret and achieve instance-dependent $\text{polylog}(T)$ Nash regret.
\end{corollary}
\subsection{The failure of the matrix-game UCB algorithm of \citet{o2021matrix}}\label{sec:ucb-failure}
We consider the following version (Algorithm \ref{alg-ucb}) of Upper Confidence Bound (UCB) Algorithm under bandit feedback that was formulated by \citet{o2021matrix}. 
\begin{algorithm}[h!]
\caption{Upper Confidence Bound Algorithm}
\begin{algorithmic}[1]
\STATE For all elements $(i,j)$, $n_{ij}^0\gets 0$ and $\bar A_{ij}\gets 0$
\FOR{round $t=1,2,\ldots,T$}
\STATE For all elements $(i,j)$, $\Delta_{ij}^t\gets \sqrt{\frac{2\log(2T^2nm)}{\max\{1,n_{ij}^{t-1}\}}}$
\STATE For all elements $(i,j)$, $\tilde A_{ij}\gets \bar A_{ij}+\Delta_{ij}^t$
\STATE $x_t\gets \arg\max_{x\in \simplex_n}\min_{y\in \simplex_m}\langle x,\tilde Ay\rangle$
\STATE Let $i_t\sim x_t$ and $j_t\sim y_t$
\STATE Observe the element $(i_t,j_t)$ and update the empirical matrix $\bar A$
\ENDFOR
\end{algorithmic}
\label{alg-ucb}
\end{algorithm}

Consider the matrix $A =\mathrm{diag}(2/3,1/3)$.
Observe that $A$ has a unique Nash equilibrium $(x^*,y^*)=((1/3,2/3),(1/3,2/3))$. We  claim that UCB incurs a $\Omega(\sqrt{T\log T})$ regret on the matrix game  $A$ even in the non-stochastic setting. We first provide a high-level description of how the column-player chooses $y_t$ in each round $t$. There are two key phases: (1) burn-in phase and (2) best-response phase. In the burn-in phase, the column-player chooses strategies $y_t$ that are close to $y^*$ in order to force UCB to explore all the elements of $A$ sufficiently. This phase lasts for roughly $T/2$ rounds. Next, in the best response phase, the column players chooses the best response $y_t=\arg\min_{y\in\simplex_2}\langle x_t,Ay\rangle$.

    We now provide an intuition behind the $\Omega(\sqrt{T\log T})$ regret incurred by UCB. Let $n_{i,j}^t:=\sum_{s=1}^{t}\mathbbm{1}\{i_s=i,j_s=j\}$ for all $i,j$. Suppose the burn-in phase ends at  time step $t_0$ (which is roughly $T/2$). At the end of the burn-in phase, we have $n_{i,j}^{t}\approx x_i^*y_j^*t_0\approx \frac{x_i^*y_j^*T}{2}$. UCB maintains an upper confidence matrix $\tilde A=\begin{bmatrix} 
   2/3+\Delta_{11} & 0+\Delta_{12}\\
   0+\Delta_{21} & 1/3+\Delta_{22}\\
    \end{bmatrix}$ where $\Delta_{ij}\approx \sqrt{\frac{\log T}{n_{i,j}^t}}$ and plays $x_t=\arg\max_{x\in\simplex_2}\min_{y\in \simplex_2}\langle x,\tilde Ay\rangle$ in each round. In the best response phase, if for $T/c$ rounds (where $c$ is some constant) we have $(x_t)_1\leq 1/3-\sqrt{\frac{\log T}{4T}}$, then the regret incurred by the row-player during these rounds is at least $\sqrt{\frac{\log T}{9T}}\cdot T/c=\sqrt{\frac{T\log T}{9c^2}}$. Next we argue this condition holds.
    
    First, at the time step $t_0$ (i.e., the end of the burn-in phase), we have $(x_{t_0})_1\approx \frac{1/3+\Delta_{22}-\Delta_{21}}{1+\Delta_{22}-\Delta_{21}-\Delta_{12}+\Delta_{11}}\approx 1/3-\sqrt{\frac{\log T}{T}}$.   Now observe that $\sqrt{\frac{\log T}{T}}-\sqrt{\frac{\log T}{T+1}}\approx \sqrt{\frac{\log T}{T^3}} $. This implies that for $t\geq t_0$, we have that $(x_{t+1})_1-(x_t)_1\approx \sqrt{\frac{\log T}{T^3}}$. Hence, starting from $t=t_0$ where $(x_{t_0})_1\approx 1/3-\sqrt{\frac{\log T}{T}}$, it takes $T/c$ (where $c$ is a constant) steps for $x_t$ to become $(x_{t_0})_1\approx 1/3-\sqrt{\frac{\log T}{4T}}$. Hence, UCB incurs a regret of $\Omega(\sqrt{T\log T})$. However, for this matrix $A$, our algorithms in \Cref{sec:algo,appendix:bandit-feedback} achieve $\log(T)^2$ Nash regret.
    
    At a very high-level, UCB may be reacting too slowly by moving only $T^{-3/2}$ each round.
    As we will see in a moment, our algorithms in \Cref{sec:algo,appendix:bandit-feedback} move about $T^{-1}$ per round.
    We refer the reader to Appendix \ref{appendix:ucb:lower} for a more formal analysis.
\subsection{Impossibility result for external regret}\label{sec:instance-dep-external}
When introducing the notion of Nash regret, we pointed out that external regret is an upper bound on Nash regret for any game. 
Given our interest in achieving logarithmic Nash regret, a natural question is whether logarithmic external regret is also attainable.
Our next theorem rules this out.
\begin{theorem}\label{external:thm}
Consider a matrix $A=[a,b;c,d]$ with a unique full support Nash equilibrium. For any algorithm, there exists an adversary that plays $y_t$ in each round $t$ such that 
    $R^E(A,T)=\mathbb{E} \big[\max_{x\in\simplex_2} \sum_{t=1}^T (x^\top Ay_t-x_t^\top Ay_t)\big] \geq c_A\sqrt{T}$, where $c_A$ is a matrix dependent parameter.
\end{theorem}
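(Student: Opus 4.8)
The plan is a two-point information-theoretic lower bound that exploits the fact that $Ay^{*}$ is a constant vector whenever the Nash equilibrium is interior, so the column player can perturb $y^{*}$ ``for free'' while still moving the row player's best fixed response from one vertex to the other. Assume without loss of generality that $D:=a-b-c+d>0$; then the interior equilibrium $y^{*}=\tfrac{1}{D}(d-b,\,a-c)$ forces $a>b,\ a>c,\ d>b,\ d>c$, and $Ay^{*}=V_{A}^{*}\mathbf 1$. Fix a parameter $\epsilon>0$, to be chosen of order $1/\sqrt T$ below, and for $\sigma\in\{0,1\}$ let the adversary commit for all $T$ rounds to the fixed strategy $y^{(\sigma)}:=y^{*}+(-1)^{\sigma}\epsilon\,(1,-1)^{\top}$, which is an interior strategy provided $\epsilon\le\min(y_{1}^{*},y_{2}^{*})$. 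Since $A(1,-1)^{\top}=(a-b,\,c-d)^{\top}$ with $a-b>0$ and $d-c>0$, we have $Ay^{(0)}=(V_{A}^{*}+\epsilon(a-b),\ V_{A}^{*}-\epsilon(d-c))^{\top}$, so the unique best fixed response against $y^{(0)}$ is $e_{1}$ and against $y^{(1)}$ it is $e_{2}$.

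Next I would compute the per-round external regret exactly. Since the adversary's strategy is constant, $\max_{x\in\simplex_2}\sum_{t}x^{\top}Ay^{(\sigma)}$ is attained at a single vertex, and a short calculation gives external regret $\epsilon D\sum_{t=1}^{T}(1-(x_{t})_{1})$ in environment $\sigma=0$ and $\epsilon D\sum_{t=1}^{T}(x_{t})_{1}$ in environment $\sigma=1$, the factor $(a-b)+(d-c)$ collapsing to $D$. Writing $N:=\sum_{t=1}^{T}(x_{t})_{1}\in[0,T]$, the two regrets are $\epsilon D\,(T-\mathbb{E}_{0}[N])$ and $\epsilon D\,\mathbb{E}_{1}[N]$, whose sum is $\epsilon D\big(T-(\mathbb{E}_{0}[N]-\mathbb{E}_{1}[N])\big)$; it therefore suffices to show $\mathbb{E}_{0}[N]-\mathbb{E}_{1}[N]\le T/2$. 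Note this already shows that even the fixed minimax play $x_{t}\equiv x^{*}$ incurs regret $\epsilon D\,T\,x_{2}^{*}$ or $\epsilon D\,T\,x_{1}^{*}$, so hedging cannot escape $\Theta(\epsilon D T)$.

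The bound $\mathbb{E}_{0}[N]-\mathbb{E}_{1}[N]\le T/2$ is where indistinguishability enters. As $N$ is a bounded function of the row player's observed trajectory, $|\mathbb{E}_{0}[N]-\mathbb{E}_{1}[N]|\le T\cdot\mathrm{TV}(\mathbb{P}_{0},\mathbb{P}_{1})\le T\sqrt{\tfrac12\,\mathrm{KL}(\mathbb{P}_{0}\Vert\mathbb{P}_{1})}$ by Pinsker, with $\mathbb{P}_{\sigma}$ the law of the whole interaction. The key observation is that the only $\sigma$-dependent observation in round $t$ is $j_{t}\sim\mathrm{Ber}(y_{1}^{(\sigma)})$: the sampled matrix $\mathbf A_{t}$ has a law independent of $\sigma$, and the reward $[\mathbf A_{t}]_{i_{t},j_{t}}$ is a deterministic function of $(\mathbf A_{t},i_{t},j_{t})$, all already observed. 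Hence by the chain rule $\mathrm{KL}(\mathbb{P}_{0}\Vert\mathbb{P}_{1})=T\cdot\mathrm{KL}\big(\mathrm{Ber}(y_{1}^{*}+\epsilon)\,\Vert\,\mathrm{Ber}(y_{1}^{*}-\epsilon)\big)\le 8T\epsilon^{2}/(y_{1}^{*}y_{2}^{*})$, so the choice $\epsilon=\tfrac14\sqrt{y_{1}^{*}y_{2}^{*}/T}$ makes this at most $1/2$ and the trajectory total variation at most $1/2$. Consequently $\max_{\sigma}R^{E}_{\sigma}\ge\tfrac12\sum_{\sigma}R^{E}_{\sigma}\ge\tfrac14\epsilon D T=\tfrac{1}{16}D\sqrt{y_{1}^{*}y_{2}^{*}\,T}$, and the adversary realizes this by selecting the $\sigma$ that is worse for the given algorithm (permitted, since it may know the algorithm, though not its internal randomness).

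Finally, using $D^{2}y_{1}^{*}y_{2}^{*}=(d-b)(a-c)$ the bound becomes $\ge\tfrac{1}{16}\sqrt{(d-b)(a-c)\,T}$; since $d-b,\,a-c\ge\Delta_{\min}$ and $\Delta_{\min}\le D=|a-b-c+d|$, this is $\ge\tfrac{1}{16}\sqrt{\Delta_{\min}^{2}T}\ge\tfrac{1}{16}\sqrt{\Delta_{\min}^{3}T/|a-b-c+d|}$, giving the claim (in fact with the stronger rate $\Omega(\Delta_{\min}\sqrt T)$). The one loose end is simplex feasibility of $y^{(\sigma)}$: one takes $\epsilon=\min\{\tfrac14\sqrt{y_{1}^{*}y_{2}^{*}/T},\,y_{1}^{*},\,y_{2}^{*}\}$, and the last two arguments of the minimum are active only when $y^{*}$ lies within $O(1/T)$ of a vertex, a regime in which $\epsilon D T$ still dominates the claimed quantity. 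I expect the main obstacle to be the conceptual core rather than the algebra: certifying that information about $\sigma$ leaks \emph{only} through the $T$ Bernoulli draws $j_{t}$ (not through the rewards, the observed matrices, or the algorithm's own randomness), so that an $\Theta(1/\sqrt T)$ perturbation is genuinely unidentifiable over the horizon. Once that is in hand, the structural fact that the best fixed response to $y^{(\sigma)}$ is a pure strategy forces $\Theta(\sqrt T)$ external regret no matter how the row player hedges --- in sharp contrast to Nash regret, which in this very construction is exactly $0$ by playing $x^{*}$.
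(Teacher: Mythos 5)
Your proposal is correct and follows essentially the same route as the paper's proof in Appendix~\ref{appendix:lower:external}: two fixed column strategies obtained by perturbing $y^*$ by $\Theta(1/\sqrt{T})$ along $(1,-1)$, a pointwise lemma showing every $x$ suffers $\Omega(\epsilon D)$ per round against one of them, and Pinsker plus the Bernoulli KL bound to show the two environments are indistinguishable over $T$ rounds. Your exact computation of the regret as $\epsilon D\sum_t(1-(x_t)_1)$ versus $\epsilon D\sum_t(x_t)_1$ (in place of the paper's partition of $\simplex_2$ into $S_1\cup S_2$) and your choice $\epsilon\asymp\sqrt{y_1^*y_2^*/T}$ are cosmetic variations that in fact yield the slightly stronger bound $\Omega(\Delta_{\min}\sqrt{T})$, which implies the stated one since $\Delta_{\min}\le|a-b-c+d|$.
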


Interestingly, the above theorem is in the non-stochastic setting meaning that the row-player has knowledge of $A$ exactly and can use it to choose its actions. 
This is because in the proof (see Appendix \ref{appendix:lower:external}), the external regret comes from the unpredictability of the opponent, not the game matrix.
Nevertheless, for the same matrix, we show in the next section that instance-dependent logarithmic Nash regret is indeed achievable against any adversary.

\section{Algorithm for Logarithmic Nash regret under Full Information Feedback in $n\times m$ Matrix Games}\label{sec:algo}




Our algorithm operates in two phases. The first phase, the burn-in phase, focuses on identifying the support of the Nash equilibrium and ensuring that matrix statistics are well-concentrated. The second phase involves running a logarithmic regret subroutine multiple times, doubling the number of timesteps after each execution. Below, we explain the high level ideas for each phase.

\textit{Burn-in Phase}: If row indices within the support of $x^*$ are not played, the incurred regret can be substantial. This regret is determined by the difference $\langle x^*, Ay^* \rangle - \max_{i \notin \text{supp}(x^*)} \langle e_i, Ay^* \rangle = \Delta_x$, where $\Delta_x$ is a positive constant dependent on $A$. Hence, it is crucial to minimize the frequency of playing row indices outside the support. The burn-in phase achieves this by identifying the support.

As only noisy observations of the matrix are available in each round, the entries of the matrix are estimated with an error rate of $\sqrt{\frac{\log T}{t}}$ after $t$ rounds, simultaneously for all $t \in [T]$ with high probability. To ensure accurate support identification and concentration of the matrix statistics, we establish new confidence bounds on $||x^* - x'||_\infty$, where $x'$ is the empirical Nash equilibrium, as well as on the difference between $\Delta_x$ and its empirical estimate. Using these confidence bounds, we establish appropriate stopping conditions to ensure the goals of the burn-in phase are met. Throughout the burn-in phase, we play the Nash equilibrium of the empirical matrix, incurring only a small regret, as the stopping conditions guarantee a quick termination of this phase.

\textit{Second Phase}: The second phase begins after the burn-in phase accurately identifies the support and ensures concentration of the matrix statistics. In this phase, we invoke a subroutine using the standard doubling trick.  

Simple approaches, such as playing the Nash equilibrium of the empirical matrix or applying multiplicative/exponential weights, result in a regret of $\sqrt{T}$, as demonstrated in \Cref{sec:failure}. By leveraging the actions $j_t$ played by the column player and the structure of the matrix game, our subroutine achieves $\text{poly} \log T$ regret. The subroutine starts by playing a strategy $x_t$ close to $x^*$ and adjusts using small descent steps informed by the actions $j_t$ of the column player. This approach ensures low regret: either we move closer to $x^*$ when the adversary's actions aim to induce positive regret, or we secure a reward of at least $V_A^*$ when the adversary plays naively.


In the following sections, we provide a detailed explanation of both phases.
\subsection{Burn-in Phase for $n\times m$ matrix games}\label{sec-main:algo-n*m}
If $(x^*,y^*)$ is the unique Nash equilibrium of $A$, recall that $|\supp(x^*)| = |\supp(y^*)|$. Let $k := |\supp(x^*)|$. The burn-in phase consists of two major steps: $(i)$ identifying the $k \times k$ sub-matrix of $A$ associated with the support of $(x^*,y^*)$, and $(ii)$ confirming that the statistics have been estimated accurately enough to initiate the subroutine used in the second phase. Note that during this phase, we play the Nash equilibrium of the running empirical average matrix $\bar{A}_t:=\frac{1}{t}\sum_{s=1}^t\bfA_s$. This phase lasts at most $\log(T)$ steps, scaled by some problem-dependent constants. 

\textbf{Step 1: Identifying the $k\times k$ sub-matrix.}
In the Stochastic Multi-Armed Bandit problem, one approach to achieving logarithmic regret is to identify the best arm as quickly as possible. If the gap between the means of the best arm and the second-best arm is $\Delta$, then sampling each arm approximately $O\left(\frac{\log(T)}{\Delta^2}\right)$ times suffices to identify the best arm. This sufficiency is typically ensured by constructing confidence intervals for the means of each arm using Hoeffding's inequality. Similarly, in matrix games, we aim to achieve logarithmic regret by identifying the $k \times k$ sub-matrix of $A$ corresponding to the support of $(x^*, y^*)$ as quickly as possible. To this end, we define appropriate gap parameters and establish confidence intervals for them using new concentration inequalities. 

Define $\Delta_{g,1}:=V_A^*-\max_{i\notin \supp(x^*)} e_i^\top A y^*$,  $\Delta_{g,2}:=\min_{j\notin \supp(y^*)} x^{*\top} A e_j -V_A^*$, and $\Delta_g:=\min\{\Delta_{g,1},\Delta_{g,2}\}$.  If $k=n$, then $\Delta_{g,1}:=\infty$, and if $k=m$, then $\Delta_{g,2}:=\infty$. \citet{bohnenblust1950solutions} showed that $\Delta_g>0$. 

Assume that $k < \min\{n,m\}$; the other case can be handled analogously. We now present a method to identify $\supp(x^*)$ and $\supp(y^*)$. For any matrix $M$ and its submatrix $N$ with a unique Nash equilibrium, define  $\Delta_r(M,N) := V_{N}^* - \max_{i\notin \supp(\tilde x)}\sum_{j\in \supp(\tilde y)} M_{i,j} \tilde y_j$
and  $\Delta_c(M,N) := \min_{j\notin \supp(\tilde y)} \sum_{i\in \supp(\tilde x)} M_{i,j} \tilde x_i - V_N^*,$ where $(\tilde x, \tilde y)$ is the Nash equilibrium of $N$.  We now state an important lemma, whose proof is provided in Appendix \ref{appendix:nxm}.  
\begin{lemma}[Informal]\label{lem:main-k*k-check}
    Consider a $k\times k$ submatrix $B$ of $A$. Let $B$ have a unique full support Nash Equilibrium $(x_\star,y_\star)$. If $\Delta_r(A,B)>0$ and $\Delta_c(A,B)>0$, then $(x_\star,y_\star)$ is the unique Nash Equilibrium of $A$.
\end{lemma}

Intuitively, we proceed as follows. Suppose $t$ rounds of the game have elapsed. Let $(\widehat x,\widehat y)$ be the Nash equilibrium of $\bar{A}_t$. Let $\bar{B}_t$ be the submatrix of $\bar{A}_t$ with row indices $\supp(\widehat x)$ and column incides $\supp(\widehat y)$. We now check whether the submatrix $B$ of $A$, defined by the same row and column indices, is the desired $k \times k$ submatrix associated with $(x^*,y^*)$. If $|\supp(\widehat x)| \neq |\supp(\widehat y)|$, we conclude that $B$ is not the required matrix and proceed to play the Nash equilibrium of $\bar{A}_t$ in round $t+1$. If $|\supp(\widehat x)| = |\supp(\widehat y)|$, we can easily verify whether we have sampled sufficiently to ensure that $B$ has a unique full-support Nash equilibrium $(x_\star, y_\star)$. The details of this verification are deferred to Appendix \ref{appendix:nxm}. It remains to check whether $\Delta_r(A, B) > 0$ and $\Delta_c(A, B) > 0$. If these inequalities hold, then by \Cref{lem:main-k*k-check}, we can conclude that $B$ is the required matrix. We now informally state the following lemma, which will help us verify these inequalities.
\begin{lemma}[Informal]\label{lem:main-gap-concentration}
    Let us assume that for all pairs $(i,j)$, $|A_{i,j}-{(\bar A_t)}_{i,j}|\leq \Delta$. Then we have the following:
    \begin{align*}
        |\Delta_r(A,B)-\Delta_r(\bar{A}_t,\bar{B}_t)|\leq \beta_{\widehat k}\cdot \Delta \\
        |\Delta_c(A,B)-\Delta_c(\bar{A}_t,\bar{B}_t)|\leq \beta_{\widehat k}\cdot \Delta 
    \end{align*}
    where $\beta_{\widehat k}$ depends only on $\widehat k:=|\supp(\widehat x)|$.
\end{lemma}

If we observe that $\Delta_r(\bar{A}_t, \bar{B}_t) > \beta_{\widehat k} \cdot \Delta$ and $\Delta_c(\bar{A}_t, \bar{B}_t) > \beta_{\widehat k} \cdot \Delta$, we declare $B$ as the required matrix. It can be shown that identifying the required matrix takes at most $O(\log T)$ steps, scaled by problem-dependent constants. For formal details, we refer the reader to Appendix \ref{appendix:nxm}. 


\textbf{Step 2: Confirm concentration of matrix estimates.}
Assume that we have identified the optimal $k \times k$ submatrix of $A$ corresponding to the support of the Nash equilibrium, denoted as $B$. Let $\bar{A}_t$ and $\bar{B}_t$ be the running empirical averages of $A$ and $B$, respectively. Our goal is to ensure that the empirical estimates of key matrix parameters, including the Nash equilibrium, have sufficiently concentrated, allowing us to proceed to the second phase.

Intuitively, we proceed as follows.
If $(x^*,y^*)$ is the unique Nash equilibrium of $B$ then
$e_1^\top B y^* = \max_{i=1,\dots,k} e_i^\top B y^* = V_B^*$.
Similarly, $B^\top x^* = V_B^* \mathbf{1}$
which implies $x^* \propto (B^\top)^{-1} \mathbf{1}$. Since $x^*$ sums to $1$ we conclude that $x^* = \frac{(B^\top)^{-1}\mathbf{1}}{\mathbf{1}^\top (B^\top)^{-1} \mathbf{1}} = \frac{\text{adj}(B^\top)\mathbf{1}}{\mathbf{1}^\top\text{adj}(B^\top)\mathbf{1}} $ and $y^* = \frac{\text{adj}(B)\mathbf{1}}{\mathbf{1}^\top\text{adj}(B)\mathbf{1}}$. Similarly, if $(\widehat x,\widehat y)$ is the unique Nash equilibrium of $\bar B_t$ then $\widehat x = \frac{\text{adj}(\bar B_t^\top)\mathbf{1}}{\mathbf{1}^\top\text{adj}(\bar B_t^\top)\mathbf{1}} $ and $\widehat y = \frac{\text{adj}(\bar B_t)\mathbf{1}}{\mathbf{1}^\top\text{adj}(\bar B_t)\mathbf{1}}$. 

Define $D=\mathbf{1}^\top\text{adj}(B)\mathbf{1}$ and $\widehat{D} = \mathbf{1}^\top\text{adj}(\bar{B}_t)\mathbf{1}$. 
 If $\max_{i,j}|(\bar{B}_t)_{i,j} - B_{i,j}| \leq \epsilon$, then there exists an $\alpha_k > 0$ that depends only on $k$ such that $\|\text{adj}(\bar{B}_t^\top)\mathbf{1} - \text{adj}(B^\top)\mathbf{1} \|_\infty \leq \alpha_k \epsilon$.
Hence, if $k \alpha_k \epsilon / |D| \leq 1/2$ then
\begin{align*}
    \widehat{x}_i=e_i^\top \widehat{x} &= \frac{|e_i^\top\text{adj}(\bar{B}_t^\top)\mathbf{1}|}{|\mathbf{1}^\top\text{adj}(\bar{B}_t^\top)\mathbf{1}|} 
    \leq \frac{ |e_i^\top \text{adj}(B^\top)\mathbf{1}| + \alpha_k \epsilon}{|\mathbf{1}^\top\text{adj}(B^\top)\mathbf{1}| - k \alpha_k \epsilon}\\ 
    &\leq \frac{ e_i^\top x^* + \alpha_k \epsilon/|D|}{1 - k \alpha_k \epsilon / |D|} 
    \leq x^*_i + 3 k \alpha_k\epsilon / |D|
\end{align*}
Analogously, one can show that if $k \alpha_k \epsilon / |D| \leq 1/2$, then $\widehat{x}_i \geq x^*_i - c \cdot k \alpha_k \epsilon / |D|$, where $c$ is an absolute constant. Since  $\|\text{adj}(\bar{B}_t^\top)\mathbf{1} - \text{adj}(B^\top)\mathbf{1} \|_\infty \leq \alpha_k \epsilon$, it follows that $|\widehat{D} - D| \leq k \alpha_k \epsilon$. We define one of our stopping conditions for the burn-in phase as $\frac{k \alpha_k \epsilon}{|\widehat{D}| - k \alpha_k \epsilon} \leq \frac{1}{2}$. If this condition holds, then $\frac{k \alpha_k \epsilon}{|D|} \leq \frac{k \alpha_k \epsilon}{|\widehat{D}| - k \alpha_k \epsilon} \leq \frac{1}{2}$, which ensures concentration around the Nash equilibrium as shown above. Similarly, we impose additional stopping conditions using the submatrix $B$ to guarantee $\min\limits_{i\in\supp(x^*)}\{x_i^*,1-x_i^*\} \geq \frac{k^2\alpha_k\epsilon}{|D|}$.  

Once all stopping conditions are met, we terminate the burn-in phase and proceed to the second phase. It can be shown that the burn-in phase terminates in $O(\log T)$ steps, scaled by problem-dependent constants. For formal details, we refer the reader to Appendix \ref{appendix:nxn}.


\subsection{Subroutine for $n\times m$ matrix games with full row-support}\label{sec:subroutine}


In the Stochastic Multi-Armed Bandit problem, once the best arm is identified, it can be played for the remaining rounds, resulting in zero regret. In contrast, in matrix games, merely identifying the support of the Nash equilibrium $(x^*, y^*)$ is not sufficient to achieve logarithmic regret. It is also necessary to play strategies $x_t$ that are close to the Nash equilibrium strategy $x^*$. One potential approach is to play the Nash equilibrium of the running empirical matrix at each round. However, as we demonstrate in \Cref{sec:nasheq-failure}, this approach leads to $\sqrt{T}$ regret. The limitation of this approach lies in its static nature, as it fails to account for the actions $j_t$ chosen by the column player. We overcome the $\sqrt{T}$ regret barrier by designing a dynamic subroutine that adapts to the actions $j_t$ and appropriately selects strategies close to the Nash equilibrium strategy $x^*$.

We now describe the subroutine that we plan to invoke repeatedly in the second phase. Assume that the input matrix $A \in [-1,1]^{n \times m}$ has a full row-support Nash equilibrium, i.e., $|\supp(x^*)| = n$. If this is not the case, we can remove the rows not in the support after the burn-in phase and then invoke the subroutine.  The subroutine takes as input a fixed matrix $\widehat{A} \in [-1,1]^{n\times m}$, a vector $x' \in \simplex_n$, parameters $D_1$ and $T_1$, and a time period $T_2$ for its execution. The choice of these parameters when invoking our subroutine is described in the next section. Additionally, it is ensured that the matrix statistics are well-concentrated as follows:
\begin{align}
    &\max_{i,j} | A_{i,j}- \widehat{A}_{i,j} | \leq \tfrac{1}{\sqrt{T_1}},\, \max_{i=1,\dots,n-1}| x'_i-x^*_i | \leq \tfrac{1}{D_1\sqrt{T_1}},\notag\\ 
    &\,\text{ and} \,\min_{i=1,\dots,n} \min\{x_i',1-x_i'\} \geq \tfrac{n-1}{D_1\sqrt{T_1}}. \label{eqn:sub-conditions}
\end{align}



The pseudo-code for the subroutine is given in Algorithm~\ref{subroutine-nxn}. The subroutine runs for $T_2$ rounds.  At a high level, it operates on a sequence of reward vectors $\{ g_t \}_{t=1}^{T_2} \subset [-2,2]^{n-1}$, where $[g_t]_i = \widehat{A}_{i,j_t} - \widehat{A}_{n,j_t}$. The use of $\widehat{A}_{i,j_t} - \widehat{A}_{n,j_t}$ instead of just $\widehat{A}_{i,j_t}$ ensures that $x_t$ remains in the simplex, as only its first $n-1$ components are explicitly updated.  At each time step $t$, the subroutine plays $x_t = x' + \vec{\delta}_t$, where $\vec{\delta}_t(i) \in \left[-\frac{1}{D_1\sqrt{T_1}}, \frac{1}{D_1\sqrt{T_1}}\right]$ for all $i \in [n-1]$ and $\vec{\delta}_t(n) = -\sum_{i\in[n-1]} \vec{\delta}_t(i)$. The update for $\vec{\delta}_t$ is based on $g_t$ with a step size of $\eta = \frac{1}{D_1T_1}$, which is near-optimal for responding to the adversarial column player.  In contrast, myopic online learning algorithms like Hedge respond at a suboptimal rate, and as shown in \Cref{sec:myopic-online}, they necessarily incur a $\sqrt{T}$ regret.

Geometrically, the updates can be visualized as movements within an $(n-1)$-dimensional hypercube centered around the vector obtained by excluding the last component of $x^*$. The algorithm moves in the direction of the reward vector $g_t$ with a step size $\eta$. If an update causes the iterate to leave the hypercube, it is projected back onto it. Note that $\widehat{A}$ remains fixed throughout the execution of the subroutine. These properties play a crucial role in our nontrivial analysis, which we defer to Appendix~\ref{appendix:sub:nxn} due to space constraints. The subroutine satisfies the following guarantee.

\begin{theorem}\label{sub-nxn-thm}
Assume the conditions of \eqref{eqn:sub-conditions} hold for Algorithm \ref{subroutine-nxn}. 
Then for any sequence of actions $j_1,\ldots,j_{T_2}$ we have
    $T_2\cdot V_A^*-\sum\limits_{t=1}^{T_2}\langle x_t, A e_{j_t}\rangle\leq  \frac{c_1n}{D_1}+\frac{c_2nT_2}{D_1T_1}$ where $c_1,c_2$ are some absolute constants.
\end{theorem}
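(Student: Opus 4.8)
The plan is to treat the subroutine as a regret-minimization problem against the sequence $g_t \in [-1,1]^n$ with $[g_t]_i = \widehat A_{i,j_t}$, and to carefully account for the three sources of error: (i) the difference between $A$ and $\widehat A$, (ii) the difference between $x'$ and $x^*$, and (iii) the regret of the online learner relative to the *restricted* comparator class $\{x' + \vec\delta : \|\vec\delta\|_\infty \le 1/(D_1\sqrt{T_1}),\ \sum_i \vec\delta(i)=0\}$ rather than the full simplex. The key insight I would exploit is that $x^*$ is a \emph{saddle point}: for \emph{every} $y$, $\langle x^*, Ay\rangle \ge V_A^*$; and for every $x$, $\langle x, Ay^*\rangle \le V_A^*$. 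The adversary chooses $j_t$ (hence $y_t = e_{j_t}$) possibly with full knowledge of $x_t$, so I cannot use the second inequality directly; the useful one is the first: $\langle x^*, A e_{j_t}\rangle \ge V_A^*$ for every $t$, so $T_2 V_A^* - \sum_t \langle x_t, A e_{j_t}\rangle \le \sum_t \langle x^* - x_t, A e_{j_t}\rangle$. Thus it suffices to bound $\sum_t \langle x^* - x_t, A e_{j_t}\rangle$.

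The main body of the argument then proceeds in four steps. \textbf{Step 1 (swap $A$ for $\widehat A$).} Write $\langle x^* - x_t, A e_{j_t}\rangle = \langle x^* - x_t, \widehat A e_{j_t}\rangle + \langle x^* - x_t, (A - \widehat A) e_{j_t}\rangle$. Since $x_t = x' + \vec\delta_t$ with $\|x' - x^*\|_\infty \le 1/(D_1\sqrt{T_1})$ and $\|\vec\delta_t\|_1 \le 2(n-1)/(D_1\sqrt{T_1})$ (as $|\vec\delta_t(i)| \le 1/(D_1\sqrt{T_1})$ for $i\le n-1$ and $\vec\delta_t(n)$ is their negated sum), we get $\|x^* - x_t\|_1 \le \mathcal{O}(n/(D_1\sqrt{T_1}))$, and with $\|A - \widehat A\|_\infty \le 1/\sqrt{T_1}$ the second term is $\mathcal{O}(n/(D_1 T_1))$ per round, i.e.\ $\mathcal{O}(nT_2/(D_1 T_1))$ over all rounds --- this is the $c_2 n T_2/(D_1 T_1)$ term. \textbf{Step 2 (swap $x^*$ for $x'$).} Replace $\langle x^* - x_t, \widehat A e_{j_t}\rangle$ by $\langle x' - x_t, \widehat A e_{j_t}\rangle + \langle x^* - x', \widehat A e_{j_t}\rangle = -\langle \vec\delta_t, \widehat A e_{j_t}\rangle + \langle x^* - x', \widehat A e_{j_t}\rangle$. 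Here I would use that $(x',y')$ is the Nash equilibrium of $\widehat A$ with value $\widehat V := \langle x', \widehat A y'\rangle$, together with the saddle-point inequality $\langle x', \widehat A e_{j_t}\rangle \ge \widehat V$; but I actually want an \emph{upper} bound on $\langle x^* - x', \widehat A e_{j_t}\rangle$. Since $\|x^*-x'\|_1 \le (n-1)/(D_1\sqrt{T_1})$ and $\widehat A$ has entries in $[-1,1]$, this term is crudely $\mathcal{O}(n/(D_1\sqrt{T_1}))$ per round, which summed over $T_2$ rounds would be $\mathcal{O}(nT_2/(D_1\sqrt{T_1}))$ --- too large. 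So the crude bound is insufficient; I must instead pair this term with the online-learner term and telescope.

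\textbf{Step 3 (online-learner regret, the crux).} The heart of the proof is bounding $-\sum_t \langle \vec\delta_t, \widehat A e_{j_t}\rangle = -\sum_t \langle x_t - x', g_t\rangle$ where $[g_t]_i = \widehat A_{i,j_t}$ (and the $n$-th coordinate is handled by the $\widehat A_{i,j_t} - \widehat A_{n,j_t}$ bookkeeping noted in the text). Because the subroutine runs an online learner (exponential-weights-type) on the \emph{box-constrained} action set around $x'$ of radius $1/(D_1\sqrt{T_1})$, standard mirror-descent/exponential-weights regret bounds give that $\sum_t \langle u - x_t, g_t\rangle \le \mathcal{O}(\text{diam} \cdot \sqrt{T_2})$ for the best fixed $u$ in that box --- but $\sqrt{T_2}$ is far too big. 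The resolution, which is the whole point of the construction, is that the comparator is \emph{not} arbitrary: the relevant competitor is $x^*$ itself (via $\langle x^* - x', g_t\rangle$ from Step 2), and $x^*$ satisfies $\langle x^*, \widehat A e_{j}\rangle \ge \widehat V - \mathcal{O}(1/\sqrt{T_1})$ for \emph{every} $j$ because $A$ and $\widehat A$ are close and $x^*$ is a saddle point for $A$; meanwhile $x'$ (the exact Nash of $\widehat A$) satisfies $\langle x', \widehat A e_j\rangle \ge \widehat V$. So both $x^*$ and $x'$ earn at least $\approx \widehat V$ against \emph{any} column. The online learner, initialized at $x'$ and taking steps of size $\approx 1/T_1$ (this is the ``$T^{-1}$ per round'' remark in the text), moves slowly enough that $x_t$ stays within the box, and its reward $\langle x_t, g_t\rangle$ cannot be much less than $\langle x', g_t\rangle \approx \widehat V$ for more than a logarithmic number of rounds --- any persistent deficit would be corrected by the multiplicative update. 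Formally I would bound the regret of the learner against the single comparator $x'$ (or $x^*$) by $\mathcal{O}(\text{KL-term}/\eta + \eta T_2 \cdot (\text{grad bound}))$ and choose $\eta \approx D_1\sqrt{T_1}/T_1$ matched to the box radius so that the first term is $\mathcal{O}(\log(\text{stuff})/\eta) = \mathcal{O}(n/D_1 \cdot \sqrt{T_1}/\sqrt{T_1}) = \mathcal{O}(n/D_1)$ --- yielding the $c_1 n / D_1$ term --- while verifying the box constraint is never active (so projection does not interfere). The $\eta T_2 \cdot (\text{grad bound})$ term must also come out $\mathcal{O}(nT_2/(D_1 T_1))$, absorbed into $c_2$.

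\textbf{Step 4 (assemble).} Combine: $\langle x^* - x', \widehat A e_{j_t}\rangle - \langle \vec\delta_t, \widehat A e_{j_t}\rangle = \langle x^* - x_t, \widehat A e_{j_t}\rangle$, and the saddle-point bound gives $\langle x^*, \widehat A e_{j_t}\rangle - \langle x_t, \widehat A e_{j_t}\rangle \le \widehat V + \mathcal{O}(1/\sqrt{T_1}) - \langle x_t, \widehat A e_{j_t}\rangle$; summing, the $\widehat V$'s telescope against the learner's near-$\widehat V$ reward guarantee from Step 3, the $\mathcal{O}(1/\sqrt{T_1})$-per-round error contributes $\mathcal{O}(T_2/\sqrt{T_1})$ --- wait, this is again too large, so I must be more careful and route that $1/\sqrt{T_1}$ error back through Step 1's $A$-vs-$\widehat A$ comparison rather than double-counting it. Tracking these error terms so that everything lands in exactly $c_1 n/D_1 + c_2 n T_2/(D_1 T_1)$ and no loose $T_2/\sqrt{T_1}$ survives is the main obstacle: it requires that the ``value per round against any column'' is the same (up to $1/\sqrt{T_1}$) for $x^*$, $x'$, and the learner's iterate, and that the online-learning regret against this universal value baseline is genuinely $\mathcal{O}(n/D_1)$ rather than $\mathcal{O}(\sqrt{T_2})$ --- which hinges on the comparator being a fixed point good against all columns, not an arbitrary simplex vertex.
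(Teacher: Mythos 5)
Your opening decomposition coincides with the paper's: pass from $V_A^*$ to $\langle x^*, Ae_{j_t}\rangle$ (for a full-support equilibrium this is in fact an equality, since $A^\top x^* = V_A^*\mathbf{1}$), swap $A$ for $\widehat A$ at a cost of $O(nT_2/(D_1T_1))$ using $\|x^*-x_t\|_1 = O(n/(D_1\sqrt{T_1}))$ and $\max_{i,j}|A_{i,j}-\widehat A_{i,j}|\le 1/\sqrt{T_1}$, and reduce to bounding $\sum_{t}\langle \theta-\vec\delta_t,\widehat Ae_{j_t}\rangle$ with $\theta=x^*-x'$. From there, however, your Step 3 is not a proof. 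The observations that $x^*$ and $x'$ both earn roughly the value of $\widehat A$ against every column, that the learner "cannot be much less than $\langle x',g_t\rangle$ for more than a logarithmic number of rounds," and that "any persistent deficit would be corrected by the multiplicative update" are intuitions, not arguments --- the update is additive projected gradient, not multiplicative, and you never exhibit the cancellation; you flag the surviving $T_2/\sqrt{T_1}$ leakage yourself in Step 4 and do not resolve it. The paper closes exactly this gap by a deterministic trajectory analysis: it derives a closed-form identity for $\sum_{t=t_k}^{t_{k+1}-1}a_{i,j_t}\delta_t(i)$ on each maximal segment between hits of the boundary of the box of radius $\frac{1}{D_1\sqrt{T_1}}$, classifies segments as round trips, left-to-right crossings, or right-to-left crossings, shows each crossing forces $|\sum_j a_{i,j}n_j^{(k)}|\ge 2\sqrt{T_1}$ so there are at most $T_2/\sqrt{T_1}$ of them and they come in nearly matched $\pm$ pairs, and pairs them so that the $\Theta(1/D_1)$-sized contributions of the $\theta_i$-terms cancel, leaving $O(1/D_1)+O(T_2/(D_1T_1))$ per coordinate.

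Ironically, the standard tool you dismissed would have closed the gap. You reject the online-gradient bound because "standard regret bounds give $O(\mathrm{diam}\cdot\sqrt{T_2})$," but that is the bound for a step size tuned to the horizon. Algorithm \ref{subroutine-nxn} runs projected gradient ascent with the \emph{fixed} step $\eta=1/(D_1T_1)$ on the box $[-r,r]^{n-1}$ with $r=1/(D_1\sqrt{T_1})$, and by \eqref{eqn:sub-conditions} the comparator $u=(\theta_1,\dots,\theta_{n-1})$ lies in that box. The textbook telescoping bound $\sum_t\langle u-\delta_t,g_t\rangle\le \frac{\|u-\delta_1\|_2^2}{2\eta}+\frac{\eta}{2}\sum_t\|g_t\|_2^2$ with $g_t=(a_{1,j_t},\dots,a_{n-1,j_t})$ then evaluates to $\frac{(n-1)(2r)^2}{2\eta}+\frac{\eta}{2}(n-1)T_2=\frac{2(n-1)}{D_1}+\frac{(n-1)T_2}{2D_1T_1}$, which is exactly the claimed bound with no saddle-point or value-baseline argument needed. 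Repairing your Step 3 this way yields a proof considerably shorter than the paper's; the paper's heavier trajectory machinery appears to be built because it is reused in the bandit-feedback analysis, where one must additionally control how often each column is played.
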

\begin{proof}[Proof sketch.]

Through a series of algebraic manipulations we show that
\begin{align*}
    &\quad T_2\cdot V_A^*-\textstyle\sum_{t=1}^{T_2}\langle x_t, A e_{j_t}\rangle\\
    &\leq \textstyle\frac{c_1n}{D_1}+ \textstyle\sum_{t=1}^{T_2} \langle x^* - x' - \vec\delta_t,\widehat Ae_{j_t}\rangle \\
    &= \textstyle\frac{c_1n}{D_1}+ \sum_{i\in[n-1]}\sum_{t=1}^{T_2} \big( -a_{i,j_t}\delta_t(i)+a_{i,j_t}(x_i^* - x_i') \big)
\end{align*}
where $a_{i,j_t} := \widehat A_{i,j_t}-\widehat A_{n,j_t}$. Now we aim to analyse the inner sum above for a fixed $i$.
If we partition the time period as $[T_2] = \cup_{k=1}^\ell [t_k, t_{k+1})$ such that within an interval we have $\delta_{t+1/2}(i) \in [-\frac{1}{D_1\sqrt{T_1}},\frac{1}{D_1\sqrt{T_1}}]$ so that no projection is necessary, then on the interval $[t_k, t_{k+1})=\{t_k,t_k+1,\ldots,t_{k+1}-1\}$ we can derive a closed-form expression for $\sum_{t = t_k}^{t_{k+1}-1} a_{i,j_t} \delta_t(i)$. 
It turns out this sum can be quite large in magnitude. But by exploiting the closed-form expression we can show that such sums can be both very positive and very negative, and these large magnitude pieces cancel out. 
To formalize this, we analyze each interval $[t_k, t_{k+1})$ in one of four cases based on the trajectory of $\delta_t(i)$.

In the first case, we have $\delta_{t_k}(i) = \delta_{t_{k+1}}(i)$. In this case the large positive and negative terms of the sum $\sum_{t=t_k}^{t_{k+1}-1}-a_{i,j_t}\delta_t(i)+a_{i,j_t}(x_i^* - x_i')$ start cancelling out each other and at the end the sum will scale like $\frac{t_{k+1}-t_k}{D_1T_1}$.
In the second case, the trajectory of $\delta_t(i)$ starts from the left-most point of the interval $[-\frac{1}{D_1\sqrt{T_1}},\frac{1}{D_1\sqrt{T_1}}]$ and travels to the right-most point of this interval. The third case is the opposite direction. 
Again, the sum in each case is not small in magnitude, but by pairing one left-to-right trajectory with one right-to-left trajectory, we can simulate a round trip. Similar to the first case, the large negative terms of the sum from one of the trajectory starts cancelling out the large positive terms of the sum from the other trajectory and hence the combined sum of both trajectory will scale like $\frac{t_{k_1+1}-t_{k_1}+t_{k_2+1}-t_{k_2}}{D_1T_1}$.
The final case that ends in the middle at time $T_2$ contributes just constant regret.
We conclude that 
\begin{align*}
    &\quad\textstyle\sum_{t=1}^{T_2} \sum_{i}\big(-a_{i,j_t}\delta_t(i)+a_{i,j_t}(x_{i}^* - x_i') \big)\\
    &\leq \textstyle\sum_{i} \sum_{k=1}^\ell \sum_{t=t_k}^{t_{k+1}-1} \big(-a_{i,j_t}\delta_t(i)+a_{i,j_t}(x_{i}^* - x_i') \big) \\
    &\lesssim  \textstyle\sum_{i\in[n-1]} \sum_{k=1}^\ell \frac{t_{k+1}-t_k}{D_1T_1} \leq \frac{(n-1)T_2}{D_1T_1}.
\end{align*}
\end{proof}
\textbf{Remark 1:}  Our subroutine incurs a logarithmic regret whenever $T_2\lesssim  T_1 \cdot \log(nT)$.


\begin{algorithm}[t]
\caption{Subroutine for $n\times m$ games}
\begin{algorithmic}[1]
\STATE \textbf{Input Parameters:} $x'\in \simplex_n$, $D_1\in \mathbb{R}_{+}$, $T_1\in \mathbb{R}_{+}$, $T_2\in \mathbb{N}$, $\widehat A\in [-1,1]^{n\times m}$
\STATE Set $\eta\gets \frac{1}{D_1T_1}$, $\delta_1\gets (-\frac{1}{D_1\sqrt{T_1}},\ldots,-\frac{1}{D_1\sqrt{T_1}})^\top \in \R^{n-1}$
\FOR{round $t=1,2,\dots,T_2$} 
\STATE Play $x_t\gets x' + \vec\delta_t \quad \text{ where } \quad \vec\delta_t=\left(\delta_t(1),\ldots,\delta_t(n-1),-\sum_{i=1}^{n-1} \delta_t(i)\right)^\top $.
\STATE Observe $y$-player's action $j_t$ and set $\delta_{t+1/2}(i) \gets \delta_t(i) + \eta (\widehat A_{i,j_t}-\widehat A_{n,j_t})$ $\forall i\in[n-1]$
\STATE Set $\delta_{t+1}(i) = \arg\min_{z \in [-\frac{1}{D_1\sqrt{T_1}},\frac{1}{D_1\sqrt{T_1}}]} |\delta_{t+1/2}(i) -z |$ $\forall i\in[n-1]$
\ENDFOR
\end{algorithmic}
\label{subroutine-nxn}
\end{algorithm}

\subsection{Second Phase: Invoking the sub-routine using the standard doubling trick} \label{sec:invoke-subroutine}
Having described our subroutine in the previous section, we now provide an intuitive explanation of how to invoke it after the burn-in phase ends. Recall that $\bar{A}_t$ and $\bar{B}_t$ are the running empirical averages of $A$ and $B$, respectively. Suppose we decide to invoke our subroutine at time step $t_0+1$. We set  $T_1 \approx \frac{t_0}{\log(nT)}, \quad D_1 \approx \frac{|\mathbf{1}^\top\text{adj}(\bar{B}_{t_0})\mathbf{1}|}{k \alpha_k}, x' =\frac{\text{adj}(\bar{B}_{t_0}^\top)\mathbf{1}}{\mathbf{1}^\top\text{adj}(\bar{B}_{t_0}^\top)\mathbf{1}}$, and $ T_2 =\min\{t_0,T-t_0\}$. 
We set $\widehat{A}$ as the $k \times m$ submatrix of $\bar{A}_t$ that contains $\bar{B}_t$. The concentration of our matrix estimates ensures that these parameters satisfy the conditions of \eqref{eqn:sub-conditions}, thereby guaranteeing logarithmic regret after the subroutine runs for $T_2$ rounds.  

Let $t_\star$ be the timestep at which the burn-in phase ends. We invoke our subroutine for the first time at timestep $t_\star+1$, then at timestep $2t_\star+1$, and for the $i$-th invocation at timestep $2^{i-1} \cdot t_\star+1$. The second phase terminates at timestep $T$. Since the subroutine is invoked at most $\log T$ times, we obtain the following regret bound.


\begin{theorem}[Informal]\label{main-theorem-informal}
    Fix a matrix game on $A\in [-1,1]^{n\times m}$ with a unique Nash equilibrium $(x^*,y^*)$ of support size $k \leq \min\{n,m\}$. 
    Then the Nash regret is upper bounded as follows:
    \begin{align*}
        R^N(A,T)&=T\cdot V_A^*-\sum_{t=1}^T\mathbb{E}[\langle x_t,Ay_t\rangle]\leq \frac{(\log T)^2}{\Delta_A}\\
    \end{align*}
    where $\Delta_A$ is some parameter that only depends on $A$
\end{theorem}
Due to space constraints, we defer the formal description of the algorithm, as well as the formal statement and proof of the above theorem, to Appendix \ref{appendix:matrix-feedback}.

\textbf{Remark 2:} $\Delta_A$ includes matrix-dependent parameters (e.g., $\Delta_g$) that naturally arise in sample complexity analyses and have been shown to be necessary in that context (see \cite{maiti2023instance}).

\textbf{Remark 3:} While the regret bound stated in the theorem achieves instance-dependent $\text{polylog}(T)$ scaling, it exhibits exponential dependence on the support size $k$ in the worst case. However, for well-conditioned matrices—which frequently arise in practice—there is potential to improve this dependence. Our experiments in Section~\ref{sec:experiments-full} support this observation: even for games with support sizes up to 100, we do not observe exponential growth in regret. We refer the reader to \Cref{appendix:condition-number-guarantee} for a brief discussion on how this dependence may be mitigated.

\subsection{Experiments for full information feedback}\label{sec:experiments-full}

In this section, we compare the empirical performance of our algorithm under full information feedback against the Nash algorithm and the Hedge algorithm from \citet{freund1997decision}. In the Nash algorithm, the learner plays Nash equilibrium of the empirical matrix in each round. For the experiments, we consider four different input matrices and a single adversary that plays the best-response strategy for all the $T$ steps. For each matrix, we run the three algorithms for seven different time horizons ($T=10^1,10^2,\ldots, 10^7$) and plot the logarithm of total Nash Regret incurred against the logarithm of each time horizon (also referred to as the log-log plot). Code to replicate these experiments is available at \url{https://github.com/zero-sum-matrix-regret/code}.

First in figure \ref{subfig:a}, we consider a $10\times 10$ matrix with $|\supp(x^*)|=|\supp(y^*)|=10$. Next in figure \ref{subfig:a2}, we consider a $20\times 20$ matrix with $|\supp(x^*)|=|\supp(y^*)|=20$. Next in figure \ref{subfig:b}, we consider a $50\times 50$ matrix with $|\supp(x^*)|=|\supp(y^*)|=50$. Finally in figure \ref{subfig:c}, we consider a $100\times 100$ matrix with $|\supp(x^*)|=|\supp(y^*)|=100$.

In all the four log-log plots we observe that our algorithm performs better than the Nash algorithm and the Hedge algorithm as the time horizon grows. Moreover, in all the four log-log plots we observe that our algorithm has a decreasing slope suggesting sub-polynomial regret consistent with our theoretical polylog($T$) regret, whereas the slopes of the Nash algorithm and the Hedge algorithm approach $1/2$, suggesting a $T^{1/2}$ regret. 
\begin{figure}
     \centering
     \begin{subfigure}[h!]{0.35\textwidth}
         \centering
         \includegraphics[width=\textwidth]{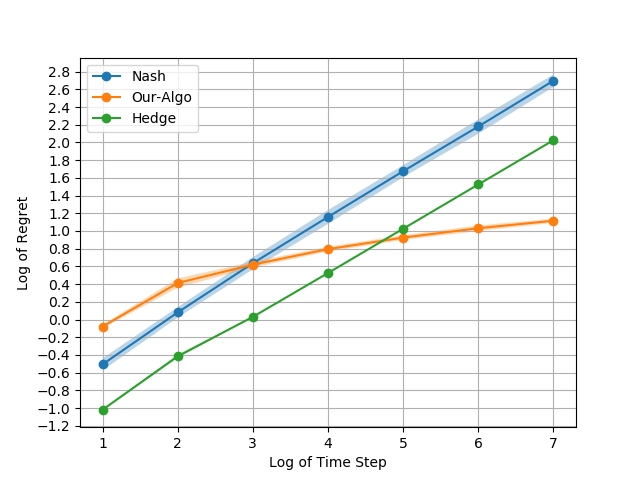}
         \caption{$10\times 10$ matrix}
         \label{subfig:a}
     \end{subfigure}
     \hfill
     \begin{subfigure}[h!]{0.35\textwidth}
         \centering
         \includegraphics[width=\textwidth]{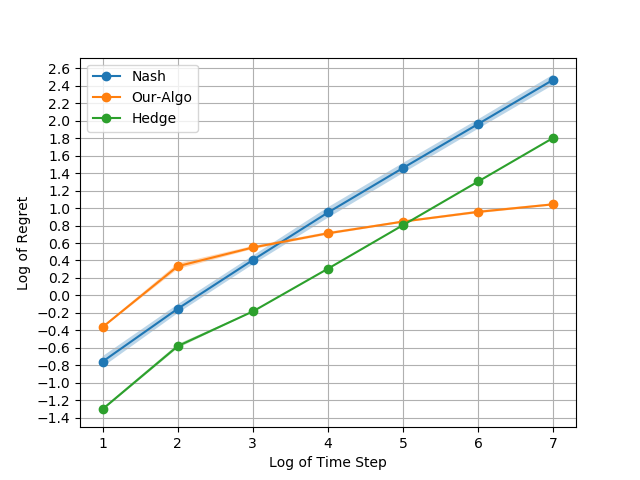}
         \caption{$20\times 20$ matrix}
         \label{subfig:a2}
     \end{subfigure}
     \hfill
     \begin{subfigure}[h!]{0.35\textwidth}
         \centering
         \includegraphics[width=\textwidth]{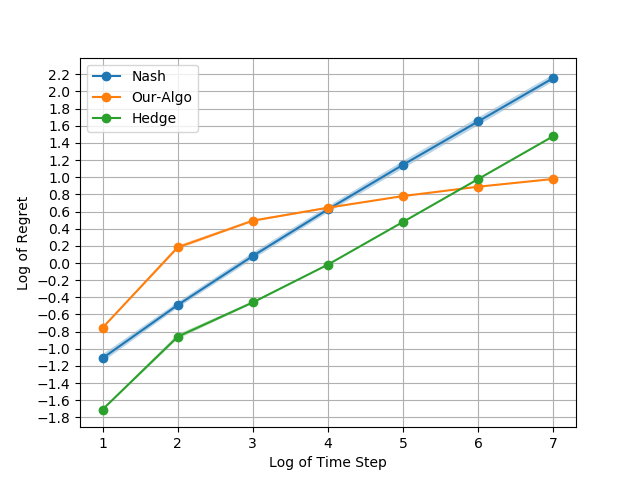}
         \caption{$50\times 50$ matrix}
         \label{subfig:b}
     \end{subfigure}
     \hfill
     \begin{subfigure}[h!]{0.35\textwidth}
         \centering
         \includegraphics[width=\textwidth]{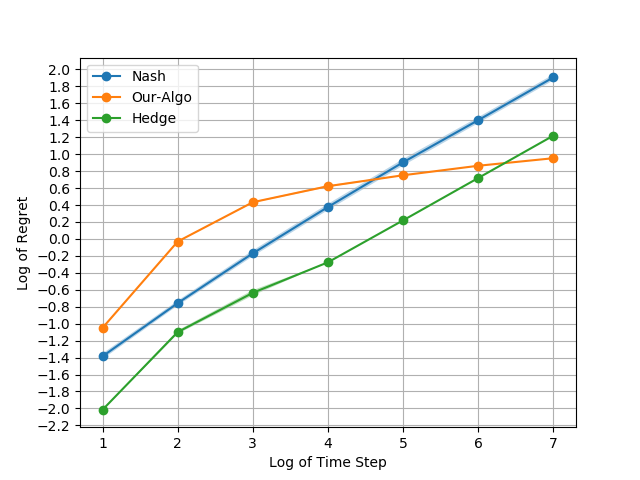}
         \caption{$100\times 100$ matrix}
         \label{subfig:c}
     \end{subfigure}
        \caption{Comparing the performance of the Nash algorithm, the Hedge algorithm and our algorithm with $N=100$ trials. The matrices considered are $n\times n$ diagonal matrices with $i$-th diagonal entry equal to $0.4+\frac{0.2(i-1)}{n-1}$.}
        \label{figs}
\end{figure}



\section{Algorithm for Logarithmic Nash regret under Bandit Feedback in $2\times 2$ Matrix Games}\label{appendix:bandit-feedback}
Recall that in \Cref{sec:failure}, we showed that various algorithmic designs fail to perform better than $\sqrt{T}$ Nash regret even on the class of $2\times 2$ matrices $A = \begin{bmatrix}
a & b\\
c & d\\
\end{bmatrix}$ with $\Delta_{\min}=\min\{|a-b|,|a-c|,|d-c|,|d-b|\}$ being a positive constant. This suggests that even the $2\times 2$ matrix game case is non-trivial. For such $2\times2$ matrices, our algorithm from the previous section incurs a Nash regret of at most $O\left(\frac{(\log T)^2}{\Delta_{\min}}\right)$ as the parameter $|D|=|\mathbf{1}^\top\text{adj}(A)\mathbf{1}|=|a-b-c+d|\geq 2\Delta_{\min}$. Interestingly, if we restrict the feedback to bandit feedback, where the row player receives noisy feedback of the entry $A_{i_t,j_t}$ corresponding to the actions $i_t,j_t$ chosen by the row and column player in round $t$, then we still get a logarithmic Nash regret as shown below.

\begin{theorem}\label{thm:2x2:bandit}
    Fix a game matrix $A\in [-1,1]^{2\times 2}$ with a unique Nash equilibrium.
    There exists an algorithm (see Appendix \ref{appendix:2x2}) that under noisy bandit feedback incurs a Nash regret $R^N(A,T)$ of at most $\frac{c_1\cdot\log(T)}{\Delta_{\min}^3}+\frac{c_2\cdot (\log(T))^2}{\Delta_{\min}^2},$ where $c_1$ and $c_2$ are absolute constants and $\Delta_{\min}=\min_i \min\{|A_{i,1}-A_{i,2}|,|A_{1,i}-A_{2,i}|\}$.
\end{theorem}

We defer the full algorithm for bandit feedback and the proof of Theorem~\ref{thm:2x2:bandit} to Appendix \ref{appendix:2x2}. On a high level, in each round $t$, we play a strategy $x_t$ such that either every element would be explored sufficiently or we incur a negative regret. After the statistics concentrate, we call a subroutine (\Cref{subroutine-nxn-bandit}) similar to \Cref{subroutine-nxn} and end up incurring logarithmic Nash regret.

\textbf{Remark 4:} Although Theorem~\ref{thm:2x2:bandit} applies only to $2 \times 2$ games, its primary purpose is to demonstrate the existence of an algorithm that achieves $\operatorname{polylog}(T)$ Nash regret even under the more restrictive bandit feedback setting. Given the fundamental limitations of existing approaches, this result serves as an important proof of concept, highlighting that instance-dependent regret guarantees are indeed possible under bandit feedback.

\subsection{Experiments for Bandit feedback}\label{sec:bandit-experiments}

In this section, we compare the empirical performance of our algorithm under bandit feedback against the \textsc{UCB} algorithm from \citet{o2021matrix}  and \textsc{Exp3} from \citet{auer2002nonstochastic}. We consider the input matrix $[2/3,0\;;\;0,1/3]$ and three different adversaries for the experiments. For each adversary, we run the three algorithms for eight different time horizons ($T=10^1,10^2,\ldots, 10^7,10^8)$ and plot the logarithm of total Nash Regret incurred against the logarithm of each time horizon (also referred to as the log-log plot). Code to replicate these experiments is available at \url{https://github.com/zero-sum-matrix-regret/code}.

First in figure \ref{subfig:a-bandit}, we consider an adversary that for a given horizon $T$ plays the equilibrium strategy for the first $T/2$ steps and plays the best-response strategy for the next $T/2$ steps. Next in figure \ref{subfig:b-bandit}, we consider an adversary that plays only the best-response strategy for all the $T$ steps. Finally in figure \ref{subfig:c-bandit}, we consider an adversary that for a given horizon $T$ plays either the equilibrium strategy or the best-response strategy depending on the strategy played by the algorithm for the first $T/2$ steps and plays the best-response strategy for the next $T/2$ steps. 

In all three log-log plots we observe that our algorithm performs better than UCB and EXP3 as the time horizon grows. Moreover, in all three log-log plots we observe that our algorithm has a decreasing slope suggesting sub-polynomial regret consistent with polylog($T$), whereas the slopes of UCB and EXP3 approach $1/2$, suggesting a $T^{1/2}$ regret. 

\begin{figure*}
     \centering
     \begin{subfigure}[h!]{0.3\textwidth}
         \centering
         \includegraphics[width=\textwidth]{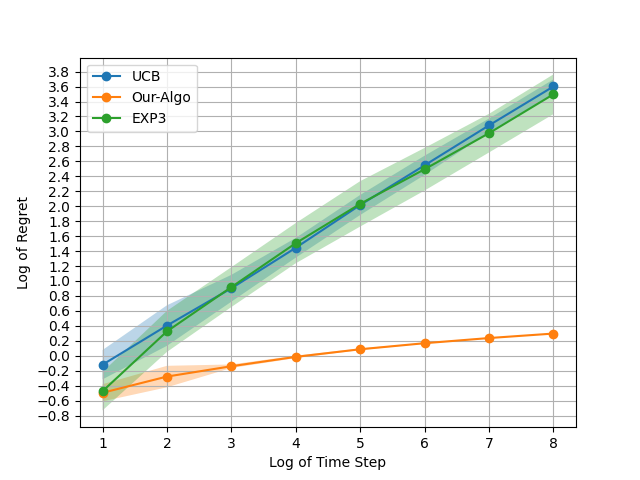}
         \caption{Adversary 1}
         \label{subfig:a-bandit}
     \end{subfigure}
     \hfill
     \begin{subfigure}[h!]{0.3\textwidth}
         \centering
         \includegraphics[width=\textwidth]{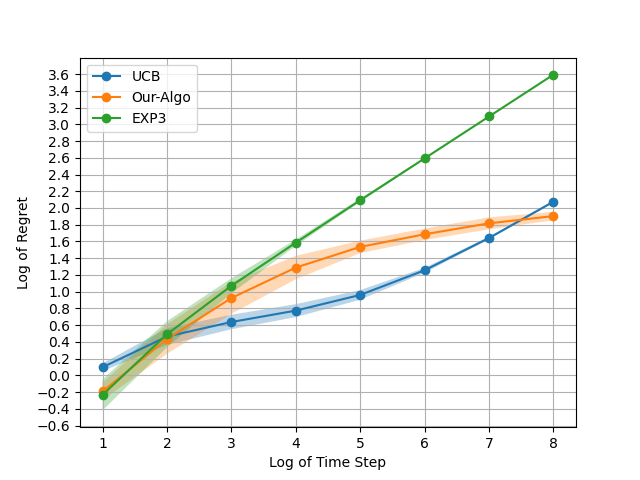}
         \caption{Adversary 2}
         \label{subfig:b-bandit}
     \end{subfigure}
     \hfill
     \begin{subfigure}[h!]{0.3\textwidth}
         \centering
         \includegraphics[width=\textwidth]{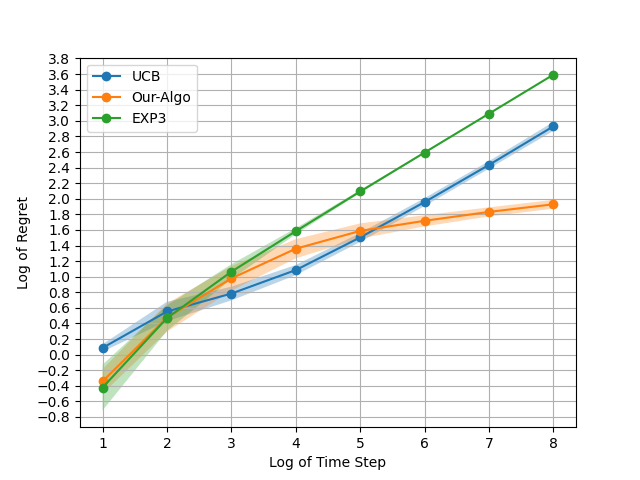}
         \caption{Adversary 3}
         \label{subfig:c-bandit}
     \end{subfigure}
        \caption{Comparing the performance of \textsc{UCB}, \textsc{Exp3} and our algorithm with $N=128$ trials}
        \label{figs}
\end{figure*}

\section{Conclusion \& Future Works}
This paper presents a study of the Nash regret minimization problem in matrix games under noisy feedback. Given that the existing literature only establishes $\sqrt{T}$ regret bounds, our main objective was to understand the fundamental limitations and possibilities of achieving instance-dependent polylog$(T)$ regret. We answer this fundamental question by showing that existing approaches fail to achieve polylogarithmic Nash regret even in the full-information setting, while a new algorithm succeeds in this setting. However, several questions remain open. A natural direction for future work is to establish instance-dependent lower bounds, analogous to the sample complexity lower bounds for zero-sum games in the noisy feedback setting (see \cite{maiti2023instance}). The main challenge lies in constructing hard alternative instances of the input matrix and characterizing the adversarial behavior of the column player.

Another important direction is to improve our upper bounds in the full-information setting, particularly with respect to the support size $k$. Our algorithms estimate the determinant of certain matrices up to a constant factor, which results in exponential dependence on $k$. Improving this dependence requires alternative techniques that do not heavily rely on determinant estimation. We outline one possible approach to address this in \Cref{appendix:condition-number-guarantee}.

The existence of instance-dependent poly-logarithmic regret for arbitrary $n\times m$ matrix games in the bandit feedback setting also remains an unanswered question, and the major challenge is to sufficiently explore all the elements while incurring low regret. We believe that the insights in our paper provide a template for any future work which makes an attempt to address the $n\times m$ case. 

No-regret algorithms like Hedge have an external regret of $\sqrt T$. As the Nash regret is upper bounded by external regret, no-regret algorithms like Hedge also have a Nash Regret of $\sqrt T$. However, we showed that myopic online learning algorithms like Hedge cannot provably achieve instance-dependent polylog($T$) Nash regret. Whether it is possible to have an algorithm with polylog($T$) Nash regret and $\sqrt{T}$ external regret is an interesting open question.





\bibliography{refs.bib}
\bibliographystyle{plainnat}
\newpage
\appendix
\section*{Appendix / supplemental material}

\section{Technical results}
\subsection{Equivalent version of the regret}
Consider a matrix $A\in [0,1]^{n\times m}$. In this section we provide an equivalent version of the regret $T\cdot V_A^*-\mathbb E[\sum\limits_{t=1}^T\langle x_t, A y_t\rangle] = T\cdot V_A^*-\mathbb E[\sum\limits_{t=1}^T\mathbb E[\langle x_t, A y_t \rangle | \mc{F}_{t-1}]]$.
Let $\mathbb{F}:=(\mathcal{F}_t)_{t\in [T]}$ be the filtration of the probability space. Let $j_t$ denote the column played in round $t$. Let $e_i$ denote a vector whose $i$-th coordinate is $1$ and rest of the coordinates are zero .  In this paper we will be dealing with algorithms that decide $x_t$ based on $\mathcal{F}_{t-1}$ only.  For such class of algorithms, we have the following:
\begin{align*}
    \mathbb{E}[\langle x_t, Ae_{j_t} \rangle | \mc{F}_{t-1} ]&=\sum_{i=1}^n\sum_{j=1}^m\mathbb{E}[(x_t)_i\cdot\mathbbm{1}\left\{j_t=j\right\}|\mathcal{F}_{t-1}]\cdot A_{i,j}\\
    &=\sum_{i=1}^n\sum_{j=1}^m(x_t)_i\cdot\mathbb{E}[\mathbbm{1}\left\{j_t=j\right\}|\mathcal{F}_{t-1}]\cdot A_{i,j}\tag{as $x_t$ is fully determined by $\mathcal{F}_{t-1}$}\\
    &=\sum_{i=1}^n\sum_{j=1}^m(x_t)_i\cdot(y_t)_j\cdot A_{i,j}\\
    &=\langle x_t, Ay_t\rangle
\end{align*}
Hence we have 
\begin{align*}
    \mathbb{E}[T\cdot V_A^*-\sum\limits_{t=1}^T\langle x_t, A e_{j_t}\rangle] &= T\cdot V_A^*- \sum\limits_{t=1}^T\mathbb{E}[ \langle x_t, A e_{j_t}\rangle  ] \\
    &= T\cdot V_A^*-\sum\limits_{t=1}^T  \mathbb{E}[ \mathbb{E}[ \langle x_t, A e_{j_t} \rangle | \mc{F}_{t-1} ] ] \\
    &= T\cdot V_A^*-\sum\limits_{t=1}^T  \mathbb{E}[ \langle x_t, A y_t\rangle  ].
\end{align*}
Now if under an event $G$,  we have $T\cdot V_A^*-\sum\limits_{t=1}^T\langle x_t, A e_{j_t}\rangle\leq f(n,m,T)$ for any sequence of columns $j_1,j_2,\ldots, j_T$ played by the column player, then we have the following:
\begin{align}
    \mathbb{E}[T\cdot V_A^*-\sum\limits_{t=1}^T\langle x_t, A y_t\rangle] 
 &= \mathbb{E}[T\cdot V_A^*-\sum\limits_{t=1}^T\langle x_t, A e_{j_t}\rangle] \nonumber\\
    &=\mathbb{E}[T\cdot V_A^*-\sum\limits_{t=1}^T\langle x_t, A e_{j_t}\rangle|G]\cdot \mathbb P[G]+\mathbb{E}[T\cdot V_A^*-\sum\limits_{t=1}^T\langle x_t, A e_{j_t}\rangle|\bar G]\cdot \mathbb P[\bar G]\nonumber\\
    &\leq f(m,n,T)+T\cdot \mb{P}[\bar G]. \label{equivalent:regret}
\end{align}
Thus, it suffices to show that $f(m,n,T)$ is low and the good event $G$ holds with high probability.
\subsection{Technical Lemmas}\label{sec:technical_lemmas}
\begin{lemma}[sub-Gaussian tail bound]
Let $X_1,X_2,\ldots,X_n$ be i.i.d samples  from a $1$-sub-Gaussian distribution with mean $\mu$. Then we have the following:
\begin{equation*}
    \mathbb{P}\left[\left|\frac{1}{n}\cdot\sum_{i=1}^nX_i-\mu\right|\geq \sqrt{\frac{2\log(2/\delta)}{n}}\right]\leq \delta
\end{equation*}
\end{lemma}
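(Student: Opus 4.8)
The plan is to apply the standard Chernoff-bound (exponential-moment) argument. First I would reduce to the mean-zero case by replacing each $X_i$ with $X_i-\mu$, which is again $1$-sub-Gaussian; it then suffices to bound $\mathbb{P}[|S_n|\geq t]$, where $S_n:=\sum_{i=1}^n(X_i-\mu)$ and $t:=\sqrt{2n\log(2/\delta)}$, since $t/n=\sqrt{2\log(2/\delta)/n}$ and the event $\{|S_n|\geq t\}$ coincides with $\{|\tfrac1n\sum_i X_i-\mu|\geq\sqrt{2\log(2/\delta)/n}\}$.

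Next I would invoke the defining property of a $1$-sub-Gaussian random variable, namely $\mathbb{E}[e^{\lambda(X_i-\mu)}]\leq e^{\lambda^2/2}$ for every $\lambda\in\mathbb{R}$. By independence of the $X_i$ the moment generating function of the sum factorizes, so $\mathbb{E}[e^{\lambda S_n}]=\prod_{i=1}^n\mathbb{E}[e^{\lambda(X_i-\mu)}]\leq e^{n\lambda^2/2}$; that is, $S_n$ is $\sqrt{n}$-sub-Gaussian.

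Then, for the upper tail, I would apply Markov's inequality to $e^{\lambda S_n}$ with $\lambda>0$: $\mathbb{P}[S_n\geq t]\leq e^{-\lambda t}\,\mathbb{E}[e^{\lambda S_n}]\leq e^{-\lambda t+n\lambda^2/2}$, and optimize the right-hand side by choosing $\lambda=t/n$, which gives $\mathbb{P}[S_n\geq t]\leq e^{-t^2/(2n)}$. Running the same argument on $-S_n$ (equivalently, on the variables $-(X_i-\mu)$, which are also $1$-sub-Gaussian) yields $\mathbb{P}[S_n\leq -t]\leq e^{-t^2/(2n)}$, and a union bound combines these into $\mathbb{P}[|S_n|\geq t]\leq 2e^{-t^2/(2n)}$.

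Finally I would substitute $t=\sqrt{2n\log(2/\delta)}$, so that $t^2/(2n)=\log(2/\delta)$ and the bound becomes $2e^{-\log(2/\delta)}=\delta$, which is exactly the claimed inequality after rescaling by $1/n$. Honestly there is no genuine obstacle here, since this is the textbook Chernoff bound; the only points needing a little care are verifying that the sub-Gaussian parameter of the sum scales as $\sqrt{n}$ (so the optimal $\lambda$ is $t/n$ rather than something else) and checking that the final algebra lines the exponent up precisely with $\log(2/\delta)$.
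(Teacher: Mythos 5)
Your proof is correct and is the standard Chernoff/MGF argument; the paper states this lemma without proof, treating it as a known fact, and your argument is exactly the textbook derivation one would cite. The optimization $\lambda = t/n$, the two-sided union bound, and the final substitution $t=\sqrt{2n\log(2/\delta)}$ all check out.
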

\begin{lemma}[Chernoff Bound]
    Consider $n$ i.i.d bernoulli random variables $X_1,\ldots,X_n$ such that $\mathbb{E}[X_i]=\mu$. Let $\bar X:=\frac{1}{n}\sum_{i=1}^nX_i$. Then for any $0<\delta\leq 1$, we have the following: 
    \begin{align*}
        \mathbb{P}[\bar X\leq (1-\delta)\mu]\leq \exp\left(-\frac{n\delta^2\mu}{2}\right)\\
        \mathbb{P}[\bar X\geq (1+\delta)\mu]\leq \exp\left(-\frac{n\delta^2\mu}{3}\right)
    \end{align*} 
\end{lemma}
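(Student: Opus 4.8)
The plan is to prove the two multiplicative tail bounds separately by the standard Chernoff (exponential-moment) method: apply Markov's inequality to $e^{\pm\lambda S}$ where $S:=\sum_{i=1}^n X_i$, use independence to factorize the moment generating function, bound the per-coordinate Bernoulli MGF via $1+x\le e^x$, and finally optimize over $\lambda>0$; two short one-variable calculus estimates then put the exponents into the stated clean form. The argument uses nothing beyond independence and the explicit form of the Bernoulli MGF.

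For the upper tail, note that $\bar X\ge(1+\delta)\mu$ iff $S\ge(1+\delta)n\mu$. Fix $\lambda>0$; by Markov applied to $e^{\lambda S}$ and independence,
\[
\mathbb{P}[\bar X\ge(1+\delta)\mu]\le e^{-\lambda(1+\delta)n\mu}\prod_{i=1}^n\mathbb{E}\big[e^{\lambda X_i}\big]\le e^{-\lambda(1+\delta)n\mu}\exp\big(n\mu(e^\lambda-1)\big),
\]
where the last step uses $\mathbb{E}[e^{\lambda X_i}]=1+\mu(e^\lambda-1)\le\exp(\mu(e^\lambda-1))$. Choosing $\lambda=\ln(1+\delta)>0$ reduces this to $\exp\big(n\mu(\delta-(1+\delta)\ln(1+\delta))\big)$. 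It then suffices to establish $(1+\delta)\ln(1+\delta)\ge\delta+\tfrac{\delta^2}{3}$ for $\delta\in[0,1]$: setting $f(\delta):=(1+\delta)\ln(1+\delta)-\delta-\tfrac{\delta^2}{3}$, one has $f(0)=0$, $f'(\delta)=\ln(1+\delta)-\tfrac{2\delta}{3}$ with $f'(0)=0$, and $f'\ge0$ on $[0,1]$ because $f''=\tfrac{1}{1+\delta}-\tfrac23$ is positive on $[0,\tfrac12)$ and negative on $(\tfrac12,1]$, so $f'$ increases from $0$ and then decreases to the positive value $f'(1)=\ln2-\tfrac23>0$, hence stays nonnegative. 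Thus $\mathbb{P}[\bar X\ge(1+\delta)\mu]\le\exp(-n\delta^2\mu/3)$.

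The lower tail is symmetric. For $\lambda>0$, Markov applied to $e^{-\lambda S}$ together with $\mathbb{E}[e^{-\lambda X_i}]=1+\mu(e^{-\lambda}-1)\le\exp(\mu(e^{-\lambda}-1))$ gives $\mathbb{P}[\bar X\le(1-\delta)\mu]\le\exp\big(n\mu(e^{-\lambda}-1)+\lambda(1-\delta)n\mu\big)$; taking $\lambda=-\ln(1-\delta)>0$ reduces this to $\exp\big(n\mu(-\delta-(1-\delta)\ln(1-\delta))\big)$. Here the required estimate is $(1-\delta)\ln(1-\delta)\ge-\delta+\tfrac{\delta^2}{2}$ on $[0,1)$, equivalently $g(\delta):=(1-\delta)\ln(1-\delta)+\delta-\tfrac{\delta^2}{2}\ge0$; since $g(0)=0$ and $g'(\delta)=-\ln(1-\delta)-\delta\ge0$ (using $-\ln(1-\delta)=\sum_{k\ge1}\delta^k/k\ge\delta$), the inequality holds, giving $\mathbb{P}[\bar X\le(1-\delta)\mu]\le\exp(-n\delta^2\mu/2)$.

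The only places requiring any care are the two elementary inequalities for $(1+\delta)\ln(1+\delta)$ and $(1-\delta)\ln(1-\delta)$ used to simplify the exponents; I expect these short derivative checks to be the main obstacle, though they are entirely standard, and the remainder is the routine Chernoff recipe.
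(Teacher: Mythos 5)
Your proof is correct and complete; the paper itself states this Chernoff bound without proof, treating it as a standard fact, so there is no in-paper argument to compare against. Your derivation is the classical exponential-moment method with the optimal choice $\lambda=\ln(1+\delta)$ (resp.\ $\lambda=-\ln(1-\delta)$), and the two calculus inequalities $(1+\delta)\ln(1+\delta)\ge\delta+\delta^2/3$ and $(1-\delta)\ln(1-\delta)\ge-\delta+\delta^2/2$ are verified correctly (the second-derivative sign analysis for the upper tail and the series bound $-\ln(1-\delta)\ge\delta$ for the lower tail both check out). The only cosmetic loose end is the endpoint $\delta=1$ in the lower tail, where your substitution $\lambda=-\ln(1-\delta)$ degenerates; this is handled either by taking the limit $\delta\to1^-$ (the exponent tends to $-n\mu\le-n\mu/2$) or by noting directly that $\mathbb{P}[\bar X\le 0]=(1-\mu)^n\le e^{-n\mu}$.
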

\begin{proposition}\label{prop1}
    Consider $\varepsilon>0$, $\varepsilon< c_0\leq \frac{1}{2}\leq d_0$ such that  $\frac{\varepsilon}{c_0}\leq\frac{1}{2}$.  Then we have the following:
    \begin{equation*}
        c_0\ln\left(\frac{c_0-\varepsilon}{c_0+\varepsilon}\right)+d_0\ln\left(\frac{d_0+\varepsilon}{d_0-\varepsilon}\right)\leq \frac{8\varepsilon^2}{c_0}
    \end{equation*}
\end{proposition}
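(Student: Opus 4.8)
The plan is to view the left-hand side as the value at $\varepsilon$ of a one-variable function that vanishes at $0$ and is non-increasing; this immediately gives the (much stronger than stated) conclusion that the left-hand side is $\le 0$. Concretely, with $c_0,d_0$ fixed, set
\[
g(t) := c_0\ln\!\left(\frac{c_0 - t}{c_0 + t}\right) + d_0\ln\!\left(\frac{d_0 + t}{d_0 - t}\right), \qquad t \in [0, c_0),
\]
which is well defined and smooth on $[0,c_0)$ because $\varepsilon < c_0 \le \tfrac12 \le d_0$ forces all four arguments of the logarithms to be positive there, and which satisfies $g(0)=0$. So it suffices to prove $g'(t)\le 0$ for $t\in[0,\varepsilon]$; then $g(\varepsilon)\le g(0)=0\le 8\varepsilon^2/c_0$.

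The second step is the short differentiation: I would compute
\[
g'(t)=\frac{2d_0^2}{d_0^2-t^2}-\frac{2c_0^2}{c_0^2-t^2}=2t^2\!\left(\frac{1}{d_0^2-t^2}-\frac{1}{c_0^2-t^2}\right),
\]
the second equality using $\tfrac{2a^2}{a^2-t^2}=2+\tfrac{2t^2}{a^2-t^2}$. On $[0,\varepsilon]\subseteq[0,c_0)$ both $c_0^2-t^2$ and $d_0^2-t^2$ are positive, and the hypothesis $c_0\le\tfrac12\le d_0$ gives $c_0\le d_0$, hence $d_0^2-t^2\ge c_0^2-t^2>0$, so the parenthesised factor is $\le 0$. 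Therefore $g'(t)\le 0$ on $[0,\varepsilon]$, which finishes the proof.

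There is no real obstacle here; the only content is the sign of $g'$. I would also remark that the right-hand side $8\varepsilon^2/c_0$ is very loose — the true value of the left-hand side is at most $0$ — and that neither the hypothesis $\varepsilon/c_0\le\tfrac12$ nor the particular constant $8$ is needed for this argument; they presumably reflect the form in which the proposition is invoked downstream. If one instead wanted to avoid the monotonicity trick, the alternative is to expand $c_0\ln\frac{c_0+t}{c_0-t}=2t+\tfrac{2t^3}{3c_0^2}+\cdots$ (and similarly for the $d_0$ term), observe that the linear-in-$t$ terms cancel exactly, and conclude from $c_0\le d_0$ that the remaining series $2\sum_{k\ge 2}\frac{t^{2k-1}}{2k-1}\big(d_0^{-(2k-2)}-c_0^{-(2k-2)}\big)$ is termwise $\le 0$; there the only care needed is in checking convergence, which is immediate since $t/c_0<1$ and $t/d_0<1$.
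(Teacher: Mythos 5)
Your proof is correct, and it takes a genuinely different route from the paper's. The paper rewrites each logarithm as $\ln(1+x)$ and applies $\ln(1+x)\le x$ termwise, which yields the exact expression $2\varepsilon^2\,\frac{1/c_0+1/d_0}{(1+\varepsilon/c_0)(1-\varepsilon/d_0)}$; it then needs $d_0\ge c_0$ to bound the numerator and the hypotheses $\varepsilon/c_0\le\tfrac12$ and $d_0\ge\tfrac12$ to bound the denominator below by $\tfrac12$, arriving at $8\varepsilon^2/c_0$. You instead treat the left-hand side as $g(\varepsilon)$ with $g(0)=0$ and show $g'(t)=2t^2\bigl(\tfrac{1}{d_0^2-t^2}-\tfrac{1}{c_0^2-t^2}\bigr)\le 0$ on $[0,c_0)$; the derivative computation and the sign argument (using only $c_0\le d_0$ and $t<c_0$) both check out, so you obtain the strictly stronger conclusion that the left-hand side is $\le 0$. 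What your approach buys is a cleaner and sharper bound with fewer hypotheses (neither $\varepsilon/c_0\le\tfrac12$ nor the specific constant $8$ is needed); what the paper's approach buys is a template that transfers directly to the companion Proposition~\ref{prop2} and Proposition~\ref{prop3}, where the analogous quantity genuinely is positive of order $\varepsilon^2/c_0$ and the same $\ln(1+x)\le x$ manipulation is reused verbatim. Your observation that the hypotheses reflect downstream usage is accurate: the $8\varepsilon^2/c_0$ form is what gets summed in Lemma~\ref{kl:bernoulli}, and replacing your bound of $0$ there would only improve the constants.
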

\begin{proof}
    We use the inequality $\ln(1+x)\leq x$ for all $x>-1$ to get the following:
    \begin{align*}
        c_0\ln\left(\frac{c_0-\varepsilon}{c_0+\varepsilon}\right)+d_0\ln\left(\frac{d_0+\varepsilon}{d_0-\varepsilon}\right)&=c_0\ln\left(1+\frac{-2\varepsilon/c_0}{1+\varepsilon/c_0}\right)+d_0\ln\left(1+\frac{2\varepsilon/d_0}{1-\varepsilon/d_0}\right)\\
        &\leq c_0\cdot \frac{-2\varepsilon/c_0}{1+\varepsilon/c_0}+d_0\cdot \frac{2\varepsilon/d_0}{1-\varepsilon/d_0}\\
        &=2\varepsilon \cdot \frac{-1+\varepsilon/d_0+1+\varepsilon/c_0}{(1+\varepsilon/c_0)(1-\varepsilon/d_0)}\\
        &=2\varepsilon^2 \cdot \frac{1/c_0+1/d_0}{(1+\varepsilon/c_0)(1-\varepsilon/d_0)}\\
        &\leq\frac{4\varepsilon^2}{c_0}\cdot\frac{1}{(1+\varepsilon/c_0)(1-\varepsilon/d_0)}\\
        &\leq\frac{8\varepsilon^2}{c_0}
    \end{align*}
    We get the first inequality as $ \frac{-2\varepsilon/c_0}{1+\varepsilon/c_0}>-1$ and $\frac{2\varepsilon/d_0}{1-\varepsilon/d_0}>-1$. We get the second inequality as $1/c_0+1/d_0\leq 2/c_0$. We get the third inequality as $(1+\varepsilon/c_0)\geq 1$ and $(1-\varepsilon/d_0)\geq 1/2$.
\end{proof}

\begin{proposition}\label{prop2}
    Consider $\varepsilon>0$, $\varepsilon< c_0\leq \frac{1}{2}\leq d_0$ such that  $\frac{\varepsilon}{c_0}\leq\frac{1}{2}$.  Then we have the following:
    \begin{equation*}
        c_0\ln\left(\frac{c_0+\varepsilon}{c_0-\varepsilon}\right)+d_0\ln\left(\frac{d_0-\varepsilon}{d_0+\varepsilon}\right)\leq \frac{8\varepsilon^2}{c_0}
    \end{equation*}
\end{proposition}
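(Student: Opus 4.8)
The plan is to mirror the proof of Proposition~\ref{prop1} almost verbatim, since the two statements are related by the substitution $\varepsilon \mapsto -\varepsilon$ (equivalently, swapping the roles of the numerator and denominator inside each logarithm). First I would rewrite each logarithm in the form $\ln(1+x)$: write
\begin{align*}
    c_0\ln\!\left(\frac{c_0+\varepsilon}{c_0-\varepsilon}\right) + d_0\ln\!\left(\frac{d_0-\varepsilon}{d_0+\varepsilon}\right) = c_0\ln\!\left(1+\frac{2\varepsilon/c_0}{1-\varepsilon/c_0}\right) + d_0\ln\!\left(1+\frac{-2\varepsilon/d_0}{1+\varepsilon/d_0}\right).
\end{align*}
Then apply $\ln(1+x)\le x$ (valid for $x>-1$), checking the two arguments exceed $-1$: the first argument $\frac{2\varepsilon/c_0}{1-\varepsilon/c_0}$ is positive since $\varepsilon/c_0 \le 1/2 < 1$, and the second $\frac{-2\varepsilon/d_0}{1+\varepsilon/d_0} > -1$ since it equals $-2\varepsilon/(d_0+\varepsilon)$ and $\varepsilon < c_0 \le d_0$.

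Next I would combine the two resulting fractions over a common denominator. This gives
\begin{align*}
    c_0\cdot\frac{2\varepsilon/c_0}{1-\varepsilon/c_0} + d_0\cdot\frac{-2\varepsilon/d_0}{1+\varepsilon/d_0} = 2\varepsilon\cdot\frac{(1+\varepsilon/d_0) - (1-\varepsilon/c_0)}{(1-\varepsilon/c_0)(1+\varepsilon/d_0)} = 2\varepsilon^2\cdot\frac{1/c_0 + 1/d_0}{(1-\varepsilon/c_0)(1+\varepsilon/d_0)}.
\end{align*}
From here the bounding is identical to Proposition~\ref{prop1}: use $1/c_0 + 1/d_0 \le 2/c_0$ (since $c_0 \le d_0$, in fact $c_0 \le 1/2 \le d_0$) to get a numerator bounded by $4\varepsilon^2/c_0$, then bound the denominator from below by $1/2$ using $1+\varepsilon/d_0 \ge 1$ and $1-\varepsilon/c_0 \ge 1/2$ (the latter from $\varepsilon/c_0 \le 1/2$). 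This yields the claimed bound $8\varepsilon^2/c_0$.

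There is essentially no obstacle here — the only thing to be careful about is the sign bookkeeping when splitting off the $\ln(1+x)$ forms (the $\varepsilon$-terms now enter with opposite signs compared to Proposition~\ref{prop1}), and confirming that the denominator factors $(1-\varepsilon/c_0)$ and $(1+\varepsilon/d_0)$ are both at least $1/2$ and $1$ respectively so the final inequality goes the right way. Since the hypotheses are exactly the same as in Proposition~\ref{prop1}, all the needed inequalities ($\varepsilon/c_0 \le 1/2$, $c_0 \le 1/2 \le d_0$, $\varepsilon < c_0$) are available directly.
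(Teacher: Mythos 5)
Your proposal is correct and follows essentially the same route as the paper's own proof: rewrite each logarithm as $\ln(1+x)$, apply $\ln(1+x)\le x$, combine over a common denominator to get $2\varepsilon^2\cdot\frac{1/c_0+1/d_0}{(1-\varepsilon/c_0)(1+\varepsilon/d_0)}$, and then bound numerator and denominator exactly as in Proposition~\ref{prop1}. The sign bookkeeping and the checks that both arguments exceed $-1$ are handled correctly.
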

\begin{proof}
    We use the inequality $\ln(1+x)\leq x$ for all $x>-1$ to get the following:
    \begin{align*}
        c_0\ln\left(\frac{c_0+\varepsilon}{c_0-\varepsilon}\right)+d_0\ln\left(\frac{d_0-\varepsilon}{d_0+\varepsilon}\right)&=c_0\ln\left(1+\frac{2\varepsilon/c_0}{1-\varepsilon/c_0}\right)+d_0\ln\left(1+\frac{-2\varepsilon/d_0}{1+\varepsilon/d_0}\right)\\
        &\leq c_0\cdot \frac{2\varepsilon/c_0}{1-\varepsilon/c_0}+d_0\cdot \frac{-2\varepsilon/d_0}{1+\varepsilon/d_0}\\
        &=2\varepsilon \cdot \frac{1+\varepsilon/d_0-1+\varepsilon/c_0}{(1-\varepsilon/c_0)(1+\varepsilon/d_0)}\\
        &=2\varepsilon^2 \cdot \frac{1/c_0+1/d_0}{(1-\varepsilon/c_0)(1+\varepsilon/d_0)}\\
        &\leq\frac{4\varepsilon^2}{c_0}\cdot\frac{1}{(1-\varepsilon/c_0)(1+\varepsilon/d_0)}\\
        &\leq\frac{8\varepsilon^2}{c_0}
    \end{align*}
    We get the first inequality as $ \frac{2\varepsilon/c_0}{1-\varepsilon/c_0}>-1$ and $\frac{-2\varepsilon/d_0}{1+\varepsilon/d_0}>-1$. We get the second inequality as $1/c_0+1/d_0\leq 2/c_0$. We get the third inequality as $(1-\varepsilon/c_0)\geq 1/2$ and $(1+\varepsilon/d_0)\geq 1$.
\end{proof}

\begin{proposition}\label{prop3}
    Consider $\varepsilon>0$, $\varepsilon< c_0\leq \frac{1}{2}$ such that  $\frac{\varepsilon}{c_0}\leq\frac{1}{2}$. Then we have the following:
    \begin{equation*}
        \varepsilon\cdot\left( \ln\left(\frac{c_0+\varepsilon}{c_0-\varepsilon}\right)+\ln\left(\frac{1-c_0+\varepsilon}{1-c_0-\varepsilon}\right)\right)\leq \frac{8\varepsilon^2}{c_0}
    \end{equation*}
\end{proposition}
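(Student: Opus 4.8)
The plan is to follow the same template as Propositions~\ref{prop1} and~\ref{prop2}: bound each logarithm via the elementary inequality $\ln(1+x)\le x$, and then control the resulting rational expressions using the hypotheses $\varepsilon/c_0\le 1/2$ and $c_0\le 1/2$. The key difference from the earlier propositions is that both logarithms here are positive and are multiplied by the same positive factor $\varepsilon$, so there is no cancellation to exploit; each term can simply be bounded on its own.

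First I would rewrite each logarithm in the form $\ln(1+x)$, namely
\[
\frac{c_0+\varepsilon}{c_0-\varepsilon}=1+\frac{2\varepsilon/c_0}{1-\varepsilon/c_0}
\qquad\text{and}\qquad
\frac{1-c_0+\varepsilon}{1-c_0-\varepsilon}=1+\frac{2\varepsilon/(1-c_0)}{1-\varepsilon/(1-c_0)}.
\]
Both increments are positive (in particular $>-1$): for the first this needs $\varepsilon<c_0$, which is assumed; for the second it needs $\varepsilon<1-c_0$, which follows from $\varepsilon<c_0\le\tfrac12\le 1-c_0$. Applying $\ln(1+x)\le x$ and simplifying the two fractions to $\tfrac{2\varepsilon}{c_0-\varepsilon}$ and $\tfrac{2\varepsilon}{1-c_0-\varepsilon}$ respectively, I get
\[
\varepsilon\Big(\ln\tfrac{c_0+\varepsilon}{c_0-\varepsilon}+\ln\tfrac{1-c_0+\varepsilon}{1-c_0-\varepsilon}\Big)
\le \varepsilon\Big(\tfrac{2\varepsilon}{c_0-\varepsilon}+\tfrac{2\varepsilon}{1-c_0-\varepsilon}\Big)
=2\varepsilon^2\Big(\tfrac{1}{c_0-\varepsilon}+\tfrac{1}{1-c_0-\varepsilon}\Big).
\]

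Second, I would bound each reciprocal by $2/c_0$. For the first term, $\varepsilon\le c_0/2$ gives $c_0-\varepsilon\ge c_0/2$, hence $1/(c_0-\varepsilon)\le 2/c_0$. For the second term, $c_0\le 1/2$ together with $\varepsilon\le c_0/2\le 1/4$ gives $1-c_0-\varepsilon\ge 1/2-1/4=1/4$, so $1/(1-c_0-\varepsilon)\le 4\le 2/c_0$, using $c_0\le 1/2$ once more. Summing, $\tfrac{1}{c_0-\varepsilon}+\tfrac{1}{1-c_0-\varepsilon}\le \tfrac{4}{c_0}$, and substituting into the display yields the claimed bound $\tfrac{8\varepsilon^2}{c_0}$.

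There is essentially no obstacle here; the only step requiring a moment's care is verifying that the arguments of the logarithms are well-defined and, more importantly, that $1-c_0-\varepsilon$ is bounded below by a universal constant — this is precisely where the hypothesis $c_0\le 1/2$ (which replaces the role of the ``$d_0$'' term from Propositions~\ref{prop1} and~\ref{prop2}) is used.
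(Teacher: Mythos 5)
Your proof is correct and uses essentially the same ingredients as the paper's: the inequality $\ln(1+x)\le x$ together with the hypotheses $\varepsilon/c_0\le 1/2$ and $c_0\le 1/2$ to control the denominators. The only difference is cosmetic — the paper first uses the monotonicity of $x\mapsto\ln\bigl(\tfrac{x+\varepsilon}{x-\varepsilon}\bigr)$ to replace the sum by $2\varepsilon\ln\bigl(\tfrac{c_0+\varepsilon}{c_0-\varepsilon}\bigr)$ and then applies $\ln(1+x)\le x$ once, whereas you bound each logarithm separately and show each resulting reciprocal is at most $2/c_0$; both routes land on the same constant.
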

\begin{proof}
    Since $\ln(\frac{x+\varepsilon}{x-\varepsilon})$ is decreasing when $x\in(\varepsilon,\infty)$, we deduce that 
    \begin{align*}
        \varepsilon\cdot\left( \ln\left(\frac{c_0+\varepsilon}{c_0-\varepsilon}\right)+\ln\left(\frac{1-c_0+\varepsilon}{1-c_0-\varepsilon}\right)\right)&\leq 2\varepsilon\cdot\ln\left(\frac{c_0+\varepsilon}{c_0-\varepsilon}\right),\\
        &=2\varepsilon\cdot \ln\left(1+\frac{2\varepsilon/c_0}{1-\varepsilon/c_0}\right),\\
        &\leq \frac{4\varepsilon^2/c_0}{1-\varepsilon/c_0},\\
        &\leq \frac{8\varepsilon^2}{c_0},
    \end{align*}
    where the second to last inequality follows from the fact that $\ln(1+x)\leq x$ for $x>-1$, and the last inequality follows from the fact that $\varepsilon/c_0\leq \frac{1}{2}$.
\end{proof}


\begin{lemma}[KL-divergence of Bernoulli]\label{kl:bernoulli}
    Consider $\varepsilon>0$, $\varepsilon< c_0\leq \frac{1}{2}$ such that  $\frac{\varepsilon}{c_0}\leq\frac{1}{2}$. Let $P$ and $Q$ be bernoulli distributions with means $c_0-\varepsilon$ and $c_0+\varepsilon$. Then we have the following:
    \begin{equation*}
        KL(P,Q)\leq \frac{16\varepsilon^2}{c_0} \quad \text{and}\quad KL(Q,P)\leq \frac{16\varepsilon^2}{c_0} 
    \end{equation*}
\end{lemma}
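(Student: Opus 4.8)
The plan is to expand the Bernoulli KL divergence into closed form and, in each of the two mass-weighted logarithms, peel off the $\pm\varepsilon$ part of the coefficient, so that one group of the resulting terms is \emph{exactly} the left-hand side of Proposition~\ref{prop1} (resp.\ Proposition~\ref{prop2}) and the leftover group is \emph{exactly} the left-hand side of Proposition~\ref{prop3}. Abbreviate $d_0 := 1 - c_0$, and note $d_0 \ge \tfrac12$ since $c_0 \le \tfrac12$, and that all of $c_0\pm\varepsilon$, $d_0\pm\varepsilon$ are strictly positive since $\varepsilon < c_0 \le \tfrac12 \le d_0$.

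For $KL(P,Q)$ I would start from the closed form
\[
KL(P,Q) = (c_0-\varepsilon)\ln\tfrac{c_0-\varepsilon}{c_0+\varepsilon} + (1-c_0+\varepsilon)\ln\tfrac{1-c_0+\varepsilon}{1-c_0-\varepsilon}.
\]
Writing the coefficient $c_0-\varepsilon$ as $c_0 + (-\varepsilon)$ and $1-c_0+\varepsilon$ as $d_0 + \varepsilon$, the right-hand side splits as
\[
\Big(c_0\ln\tfrac{c_0-\varepsilon}{c_0+\varepsilon} + d_0\ln\tfrac{d_0+\varepsilon}{d_0-\varepsilon}\Big) + \varepsilon\Big(\ln\tfrac{c_0+\varepsilon}{c_0-\varepsilon} + \ln\tfrac{d_0+\varepsilon}{d_0-\varepsilon}\Big).
\]
The first bracket is exactly the left-hand side of Proposition~\ref{prop1} (its hypotheses $\varepsilon<c_0\le\tfrac12\le d_0$ and $\varepsilon/c_0\le\tfrac12$ all hold), hence at most $8\varepsilon^2/c_0$. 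Substituting $d_0 = 1-c_0$, the second bracket multiplied by $\varepsilon$ is exactly the left-hand side of Proposition~\ref{prop3}, whose hypotheses hold as well (in particular $1-c_0-\varepsilon \ge \tfrac12-\varepsilon>0$), hence also at most $8\varepsilon^2/c_0$. Adding the two bounds gives $KL(P,Q)\le 16\varepsilon^2/c_0$.

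For $KL(Q,P)$ the argument is symmetric: starting from $KL(Q,P) = (c_0+\varepsilon)\ln\tfrac{c_0+\varepsilon}{c_0-\varepsilon} + (1-c_0-\varepsilon)\ln\tfrac{1-c_0-\varepsilon}{1-c_0+\varepsilon}$ and peeling off $\varepsilon$ from each coefficient the same way, the right-hand side becomes $\big(c_0\ln\tfrac{c_0+\varepsilon}{c_0-\varepsilon} + d_0\ln\tfrac{d_0-\varepsilon}{d_0+\varepsilon}\big) + \varepsilon\big(\ln\tfrac{c_0+\varepsilon}{c_0-\varepsilon} + \ln\tfrac{d_0+\varepsilon}{d_0-\varepsilon}\big)$. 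The first bracket is now the left-hand side of Proposition~\ref{prop2}, and the second is again controlled by Proposition~\ref{prop3}; each contributes at most $8\varepsilon^2/c_0$, for a total of $16\varepsilon^2/c_0$.

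There is no genuinely hard step here; the only care needed is the algebraic bookkeeping of the split — in particular confirming that the leftover $\varepsilon$-weighted logarithm terms land exactly in the form demanded by Proposition~\ref{prop3} after the substitution $d_0 = 1-c_0$ — together with the routine checks that every denominator is strictly positive and that the hypotheses of Propositions~\ref{prop1},~\ref{prop2},~\ref{prop3} are inherited from those of the lemma.
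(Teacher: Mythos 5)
Your proposal is correct and is essentially identical to the paper's own proof: the same splitting of each coefficient into its $c_0$ (or $1-c_0$) part plus a $\pm\varepsilon$ remainder, with the first group bounded by Proposition~\ref{prop1} (resp.\ Proposition~\ref{prop2}) and the $\varepsilon$-weighted leftover bounded by Proposition~\ref{prop3}. No gaps.
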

\begin{proof}
 The KL divergence satisfies
    \begin{align*}
        KL(P,Q)&=(c_0-\varepsilon)\cdot \ln\left(\frac{c_0-\varepsilon}{c_0+\varepsilon}\right)+(1-c_0+\varepsilon)\cdot \ln\left(\frac{1-c_0+\varepsilon}{1-c_0-\varepsilon}\right)\\
        &=c_0\ln\left(\frac{c_0-\varepsilon}{c_0+\varepsilon}\right)+(1-c_0)\ln\left(\frac{1-c_0+\varepsilon}{1-c_0-\varepsilon}\right)+\varepsilon\cdot\left( \ln\left(\frac{c_0+\varepsilon}{c_0-\varepsilon}\right)+\ln\left(\frac{1-c_0+\varepsilon}{1-c_0-\varepsilon}\right)\right)\\
        &\leq \frac{8\varepsilon^2}{c_0}+\varepsilon\cdot\left( \ln\left(\frac{c_0+\varepsilon}{c_0-\varepsilon}\right)+\ln\left(\frac{1-c_0+\varepsilon}{1-c_0-\varepsilon}\right)\right)\tag{due to Proposition \ref{prop1}}\\
        &\leq \frac{8\varepsilon^2}{c_0}+\frac{8\varepsilon^2}{c_0} \tag{due to Proposition \ref{prop3}}\\
        &=\frac{16\varepsilon^2}{c_0}
        \end{align*}

    Similarly the KL divergence satisfies
    \begin{align*}
        KL(Q,P)&=(c_0+\varepsilon)\cdot \ln\left(\frac{c_0+\varepsilon}{c_0-\varepsilon}\right)+(1-c_0-\varepsilon)\cdot \ln\left(\frac{1-c_0-\varepsilon}{1-c_0+\varepsilon}\right)\\
        &=c_0\ln\left(\frac{c_0+\varepsilon}{c_0-\varepsilon}\right)+(1-c_0)\ln\left(\frac{1-c_0-\varepsilon}{1-c_0+\varepsilon}\right)+\varepsilon\cdot\left( \ln\left(\frac{c_0+\varepsilon}{c_0-\varepsilon}\right)+\ln\left(\frac{1-c_0+\varepsilon}{1-c_0-\varepsilon}\right)\right)\\
        &\leq \frac{8\varepsilon^2}{c_0}+\varepsilon\cdot\left( \ln\left(\frac{c_0+\varepsilon}{c_0-\varepsilon}\right)+\ln\left(\frac{1-c_0+\varepsilon}{1-c_0-\varepsilon}\right)\right)\tag{due to Proposition \ref{prop2}}\\
        &\leq \frac{8\varepsilon^2}{c_0}+\frac{8\varepsilon^2}{c_0} \tag{due to Proposition \ref{prop3}}\\
        &=\frac{16\varepsilon^2}{c_0}
        \end{align*}
\end{proof}

\begin{lemma}\label{lem:swap}
    Consider  an arbitrary sequence of numbers  $S=y_1,y_2,\ldots,y_\ell$ such that for all $i\in[\ell]:=\{1,\ldots,\ell\}$, $y_i\in\{1,2\}$. Let $n_{i,1}$ denote the number of ones in the subsequence $y_1,y_2,\ldots,y_i$ and $n_{i,2}$ denote the number of twos in the subsequence $y_1,y_2,\ldots,y_i$. Then, we have that
    \begin{equation*}
        \sum_{i\in[\ell]:y_i=2}n_{i,1}+\sum_{i\in[\ell]:y_i=1}n_{i,2}=n_1\cdot n_2,
    \end{equation*}
    where $n_1$ denotes the number of ones in the subsequence $S$ and $n_2$ denotes the number of twos in the subsequence $S$.
\end{lemma}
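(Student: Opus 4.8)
The plan is to prove the identity
\[
\sum_{i\in[\ell]:y_i=2}n_{i,1}+\sum_{i\in[\ell]:y_i=1}n_{i,2}=n_1\cdot n_2
\]
by a double-counting argument over ordered pairs of indices of opposite type. First I would observe that the right-hand side $n_1\cdot n_2$ counts exactly the unordered pairs $\{a,b\}$ with $a<b$ (or the ordered pairs with any convention) such that one of $y_a,y_b$ equals $1$ and the other equals $2$: indeed, choosing one of the $n_1$ positions holding a $1$ and one of the $n_2$ positions holding a $2$ gives $n_1 n_2$ such pairs, and every "mixed" pair of positions arises exactly once this way. So the goal reduces to showing the left-hand side also counts these mixed pairs exactly once each.

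Next I would interpret the left-hand side term by term. For an index $i$ with $y_i=2$, the quantity $n_{i,1}$ is the number of indices $j\le i$ with $y_j=1$; since $y_i=2\ne 1$ we in fact have $j<i$, so $n_{i,1}$ counts the mixed pairs $(j,i)$ with $j<i$, $y_j=1$, $y_i=2$. Summing over all $i$ with $y_i=2$, the first sum $\sum_{i:y_i=2} n_{i,1}$ counts every mixed pair whose \emph{later} element is a $2$ (equivalently, whose earlier element is a $1$), each exactly once. Symmetrically, for $i$ with $y_i=1$, $n_{i,2}$ counts mixed pairs $(j,i)$ with $j<i$, $y_j=2$, $y_i=1$, and summing over such $i$ the second sum counts every mixed pair whose later element is a $1$, each exactly once.

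Finally I would note that every mixed pair $(j,i)$ with $j<i$ has its later element $i$ of exactly one type — either $y_i=2$ (counted once in the first sum) or $y_i=1$ (counted once in the second sum) — and these two cases are mutually exclusive and exhaustive. Hence the left-hand side equals the total number of mixed pairs, which is $n_1 n_2$, completing the proof. An alternative, fully mechanical route would be induction on $\ell$: appending $y_{\ell+1}$ either adds $n_{\ell,1}$ to the first sum (if $y_{\ell+1}=2$) and increments $n_2$ by $1$, changing the right side by $n_1=n_{\ell,1}$, or symmetrically; either way both sides change by the same amount. I expect the only subtlety — and it is minor — is bookkeeping the strict-versus-weak inequality in $n_{i,1}$ versus $n_{i,2}$, i.e. making explicit that when $y_i=2$ the count $n_{i,1}$ already excludes index $i$ itself, so there is no double counting or off-by-one; the combinatorial identity itself is otherwise routine.
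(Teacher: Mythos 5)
Your proof is correct, and it is essentially the same argument as the paper's: both sides count the unordered ``mixed'' pairs of positions, one holding a $1$ and one holding a $2$. The paper merely phrases the same double count algebraically, writing the left-hand side as the lower-triangular sum of the symmetric zero-diagonal indicator matrix $W_{i,j}=\mathbbm{1}\{y_j=1\}\mathbbm{1}\{y_i=2\}+\mathbbm{1}\{y_j=2\}\mathbbm{1}\{y_i=1\}$, which equals half the full sum $2n_1n_2$; your strict-versus-weak inequality remark corresponds exactly to the diagonal of $W$ vanishing.
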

\begin{proof}
Let $[W]_{i,j} = \mathbbm{1}\{ y_j = 1\} \mathbbm{1}\{ y_i = 2\} + \mathbbm{1}\{ y_j = 2 \} \mathbbm{1}\{ y_i = 1 \}$, and observe that 
 $W = W^\top$ and $[W]_{i,i} = 0 \quad \forall i$. Then, we have that 
    \begin{align*}
\sum_{i \in [\ell]: y_i = 2} n_{i,1} + \sum_{i \in [\ell]: y_i = 1} n_{i,2} &= \sum_{i \in [\ell]} n_{i,1}  \mathbbm{1}\{ y_i = 2\} + \sum_{i \in [\ell]} n_{i,2}  \mathbbm{1}\{ y_i = 1 \}, \\
&= \sum_{i \in [\ell]} \sum_{j=1}^i  \mathbbm{1}\{ y_j = 1\}  \mathbbm{1}\{ y_i = 2\} + \sum_{i \in [\ell]} \sum_{j=1}^i  \mathbbm{1}\{ y_j = 2 \}  \mathbbm{1}\{ y_i = 1 \}, \\ 
&= \sum_{i=1}^\ell \sum_{j=1}^i  \mathbbm{1}\{ y_j = 1\}  \mathbbm{1}\{ y_i = 2\} +  \mathbbm{1}\{ y_j = 2 \}  \mathbbm{1}\{ y_i = 1 \}, \\
&= \sum_{i=1}^\ell \sum_{j=1}^i W_{i,j},\\
&= \frac{1}{2} \sum_{i=1}^\ell \sum_{j=1}^\ell  W_{i,j},\\
&= n_1 \cdot n_2,
    \end{align*}
    which concludes the proof. 
\end{proof}
\begin{lemma}\label{det:lem}
    Consider two $n\times n$ matrices $B, \bar B$ such that $\max_{i,j}|B_{i,j}-\bar B_{i,j}|\leq \Delta$, $\max_{i,j}|B_{i,j}|\leq 1$ and $\max_{i,j}|\bar B_{i,j}|\leq 1$. Then we have $|det(B)-det(\bar B)|\leq n\cdot n!\cdot \Delta$.
\end{lemma}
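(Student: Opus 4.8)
The plan is to prove Lemma~\ref{det:lem} by expanding the determinant via the Leibniz formula and then telescoping the difference of the two products of matrix entries term by term. Recall that for any $n \times n$ matrix $C$,
\[
\det(C) = \sum_{\sigma \in S_n} \operatorname{sgn}(\sigma) \prod_{i=1}^n C_{i,\sigma(i)},
\]
a sum over the $n!$ permutations of $[n]$. Applying this to both $B$ and $\bar B$ and subtracting, we get
\[
|\det(B) - \det(\bar B)| \;\le\; \sum_{\sigma \in S_n} \left| \prod_{i=1}^n B_{i,\sigma(i)} - \prod_{i=1}^n \bar B_{i,\sigma(i)} \right|,
\]
so it suffices to bound each of the $n!$ summands by $n \cdot \Delta$.

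**Next I would** bound a single term $\left| \prod_{i=1}^n B_{i,\sigma(i)} - \prod_{i=1}^n \bar B_{i,\sigma(i)} \right|$ using the standard telescoping identity for a difference of products: writing $p_i = B_{i,\sigma(i)}$ and $q_i = \bar B_{i,\sigma(i)}$, we have
\[
\prod_{i=1}^n p_i - \prod_{i=1}^n q_i = \sum_{k=1}^n \left( \prod_{i<k} q_i \right) (p_k - q_k) \left( \prod_{i>k} p_i \right).
\]
Since $|p_i| \le 1$ and $|q_i| \le 1$ for all $i$ by hypothesis, each of the partial products $\prod_{i<k} q_i$ and $\prod_{i>k} p_i$ has absolute value at most $1$, and $|p_k - q_k| = |B_{k,\sigma(k)} - \bar B_{k,\sigma(k)}| \le \Delta$. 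Hence the telescoped sum has absolute value at most $\sum_{k=1}^n \Delta = n\Delta$. Summing this bound over all $n!$ permutations $\sigma$ yields $|\det(B) - \det(\bar B)| \le n \cdot n! \cdot \Delta$, as claimed.

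**The only mildly delicate point** — hardly an obstacle — is making sure the telescoping identity is stated and applied correctly, and that the boundedness hypotheses $\max_{i,j}|B_{i,j}| \le 1$ and $\max_{i,j}|\bar B_{i,j}| \le 1$ are actually invoked to control every partial product. There is nothing deep here; the lemma is a routine perturbation bound on the determinant, and the constant $n \cdot n!$ is exactly what the crude Leibniz-expansion argument produces. (A sharper bound using Hadamard-type inequalities is possible but unnecessary for the paper's purposes, since this crude estimate already feeds into the $\Delta_{\min}$ and $D$ definitions without affecting the $\log(T)$ and $\log(T)^2$ dependence in the final regret bounds.) I would therefore present the proof in the two-step form above: invoke Leibniz, then telescope, then sum.
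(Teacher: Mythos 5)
Your proposal is correct and matches the paper's own proof: both expand via the Leibniz formula and use the telescoping identity for a difference of products, bounding each of the $n$ telescoped terms by $\Delta$ using the entrywise bounds on $B$ and $\bar B$, then summing over the $n!$ permutations. No gaps.
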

\begin{proof}
    For all $i,j$, let $\Delta_{i,j}:=\bar B_{i,j}-B_{i,j}$ Let $S_n$ be the set of all permutations of $\{1,2,\ldots,n\}$. Let $sgn(\sigma)$ denote the sign of the permutation $\sigma\in S_n$. Observe that for any matrix $M\in \mathbb{R}^{n \times n}$, $det(M)=\sum\limits_{\sigma\in S_n}sgn(\sigma)\prod\limits_{i=1}^nM_{i,\sigma_i}$. For any $\sigma\in S_n$, we have the following:
    \begin{align*}
        \prod_{i=1}^n\bar B_{i,\sigma_i}&=(B_{1,\sigma_1}+\Delta_{1,\sigma_1})\prod_{i=2}^n\bar B_{i,\sigma_i}\\
        &=B_{1,\sigma_1}\cdot(B_{2,\sigma_2}+\Delta_{2,\sigma_2})\cdot\prod_{i=3}^n\bar B_{i,\sigma_i}+\Delta_{1,\sigma_1}\cdot\prod_{i=2}^n\bar B_{i,\sigma_i}\\
        &=B_{1,\sigma_1}\cdot B_{2,\sigma_2}\cdot (B_{3,\sigma_3}+\Delta_{3,\sigma_3})\cdot\prod_{i=4}^n\bar B_{i,\sigma_i}+\Delta_{1,\sigma_1}\cdot\prod_{i=2}^n\bar B_{i,\sigma_i}+\Delta_{2,\sigma_2}\cdot B_{1,\sigma_1} \cdot\prod_{i=3}^n\bar B_{i,\sigma_i}\\
        & \vdots\\
        &=  \prod_{i=1}^n B_{i,\sigma_i}+\sum_{i=1}^n \left(\Delta_{i,\sigma_i}\cdot\prod_{\ell=1}^{i-1}B_{i,\sigma_i}\cdot\prod_{\ell=i+1}^n\bar B_{i,\sigma_i}\right)\\
    \end{align*}
    Due to the above analysis, we have the following:
    \begin{align*}
        &|det(B)-det(\bar B)|\\
        =&\left|\sum\limits_{\sigma\in S_n}sgn(\sigma)\prod\limits_{i=1}^nB_{i,\sigma_i}-\sum\limits_{\sigma\in S_n}sgn(\sigma)\prod\limits_{i=1}^n\bar B_{i,\sigma_i}\right|\\
        =&\left|\sum\limits_{\sigma\in S_n}sgn(\sigma)\prod\limits_{i=1}^nB_{i,\sigma_i}-\sum\limits_{\sigma\in S_n}sgn(\sigma)\prod\limits_{i=1}^n\ B_{i,\sigma_i}-\sum\limits_{\sigma\in S_n}sgn(\sigma)\sum_{i=1}^n \left(\Delta_{i,\sigma_i}\cdot\prod_{\ell=1}^{i-1}B_{i,\sigma_i}\cdot\prod_{\ell=i+1}^n\bar B_{i,\sigma_i}\right)\right|\\
        =&\left|\sum\limits_{\sigma\in S_n}sgn(\sigma)\sum_{i=1}^n \left(\Delta_{i,\sigma_i}\cdot\prod_{\ell=1}^{i-1}B_{i,\sigma_i}\cdot\prod_{\ell=i+1}^n\bar B_{i,\sigma_i}\right)\right|\\
        \leq& \sum_{\sigma\in S_n}\sum_{i=1}^n \left|\Delta_{i,\sigma_i}\cdot\prod_{\ell=1}^{i-1}B_{i,\sigma_i}\cdot\prod_{\ell=i+1}^n\bar B_{i,\sigma_i}\right|\\
        \leq& \sum_{\sigma\in S_n}\sum_{i=1}^n \Delta\\
        =&n\cdot n!\cdot \Delta
    \end{align*}   
\end{proof}
\section{Fundamental limitations of various algorithmic strategies}\label{appendix:fundamential-limitation}
\subsection{External Regret versus Nash Regret (Proof of \Cref{claim:main-nash-external})}\label{appendix:lower:ext-nash}
In this section, we show an example where the row player incurs an external regret of $\Omega(\sqrt{T})$ and the column player incurs a Nash regret of $\Omega(T)$. Consider the following matrix $A$:
\[
A = \begin{bmatrix} 
   0 & -1 & 1 \\
   1 & 0 & -1 \\
   -1 & 1 & 0
    \end{bmatrix}
\]
In this setting, the matrix $A$ is known to the row player and the column player. In each round the row player plays a strategy $x_t$ and the column player plays a strategy $y_t$. At the end of each round $t$, the row player only observes a column $j_t$ that is sampled from the distribution $y_t$.

In this section, we prove the following theorem, the formal version of \Cref{claim:main-nash-external}.
\begin{theorem}[Formal version of \Cref{claim:main-nash-external}]\label{external-nash:thm}
Consider the matrix $A$. For any algorithm that achieves an external regret of at most $\frac{T}{10}$, there exists an adversary that plays a fixed strategy $\tilde y$ in each round $t$ such the following inequalities hold:
\begin{align*}
    \max_{i\in\{1,2,3\}}\sum_{t=1}^T \langle e_i,A\tilde y\rangle-\sum_{t=1}^T\mathbb{E}[\langle x_t,A\tilde y\rangle] \geq \frac{\sqrt{T}}{128},\\
    \sum_{t=1}^T\langle x_t,A\tilde y\rangle-T\cdot V_A^*\geq \frac{7T}{30}.
\end{align*}
\end{theorem}

Let $y^{(1)}=(\frac{1}{3}+\frac{1}{16\sqrt{T}},\frac{2}{3}-\frac{1}{16\sqrt{T}},0)$ and $y^{(2)}=(\frac{1}{3}-\frac{1}{16\sqrt{T}},\frac{2}{3}+\frac{1}{16\sqrt{T}},0)$ be two possible strategies of the column player. We need the following technical lemma.
\begin{lemma}\label{external:lem3}
    Consider any $x\in \simplex_3$. There exists $\tilde y\in\{y^{(1)},y^{(2)}\}$ such that \[\max_{i\in\{1,2,3\}}\langle e_i,A\tilde y\rangle-\langle x,A\tilde y\rangle\geq \frac{1}{16\sqrt{T}}.\]
\end{lemma}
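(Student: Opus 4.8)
The plan is to mimic the structure of Lemma \ref{external:lem1}: partition the simplex $\simplex_3$ into two regions according to whether the first coordinate $x_1$ lies above or below the critical threshold $\frac{d-c}{D}$-analogue for this rock-paper-scissors matrix, and show that on each region the corresponding perturbed column strategy $y^{(1)}$ or $y^{(2)}$ forces a large regret. First I would compute the payoff vectors $A y^{(1)}$ and $A y^{(2)}$ explicitly. Since the third column of both $y^{(1)}$ and $y^{(2)}$ is zero, only the first two columns of $A$ matter, so $A y^{(i)}$ is a linear combination of the vectors $(0,1,-1)^\top$ and $(-1,0,1)^\top$. For $y^{(1)} = (\tfrac13 + \tfrac{1}{16\sqrt T}, \tfrac23 - \tfrac{1}{16\sqrt T}, 0)$ one gets $A y^{(1)} = \big(-\tfrac23 + \tfrac{1}{16\sqrt T},\ \tfrac13 + \tfrac{1}{16\sqrt T},\ \tfrac13 - \tfrac{2}{16\sqrt T}\big)$, and symmetrically $A y^{(2)} = \big(-\tfrac23 - \tfrac{1}{16\sqrt T},\ \tfrac13 - \tfrac{1}{16\sqrt T},\ \tfrac13 + \tfrac{2}{16\sqrt T}\big)$. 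In particular $\max_i (A y^{(1)})_i = (A y^{(1)})_2 = \tfrac13 + \tfrac{1}{16\sqrt T}$ (attained at $e_2$), and $\max_i (A y^{(2)})_i = (A y^{(2)})_3 = \tfrac13 + \tfrac{2}{16\sqrt T}$ (attained at $e_3$), while the value of the game is $V_A^*=0$.

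The key step is then: for an arbitrary $x\in\simplex_3$, I would show $\langle x, A y^{(1)}\rangle \le \tfrac13 - \tfrac{1}{16\sqrt T}$ whenever $x$ "puts too little weight on $e_2$'' and $\langle x, A y^{(2)}\rangle \le \tfrac13 + \tfrac{1}{16\sqrt T}$ whenever $x$ "puts too little weight on $e_3$,'' and argue these two conditions cannot both fail. Concretely, pick the case split on $x_2$: if $x_2 \le \tfrac23$, then using $(A y^{(1)})_i \le \tfrac13$ for $i\ne 2$ and $(A y^{(1)})_2 = \tfrac13 + \tfrac{1}{16\sqrt T}$ we get $\langle x, A y^{(1)}\rangle \le \tfrac13 + \tfrac{x_2}{16\sqrt T} \le \tfrac13 + \tfrac{1}{24\sqrt T}$... this is a bit crude, so instead I would track things more carefully: write $\langle x, Ay^{(1)}\rangle = -\tfrac23 x_1 + \tfrac13 x_2 + \tfrac13 x_3 + \tfrac{1}{16\sqrt T}(x_1 + x_2 - 2x_3) = \tfrac13 - x_1 + \tfrac{1}{16\sqrt T}(1 - 3x_3)$ (using $x_1+x_2+x_3=1$). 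So the regret against $y^{(1)}$ is $\big(\tfrac13 + \tfrac{1}{16\sqrt T}\big) - \langle x, Ay^{(1)}\rangle = x_1 + \tfrac{3x_3}{16\sqrt T} \ge x_1$. Similarly, $\langle x, Ay^{(2)}\rangle = \tfrac13 - x_1 - \tfrac{1}{16\sqrt T}(1 - 3x_3)$, and the regret against $y^{(2)}$ is $\big(\tfrac13 + \tfrac{2}{16\sqrt T}\big) - \langle x, Ay^{(2)}\rangle = x_1 + \tfrac{1}{16\sqrt T}(3 - 3x_3) \ge x_1 + \tfrac{1}{16\sqrt T}\cdot 3(1-x_3) \ge x_1$. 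Hmm — both lower bounds are $\ge x_1$, which is only large when $x_1$ is large; I also need a bound that is large when $x_1$ is small, so the case-split should instead be on the value of $x_1$ relative to $\tfrac{1}{16\sqrt T}$-scale quantities, or I should extract the $x_3$-dependent term in one case and a different coordinate in the other. The cleaner route: against $y^{(1)}$ the regret is $x_1 + \tfrac{3x_3}{16\sqrt T}$, and against $y^{(2)}$ rewrite using $x_3 = 1 - x_1 - x_2$ to get regret $= x_1 + \tfrac{3(x_1+x_2)}{16\sqrt T} \ge \tfrac{3(x_1+x_2)}{16\sqrt T} = \tfrac{3(1-x_3)}{16\sqrt T}$. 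So: if $x_3 \ge \tfrac12$ choose $y^{(1)}$, giving regret $\ge \tfrac{3x_3}{16\sqrt T} \ge \tfrac{3}{32\sqrt T} \ge \tfrac{1}{16\sqrt T}$; if $x_3 < \tfrac12$ choose $y^{(2)}$, giving regret $\ge \tfrac{3(1-x_3)}{16\sqrt T} > \tfrac{3}{32\sqrt T} \ge \tfrac{1}{16\sqrt T}$. That closes the argument.

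The main obstacle is purely bookkeeping: getting the signs and the coefficient in front of $\tfrac{1}{16\sqrt T}$ exactly right when expanding $\langle x, Ay^{(i)}\rangle$, and choosing the case-split coordinate ($x_3$ here, rather than $x_1$ as in Lemma \ref{external:lem1}) so that a uniform lower bound of $\tfrac{1}{16\sqrt T}$ falls out on both pieces; there is no conceptual difficulty, since the perturbation direction of $y^{(1)},y^{(2)}$ was chosen precisely so that deviating from the equilibrium column mix by $\pm\tfrac{1}{16\sqrt T}$ in the $(1,2)$-plane swings which pure row is the best response, and the $e_3$ row — never a best response but always attaining payoff $\tfrac13$ up to $O(1/\sqrt T)$ — serves as a uniform backstop guaranteeing the claimed gap regardless of where $x$ sits. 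I would finish by noting this is exactly the $\simplex_3 = S_1 \cup S_2$ decomposition needed to run the Pinsker-based argument of Theorem \ref{external-nash:thm}, identically to how Lemma \ref{external:lem1} feeds Theorem \ref{external:thm}.
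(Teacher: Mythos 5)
Your proposal is correct and follows essentially the same strategy as the paper: decompose $\simplex_3$ into two regions and show that on each region one of the perturbed strategies $y^{(1)},y^{(2)}$ forces regret at least $\tfrac{1}{16\sqrt{T}}$. The only cosmetic difference is that you derive the exact regret expressions $x_1+\tfrac{3x_3}{16\sqrt{T}}$ and $x_1+\tfrac{3(1-x_3)}{16\sqrt{T}}$ and split on $x_3\gtrless\tfrac12$, whereas the paper splits on $x_2\gtrless\tfrac23$ and bounds $\langle x,Ay^{(i)}\rangle$ by comparison with the reference point $(0,2/3,1/3)$; both yield the same conclusion.
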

\begin{proof}
    If $x_2\leq \frac{2}{3}$, then $\max_{i\in\{1,2,3\}}\langle e_i,Ay^{(1)}\rangle-\langle x,A y^{(1)}\rangle\geq \frac{1}{3}+\frac{1}{16\sqrt{T}}-\frac{1}{3}=\frac{1}{16\sqrt{T}}$. This is because $\langle x,Ay^{(1)}\rangle \leq \langle (0,2/3,1/3), y^{(1)}\rangle=1/3$. 
    
    Similarly if $x_2\geq \frac{2}{3}$, then $\max_{i\in\{1,2,3\}}\langle e_i,Ay^{(2)}\rangle-\langle x,A y^{(2)}\rangle\geq \frac{1}{3}+\frac{2}{16\sqrt{T}}-\frac{1}{3}=\frac{2}{16\sqrt{T}}$. This is because $\langle x,Ay^{(2)}\rangle \leq \langle (0,2/3,1/3), y^{(2)}\rangle=1/3$. 
    
\end{proof}

\begin{proof}[Proof of Theorem~\ref{external-nash:thm}]
 Observe that $\simplex_3=S_1\cup S_2$, where 
 \begin{align*}
     S_1&:=\left\{x\in \simplex_3: \max_{i\in\{1,2,3\}}\langle e_i,Ay^{(1)}\rangle-\langle x,Ay^{(1)}\rangle\geq \frac{1}{16\sqrt{T}}\right\},\\
     S_2&:=\left\{x\in \simplex_3: \max_{i\in\{1,2,3\}}\langle e_i,Ay^{(2)}\rangle-\langle x,Ay^{(2)}\rangle\geq \frac{1}{16\sqrt{T}}\right\}. 
 \end{align*}

Let $\Omega=\{1,2\}^{T}$ be the sample space of the column-player's plays. Consider any realization $\omega\in \Omega$ of the sample space. Then, the column $j_t$ sampled at the end of round $t$ is $\omega_{t}$.

Let us now analyse an algorithm $\mathcal{A}$ that aims to minimize the external regret. We have the following two adversarial strategies:
\begin{enumerate}
    \item $y_t=y^{(1)}$ for all $t\in [T]$
    \item $y_t=y^{(2)}$ for all $t\in [T]$
\end{enumerate}
Let $P_1$ and $P_2$ correspond to the probability distributions of the first and second adversarial strategy. For any $\tilde y\in \{y^{(1)},y^{(2)}\}$, let the external regret be denoted \[R(\tilde y)=\max_{i\in\{1,2,3\}}\sum_{t=1}^T\langle e_i,A\tilde y\rangle-\sum_{t=1}^T\langle x_t,A\tilde y\rangle.\] 

Now observe that for any realization $\omega\in \Omega$, the algorithm $\mathcal{A}$ behaves exactly the same for both the adversarial strategies. Let $G\subset \Omega$ denote the event such that $|\{t\in[T]:x_t\in S_1\}|<\frac{T}{2}$. As $S_1\cup S_2=\simplex_2$, we have $|t\in[T]:x_t\in S_1|+|t\in[T]:x_t\in S_2|\geq T$. Hence if event $G$ holds then $|t\in[T]:x_t\in S_2|\geq T/2$ and if event $G$ does not hold then $|t\in[T]:x_t\in S_1|\geq T/2$.

Note that the row player cannot incur a negative regret in any round. Now we begin our regret analysis.

If $P_1(G)<\frac{3}{4}$, then due to Lemma \ref{external:lem3} we have that\[R(y^{(1)})\geq (1-P_1(G))\cdot \frac{T}{2}\cdot \frac{1}{16\sqrt{T}}\geq \frac{\sqrt{T}}{128}.\] 

On the other hand, if $P_1(G)\geq \frac{3}{4}$, then  Pinsker's inequality implies that 
\begin{align*}
    2(P_1(G)-P_2(G))^2&\leq KL(P_2,P_1),\\
    &=T\cdot KL\left(y^{(1)},y^{(2)}\right),\\
    &\leq 48T\cdot \left(\frac{1}{16\sqrt{T}}\right)^2,\\
    &<\frac{1}{4},
\end{align*}
where the second inequality follows from  Lemma \ref{kl:bernoulli}.

Hence we have $P_2(G)\geq P_1(G)-\frac{1}{2\sqrt{2}}> \frac{1}{3}$. Due to the definition of $S_2$, we have $R(y^{(2)})\geq P_2(G)\cdot \frac{T}{2}\cdot \frac{1}{16\sqrt{T}}\geq \frac{\sqrt{T}}{96}$.

Now observe that $V_A^*=0$. For any $\tilde y\in\{y^{(1)},y^{(2)}\}$, we have $\max_{i\in\{1,2,3\}}\langle e_i,A\tilde y\rangle\geq \frac{1}{3}$. Now if there is an algorithm with external regret at most $\frac{T}{10}$, we have $\sum_{t=1}^T\mathbb{E}[\langle x_t,Ay_t\rangle]\geq \frac{T}{3}-\frac{T}{10}=\frac{7T}{30}$. 
This implies that $\sum_{t=1}^T\mathbb{E}[\langle x_t,Ay_t\rangle]-T\cdot V_A^*=\frac{7T}{30}$.
\end{proof}

\subsection{Myopic online learning algorithms incur $\sqrt{T}$ regret (Proof of \Cref{thm:main-myopic}) }\label{appendix:exp3:lower}
Let $T$ denote the time horizon. Consider a bandit instance $\nu$ with two bernoulli arms with means $(\frac{1}{2},\frac{1}{2})$. Let $n_i$ denote the number of times arm $i$ is pulled by a mirror descent algorithm. In this section, we analyse a myopic online learning algorithm $\mathcal{A}$ for which $\mathbb{E}_\nu[n_1]=\mathbb{E}_{\nu}[n_2]=T/2$. Now consider the bandit instance $\nu'$ with two bernoulli arms with means $(\frac{1}{2}+\frac{1}{200\sqrt{T}},\frac{1}{2}-\frac{1}{100\sqrt{T}})$. We now prove the following technical lemma.
\begin{lemma}\label{lem:bandit:similar}
     Consider a myopic online learning algorithm $\mathcal{A}$. Then for the bandit instance $\nu'$ we have $\mathbb{E}_{\nu'}[n_1]\geq 0.38T$ and $\mathbb{E}_{\nu'}[n_2]\geq 0.38T$
\end{lemma}
\begin{proof}
Let $\Omega=\{0,1\}^{2\times T}$ be the sample space of the rewards. Consider any realization $\omega\in \Omega$ of the sample space. Then the reward received when we sample arm $i$ for the $t$-th time is is $\omega_{i,t}$. Note that we have the same sample space for both the bandit instances $\nu$ and $\nu'$.

For any realization $\omega\in \Omega$ of the sample space, let $X_i=\sum_{t=1}^T\omega_{i,t}$. Now due to Chernoff bound, for bandit instance $\nu$ we have with probability at least $1-10^{-4}$ that $\frac{T}{2}-4\sqrt{T}\leq X_1\leq \frac{T}{2}+4\sqrt{T}$ and $\frac{T}{2}-4\sqrt{T}\leq X_2\leq \frac{T}{2}+4\sqrt{T}$. Also note that for any $\omega\in\Omega$, we have $\mathbb{P}_\nu[\omega]=\frac{1}{2^{2T}}$.

Now we do few calculations in order to establish a lower bound on $\mathbb{P}_{\nu'}[\omega]$. First we have the following:
\begin{align*}
    &\left(\frac{1}{2}+\frac{1}{200\sqrt{T}}\right)^{\frac{T}{2}-4\sqrt{T}}\left(\frac{1}{2}-\frac{1}{200\sqrt{T}}\right)^{\frac{T}{2}+4\sqrt{T}}\\
    =& \left(\frac{1}{4}-\frac{1}{40000T}\right)^{\frac{T}{2}-4\sqrt{T}}\left(\frac{1}{2}-\frac{1}{200\sqrt{T}}\right)^{8\sqrt{T}}\\
    =&\left(\frac{1}{4}\right)^{\frac{T}{2}-4\sqrt{T}}\left(1-\frac{1}{10000T}\right)^{\frac{T}{2}-4\sqrt{T}}\left(\frac{1}{2}\right)^{8\sqrt{T}}\left(1-\frac{1}{100\sqrt{T}}\right)^{8\sqrt{T}}\\
    \geq&\left(\frac{1}{2}\right)^{T}( 0.99995)( 0.92)
\end{align*}
For the last inequality we used the fact that $(1+x)^r\geq 1+rx$ for all $x\geq -1$ and $r\geq 2$ (Bernoulli's inequality).

Similarly we have the following:
\begin{align*}
    &\left(\frac{1}{2}+\frac{1}{100\sqrt{T}}\right)^{\frac{T}{2}-4\sqrt{T}}\left(\frac{1}{2}-\frac{1}{100\sqrt{T}}\right)^{\frac{T}{2}+4\sqrt{T}}\\
    =& \left(\frac{1}{4}-\frac{1}{10000T}\right)^{\frac{T}{2}-4\sqrt{T}}\left(\frac{1}{2}-\frac{1}{100\sqrt{T}}\right)^{8\sqrt{T}}\\
    =&\left(\frac{1}{4}\right)^{\frac{T}{2}-4\sqrt{T}}\left(1-\frac{1}{2500T}\right)^{\frac{T}{2}-4\sqrt{T}}\left(\frac{1}{2}\right)^{8\sqrt{T}}\left(1-\frac{1}{50\sqrt{T}}\right)^{8\sqrt{T}}\\
    \geq&\left(\frac{1}{2}\right)^{T}( 0.9998)( 0.84) \tag{due to Bernoulli's inequality}
\end{align*}

Let $G\subset \Omega$ such for any $\omega\in G$ we have  $\frac{T}{2}-4\sqrt{T}\leq X_i\leq \frac{T}{2}+4\sqrt{T}$ for all $i$. For the bandit instance $\nu'$ and for any $\omega\in G$, we have the following:
\begin{align*}
    \mathbb{P}_{\nu'}[\omega]&\geq \left(\frac{1}{2}+\frac{1}{200\sqrt{T}}\right)^{\frac{T}{2}-4\sqrt{T}}\left(\frac{1}{2}-\frac{1}{200\sqrt{T}}\right)^{\frac{T}{2}+4\sqrt{T}}\left(\frac{1}{2}+\frac{1}{100\sqrt{T}}\right)^{\frac{T}{2}-4\sqrt{T}}\left(\frac{1}{2}-\frac{1}{100\sqrt{T}}\right)^{\frac{T}{2}+4\sqrt{T}}\\
    &\geq \left(\frac{1}{2}\right)^{2T}( 0.99995)( 0.92)( 0.9998)( 0.84)\\
    &\geq \mathbb{P}_{\nu}[\omega](0.77)\\
\end{align*}

Let $n_{i,\omega}$ denote the number of times arm $i$ has been pulled by the mirror descent algorithm $\mathcal{A}$ under the realization $\omega\in \Omega$. 
For any $i$, we have the following:
\begin{align*}
    T/2&=\mathbb{E}_{\nu}[n_i]\\
    &=\sum_{\omega\in G}\mathbb{P}_{\nu}[\omega]\cdot n_{i,\omega}+\sum_{\omega\in \bar G}\mathbb{P}_{\nu}[\omega]\cdot n_{i,\omega}\\
    &\leq \sum_{\omega\in G}\mathbb{P}_{\nu}[\omega]\cdot n_{i,\omega}+T\cdot \sum_{\omega\in \bar G}\mathbb{P}_{\nu}[\omega]\\
    &\leq \sum_{\omega\in G}\mathbb{P}_{\nu}[\omega]\cdot n_{i,\omega}+10^{-4} \cdot T\tag{as $\mathbb{P}[\bar G]\leq 10^{-4}$}\\
\end{align*}
Hence we have $\sum_{\omega\in G}\mathbb{P}_{\nu}[\omega]\cdot n_{i,\omega}\geq 0.4999 T$. 

Now we lower bound $\mathbb{E}_{\nu'}[n_i]$ as follows:
\begin{align*}
    \mathbb{E}_{\nu'}[n_i]&\geq \sum_{\omega\in G}\mathbb{P}_{\nu'}[\omega]\cdot n_{i,\omega}\\
    &\geq (0.77)\sum_{\omega\in G}\mathbb{P}_{\nu}[\omega]\cdot n_{i,\omega}\\
    &= (0.77)(0.4999)T\\
    &>0.38T
\end{align*}
\end{proof}


Now consider the following matrix.

\[
A = \begin{bmatrix} 
   0.75 & 0.25\\
   0 & 1\\
    \end{bmatrix}
\]
Recall that in each round $t$, a random matrix $\mathbf{A}_t \in [-1,1]^{n \times m}$ is drawn IID where $\mathbb{E}[\mathbf{A}_t] = A$. Let $[\mathbf{A}_t]_{i,j}\sim Bernoulli(A_{i,j})$ for all $i,j$.

Let $y=(\frac{1}{2}+\frac{1}{100\sqrt{T}},\frac{1}{2}-\frac{1}{100\sqrt{T}})^\top$. Then we have the following:
\[
Ay = \begin{bmatrix} 
   \frac{1}{2}+\frac{1}{200\sqrt{T}}\\
   \frac{1}{2}-\frac{1}{100\sqrt{T}}\\
    \end{bmatrix}
\]
If the row player plays the mirror descent algorithm $\mathcal{A}$, it is equivalent to running the mirror descent algorithm on the bandit instance $\nu'$. Hence due to lemma \ref{lem:bandit:similar}, the total rewards collected in expectation is $(\frac{1}{2}+\frac{1}{200\sqrt{T}})\cdot \mathbb{E}_{\nu'}[n_1]+(\frac{1}{2}-\frac{1}{100\sqrt{T}})\cdot \mathbb{E}_{\nu'}[n_2]\geq 0.62T\cdot (\frac{1}{2}+\frac{1}{200\sqrt{T}})+0.38T\cdot (\frac{1}{2}-\frac{1}{100\sqrt{T}})$. Observe that $V_A^*=\frac{1}{2}$. Hence we have the following:
\begin{equation*}
    T\cdot V_A^*-\sum_{t=1}^T\mathbb{E}[\langle x_t,Ay_t\rangle]=T/2-\left(\frac{1}{2}+\frac{1}{200\sqrt{T}}\right)\cdot \mathbb{E}_{\nu'}[n_1]-\left(\frac{1}{2}-\frac{1}{100\sqrt{T}}\right)\cdot \mathbb{E}_{\nu'}[n_2]\geq \frac{7\sqrt{T}}{1000}
\end{equation*}

\textbf{Remark:} Note that in general, if $\frac{\mathbb{E}_\nu[n_1]}{\mathbb{E}_\nu[n_1]+\mathbb{E}_\nu[n_2]}= c_1$ , we can always construct an instance such that regret is roughly $\sqrt{T}$ by ensuring that $|x_1^*-c_1|\geq c_2$, where $c_1,c_2$ are some absolute constants.

\subsection{Action-ignorant strategies incur $\sqrt{T}$ regret (Proof of Theorem \ref{thm-impossible-no-action})}\label{appendix:lower}
Fix the following matrices:
\begin{align}
A_1 = \begin{bmatrix} 
   a-\frac{\sqrt{\delta_{\min}}(a-c)}{32\sqrt{T}} & b+\frac{\sqrt{\delta_{\min}}(d-b)}{32\sqrt{T}} \\
   c-\frac{\sqrt{\delta_{\min}}(a-c)}{32\sqrt{T}} & d+\frac{\sqrt{\delta_{\min}}(d-b)}{32\sqrt{T}} \\
    \end{bmatrix}\ \ \text{and}\ \ 
A_2 = \begin{bmatrix} 
   a+\frac{\sqrt{\delta_{\min}}(a-c)}{32\sqrt{T}} & b-\frac{\sqrt{\delta_{\min}}(d-b)}{32\sqrt{T}} \\
   c+\frac{\sqrt{\delta_{\min}}(a-c)}{32\sqrt{T}} & d-\frac{\sqrt{\delta_{\min}}(d-b)}{32\sqrt{T}} \\
    \end{bmatrix}
\label{eqn:thm1_instances}
\end{align}
where $\frac{1}{\sqrt{T}}\ll a,b,c,d\leq \frac{1}{2}$, $a>b,a>c,d>b,d>c$ and $\delta_{\min}=\min\{b,c\}$,
and consider the following theorem, the formal version of \Cref{thm-impossible-no-action}.
\begin{theorem}[Formal version of \Cref{thm-impossible-no-action}]\label{ucb:lower}
    With $A_1$ and $A_2$ defined in \eqref{eqn:thm1_instances}, let an instance $B \in \{A_1,A_2\}$ be fixed before play starts. At each time $t$, the row- and column-players choose $x_t \in \simplex_2$ and $y_t \in \simplex_2$, respectively, actions $i_t \sim x_t$ and $j_t \sim y_t$ are drawn, and the row-player receives reward $[\mathbf{B}_t]_{i_t, j_t}$ and observes $\mathbf{B}_t$ where $[ \mathbf{B}_t ]_{i,j} \sim \text{Bernoulli}(B_{i,j})$.
    If an algorithm constructs $x_t \in \simplex_2$ from $\{ \mathbf{B}_s \}_{s < t}$ alone (and not $\{j_s\}_{s <t}$) then the best response strategy $y_t = \arg\min_{y \in \simplex_2} x_t^\top B y$ satisfies $\displaystyle \max_{B\in\{A_1,A_2\}} \mathbb{E} \left[\sum_{t=1}^T V_B^*-x_t^\top B y_t  \right] \geq \textstyle\frac{\Delta_{\min}\sqrt{\delta_{\min}T}}{512}$ where $\Delta_{\min}:=\min\{|a-b|,|a-c|,|d-b|,|d-c|\}$.
\end{theorem}

We begin by proving the following key lemma.
\begin{lemma}\label{lower:lem1}
    Consider any $x\in \simplex_2$. Then, there exists  $B\in\{A_1,A_2\}$ such that \[V_B^*-\min_{j\in\{1,2\}}\langle x,Be_j\rangle\geq \frac{\Delta_{\min}\sqrt{\delta_{\min}}}{32\sqrt{T}}.\]
\end{lemma}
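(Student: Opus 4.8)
## Proof Proposal for Lemma \ref{lower:lem1}

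The plan is to compute $V_B^*$ and $\min_j \langle x, B e_j \rangle$ for both $B = A_1$ and $B = A_2$, and show that for any fixed $x \in \simplex_2$, at least one of the two choices of $B$ forces the stated lower bound. First I would record that the two matrices are small perturbations of the common matrix $A = [a,b;c,d]$, which by the assumptions $a>b, a>c, d>b, d>c$ has a unique full-support Nash equilibrium with value $V_A^* = \frac{ad-bc}{a-b-c+d}$ and equilibrium strategies proportional to adjugate rows. The perturbations in \eqref{eqn:thm1_instances} are rank-one: each column of $A_1$ (resp.\ $A_2$) is obtained from the corresponding column of $A$ by subtracting (resp.\ adding) $\frac{\sqrt{\delta_{\min}}(a-c)}{32\sqrt{T}}$ from column one and adding (resp.\ subtracting) $\frac{\sqrt{\delta_{\min}}(d-b)}{32\sqrt{T}}$ to column two — crucially, the perturbation applied to column $j$ is constant in the row index. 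This means that for any $x \in \simplex_2$, $\langle x, A_1 e_1 \rangle = \langle x, A e_1 \rangle - \frac{\sqrt{\delta_{\min}}(a-c)}{32\sqrt{T}}$ and $\langle x, A_1 e_2 \rangle = \langle x, A e_2 \rangle + \frac{\sqrt{\delta_{\min}}(d-b)}{32\sqrt{T}}$, with the signs flipped for $A_2$.

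Next I would exploit this structure. Write $f_1(x) := \langle x, A e_1 \rangle = x_1 a + x_2 c$ and $f_2(x) := \langle x, A e_2 \rangle = x_1 b + x_2 d$, and set $\epsilon_1 := \frac{\sqrt{\delta_{\min}}(a-c)}{32\sqrt{T}} > 0$, $\epsilon_2 := \frac{\sqrt{\delta_{\min}}(d-b)}{32\sqrt{T}} > 0$ (positivity uses $a > c$, $d > b$). Then
\begin{align*}
\min_j \langle x, A_1 e_j \rangle &= \min\{ f_1(x) - \epsilon_1,\ f_2(x) + \epsilon_2 \}, \\
\min_j \langle x, A_2 e_j \rangle &= \min\{ f_1(x) + \epsilon_1,\ f_2(x) - \epsilon_2 \}.
\end{align*}
I would also need $V_{A_1}^*$ and $V_{A_2}^*$. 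Since the column perturbations are row-independent, $A_1 = A - \mathbf{1}\, u^\top$ and $A_2 = A + \mathbf{1}\, u^\top$ for the row vector $u = (\epsilon_1, -\epsilon_2)$; adding a rank-one term $\mathbf{1} u^\top$ to the payoff matrix shifts $\min_y \max_x$ in a controlled way. Concretely, $V_{A_1}^* = \max_x \min_y \langle x, A_1 y\rangle = \max_x \min_y (\langle x, Ay\rangle - \langle \mathbf{1}, x\rangle \langle u, y\rangle) = \max_x \min_y (\langle x, Ay \rangle - \langle u, y\rangle)$, i.e.\ the value of the game $A$ with column $j$ payoffs uniformly shifted by $-u_j$. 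I would bound $|V_{A_i}^* - (\text{something explicit})|$ or simply note $V_{A_1}^* \ge V_A^* - \max_j u_j$-type estimates; in fact the cleanest route is: $V_{A_1}^* \le \min_y \max_x \langle x, A_1 y \rangle \le \max_x \langle x, A_1 y^* \rangle$ where $y^*$ is the Nash of $A$, giving $V_{A_1}^* \le \max_x f_{y^*}(x) - \langle u, y^* \rangle = V_A^* - \langle u, y^*\rangle$, and symmetrically $V_{A_2}^* \le V_A^* + \langle u, y^*\rangle$. Playing the two bounds against each other: whichever of $\pm \langle u, y^*\rangle$ is $\le 0$ gives a matrix $B$ with $V_B^* \le V_A^*$, but that is the wrong direction, so I instead want a lower bound on $V_B^*$ — use $V_{A_1}^* \ge \min_y \langle x^*, A_1 y\rangle = \min_y f_{x^*}(y) - \langle u, y \rangle \ge V_A^* - \max_y \langle u, y\rangle = V_A^* - \max\{\epsilon_1, 0\}$, and similarly for $A_2$. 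More carefully, $V_{A_1}^* \ge \min_j (f_j(x^*) - u_j) = \min\{V_A^* - \epsilon_1,\ V_A^* + \epsilon_2\} = V_A^* - \epsilon_1$, and $V_{A_2}^* \ge V_A^* - \epsilon_2$.

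Now the core combinatorial step. Fix $x$. There are two cases depending on which of $f_1(x), f_2(x)$ is smaller (these straddle $V_A^*$ in the relevant regime, but I won't need that). Suppose $f_1(x) \le f_2(x)$; I claim $B = A_1$ works. Then $\min_j \langle x, A_1 e_j\rangle \le f_1(x) - \epsilon_1 \le V_A^* $-ish — more precisely I'd argue $V_{A_1}^* - \min_j \langle x, A_1 e_j \rangle \ge (V_A^* - \epsilon_1) - (f_1(x) - \epsilon_1) = V_A^* - f_1(x) \ge \ldots$ hmm, but $V_A^* - f_1(x)$ could be zero or negative. The right pairing must be: when $f_1(x) \le f_2(x)$, use $A_2$ (so that $-\epsilon_2$ hits the small coordinate $f_2$... no). Let me instead argue: at least one of $f_1(x) \ge V_A^*$ or $f_2(x) \ge V_A^*$ holds (since their max is $\ge \min_y \max_x \langle x, Ay\rangle = V_A^*$; actually $\max_j f_j(x) \ge \min_{y}\max_x \ge V_A^*$). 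Say $f_1(x) \ge V_A^*$. Choose $B = A_1$: then $\min_j\langle x, A_1 e_j\rangle \le f_2(x) + \epsilon_2$... that's not obviously the binding one either. So the actual case split is on which coordinate is the argmin of $\min_j \langle x, B e_j\rangle$, and I pick $B$ so that the perturbation $-\epsilon_j$ lands on that argmin coordinate, pushing it down by a full $\epsilon_j \ge \frac{\sqrt{\delta_{\min}}\Delta_{\min}}{32\sqrt{T}}$ below $f_j(x)$, while $V_B^* \ge V_A^* - \epsilon_{j'}$ (the other index) and the remaining slack $V_A^* - f_j(x) + (\epsilon_j - \epsilon_{j'})$ — I'd need $f_j(x) \le V_A^*$ for the argmin coordinate, which holds because $\min_j f_j(x) \le V_A^*$ (as $\min_j f_j(x) = \min_y \langle x, Ay\rangle \le V_A^*$). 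Combining: pick the coordinate $j$ achieving $\min_j f_j(x) \le V_A^*$, let $B$ be the instance whose perturbation subtracts from column $j$; then $\min_{j'} \langle x, B e_{j'} \rangle \le f_j(x) - \epsilon_j \le V_A^* - \epsilon_j$, while $V_B^* \ge V_A^* - \max\{\epsilon_1,\epsilon_2\} + \text{(gain)}$...

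I expect the main obstacle to be exactly this bookkeeping: getting the $V_B^*$ lower bound and the $\min_j \langle x, B e_j\rangle$ upper bound to combine with the correct sign so the $\pm\epsilon$ terms reinforce rather than cancel, and verifying $\epsilon_1, \epsilon_2 \ge \frac{\sqrt{\delta_{\min}}\,\Delta_{\min}}{32\sqrt{T}}$, which follows from $a - c \ge \Delta_{\min}$ and $d - b \ge \Delta_{\min}$ by definition of $\Delta_{\min} = \min\{|a-b|,|a-c|,|d-b|,|d-c|\}$. A clean way to sidestep the worst of the case analysis is to observe that $\min_j \langle x, A_1 e_j\rangle + \min_j \langle x, A_2 e_j \rangle \le (f_1(x) - \epsilon_1) + (f_2(x) + \epsilon_2) \wedge \ldots$ — actually averaging: $\min_j\langle x, A_1 e_j\rangle \le \frac{1}{2}(f_1 - \epsilon_1 + f_2 + \epsilon_2)$ only if... no. I'd rather just do the two-case proof directly. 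Once the pairing is fixed, everything else is elementary arithmetic with the explicit $2\times 2$ formulas.
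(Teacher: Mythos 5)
Your overall plan --- exploit the column-constant structure of the perturbations, choose the instance whose perturbation \emph{subtracts} from the column achieving $\min_j \langle x, A e_j\rangle$, and combine $\min_j \langle x, Ae_j\rangle \le V_A^*$ with $\epsilon_1,\epsilon_2 \ge \frac{\sqrt{\delta_{\min}}\,\Delta_{\min}}{32\sqrt{T}}$ --- is the right one, and your final pairing rule (argmin column $j$ $\Rightarrow$ the instance that shifts column $j$ down) is equivalent to the paper's case split on $x_1 \lessgtr \frac{d-c}{D}$. But the argument does not close, and you correctly sense where: your lower bound on the perturbed game value, $V_{A_1}^* \ge \min_j(\langle x^*, A e_j\rangle - u_j) = V_A^* - \epsilon_1$, exactly cancels the $-\epsilon_1$ gained from the perturbation, leaving only $V_{A_1}^* - \min_j\langle x,A_1 e_j\rangle \ge V_A^* - f_1(x) \ge 0$. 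As written, the proposal proves no quantitative bound.

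The missing observation is that the perturbation magnitudes are calibrated so that $V_{A_1}^* = V_{A_2}^* = V_A^* = \frac{ad-bc}{a-b-c+d}$ \emph{exactly}. Both $A_1$ and $A_2$ retain a fully mixed equilibrium (the orderings $a>c$, $d>b$, $a>b$, $d>c$ survive the $O(1/\sqrt T)$ shifts since $1/\sqrt{T}\ll a,b,c,d$), so the interior-value formula applies to them. The denominator $a-b-c+d$ is unchanged because each column is shifted by a row-independent constant, and the numerator is unchanged because, e.g.\ for $A_1$, $(a-\epsilon_1)(d+\epsilon_2)-(b+\epsilon_2)(c-\epsilon_1) = ad-bc + \epsilon_2(a-c)-\epsilon_1(d-b)$, and $\epsilon_2(a-c) = \epsilon_1(d-b) = \frac{\sqrt{\delta_{\min}}(a-c)(d-b)}{32\sqrt{T}}$ precisely because $\epsilon_1\propto (a-c)$ and $\epsilon_2\propto(d-b)$ with the same constant; similarly for $A_2$. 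With $V_B^*=V_A^*$ in hand, your case analysis closes immediately: if $x_1\le \frac{d-c}{D}$ then $f_1(x)\le V_A^*$ and $V_{A_1}^*-\langle x,A_1e_1\rangle = V_A^*-f_1(x)+\epsilon_1\ge \epsilon_1$, and symmetrically with $A_2$ and column $2$ when $x_1\ge\frac{d-c}{D}$ --- which is exactly the paper's proof.
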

\begin{proof}
    Let $D:=a-b-c+d$. Observe that $V_{A_1}^*=V_{A_2}^*=\frac{ad-bc}{D}$. If $x_1\leq \frac{d-c}{D}$, then \[V_{A_1}^*-\langle x,A_1e_1\rangle\geq \frac{ad-bc}{D}-\left(\frac{ad-bc}{D}-\frac{\sqrt{\delta_{\min}}(a-c)}{32\sqrt{T}}\right)\geq \frac{\Delta_{\min}\sqrt{\delta_{\min}}}{32\sqrt{T}}.\] Analogously, if $x_1\geq \frac{d-c}{D}$, then \[V_{A_1}^*-\langle x,A_2e_2\rangle\geq \frac{ad-bc}{D}-\left(\frac{ad-bc}{D}-\frac{\sqrt{\delta_{\min}}(d-b)}{32\sqrt{T}}\right)\geq \frac{\Delta_{\min}\sqrt{\delta_{\min}}}{32\sqrt{T}},\] which concludes the proof. 
\end{proof}

Given the above technical lemma, we prove Theorem~\ref{ucb:lower}.
\begin{proof}[Proof of Theorem~\ref{ucb:lower}]
 Due to Lemma \ref{lower:lem1}, we have $\simplex_2=S_1\cup S_2$ where
\begin{align*}
    S_1&:=\{x\in \simplex_2: V_{A_1}^*-\min_{j\in\{1,2\}}\langle x,A_1e_j\rangle\geq\frac{\Delta_{\min}\sqrt{\delta_{\min}}}{32\sqrt{T}}\},\\
    S_2&:=\{x\in \simplex_2: V_{A_2}^*-\min_{j\in\{1,2\}}\langle x,A_2e_j\rangle\geq \frac{\Delta_{\min}\sqrt{\delta_{\min}}}{32\sqrt{T}}\}.
\end{align*}

Let $\Omega=\mathbb{R}^{4\times T}$ be the sample space. Consider any realization $\omega\in \Omega$ of the sample space. Then $[\mathbf{B}_t ]_{i,j} =\omega_{2i+j,t}$. Recall that $[\mathbf{B}_t ]_{i,j}\sim \text{Bernoulli}(B_{i,j}) $. Let $P_1$ and $P_2$ correspond to the probability distributions of input instances $A_1$ and $A_2$ respectively. 

Let us now analyse an algorithm $\mathcal{A}$ that constructs $x_t \in \simplex_2$ from $\{ \mathbf{B}_s \}_{s < t}$ alone (and not $\{j_s\}_{s <t}$). We choose an adversarial column player that for any input matrix $B$ plays the column $\arg\max_{j\in\{1,2\}}V_B^*-\min\langle x,Be_j\rangle$ (breaking ties arbitrarily but consistently). Now observe that for any realization $\omega\in \Omega$, the algorithm $\mathcal{A}$ behaves exactly the same for both the input instances $A_1$ and $A_2$. Let $G\subset \Omega$ denote the event such that $|t\in[T]:x_t\in S_1|<\frac{T}{2}$.  As $S_1\cup S_2=\simplex_2$, we have $|t\in[T]:x_t\in S_1|+|t\in[T]:x_t\in S_2|\geq T$. Hence if event $G$ holds then $|t\in[T]:x_t\in S_2|\geq T/2$ and if event $G$ does not hold then $|t\in[T]:x_t\in S_1|\geq T/2$.

Note that the row player cannot incur a negative regret in any round. Now we begin our regret analysis.

If $P_1(G)<\frac{3}{4}$, then by the definition of $S_1$, we have that \[R(A_1,T)\geq (1-P_1(G))\cdot \frac{T}{2}\cdot \frac{\Delta_{\min}\sqrt{\delta_{\min}}}{32\sqrt{T}}\geq \frac{\Delta_{\min}\sqrt{\delta_{\min}T}}{256}.\] 
On the other hand, if $P_1(G)\geq \frac{3}{4}$, then due to Pinsker's inequality, we have that
\begin{align*}
    2(P_1(G)-P_2(G))^2&\leq KL(P_1,P_2)\\
    &\leq \frac{16T}{a}\cdot \left(\frac{\sqrt{\delta_{\min}}(a-c)}{32\sqrt{T}}\right)^2+\frac{16T}{c}\cdot \left(\frac{\sqrt{\delta_{\min}}(a-c)}{32\sqrt{T}}\right)^2\\
    &+\frac{16T}{b}\cdot \left(\frac{\sqrt{\delta_{\min}}(d-b)}{32\sqrt{T}}\right)^2+\frac{16T}{d}\cdot \left(\frac{\sqrt{\delta_{\min}}(d-b)}{32\sqrt{T}}\right)^2\\
    &\leq 4\cdot 16T\cdot \left(\frac{1}{32\sqrt{T}}\right)^2\\
    &<\frac{1}{8}
\end{align*}
where 
the second inequality follows from lemma \ref{kl:bernoulli}, and the third inequality follows from the definition of $\delta_{\min}$ and assumption that $0<a,b,c,d<1$. Hence, we have that $P_2(G)\geq P_1(G)-\frac{1}{4}\geq  \frac{1}{2}$.  Due to the definition of $S_2$, we have \[R(A_2,T)\geq P_2(G)\cdot \frac{T}{2}\cdot \frac{\Delta_{\min}\sqrt{\delta_{\min}}}{32\sqrt{T}}\geq \frac{\Delta_{\min}\sqrt{\delta_{\min} T}}{128}.\]
\end{proof}

\subsection{Regret lower bound for UCB under bandit feedback}\label{appendix:ucb:lower}

Now consider the following matrix $A$:
\[
A = \begin{bmatrix} 
   2/3 & 0 \\
   0 & 1/3 \\
    \end{bmatrix}
\]
Observe that $(x^*,y^*)=((1/3,2/3),(1/3,2/3))$ is the Nash Equilibrium of $A$. We do not add any noise to the elements of $A$. In this section, we prove the following theorem.
\begin{theorem}
    UCB incurs a regret of $c_0\sqrt{T\log T}$ on the input instance $A$ where $c_0$ is an absolute constant. 
\end{theorem}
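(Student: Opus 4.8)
The plan is to formalize the heuristic argument already sketched in the main text around the matrix $A=\mathrm{diag}(2/3,1/3)$, turning each of the ``$\approx$'' steps into rigorous inequalities. I would have the column-player operate in two phases. In the \emph{burn-in phase}, lasting $t_0 := \lceil T/2 \rceil$ rounds, the column-player plays a fixed strategy close to $y^*=(1/3,2/3)$ — for concreteness $y_t = y^*$ exactly — so that regardless of the $x_t$ produced by UCB, every entry $(i,j)$ with $i\in\{1,2\}$ is sampled $\Omega(T)$ times; this follows because $x_t$ always has at least one coordinate $\geq 1/2$, and $y^*$ has both coordinates $\geq 1/3$, so $n_{i,j}^{t_0} \geq c\, T$ for the relevant pairs with high probability by a Chernoff bound (Lemma on Chernoff Bound in the excerpt). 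Hence at the end of burn-in each confidence width satisfies $\Delta_{ij}^{t} = \Theta(\sqrt{(\log T)/T})$, and more precisely $\Delta_{ij}^t \le \sqrt{2\log(2T^2 nm)/(c T)}$ uniformly. In the \emph{best-response phase} (rounds $t_0+1,\dots,T$), the column-player plays $y_t = \arg\min_{y\in\simplex_2}\langle x_t,\widetilde A y\rangle$ — wait, rather $\arg\min_{y}\langle x_t, A y\rangle$, the best response against the true matrix, since the regret is measured against $A$.

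Next I would analyze the trajectory of $(x_t)_1$ during the best-response phase. The key structural fact is that $x_t = \arg\max_x \min_y \langle x, \widetilde A_t y\rangle$ where $\widetilde A_t = \mathrm{diag}(2/3,1/3) + [\Delta_{ij}^t]$ is a $2\times 2$ matrix whose Nash equilibrium admits the closed form $(x_t)_1 = \frac{\widetilde A_{22}-\widetilde A_{21}}{\widetilde A_{11}+\widetilde A_{22}-\widetilde A_{12}-\widetilde A_{21}}$, valid whenever this lies in $[0,1]$ (an interior equilibrium, which I must verify holds throughout — it does because the off-diagonal perturbations keep the game nondegenerate and the diagonal dominant in the right sense). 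Plugging in $\widetilde A_{ij} = A_{ij} + \Delta_{ij}^t$ with $\Delta_{ij}^t = \Theta(\sqrt{\log T / n_{ij}^t})$ and $n_{ij}^t$ growing from $\Theta(T)$ by at most one per round, a first-order expansion gives $(x_t)_1 = 1/3 - \Theta(\sqrt{\log T / T}) + (\text{drift})$, where the per-round drift $(x_{t+1})_1 - (x_t)_1$ is $\Theta(\sqrt{\log T}\,\partial_n (n^{-1/2})) = \Theta(\sqrt{\log T}/T^{3/2})$ in magnitude. I would make this precise with explicit constants: show there exist absolute constants such that for all $t$ in a window $[t_0, t_0 + T/c]$ we have $(x_t)_1 \le 1/3 - \sqrt{(\log T)/(4T)}$, i.e., $x_t$ stays bounded away from the Nash value by $\Omega(\sqrt{(\log T)/T})$ for $\Omega(T)$ rounds.

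Finally I would convert this bias into a regret lower bound. For any $x$ with $x_1 \le 1/3 - \gamma$, the best response $y$ against $A$ satisfies $V_A^* - \langle x, A y\rangle \ge \gamma \cdot \Delta_{\min}'$ for an absolute constant (here $\langle x, A e_1\rangle = (2/3) x_1$ and $\langle x, A e_2\rangle = (1/3)(1-x_1)$, and since $x_1 < 1/3$ the first is the smaller, giving $V_A^* - \langle x, A e_1\rangle = 2/3 - (2/3)x_1 - \dots$; a short computation yields the claimed linear-in-$\gamma$ gap). Summing over the $\Omega(T)$ rounds of the window with $\gamma = \sqrt{(\log T)/(4T)}$ yields total regret $\Omega\big(T \cdot \sqrt{(\log T)/T}\big) = \Omega(\sqrt{T\log T})$, and since the burn-in phase contributes only nonnegative regret this lower-bounds the total. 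I expect the main obstacle to be the second step: rigorously controlling the trajectory of $(x_t)_1$ requires a self-consistent induction showing that $n_{ij}^t$ stays $\Theta(T)$ throughout the window (so the drift estimate remains valid) while simultaneously the $x_t$ stays biased (which is what keeps a particular column from being over-sampled) — a coupled argument where one must carefully track how the best-response column-player's choices feed back into the sampling counts, and ensure the low-probability failure events from the Chernoff bounds are absorbed into an $O(1/T)$ additive term via the reduction in \eqref{equivalent:regret}.
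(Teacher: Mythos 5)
Your overall architecture (burn-in to fix the counts, then a best-response phase in which the closed-form Nash of $\tilde A_t$ is biased below $1/3$ by $\Theta(\sqrt{\log T/T})$ for $\Omega(T)$ rounds) matches the paper's. But there is a genuine gap in your burn-in phase. You let the column-player play $y^*$ non-adaptively and claim that "regardless of the $x_t$ produced by UCB" all four entries are sampled $\Omega(T)$ times because $x_t$ always has a coordinate $\geq 1/2$. That argument only controls one row at a time (and which row can change), so it does not rule out, e.g., UCB concentrating on row $1$ and leaving $n_{2,1},n_{2,2}$ small. Worse, even if both rows were sampled $\Omega(T)$ times, the sign of the bias depends on the \emph{relative} row counts: with $y_t=y^*$ one has $n_{i,2}\approx 2n_{i,1}$ within each row, so $\Delta_{22}-\Delta_{21}\approx -0.29\sqrt{\log T/n_{2,1}}$ while the denominator $1+\Delta_{22}-\Delta_{21}-\Delta_{12}+\Delta_{11}$ has correction $\approx 0.29\bigl(\sqrt{\log T/n_{1,1}}-\sqrt{\log T/n_{2,1}}\bigr)$, whose sign flips according to whether row $1$ or row $2$ is played more. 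A first-order expansion of $\frac{1/3-a}{1-b}$ shows the bias in $(x_t)_1$ can become \emph{positive} when $n_{1,1}\gg n_{2,1}$, so your conclusion $(x_t)_1\leq 1/3-\sqrt{\log T/(4T)}$ does not follow from a fixed burn-in adversary without further control of the row marginals.

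The paper closes exactly this hole by making the burn-in adversary adaptive: it plays $e_1$ when $(x_t)_1<0.33$, $e_2$ when $(x_t)_1>0.34$, and $y^*$ only when $(x_t)_1\in[0.33,0.34]$. This yields a dichotomy: either UCB spends $\geq T/2$ rounds outside the interval and already pays $\Omega(\sqrt{T\log T})$ (in fact $\Omega(T)$ against the best response), or it spends $\geq T/2$ rounds near $x^*=(1/3,2/3)$, which pins the counts to $n_{i,j}\approx x_i^* y_j^* T/2$ (e.g.\ $n_{2,2}\approx 4\,n_{1,1}$) by Chernoff, and those specific proportions are what make both the numerator correction negative and the denominator $\geq 1$ in Stage 2. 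Your second concern — the "self-consistent induction" over the best-response window — is handled more simply than you fear: counts are monotone and change by at most the window length $T/1000$, so the $\Delta_{ij}$ stay in fixed ranges and no coupled drift argument is needed. To repair your proof you should replace the fixed burn-in strategy with the adaptive one (or otherwise prove that UCB's row marginals during burn-in are forced close to $x^*$), after which the rest of your outline goes through.
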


We describe the behavior of the column player in three different stages.

\textbf{Stage 1.} Let $y^{(1)}=(1,0)$ and $y^{(2)}=(0,1)$. In this stage the column player behaves as follows:
\begin{itemize}
    \item If $(x_t)_1<0.33$, then $y_t=y^{(1)}$
    \item If $(x_t)_1>0.34$, then $y_t=y^{(2)}$
    \item If $0.33\leq(x_t)_1\leq 0.34$, then $y_t=y^*$
\end{itemize}

This stage lasts for $t_0:=\frac{T}{2}+\sqrt{T\log T}$ rounds. If $|t\in[t_0]:0.33\leq(x_t)_1\leq 0.34|<T/2$, then the regret incurred by the row player is $\Omega(\sqrt{T\log T})$. Hence, let us assume that $|t\in[t_0]:0.33\leq(x_t)_1\leq 0.34|\geq T/2$. Let $n_{i,j}^t$ denote the number of times the element $(i,j)$ has been sampled till the end of round $t$. Let $G$ be the event such that the following holds at the end of round $t$:
\begin{align*}
    0.054T&\leq n_{1,1}^{t_0}\leq 0.057T\\
    0.108T&\leq n_{1,2}^{t_0}\leq 0.114T\\
    0.108T&\leq n_{2,1}^{t_0}\leq 0.114T\\
    0.216T&\leq n_{2,2}^{t_0}\leq 0.228T\\
\end{align*}
Due to Chernoff bound, event $G$ holds with probability at least $1-\frac{1}{T}$. Let now assume for the rest of the section that event $G$ holds. 

\textbf{Stage 2.} In this stage the column player plays $y_t=\arg\min_{y\in\simplex_2}\langle x_t,Ay\rangle$ in each round $t$. Recall that $\Delta_{ij}^{t}=\sqrt{\frac{\log(64T^4)}{n_{ij}^{t-1}}}$. Now for each $t\in \mathcal{T}:=\{t_0,t_0+1,\ldots, t_0+T/1000-1\}$ we have the following:

\begin{align*}
    0.054T&\leq n_{1,1}^{t}\leq 0.058T\\
    0.108T&\leq n_{1,2}^{t}\leq 0.115T\\
    0.108T&\leq n_{2,1}^{t}\leq 0.115T\\
    0.216T&\leq n_{2,2}^{t}\leq 0.229T\\
\end{align*}

Now for each $t\in\mathcal{T}$, we have the following:
\[\Delta_{22}-\Delta_{21}=\sqrt{\frac{\log(64T^4)}{n_{2,2}^{t-1}}}-\sqrt{\frac{\log(64T^4)}{n_{2,1}^{t-1}}}\leq \sqrt{\frac{\log(64T^4)}{0.216T}}-\sqrt{\frac{\log(64T^4)}{0.115T}}<-0.5\sqrt{\frac{\log(64T^4)}{T}}\] 

Next we have the following:
\begin{align*}
    \Delta_{22}-\Delta_{21}-\Delta_{12}+\Delta_{11}&=\sqrt{\frac{\log(64T^4)}{n_{2,2}^{t-1}}}-\sqrt{\frac{\log(64T^4)}{n_{2,1}^{t-1}}}-\sqrt{\frac{\log(64T^4)}{n_{1,2}^{t-1}}}+\sqrt{\frac{\log(64T^4)}{n_{1,1}^{t-1}}}\\
    &\geq \sqrt{\frac{\log(64T^4)}{0.229T}}-\sqrt{\frac{\log(64T^4)}{0.108T}}-\sqrt{\frac{\log(64T^4)}{0.108T}}+\sqrt{\frac{\log(64T^4)}{0.058T}}\\
    &>0
\end{align*}

Now for each $t\in\mathcal T$, we have the following:
\begin{align*}
    (x_t)_1&=\frac{1/3+\Delta_{22}-\Delta_{21}}{1+\Delta_{22}-\Delta_{21}-\Delta_{12}+\Delta_{11}}\\
    &\leq 1/3+\Delta_{22}-\Delta_{21}\\
    &\leq 1/3-0.5\sqrt{\frac{\log(64T^4)}{T}}
\end{align*}

Hence, each round $t\in \mathcal{T}$ the column player plays the strategy $y_t=(1,0)$. Recall that $V_A^*=2/9$. Therefore, the regret incurred by the row player in each round $t\in \mathcal{T}$ is $V_A^*-\langle x_t,Ay_t\rangle\geq 2/9-2/9+\sqrt{\frac{\log(64T^4)}{9T}}=\sqrt{\frac{\log(64T^4)}{9T}}$. Hence the total regret incurred is at least $\frac{T}{1000}\cdot \sqrt{\frac{\log(64T^4)}{9T}} =\Omega(\sqrt{T\log T})$.

\textbf{Remark:} The row player always incurs a non-negative regret in each round. Hence, just the analysis of the case when event $G$ holds suffices and therefore UCB incurs a regret of $\Omega(\sqrt{T\log T})$.
\subsection{Impossibility result for instance-dependent external regret (Proof of Theorem~\ref{external:thm})}\label{appendix:lower:external}
Consider the following matrix $A$:
\[
A = \begin{bmatrix} 
   a & b \\
   c & d \\
    \end{bmatrix}
\]
Without loss of generality, let $a>b,a>c,d>c,d<b$ and $d-b\leq a-c$. Let $D:=a-b-c+d$, $V_A^*=\frac{ad-bc}{D}$ and $\Delta_{\min}=\min\{|a-b|,|a-c|,|d-b|,|d-c|\}$.

In this setting, the matrix $A$ is known to the row player and the column player. In each round the row player plays a strategy $x_t$ and the column player plays a strategy $y_t$. At the end of each round $t$, the row player only observes a column $j_t$ that is sampled from the distribution $y_t$.


Let $y^{(1)}=(\frac{d-b}{D}+\sqrt{\frac{\Delta_{\min}}{64DT}},\frac{a-c}{D}-\sqrt{\frac{\Delta_{\min}}{64DT}})$ and $y^{(2)}=(\frac{d-b}{D}-\sqrt{\frac{\Delta_{\min}}{64DT}},\frac{a-c}{D}+\sqrt{\frac{\Delta_{\min}}{64DT}})$ be two possible strategies of the column player. We need the following technical lemma.
\begin{lemma}\label{external:lem1}
    Consider any $x\in \simplex_2$. There exists $\tilde y\in\{y^{(1)},y^{(2)}\}$ such that \[\max_{i\in\{1,2\}}\langle e_i,A\tilde y\rangle-\langle x,A\tilde y\rangle\geq \sqrt{\frac{\Delta_{\min}^3}{64DT}}.\]
\end{lemma}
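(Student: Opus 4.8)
The plan is to exploit that $y^{(1)}$ and $y^{(2)}$ are the two antipodal perturbations of the column player's Nash strategy $y^*=(\tfrac{d-b}{D},\tfrac{a-c}{D})$ in the direction $(1,-1)$, of common magnitude $\varepsilon:=\sqrt{\tfrac{\Delta_{\min}}{64DT}}$. Since $(x^*,y^*)$ has full support, $y^*$ makes the row player indifferent across its rows, i.e.\ $Ay^* = V_A^*\mathbf 1$, and hence
\begin{align*}
Ay^{(1)} &= V_A^*\mathbf 1 + \varepsilon A(1,-1)^\top = \bigl(V_A^*+\varepsilon(a-b),\; V_A^*-\varepsilon(d-c)\bigr)^\top,\\
Ay^{(2)} &= V_A^*\mathbf 1 - \varepsilon A(1,-1)^\top = \bigl(V_A^*-\varepsilon(a-b),\; V_A^*+\varepsilon(d-c)\bigr)^\top.
\end{align*}
Under the sign conventions in force (WLOG $D>0$, which forces the four gaps $a-b,\,a-c,\,d-b,\,d-c$ to be strictly positive for the equilibrium to have full support), this shows $\max_i\langle e_i,Ay^{(1)}\rangle = V_A^*+\varepsilon(a-b)$, attained at row $1$, while $\max_i\langle e_i,Ay^{(2)}\rangle = V_A^*+\varepsilon(d-c)$, attained at row $2$.

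Next I would plug in an arbitrary $x=(x_1,x_2)\in\simplex_2$. Using $\langle x,Ay^{(1)}\rangle = V_A^*+\varepsilon\bigl(x_1(a-b)-x_2(d-c)\bigr)$ together with $1-x_1=x_2$, a one-line cancellation gives
\begin{align*}
\max_i\langle e_i,Ay^{(1)}\rangle-\langle x,Ay^{(1)}\rangle = \varepsilon\,x_2\bigl((a-b)+(d-c)\bigr)=\varepsilon\,x_2\,D,
\end{align*}
and the mirror-image computation gives $\max_i\langle e_i,Ay^{(2)}\rangle-\langle x,Ay^{(2)}\rangle=\varepsilon\,x_1\,D$. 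Taking the larger of the two choices of $\tilde y$, and using that some coordinate of any point of $\simplex_2$ is at least $\tfrac12$, the one-round external regret is at least $\varepsilon D\max\{x_1,x_2\}\ge \varepsilon D/2$.

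Finally I would check that $\varepsilon D/2$ meets the claimed bound. Substituting $\varepsilon=\sqrt{\Delta_{\min}/(64DT)}$ turns $\varepsilon D/2\ge \sqrt{\Delta_{\min}^3/(64DT)}$ into the inequality $D\ge 2\Delta_{\min}$. But $D=(a-b)+(d-c)$ with both summands positive, and each of $a-b$ and $d-c$ is at least $\Delta_{\min}$ by definition of $\Delta_{\min}$; hence $D\ge2\Delta_{\min}$ and the lemma follows.

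I do not expect a genuine obstacle here; the only place that needs care is the sign bookkeeping --- making sure the argmax over the two rows lands on the correct index for each of $y^{(1)}$ and $y^{(2)}$, which is precisely where positivity of the four gaps (equivalently the normalization $D>0$) enters --- together with the trivial but essential final reduction $D\ge 2\Delta_{\min}$. Everything downstream of the indifference identity $Ay^*=V_A^*\mathbf 1$ is elementary algebra.
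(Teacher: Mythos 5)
Your proof is correct and follows essentially the same route as the paper's: perturb $y^*$ antipodally in the $(1,-1)$ direction, use the indifference identity $Ay^*=V_A^*\mathbf{1}$, and observe that one of the two perturbations must hurt any fixed $x$. The only difference is in the last step — the paper cases on whether $x_1 \le (d-c)/D$ or $x_1\ge (d-c)/D$ and lower-bounds the regret by $\varepsilon\Delta_{\min}$ directly, whereas you compute the regret exactly as $\varepsilon x_2 D$ (resp.\ $\varepsilon x_1 D$) and finish with $\max\{x_1,x_2\}\ge \tfrac12$ together with $D\ge 2\Delta_{\min}$, which yields a bound at least as strong.
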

\begin{proof}
    If $x_1\leq \frac{d-c}{D}$, then \[\max_{i\in\{1,2\}}\langle e_i,Ay^{(1)}\rangle-\langle x,A y^{(1)}\rangle\geq V_A^*+(a-b)\cdot \sqrt{\frac{\Delta_{\min}}{64DT}}-V_A^*\geq\sqrt{\frac{\Delta_{\min}^3}{64DT}}.\] Analogously, if $x_1\geq \frac{d-c}{D}$, then \[\max_{i\in\{1,2\}}\langle e_i,Ay^{(2)}\rangle-\langle x,A y^{(2)}\rangle\geq V_A^*+(d-c)\cdot \sqrt{\frac{\Delta_{\min}}{64DT}}-V_A^*\geq\sqrt{\frac{\Delta_{\min}^3}{64DT}},\] which concludes the proof. 
\end{proof}

Given the above technical lemma, we prove Theorem~\ref{external:thm}
\begin{proof}[Proof of Theorem~\ref{external:thm}]
   Observe that $\simplex_2=S_1\cup S_2$ by Lemma~\ref{external:lem1}, where 
   \begin{align*}
     S_1&:=\left\{x\in \simplex_2: \max_{i\in\{1,2\}}\langle e_i,Ay^{(1)}\rangle-\langle x,Ay^{(1)}\rangle\geq \sqrt{\frac{\Delta_{\min}^3}{64DT}}\right\},\\
     S_2&:=\left\{x\in \simplex_2: \max_{i\in\{1,2\}}\langle e_i,Ay^{(2)}\rangle-\langle x,Ay^{(2)}\rangle\geq \sqrt{\frac{\Delta_{\min}^3}{64DT}}\right\}.
   \end{align*}

Let $\Omega=\{1,2\}^{T}$ be the sample space. Consider any realization $\omega\in \Omega$ of the sample space. Then, the column $j_t$ sampled at the end of round $t$ is $\omega_{t}$.

Let us now analyse an algorithm $\mathcal{A}$ that aims to minimize the external regret. We have the following two adversarial strategies:
\begin{enumerate}
    \item $y_t=y^{(1)}$ for all $t\in [T]$
    \item $y_t=y^{(2)}$ for all $t\in [T]$
\end{enumerate}
Let $P_1$ and $P_2$ correspond to the probability distributions of the first and second adversarial strategy. For any $\tilde y\in \{y^{(1)},y^{(2)}\}$, let $R(\tilde y)$ denote the external regret \[\max_{i\in\{1,2\}}\sum_{t=1}^T\langle e_i,A\tilde y\rangle-\sum_{t=1}^T\mathbb{E}[\langle x_t,A\tilde y\rangle].\] 

Now observe that for any realization $\omega\in \Omega$, the algorithm $\mathcal{A}$ behaves exactly the same for both the adversarial strategies. Let $G\subset \Omega$ denote the event such that $|\{t\in[T]:x_t\in S_1\}|<\frac{T}{2}$.  As $S_1\cup S_2=\simplex_2$, we have $|t\in[T]:x_t\in S_1|+|t\in[T]:x_t\in S_2|\geq T$. Hence if event $G$ holds then $|t\in[T]:x_t\in S_2|\geq T/2$ and if event $G$ does not hold then $|t\in[T]:x_t\in S_1|\geq T/2$.

Note that the row player cannot incur a negative regret in any round. Now we begin our regret analysis.

If $P_1(G)<\frac{3}{4}$, then Lemma \ref{external:lem1} implies that  \[R(y^{(1)})\geq (1-P_1(G))\cdot \frac{T}{2}\cdot \sqrt{\frac{\Delta_{\min}^3}{64DT}}\geq \frac{1}{64}\sqrt{\frac{T\Delta_{\min}^3}{D}}.\] On the other hand, if $P_1(G)\geq \frac{3}{4}$, then Pinsker's inequality implies that 
\begin{align*}
    2(P_1(G)-P_2(G))^2&\leq KL(P_1,P_2),\\
    &=T\cdot KL\left(y^{(1)},y^{(2)}\right),\\
    &\leq T\cdot \frac{16D}{d-b}\cdot \left(\sqrt{\frac{\Delta_{\min}}{64DT}}\right)^2,\\
    &< \frac{1}{2},
\end{align*}
where the second inequality follows from Lemma \ref{kl:bernoulli}.
Hence, we have that $P_2(G)\geq P_1(G)-\frac{1}{2}\geq \frac{1}{4}$. Due to the definition of $S_2$, we have \[R(y^{(2)})\geq P_2(G)\cdot \frac{T}{2}\cdot \sqrt{\frac{\Delta_{\min}^3}{64DT}}\geq \frac{1}{64}\sqrt{\frac{T\Delta_{\min}^3}{D}}.\]
\end{proof}


\section{Analysis of various subroutines }\label{appendix:subroutine}
Recall that $A\in[-1,1]^{n\times m}$ and $\widehat A\in [-1,1]^{n\times m}$. We translate and re-scale these matrices by adding $1$ to every entry and then dividing every entry by $2$. Hence for the rest of the section, we assume that  $A\in[0,1]^{n\times m}$ and $\widehat A\in [0,1]^{n\times m}$. Note that 
the regret only changes by a factor of $2$.
\subsection{Analysis of the sub-routine (Algorithm \ref{subroutine-nxn}) for noisy matrix feedback  }\label{appendix:sub:nxn}
In this section, we now prove Theorem \ref{sub-nxn-thm}.

Recall that at each time $t=1,\ldots,T_2$ we define $x_t = x' + \vec\delta_t$ where \[\vec\delta_t= \left(\delta_t(1),\ldots,\delta_t(n-1),-\sum_{i=1}^{n-1} \delta_t(i)\right)^\top\] with $\max_{i=1,\dots,n-1} | \delta_t(i) | \leq  \frac{1}{D_1\sqrt{T_1}}$, by construction. Also observe that the step $\delta_{t+1}(i) = \arg\min_{z \in [-\frac{1}{D_1\sqrt{T_1}},\frac{1}{D_1\sqrt{T_1}}]} |\delta_{t+1/2}(i) -z |$ in our algorithm is equivalent to the following:
\begin{itemize}
    \item If $\delta_{t+1/2}(i)\in [-\frac{1}{D_1\sqrt{T_1}},\frac{1}{D_1\sqrt{T_1}}]$ then $\delta_{t+1}(i)=\delta_{t+1/2}(i)$;
    \item If $\delta_{t+1/2}(i)<-\frac{1}{D_1\sqrt{T_1}}$ then $\delta_{t+1}(i)=-\frac{1}{D_1\sqrt{T_1}}$;
    \item If $\delta_{t+1/2}(i)>\frac{1}{D_1\sqrt{T_1}}$ then $\delta_{t+1}(i)=\frac{1}{D_1\sqrt{T_1}}$.
\end{itemize}

By assumption we have that 
\begin{align*}
    &\max_{i,j} | A_{i,j}- \widehat{A}_{i,j} | \leq \frac{1}{\sqrt{T_1}},\quad \max_{i=1,\dots,n-1}| x'_i-x^*_i | \leq \frac{1}{D_1\sqrt{T_1}},\quad \text{ and}\\
    & \min_{i=1,\dots,n} \min\{x_i',1-x_i'\} \geq \tfrac{n-1}{D_1\sqrt{T_1}}\quad\text{ so that $x_t\in \simplex_n$}.
\end{align*}
Now consider the following decomposition:
\begin{align*}
    \langle x_t,Ae_{j_t}\rangle&=\langle x'+\vec\delta_t,Ae_{j_t}\rangle\\
    &=\langle x^*, Ae_{j_t}\rangle+\langle x' - x_* + \vec\delta_t , Ae_{j_t}\rangle\\
    &\geq V_A^*+\langle x' - x^* + \vec\delta_t, Ae_{j_t}-\widehat Ae_{j_t}\rangle+\langle x' - x_* + \vec\delta_t, \widehat Ae_{j_t}\rangle\\
    &\geq V_A^*-\frac{4(n-1)}{D_1T_1}+\langle x' - x^* + \vec\delta_t, \widehat Ae_{j_t}\rangle  \tag{due to our assumptions}
    \end{align*}
So it now remains to show that $\sum_{t=1}^{T_2}\langle x^*-x' - \vec\delta_t, \widehat Ae_{j_t}\rangle \leq c\cdot n\cdot\lceil\frac{T_2}{D_1T_1}\rceil$ where $c$ is some absolute constant.

For all $i\in[n-1]$ and $j\in[m]$, let $a_{i,j}:=\widehat A_{i,j}-\widehat A_{n,j}$. 
Let $\theta\in\mathbb{R}^n$ such that for all $i\in[n]$, $\theta_i=x^*_i-x'_i$. Observe that $\theta_n=-\sum_{i\in[n-1]}\theta_i$. Now we have the following:
\begin{align*}
    \langle x^*-x' - \vec\delta_t, \widehat Ae_{j_t}\rangle&= \langle \theta,\widehat Ae_{j_t}\rangle-\langle\vec\delta_t,\widehat Ae_{j_t}\rangle\\
    &=\sum_{i\in[n-1]}(\theta_i\widehat A_{i,j_t}-\delta_t(i)\widehat A_{i,j_t})-\sum_{i\in[n-1]}(\theta_i\widehat A_{n,j_t}-\delta_t(i)\widehat A_{n,j_t})\\
    &=\sum_{i\in[n-1]}(\theta_i(\widehat A_{i,j_t}-\widehat A_{n,j_t})-\delta_t(i)(\widehat A_{i,j_t}-\widehat A_{n,j_t}))\\
    &=\sum_{i\in[n-1]}(-a_{i,j_t}\delta_t(i)+a_{i,j_t}\theta_i).
\end{align*}
Fix an index $i\in[n-1]$. Now we analyse the sum $\sum_{t=1}^{T_2}-a_{i,j_t}\delta_t(i)+a_{i,j_t}\theta_i$.  We will be using the update rule $\delta_{t+1/2}(i)=\delta_t(i)+\frac{a_{i,j_t}}{D_1T_1}$ repeatedly to analyse the sum.

Let $\{\delta_t(i)\}_{t=1}^{T_2}$ denote the trajectory of the row-player in the interval $[-\frac{1}{D_1\sqrt{T_1}},\frac{1}{D_1\sqrt{T_1}}]$. As we show below, we can obtain a meaningful expression for the sum $\sum_{t=1}^{T_2}-a_{i,j_t}\delta_t(i)+a_{i,j_t}\theta_i$ if for every $t\in [T_2-1]$, we have $\delta_{t+1/2}(i)=\delta_{t+1}(i)$. We obtain this expression using the rule $\delta_{t+1}=\delta_{t+1/2}(i)=\delta_t(i)+\frac{a_{i,j_t}}{D_1T_1}$. However, there can exist a timestep $s$ such that $\delta_{s+1/2}(i)\neq\delta_{s+1}(i)$. Hence, we aim to break the trajectory  $\{\delta_t(i)\}_{t=1}^{T_2}$  into smaller trajectories that have the property that $\delta_{t+1/2}(i)=\delta_{t+1}(i)$ and analyse the sum separately for each of the smaller trajectories.

Let $\mathcal{T}=\{t\in[T_2]:\delta_t(i)\in \{-\frac{1}{D_1\sqrt{T_1}},\frac{1}{D_1\sqrt{T_1}}\}\}$. 
Let $t_1\leq t_2\leq\ldots\leq t_\ell$ be the ordering of the elements of $\mathcal{T}$. Here $\ell$ is a finite number and it depends on the sequence of columns played by the column player. 
Observe that $t_1=1$. Let us also define $t_{\ell+1}:=T_2+1$. For any $k\in\{1,\ldots,\ell\}$, we now analyse the sum $\sum_{t=t_k}^{t_{k+1}-1}-a_{i,j_t}\delta_t(i)+a_{i,j_t}\theta_i$ for the trajectory $\{\delta_t(i)\}_{t=t_k}^{t_{k+1}-1}$


Recall that for any time step $t\in\{t_k,t_k+1,\ldots,t_{k+1}-1\}$, we have
\begin{align*}
    \delta_{t}(i)&=\delta_t(i)+\frac{a_{i,j_t}}{D_1T_1}-\frac{a_{i,j_t}}{D_1T_1} \\
    &=\delta_{t-1}(i)+\frac{a_{i,j_{t-1}}}{D_1T_1}+\frac{a_{i,j_t}}{D_1T_1}- \frac{a_{i,j_t}}{D_1T_1} \\
    &\ \ \vdots\\
    &= \delta_{t_k}(i) + \sum_{j\in[m]}n_{t,j}^{(k)} \frac{a_{i,j}}{D_1T_1}-\frac{a_{i,j_t}}{D_1T_1}\\
    &= \delta_{t_k}(i)+(n_{t,j_t}^{(k)}-1)\cdot\frac{a_{i,j_t}}{D_1T_1}+\sum_{j\in[m]\setminus\{j_t\}}n_{t,j}^{(k)}\cdot\frac{a_{i,j}}{D_1T_1},
\end{align*}
where for any $j\in[m]$, we define $n_{t,j}^{(k)}:=\sum\limits_{s=t_k}^{t}\mathbbm{1}\{j_{s}=j\}$ and $n_{j}^{(k)}:=\sum\limits_{t=t_k}^{t_{k+1}-1}\mathbbm{1}\{j_{t}=j\}$. 
Now we have the following:
\begin{align}
\sum_{t=t_k}^{t_{k+1}-1}a_{i,j_t}\delta_t(i)&=\sum_{t=t_k}^{t_{k+1}-1}a_{i,j_t}\left(\delta_{t_k}(i)+(n_{t,j_t}^{(k)}-1)\cdot\frac{a_{i,j_t}}{D_1T_1}+\sum_{j\in[m]\setminus\{j_t\}}n_{t,j}^{(k)}\cdot\frac{a_{i,j}}{D_1T_1}\right)\nonumber\\ 
    &=\sum_{t=t_k}^{t_{k+1}-1}a_{i,j_t}\delta_{t_k}(i)+\sum_{t=t_k}^{t_{k+1}-1} (n_{t,j_t}^{(k)}-1) \frac{a_{i,j_t}^2}{D_1 T_1} + \sum_{t=t_k}^{t_{k+1}-1} \sum_{j\in[m]\setminus\{j_t\}}n_{t,j}^{(k)}\cdot\frac{a_{i,j_t} a_{i,j}}{D_1T_1}  \nonumber.
\end{align}
The first term can be written as $(\sum_{j\in[m]}n_j^{(k)}a_{i,j})\delta_{t_k}(i)$.
The second term simplifies to
\begin{align*}
    \sum_{t=t_k}^{t_{k+1}-1} (n_{t,j_t}^{(k)}-1) \frac{a_{i,j_t}^2}{D_1 T_1} &= \sum_{j\in[m]}\sum_{\ell=1}^{n_{j}^{(k)}}(\ell-1)\cdot \frac{a_{i,j}^2}{D_1T_1} = \sum_{j\in[m]} \frac{n_{j}^{(k)}(n_{j}^{(k)}-1)}{2} \cdot \frac{a_{i,j}^2}{D_1T_1} 
\end{align*}
and the third term simplifies to
\begin{align*}
    \sum_{t=t_k}^{t_{k+1}-1} \sum_{j\in[m]\setminus\{j_t\}}n_{t,j}^{(k)}\cdot\frac{a_{i,j_t} a_{i,j}}{D_1T_1} 
 &= \sum_{t=t_k}^{t_{k+1}-1} \sum_{s\in[m]} \sum_{j\in[m]\setminus\{s\}} n_{t,j}^{(k)}\cdot\frac{a_{i,s} a_{i,j}}{D_1T_1} \mathbbm{1}\{j_t=s\}\\
 &= \sum_{s\in[m]} \sum_{j  < s} \frac{a_{i,s} a_{i,j}}{D_1T_1} \left( \sum_{t=t_k}^{t_{k+1}-1} n_{t,j}^{(k)} \mathbbm{1}\{j_t=s\} + n_{t,\ell}^{(k)} \mathbbm{1}\{j_t=j\}\right)\\
 &= \sum_{s\in[m]} \sum_{j  < s} \frac{a_{i,s} a_{i,j}}{D_1T_1} n_{j}^{(k)} n_{s}^{(k)} 
\end{align*}
where the last line follows from Lemma \ref{lem:swap} in Section~\ref{sec:technical_lemmas}. 
Putting the pieces together, we obtain the identity
\begin{align}
    \sum_{t=t_k}^{t_{k+1}-1}a_{i,j_t}\delta_t(i)&= \delta_{t_k}(i) \sum_{j\in[m]}n_j^{(k)}a_{i,j} + \sum_{j\in[m]} \frac{n_{j}^{(k)}(n_{j}^{(k)}-1)}{2} \cdot \frac{a_{i,j}^2}{D_1T_1} + \sum_{s\in[m]} \sum_{j  < s} \frac{a_{i,\ell} a_{i,j}}{D_1T_1} n_{j}^{(k)} n_{s}^{(k)} \nonumber\\
    &=\delta_{t_k}(i) \sum_{j\in[m]}n_j^{(k)}a_{i,j}  -\sum_{j\in[m]}\frac{n_j^{(k)}a_{i,j}^2}{2D_1T_1}+\frac{1}{2D_1}\cdot\left(\frac{\sum_{j\in[m]}a_{i,j}n_j^{(k)}}{\sqrt{T_1}}\right)^2.\label{subroutine:equality}
\end{align}
The expression in  \eqref{subroutine:equality} is the key identity that we will exploit to bound the overall regret since we recall from above that
\begin{align*}
    T_2 V_A^* - \sum_{t=1}^{T_2} \langle x_t,Ae_{j_t}\rangle &\leq \frac{4(n-1) T_2}{D_1T_1} + \sum_{t=1}^{T_2} \langle x^* - x' - \vec\delta_t, \widehat Ae_{j_t}\rangle \\
    &= \frac{4(n-1) T_2}{D_1T_1} + \sum_{t=1}^{T_2} \sum_{i\in[n-1]}(-a_{i,j_t}\delta_t(i)+a_{i,j_t}\theta_i) \\
    &= \frac{4(n-1) T_2}{D_1T_1} + \sum_{k=1}^\ell \sum_{i\in[n-1]} \sum_{t=t_k}^{t_{k+1}-1}  (-a_{i,j_t}\delta_t(i)+a_{i,j_t}\theta_i)
\end{align*}
For any $k \in [\ell]$ and $i \in [n-1]$ consider a trajectory $\{ \delta_{t}(i) \}_{t=t_k}^{t_{k+1}-1}$.
By definition, $\delta_{t_k}(i), \delta_{t_{k+1}}(i) \in \{ -\frac{1}{D_1 \sqrt{T_1}}, \frac{1}{D_1 \sqrt{T_1}} \}$. 
The analysis considers three major cases: 
\begin{enumerate}
    \item[(1)] round-trip $\delta_{t_k}(i)=\delta_{t_{k+1}}(i)$;
    \item[(2)] left-to-right $\delta_{t_k}(i)=-\frac{1}{D_1\sqrt{T_1}}, \delta_{t_{k+1}}(i)=\frac{1}{D_1\sqrt{T_1}}$;
    \item[(3)] right-to-left $\delta_{t_k}(i)=\frac{1}{D_1\sqrt{T_1}}, \delta_{t_{k+1}}(i)=-\frac{1}{D_1\sqrt{T_1}}$;
\end{enumerate}
and, a side case: last-trajectory $t_{k+1} = T_2 + 1$.
Define the following sets:
\begin{align*}
   \mathcal{T}_0&:=\{k\in[\ell]:\delta_{t_k}(i)=\delta_{t_{k+1}}(i)\},\\
   \mathcal{T}_{1}&:=\left\{k\in[\ell]:\delta_{t_k}(i)=-\tfrac{1}{D_1\sqrt{T_1}},\ \  \delta_{t_{k+1}}(i)=\tfrac{1}{D_1\sqrt{T_1}}\right\},\\
   \mathcal{T}_{-1}&:=\left\{k\in[\ell]:\delta_{t_k}(i)=\tfrac{1}{D_1\sqrt{T_1}},\ \  \delta_{t_{k+1}}(i)=-\tfrac{1}{D_1\sqrt{T_1}}\right\}.
\end{align*}

Before we begin our formal analysis, let us try to understand intuitively how the sum $\sum_{t=t_k}^{t_{k+1}-1}-a_{i,j_t}\delta_t(i)+a_{i,j_t}\theta_i$ behaves in each of the three major cases. At any time-step $t$, it is the case that \[-a_{i,j_t}\delta_t(i)+a_{i,j_t}\theta_i\in\left[-\tfrac{2|a_{i,j_t}|}{D_1\sqrt{T_1}},\tfrac{2|a_{i,j_t}|}{D_1\sqrt{T_1}}\right].\] Note that time-steps $t$ when $-a_{i,j_t}\delta_t(i)+a_{i,j_t}\theta_i$ is positive and large, and summation of such large terms can lead to huge regret (potentially $\sqrt{T}$ regret). However, as we show later, we do not incur a huge regret as there are time-steps $t$ when $-a_{i,j_t}\delta_t(i)+a_{i,j_t}\theta_i$ is negative and large in magnitude.

In case 1, the row-player returns to the starting point after the end of the trajectory (hence the name round trip). In such a round trip, the large positive and negative terms of the sum $\sum_{t=t_k}^{t_{k+1}-1}-a_{i,j_t}\delta_t(i)+a_{i,j_t}\theta_i$ start cancelling out each other and at the end the sum will scale linearly with respect to $\frac{t_{k+1}-t_k}{D_1T_1}$.

In case 2, the trajectory starts from the left-most point of the interval $[-\frac{1}{D_1\sqrt{T_1}},\frac{1}{D_1\sqrt{T_1}}]$ and right-most point of this interval (hence the name left-to-right).  Similarly in case 3, the trajectory starts from the right-most point of the interval $[-\frac{1}{D_1\sqrt{T_1}},\frac{1}{D_1\sqrt{T_1}}]$ and left-most point of this interval (hence the name right-to-left). Now by pairing one left-to-right trajectory with one right-to-left trajectory, we can simulate a round trip. Similar to case 1, the large negative terms of the sum from one of the trajectory starts cancelling out the large positive terms of the sum from the other trajectory and hence the combined sum of both trajectory will scale linearly with respect to $\frac{t_{k+1}-t_k}{D_1T_1}$. 

\begin{figure}
			 		\includegraphics[width=1\linewidth]{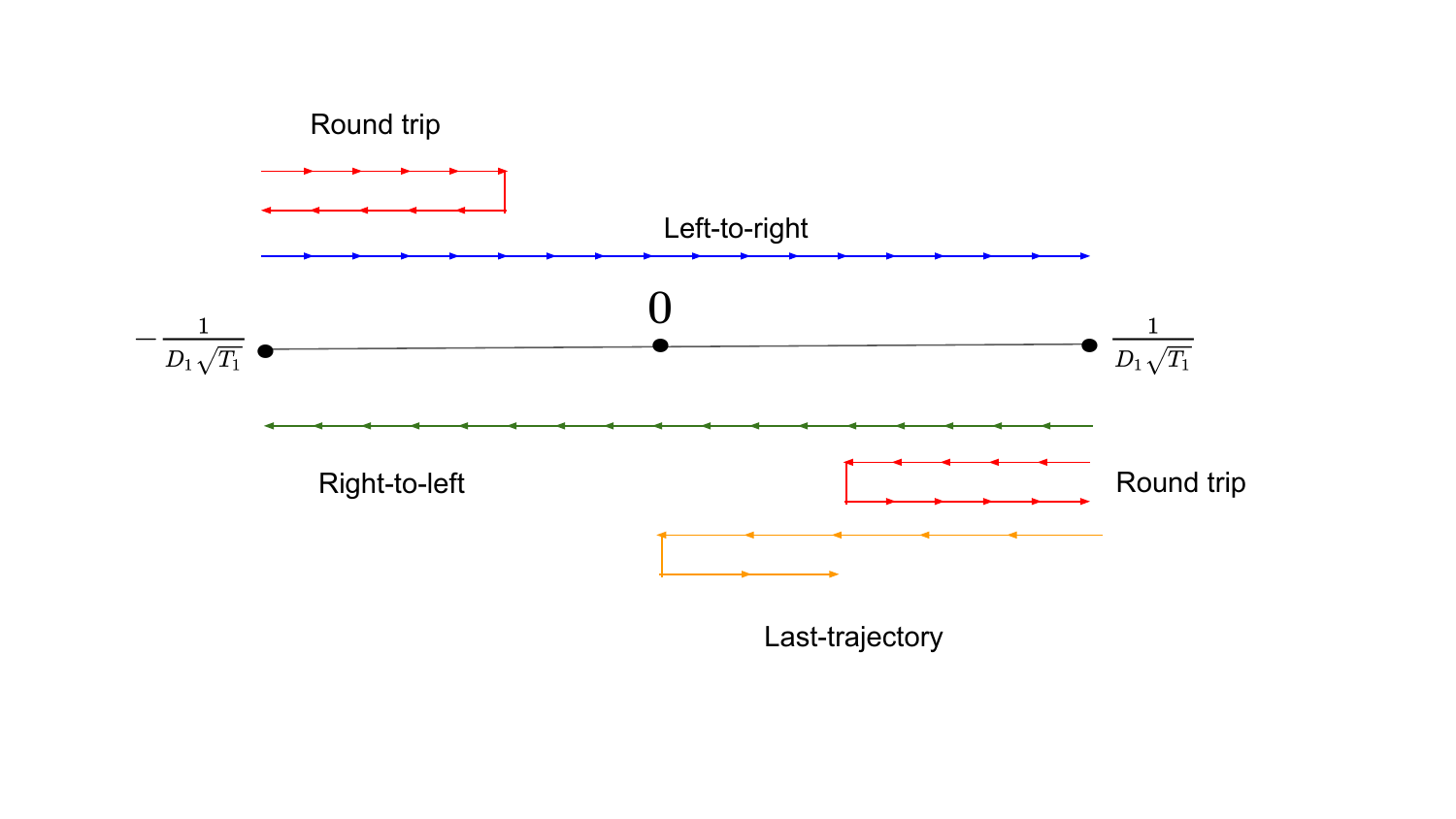}
      \caption{Various trajectories of the row player}
			 	   \end{figure}

Now we formally begin our case-by-case analysis.

\paragraph{Case 0: Last-trajectory.}
Recall that $t_{\ell+1}=T_2+1$. If $\delta_{t_{\ell+1}}(i)\notin \{-\frac{1}{D_1\sqrt{T_1}},\frac{1}{D_1\sqrt{T_1}}\}$, then \[|\delta_{t_{\ell+1}}(i)|=|\delta_{t_\ell}(i)+\tfrac{\sum_{j\in[m]}a_{i,j}n_j^{(\ell)}}{D_1T_1}|<\tfrac{1}{D_1\sqrt{T_1}}.\] This implies that $|\frac{\sum_{j\in[m]}a_{i,j}n_j^{(\ell)}}{D_1T_1}|<\frac{2}{D_1\sqrt{T_1}}$. 
From \eqref{subroutine:equality}, we deduce that
\begin{align*}
    \sum_{t=t_\ell}^{t_{\ell+1}-1}-a_{i,j_t}\delta_t(i)+a_{i,j_t}\theta_i
    &=\left(\sum_{j\in[m]}n_j^{(\ell)}a_{i,j}\right)(\theta_i-\delta_{t_\ell}(i))\\
    &\qquad+\sum_{j\in[m]}\frac{n_j^{(\ell)}a_{i,j}^2}{2D_1T_1}-\frac{1}{2D_1}\bigg(\frac{\sum_{j\in[m]}a_{i,j}n_j^{(\ell)}}{\sqrt{T_1}}\bigg)^2 \\
    &\leq |\sum_{j\in[m]}n_j^{(\ell)}a_{i,j}||\theta_i-\delta_{t_\ell}(i)|+\sum_{j\in[m]}\frac{n_j^{(\ell)}a_{i,j}^2}{2D_1T_1}.
    \end{align*}
Since     $|\sum_{j\in[m]}n_j^{(\ell)}a_{i,j}|\leq 2\sqrt{T_1}$ and  $|\theta_i-\delta_{t_\ell}(i)|\leq \frac{2}{D_1\sqrt{T_1}}$, we further deduce that
    \begin{align*}
      \sum_{t=t_\ell}^{t_{\ell+1}-1}-a_{i,j_t}\delta_t(i)+a_{i,j_t}\theta_i
    &\leq \frac{4}{D_1}+\sum_{j\in[m]}\frac{n_j^{(\ell)}a_{i,j}^2}{2D_1T_1},\\
    &\leq \frac{4}{D_1}+\sum_{j\in[m]}\frac{n_j^{(\ell)}}{2D_1T_1},\\
    &\leq \frac{4}{D_1}+\frac{T_2}{2D_1T_1},
\end{align*}
where the second inequality follows from the fact that $a_{i,j}=\widehat A_{i,j}-\widehat A_{n,j}\in[-1,1]$, and the last inequality follows from the fact that  $\sum_{j\in[m]}n_j^{(\ell)}\leq T_2$.

Note that this case happens exactly once. Also note that if $\delta_{t_{\ell+1}}(i)\in \{-\frac{1}{D_1\sqrt{T_1}},\frac{1}{D_1\sqrt{T_1}}\}$, then $\ell \in \mathcal{T}_0\cup \mathcal{T}_1\cup \mathcal{T}_{-1}$ and the last trajectory is therefore analysed in either case 1, 2 or 3.

\paragraph{Case 1: Round-trip.}
Consider an index $k\in\mathcal{T}_0$. Due to the definition of $\mathcal{T}_0$ we have $\delta_{t_k}(i)=\delta_{t_{k+1}}(i)$. Observe that \[\delta_{t_{k+1}-1+1/2}(i)=\delta_{t_k}(i)+\tfrac{\sum_{j\in[m]}a_{i,j}n_j^{(k)}}{D_1T_1}.\] If $\delta_{t_k}(i)=\delta_{t_{k+1}}(i)=-\frac{1}{D_1\sqrt{T_1}}$, then we have $\frac{\sum_{j\in[m]}a_{i,j}n_j^{(k)}}{D_1T_1}\leq 0$ as we have $\delta_{t_{k+1}-1+1/2}(i)\leq-\frac{1}{D_1\sqrt{T_1}}$ in this case.

Recall that $-\frac{1}{D_1\sqrt{T_1}}\leq \theta_i\leq \frac{1}{D_1\sqrt{T_1}}$. Hence we have $(\sum_{j\in[m]}a_{i,j}n_j^{(k)})(\theta_i-\delta_{t_k}(i))\leq 0$. Similarly, if $\delta_{t_k}(i)=\delta_{t_{k+1}}(i)=\frac{1}{D_1\sqrt{T_1}}$, then we have $\frac{\sum_{j\in[m]}a_{i,j}n_j^{(k)}}{D_1T_1}\geq 0$ as we have $\delta_{t_{k+1}-1+1/2}(i)\geq\frac{1}{D_1\sqrt{T_1}}$ in this case.

Hence we have $(\sum_{j\in[m]}a_{i,j}n_j^{(k)})(\theta_i-\delta_{t_k}(i))\leq 0$. Combining this with \ref{subroutine:equality}, we deduce that
\begin{align*}
    \sum_{t=t_k}^{t_{k+1}-1}-a_{i,j_t}\delta_t(i)+a_{i,j_t}\theta_i
    &=\left(\sum_{j\in[m]}n_j^{(k)}a_{i,j}\right)(\theta_i-\delta_{t_k}(i))\\
    &\qquad+\sum_{j\in[m]}\frac{n_j^{(k)}a_{i,j}^2}{2D_1T_1}-\frac{1}{2D_1}\cdot\left(\frac{\sum_{j\in[m]}a_{i,j}n_j^{(k)}}{\sqrt{T_1}}\right)^2,\\
    &\leq \sum_{j\in[m]}\frac{n_j^{(k)}a_{i,j}^2}{2D_1T_1},\\
    &\leq \sum_{j\in[m]}\frac{n_j^{(k)}}{2D_1T_1}, 
\end{align*}
where the last inequality follows from the fact that $a_{i,j}\in[-1,1]$. 
We conclude that $\sum_{k \in \mc{T}_0} \sum_{t=t_k}^{t_{k+1}-1}-a_{i,j_t}\delta_t(i)+a_{i,j_t}\theta_i \leq \frac{T_2}{2D_1T_1}$.

\paragraph{Case 2: Left-to-right.} 
Consider an index $k\in\mathcal{T}_{1}$. Due to the definition of $\mathcal{T}_{1}$, we have that $\delta_{t_k}(i)=-\frac{1}{D_1\sqrt{T_1}}$ and $\delta_{t_{k+1}}(i)=\frac{1}{D_1\sqrt{T_1}}$. Observe that \[\delta_{t_{k+1}-1+1/2}(i)=\delta_{t_k}(i)+\frac{\sum_{j\in[m]}a_{i,j}n_j^{(k)}}{D_1T_1}.\] Now, we have $\frac{\sum_{j\in[m]}a_{i,j}n_j^{(k)}}{D_1T_1}\geq \frac{2}{D_1\sqrt{T_1}}$ since $\delta_{t_{k+1}-1+1/2}(i)\geq\frac{1}{D_1\sqrt{T_1}}$ in this case. Since $t_{k+1}-1\notin\mathcal{T}$, we have $\delta_{t_{k+1}-1}(i)<\frac{1}{D_1\sqrt{T_1}}$. Hence, we have \[\delta_{t_{k+1}-1+1/2}(i)\leq \delta_{t_{k+1}-1}(i)+\frac{1}{D_1T_1}<\frac{1}{D_1\sqrt{T_1}}+\frac{1}{D_1T_1},\] which implies 
\[\frac{2}{D_1\sqrt{T_1}} \leq \frac{\sum_{j\in[m]}a_{i,j}n_j^{(k)}}{D_1T_1}< \frac{2}{D_1\sqrt{T_1}}+\frac{1}{D_1T_1}.\]

\paragraph{Case 3: Right-to-left.} The analysis of this case is completely analogous to Case 2. 
Indeed, consider an index $k\in\mathcal{T}_{-1}$. Due to the definition of $\mathcal{T}_{-1}$ we have $\delta_{t_k}(i)=\frac{1}{D_1\sqrt{T_1}}$ and $\delta_{t_{k+1}}(i)=-\frac{1}{D_1\sqrt{T_1}}$. Observe that \[\delta_{t_{k+1}-1+1/2}(i)=\delta_{t_k}(i)+\frac{\sum_{j\in[m]}a_{i,j}n_j^{(k)}}{D_1T_1}.\] Now we have $\frac{\sum_{j\in[m]}a_{i,j}n_j^{(k)}}{D_1T_1}\leq -\frac{2}{D_1\sqrt{T_1}}$ since $\delta_{t_{k+1}-1+1/2}(i)\leq-\frac{1}{D_1\sqrt{T_1}}$ in this case. Further, since $t_{k+1}-1\notin\mathcal{T}$, we have that $\delta_{t_{k+1}-1}(i)>-\frac{1}{D_1\sqrt{T_1}}$. Hence, we deduce that \[\delta_{t_{k+1}-1+1/2}(i)\geq \delta_{t_{k+1}-1}(i)-\frac{1}{D_1T_1}>-\frac{1}{D_1\sqrt{T_1}}-\frac{1}{D_1T_1},\] which implies \[-\frac{2}{D_1\sqrt{T_1}} \geq \frac{\sum_{j\in[m]}a_{i,j}n_j^{(k)}}{D_1T_1}>- \frac{2}{D_1\sqrt{T_1}}-\frac{1}{D_1T_1}.\]

\paragraph{Putting the pieces together.} 
Given \eqref{subroutine:equality}, for any $k\in \mathcal{T}_{1}\cup\mathcal{T}_{-1}$, we have that
\begin{align*}
    \sum_{t=t_k}^{t_{k+1}-1}a_{i,j_t}\delta_t(i)
    &=\frac{1}{2D_1}\cdot\left(\frac{\sum_{j\in[m]}a_{i,j}n_j^{(k)}}{\sqrt{T_1}}\right)^2+\left(\sum_{j\in[m]}a_{i,j}n_j^{(k)}\right)\delta_{t_k}(i)-\sum_{j\in[m]}\frac{n_j^{(k)}a_{i,j}^2}{2D_1T_1},\\
    &=\frac{1}{2D_1}\cdot\left(\frac{\sum_{j\in[m]}a_{i,j}n_j^{(k)}}{\sqrt{T_1}}\right)^2-\frac{1}{D_1}\cdot\left|\frac{\sum_{j\in[m]}a_{i,j}n_j^{(k)}}{\sqrt{T_1}}\right|-\sum_{j\in[m]}\frac{n_j^{(k)}a_{i,j}^2}{2D_1T_1}.
\end{align*}
Since $\frac{x^2}{2}-x\geq 0$ for all $x\geq 2$ and $\left|\frac{\sum_{j\in[m]}a_{i,j}n_j^{(k)}}{\sqrt{T_1}}\right|\geq 2$, we have that
\begin{align*}    
   \sum_{t=t_k}^{t_{k+1}-1}a_{i,j_t}\delta_t(i)  &\geq -\sum_{j\in[m]}\frac{n_j^{(k)}a_{i,j}^2}{2D_1T_1}\geq -\sum_{j\in[m]}\frac{n_j^{(k)}}{2D_1T_1},
\end{align*}
where the last inequality follows from the fact that $a_{i,j}\in[-1,1]$. 
Therefore, we deduce that $\sum_{k \in \mc{T}_1\cup\mathcal{T}_{-1}} \sum_{t=t_k}^{t_{k+1}-1}-a_{i,j_t}\delta_t(i) \leq \frac{T_2}{2D_1T_1}$.

\paragraph{Analysis of the sum $\sum_{t=t_k}^{t_{k+1}-1}a_{i,j_t}\theta_i$ for $k\in \mathcal{T}_{1}\cup\mathcal{T}_{-1}$:} The most critical part of this proof is analysing the sum $\sum_{t=t_k}^{t_{k+1}-1}a_{i,j_t}\theta_i$. Note that $|\theta_i|$ can be as large as $\frac{1}{D_1\sqrt{T_1}}$ and $|\sum_{t=t_k}^{t_{k+1}-1}a_{i,j_t}|$ can be as large as $\sqrt{T_1}$. Hence the sum $\sum_{t=t_k}^{t_{k+1}-1}a_{i,j_t}\theta_i$ can be as large as $\frac{1}{D_1}$. 

Now, if we naively try to upper bound each such sum by $\frac{1}{D_1}$, then we only obtain an upper bound of $\frac{T_2}{D_1\sqrt{T_1}}$ since $|\mathcal T_1|$ and $|\mathcal T_{-1}|$ can be as large as $\frac{T_2}{\sqrt{T_1}}$ (we show this fact below). Hence, we can end up with a huge regret (potentially $\sqrt{T}$ regret). Instead, we show that for $k_1\in\mathcal{T}_1$ and $k_{2}\in \mathcal{T}_{-1}$, the sums $\sum_{t=t_{k_1}}^{t_{k_1+1}-1}a_{i,j_t}\theta_i$ and $\sum_{t=t_{k_{2}}}^{t_{k_{2}+1}-1}a_{i,j_t}\theta_i$ are roughly equal in magnitude and opposite in sign. Hence, these sums nicely cancel out each other out, and we are left with terms that scale linearly with $\frac{(t_{k_1+1}-t_{k_1})+(t_{k_2+1}-t_{k_2})}{D_1T_1}$.

Combining the above results, we observe the following: if $k \in \mc{T}_1$, then $2\sqrt{T_1}\leq \sum_{j\in[m]}a_{i,j}n_j^{(k)}<2\sqrt{T_1}+1$, and if $k \in \mc{T}_{-1}$, we have that $-2\sqrt{T_1}-1<\sum_{j\in[m]}a_{i,j}n_j^{(k)}\leq -2\sqrt{T_1}$.

Define the mapping $\mathsf{s}:\mathbb R\rightarrow \{-1,1\}$  as follows:
\begin{equation*}
    \mathsf{s}(x)=\left\{\begin{array}{ll}
        1, &\ x\geq 0,  \\
        -1, &\ x<0.
    \end{array}\right.
\end{equation*}
Hence, if $k\in \mathcal{T}_{\mathsf{s}(\theta_i)}$, then  $(\sum_{j\in[m]}a_{i,j}n_j^{(k)})\theta_i <(2\sqrt{T_1}+1)|\theta_i|$,  and if $k\in \mathcal{T}_{-\mathsf{s}(\theta_i)}$, then $(\sum_{j\in[m]}a_{i,j}n_j^{(k)})\theta_i <-(2\sqrt{T_1})|\theta_i|$. 

Now, we show that $|\mathcal{T}_1|\leq \frac{T_2}{\sqrt{T_1}}$. Recall that $|\sum_{j\in[m]}a_{i,j}n_j^{(k)}|\geq 2\sqrt{T_1}$ if $k\in \mathcal{T}_1$. Since $|a_{i,j}|\leq 1$, we have that $t_{k+1}-t_{k}=\sum\limits_{j\in[m]}|n_j^{(k)}|\geq |\sum_{j\in[m]}a_{i,j}n_j^{(k)}|\geq 2\sqrt{T_1}$. Hence, the smallest element $s$ of $\mathcal{T}_1$ greater than $k$ satisfies $t_{s}\geq t_{k}+2\sqrt{T_1}$. Since the total time period is $T_2$, there can be at most $\frac{T_2}{2\sqrt{T_1}}+1\leq\frac{T_2}{\sqrt{T_1}}$ elements in $\mathcal{T}_2$. By an identical argument, we can show that $|\mathcal{T}_{-1}|\leq \frac{T_2}{\sqrt{T_1}}$. Also observe that $||\mathcal{T}_1|-|\mathcal{T}_{-1}||\leq 1$.

Now, we analyse the sum $\sum\limits_{k\in \mathcal T_1\cup \mathcal T_{-1}}\sum_{t=t_k}^{t_{k+1}-1}a_{i,j_t}\theta_i$. Due to the arguments above, we have the following:
\begin{align*}
    \sum_{k\in \mathcal T_1\cup \mathcal T_{-1}}\sum_{t=t_k}^{t_{k+1}-1}a_{i,j_t}\theta_i&=\sum\limits_{k\in\mathcal{T}_{\mathsf{s}(\theta_i)}}\sum_{t=t_k}^{t_{k+1}-1}a_{i,j_t}\theta_i+\sum\limits_{k\in\mathcal{T}_{\mathsf{s}(-\theta_i)}}\sum_{t=t_k}^{t_{k+1}-1}a_{i,j_t}\theta_i,\\
    &=\sum\limits_{k\in\mathcal{T}_{\mathsf{s}(\theta_i)}}\left(\sum_{j\in[m]}a_{i,j}n_j^{(k)}\right)\theta_i+\sum\limits_{k\in\mathcal{T}_{\mathsf{s}(-\theta_i)}}\left(\sum_{j\in[m]}a_{i,j}n_j^{(k)}\right)\theta_i,\\
    &\leq |\mathcal{T}_{\mathsf{s}(\theta_i)}|\cdot (2\sqrt{T_1}+1)\cdot|\theta_i|-|\mathcal{T}_{\mathsf{s}(\theta_i)}|\cdot 2\sqrt{T_1}\cdot|\theta_i|,\\
    & = 2\sqrt{T_1}|\theta_i|(|\mathcal{T}_{\mathsf{s}(\theta_i)}|-|\mathcal{T}_{-\mathsf{s}(\theta_i)}|)+|\mathcal T_{\mathsf{s}(\theta_i)}|\cdot|\theta_i|,\\
    & \leq \frac{2}{D_1}+\frac{T_2}{D_1T_1},
\end{align*}
where the last inequality follows from the fact that $|\theta_i|\leq\frac{1}{D_1\sqrt{T_1}}$, $|\mathcal{T}_{\mathsf{s}(\theta_i)}|-|\mathcal{T}_{-\mathsf{s}(\theta_i)}|\leq 1$ and $|\mathcal{T}_{\mathsf{s}(\theta_i)}|\leq \frac{T_{2}}{\sqrt{T_1}}$.

Combining the upper bounds on the sums \[\sum_{k\in \mathcal T_1\cup \mathcal T_{-1}}\sum_{t=t_k}^{t_{k+1}-1}-a_{i,j_t}\delta_t(i)\quad \text{and}\quad  \sum_{k\in \mathcal T_1\cup \mathcal T_{-1}}\sum_{t=t_k}^{t_{k+1}-1}a_{i,j_t}\theta_i,\] we deduce that
\begin{align*}
    \sum\limits_{k\in \mathcal T_1\cup \mathcal T_{-1}}\sum_{t=t_k}^{t_{k+1}-1}-a_{i,j_t}\delta_t(i)+a_{i,j_t}\theta_i&=\sum\limits_{k\in \mathcal T_1\cup \mathcal T_{-1}}\sum_{t=t_k}^{t_{k+1}-1}-a_{i,j_t}\delta_t(i)+\sum\limits_{k\in \mathcal T_1\cup \mathcal T_{-1}}\sum_{t=t_k}^{t_{k+1}-1}a_{i,j_t}\theta_i,\\
    &=\frac{T_2}{2D_1T_1}+\frac{2}{D_1}+\frac{T_2}{D_1T_1},\\
    &=\frac{3T_2}{2D_1T_1}+\frac{2}{D_1}.
\end{align*}

Now, we analyse the sum $\sum_{t=1}^{T_2}-a_{i,j_t}\delta_t(i)+a_{i,j_t}\theta_i$. Due to the arguments above in Cases 0--3, we have the following:
\begin{align*}
    \sum\limits_{t=1}^{T_2}-a_{i,j_t}\delta_t(i)+a_{i,j_t}\theta_i&=\sum\limits_{k\in\mathcal{T}_0}\sum\limits_{t=t_k}^{t_{k+1}-1}-a_{i,j_t}\delta_t(i)+a_{i,j_t}\theta_i+\sum\limits_{k\in\mathcal{T}_{1}\cup T_{-1}}\sum\limits_{t=t_k}^{t_{k+1}-1}-a_{i,j_t}\delta_t(i)+a_{i,j_t}\theta_i\\
    &\qquad+\mathbbm{1}\left\{\delta_{t_{\ell+1}}(i)\notin\left\{-\frac{1}{D_1\sqrt{T_1}},\frac{1}{D_1\sqrt{T_1}}\right\}\right\}\sum\limits_{t=t_\ell}^{T_2}-a_{i,j_t}\delta_t(i)+a_{i,j_t}\theta_i,\\
    &\leq \frac{T_2}{2D_1T_1}+\frac{2}{D_1}+\frac{3T_2}{2D_1T_1}+\frac{4}{D_1}+\frac{T_2}{2D_1T_1},\\
    &\leq\frac{6}{D_1}+\frac{3T_2}{D_1T_1}.
\end{align*}

Using the above analysis, we have the following upper bound on regret:
\begin{align*}
T_2\cdot V_A^* - \sum_{t=1}^{T_2} \langle x_t,Ae_{j_t}\rangle &\leq \sum_{t=1}^{T_2}\frac{4n}{D_1T_1}+\langle x^*-x' - \vec\delta_t, \widehat Ae_{j_t}\rangle, \\
    &=\frac{4nT_2}{D_1T_1}+\sum_{t=1}^{T_2}\sum\limits_{i=1}^{n-1}-a_{i,j_t}\delta_t(i)+a_{i,j_t}\theta_i,\\
    &=\frac{4nT_2}{D_1T_1}+\sum\limits_{i=1}^{n-1}\sum_{t=1}^{T_2}-a_{i,j_t}\delta_t(i)+a_{i,j_t}\theta_i,\\
    &\leq \frac{4nT_2}{D_1T_1}+\sum\limits_{i=1}^{n-1} (\frac{6}{D_1}+\frac{3T_2}{D_1T_1}),\\
    &\leq \frac{6n}{D_1}+\frac{7nT_2}{D_1T_1},
\end{align*}
which concludes the proof.

\subsection{Subroutine for Bandit Feedback}\label{appendix:sub:2x2}
Similar to the previous section, we analyse the subroutine (\Cref{subroutine-nxn-bandit}) for bandit feedback in this section. Our analysis will be limited to $2\times 2$ games. This subroutine will be used in \Cref{appendix:2x2}. Let us assume the conditions in (\ref{eqn:sub-conditions}) hold with $n=2$.

\begin{algorithm}[t]
\caption{Subroutine for bandit feedback}
\begin{algorithmic}[1]
\STATE \textbf{Input Parameters:} $x'\in \simplex_2$, $D_1\in \mathbb{R}_{+}$, $T_1\in \mathbb{R}_{+}$, $T_2\in \mathbb{N}$, $\widehat A\in [-1,1]^{2\times 2}$
\STATE Set $\eta\gets \frac{1}{D_1T_1}$, $\delta_1(1)\gets -\frac{1}{D_1\sqrt{T_1}} $
\STATE Let $N_{i,j}(t)$ denote number of times $(i,j)$ entry observed up to time $t$
\FOR{round $t=1,2,\dots$} 
\STATE Play $x_t\gets x' + \vec\delta_t \quad \text{ where } \quad \vec\delta_t=\left(\delta_t(1),-\delta_t(1)\right)^\top $.
\STATE Observe $y$-player's action $j_t$ and the bandit feedback $[\mathbf{A}_t]_{i_t,j_t}$.
\STATE Set $\delta_{t+1/2}(1) \gets \delta_t(1) + \eta (\widehat A_{i,j_t}-\widehat A_{n,j_t})$ 
\STATE Set $\delta_{t+1}(1) = \arg\min_{z \in [-\frac{1}{D_1\sqrt{T_1}},\frac{1}{D_1\sqrt{T_1}}]} |\delta_{t+1/2}(1) -z |$ 
\STATE \textbf{if} $\min_{i,j} N_{i,j}(t) \geq T_2$ \textbf{break}
\ENDFOR
\end{algorithmic}
\label{subroutine-nxn-bandit}
\end{algorithm}

Consider a sequence of actions $j_1,\ldots,j_{T'}$ and assume that the subroutine terminates after $T'$ steps. In the subroutine at each time $t=1,\dots,T'$ we define $x_t = x' + \vec\delta_t$ where $\vec\delta_t= \left(\delta_t(1),- \delta_t(1)\right)^\top$ with $| \delta_t(1) | \leq  \frac{1}{D_1\sqrt{T_1}}$, by construction.
    By assumption, we have that 
\begin{align*}
    &\max_{i,j} | A_{i,j}- \widehat{A}_{i,j} | \leq \frac{1}{\sqrt{T_1}},\;| x'_1-x^*_1 | \leq \frac{1}{D_1\sqrt{T_1}}, \text{ and}\\
    & \min_{i=1,2} \min\{x_i',1-x_i'\} \geq \tfrac{1}{D_1\sqrt{T_1}}\quad\text{ so that $x_t\in \simplex_2$}.
\end{align*}
    
We now define few parameters as follows: 
\begin{align*}
    n_{j}&=|\{t\in[T']:j_t=j\}|,\\
    a_j&=\widehat A_{1,j}-\widehat A_{2,j},\\
    j_*&:=\arg\min_j n_j,\\
    j_c&:=\{1,2\}\setminus \{j_*\}.
\end{align*}
Without loss of generality, let $a_1>0$, $a_2<0$ and $j_*=2$. 

Consider the time steps $t$ such that $\delta_t(1)=\frac{1}{D_1\sqrt{T_1}}$ and $j_t=1$ or $\delta_t(1)=-\frac{1}{D_1\sqrt{T_1}}$ and $j_t=2$. We can ignore these time steps as we incur non-positive regret in these steps. Hence,  without loss of generality, in the remaining proof we assume throughout that if $\delta_t(1)=\frac{1}{D_1\sqrt{T_1}}$ then $j_t=2$ and if $\delta_t(1)=-\frac{1}{D_1\sqrt{T_1}}$ and $j_t=1$.

    We now \textbf{aim to show} that the time horizon $n_1+n_2$ is upper bounded by $(3\left\lceil\frac{|a_2|}{|a_1|}\right\rceil+1)n_2+\lfloor \frac{2\sqrt{T_1}}{|a_1|}\rfloor+1$.  

    Define $\mathcal{T}:=\{t\in[T']:\delta_t(1)\in \{-\frac{1}{D_1\sqrt{T_1}},\frac{1}{D_1\sqrt{T_1}}\}\}$, and  let $t_1\leq t_2\leq\ldots\leq t_\ell$ be the ordering of the elements of $\mathcal{T}$. Observe that $t_1=1$. Let us also define $t_{\ell+1}:=T'+1$ and  $n_j^{(k)}:=|\{t_k\leq t<t_{k+1}: j_t=j\}|$. From our analysis in Section \ref{appendix:sub:nxn}, we have that $\delta_{t_{k+1}-1+1/2}(1)=\delta_{t_k}(1)+\frac{a_1n_1^{(k)}+a_2n_2^{(k)}}{D_1T_1}$. This also implies that \[\delta_{t_{k+1}-1+1/2}(1)=\delta_{t_k}(1)+\frac{|a_1|n_1^{(k)}-|a_2|n_2^{(k)}}{D_1T_1}\] since $a_1> 0$ and $a_2<0$. We will be using this equality repeatedly to upper bound $T'$.
    

    For any $k \in [\ell]$ consider a trajectory $\{ \delta_{t}(1) \}_{t=t_k}^{t_{k+1}-1}$.
By definition, $\delta_{t_k}(1),\ \delta_{t_{k+1}}(1) \in \{ -\frac{1}{D_1 \sqrt{T_1}}, \frac{1}{D_1 \sqrt{T_1}} \}$. The analysis considers three major cases: 
\begin{enumerate}
    \item[(1)] round-trip $\delta_{t_k}(1)=\delta_{t_{k+1}}(1)$;
    \item[(2)] left-to-right $\delta_{t_k}(1)=-\frac{1}{D_1\sqrt{T_1}}, \delta_{t_{k+1}}(1)=\frac{1}{D_1\sqrt{T_1}}$;
    \item[(3)] right-to-left $\delta_{t_k}(1)=\frac{1}{D_1\sqrt{T_1}}, \delta_{t_{k+1}}(1)=-\frac{1}{D_1\sqrt{T_1}}$;
\end{enumerate}
and, a side case: last-trajectory $t_{k+1} = T' + 1$.
Define the sets
\begin{align*}
    \mathcal{T}_0&:=\{k\in[\ell]:\delta_{t_k}(1)=\delta_{t_{k+1}}(1)\},\\
    \mathcal{T}_{1}&:=\left\{k\in[\ell]:\delta_{t_k}(1)=-\frac{1}{D_1\sqrt{T_1}},\ \  \delta_{t_{k+1}}(1)=\frac{1}{D_1\sqrt{T_1}}\right\},\\
    \mathcal{T}_{-1}&:=\left\{k\in[\ell]:\delta_{t_k}(1)=\frac{1}{D_1\sqrt{T_1}},\ \  \delta_{t_{k+1}}(1)=-\frac{1}{D_1\sqrt{T_1}}\right\}.
\end{align*}

A key part of our analysis revolves around upper bounding the sum $\sum_{k\in[\ell]}n_1^{(k)}$ by a linear function of $\sum_{k\in[\ell]}n_2^{(k)}$. To intuitively understand why this is a realistic aim, let us consider a simple round-trip trajectory $\{ \delta_{t}(1) \}_{t=t_k}^{t_{k+1}-1}$ where the row player takes steps of size $\frac{|a_1|}{D_1T_1}$ towards right initially before it reverses its direction of movement and comes back to the starting point by taking steps of size $\frac{|a_2|}{D_1T_1}$. Since the net displacement is zero, we have   $\frac{|a_1|n_{1}^{(k)}}{D_1T_1}-\frac{|a_2|n_{2}^{(k)}}{D_1T_1}=0$ (ignoring the boundary cases) and therefore $|n_1^{(k)}|=\frac{|a_2|n_2^{(k)}}{|a_1|}$.

Now we formally begin our case-by-case analysis.

    \paragraph{Case 1: Round-trip.}  Consider an index $k\in\mathcal{T}_0$. Due to the definition of $\mathcal{T}_0$, we have that $\delta_{t_k}(1)=\delta_{t_{k+1}}(1)$. Recall that $\delta_{t_{k+1}-1+1/2}(1)=\delta_{t_k}(1)+\frac{|a_1|n_1^{(k)}-|a_2|n_2^{(k)}}{D_1T_1}$. If $\delta_{t_k}(1)=-\frac{1}{D_1\sqrt{T_1}}$, then we have $\frac{|a_1|n_1^{(k)}-|a_2|n_2^{(k)}}{D_1T_1}\leq 0$ since we have $\delta_{t_{k+1}-1+1/2}(1)\leq-\frac{1}{D_1\sqrt{T_1}}$ in this case. Hence, we deduce that $n_1^{(k)}\leq \frac{n_2^{(k)}\cdot |a_2|}{|a_1|}$. 
    
    If $\delta_{t_k}(1)=\delta_{t_{k+1}}(1)=\frac{1}{D_1\sqrt{T_1}}$, then $j_{t_{k+1}-1}=1$. If $j_{t_{k+1}-1}=2$ instead, then $\delta_{t_{k+1}}(1)=\delta_{t_{k+1}-1}-\frac{|a_2|}{D_1T_1}$ and hence $\delta_{t_{k+1}}(1)\neq \frac{1}{D_1\sqrt{T_1}}$ which is a contradiction to the assumption. 

    Recall that we have assumed there are not time-steps $t$ such that $\delta_t(1)=\frac{1}{D_1\sqrt{T_1}}$ and $j_t=1$. Hence, we have that \[\delta_{t_{k+1}-1}(1)=\delta_{t_{k}}(1)+\frac{|a_1|(n_1^{(k)}-1)-|a_2|n_2^{(k)}}{D_1T_1}<\frac{1}{D_1\sqrt{T_1}},\] which implies that $n_1^{(k)}-1< \frac{|a_2|n_2^{(k)}}{|a_1|}$. Also, we have that \[\delta_{t_{k+1}-1+1/2}(1)=\delta_{t_k}(1)+\frac{|a_1|n_1^{(k)}-|a_2|n_2^{(k)}}{D_1T_1}\geq \frac{1}{D_1\sqrt{T_1}},\] which  implies that $\frac{|a_2|n_2^{(k)}}{|a_1|}\leq n_1^{(k)}$. Hence, we have $n_1^{(k)}-1< \frac{|a_2|n_2^{(k)}}{|a_1|}\leq n_1^{(k)}$ which implies that $n_1^{(k)}=\lceil \frac{n_2^{(k)}\cdot|a_2|}{|a_1|}\rceil$. Therefore we deduce that \[n_1^{(k)}=\left\lceil \frac{n_2^{(k)}\cdot|a_2|}{|a_1|}\right\rceil\leq \left\lceil n_2^{(k)}\left\lceil\frac{|a_2|}{|a_1|}\right\rceil\right\rceil=n_2^{(k)}\left\lceil\frac{|a_2|}{|a_1|}\right\rceil.\]

    \paragraph{Case 2: Left-to-right.} The analysis of this case is analogous to the previous case.
   Consider an index $k\in\mathcal{T}_1$. Recall that $\delta_{t_k}(1)=-\frac{1}{D_1\sqrt{T_1}}$ and $\delta_{t_{k+1}}(1)=\frac{1}{D_1\sqrt{T_1}}$. Observe that $j_{t_{k+1}-1}=1$ otherwise we get $\delta_{t_{k+1}}(1)\neq\frac{1}{D_1\sqrt{T_1}}$ which is a contradiction. Recall that we assumed that there is no time-step $t$ such that $\delta_t(1)=\frac{1}{D_1\sqrt{T_1}}$ and $j_t=1$. Hence, we have that \[\delta_{t_{k+1}-1}(1)=\delta_{t_{k}}(1)+\frac{|a_1|(n_1^{(k)}-1)-|a_2|n_2^{(k)}}{D_1T_1}<\frac{1}{D_1\sqrt{T_1}},\] which implies that $n_1^{(k)}-1< \frac{2\sqrt{T_1}+|a_2|n_2^{(k)}}{|a_1|}$. Also, we have that \[\delta_{t_{k+1}-1+1/2}(1)=\delta_{t_k}(1)+\frac{|a_1|n_1^{(k)}-|a_2|n_2^{(k)}}{D_1T_1}\geq \frac{1}{D_1\sqrt{T_1}},\] which implies that $\frac{2\sqrt{T_1}+|a_2|n_2^{(k)}}{|a_1|}\leq n_1^{(k)}$. Hence, we have $n_1^{(k)}-1< \frac{2\sqrt{T_1}+|a_2|n_2^{(k)}}{|a_1|}\leq n_1^{(k)}$ which implies that $n_1^{(k)}=\lceil \frac{2\sqrt{T_1}+n_2^{(k)}\cdot|a_2|}{|a_1|}\rceil$. This is equivalent to \[n_1^{(k)}=\left\lfloor \frac{2\sqrt{T_1}}{|a_1|}\right\rfloor+\left\lceil\frac{n_2^{(k)}\cdot |a_2|+\Delta_{a_1}}{|a_1|}\right\rceil,\] where $\Delta_{a_1}=2\sqrt{T_1}-|a_1|\cdot\lfloor \frac{2\sqrt{T_1}}{|a_1|}\rfloor$. 

    \paragraph{Case 3: Right-to-left.} Again, the analysis of this case is analogous to the previous case.
Consider an index $k\in\mathcal{T}_{-1}$. Recall that $\delta_{t_k}(1)=\frac{1}{D_1\sqrt{T_1}}$ and $\delta_{t_{k+1}}(1)=-\frac{1}{D_1\sqrt{T_1}}$. Observe that $j_{t_{k+1}-1}=2$ otherwise we get $\delta_{t_{k+1}}(1)\neq-\frac{1}{D_1\sqrt{T_1}}$ which is a contradiction. Recall that we assumed that there is no time-step $t$ such that $\delta_t(1)=-\frac{1}{D_1\sqrt{T_1}}$ and $j_t=2$. Hence, we have that \[\delta_{t_{k+1}-1}(1)=\delta_{t_{k}}(1)+\frac{|a_1|n_1^{(k)}-|a_2|(n_2^{(k)-1})}{D_1T_1}>-\frac{1}{D_1\sqrt{T_1}},\] which implies that $n_2^{(k)}-1< \frac{2\sqrt{T_1}+|a_1|n_1^{(k)}}{|a_2|}$. Also, we have that \[\delta_{t_{k+1}-1+1/2}(1)=\delta_{t_k}(1)+\frac{|a_1|n_1^{(k)}-|a_2|n_2^{(k)}}{D_1T_1}\leq -\frac{1}{D_1\sqrt{T_1}},\] which implies $\frac{2\sqrt{T_1}+|a_1|n_1^{(k)}}{|a_2|}\leq n_2^{(k)}$. Hence, we have that $n_2^{(k)}-1< \frac{2\sqrt{T_1}+|a_1|n_1^{(k)}}{|a_2|}\leq n_2^{(k)}$ which implies that $n_2^{(k)}=\lceil \frac{2\sqrt{T_1}+n_1^{(k)}\cdot|a_1|}{|a_2|}\rceil$. This is equivalent to \[n_2^{(k)}=\left\lfloor \frac{2\sqrt{T_1}}{|a_2|}\right\rfloor+\left\lceil\frac{n_1^{(k)}\cdot |a_1|+\Delta_{a_2}}{|a_2|}\right\rceil,\] where $\Delta_{a_2}=2\sqrt{T_1}-|a_2|\cdot\lfloor \frac{2\sqrt{T_1}}{|a_2|}\rfloor$.  
    
    \paragraph{Putting the pieces together.} Consider $k_1\in\mathcal{T}_1$ and $k_2\in\mathcal{T}_{-1}$.  First, observe that $n_1^{(k_2)}\leq \frac{|a_2|}{|a_1|}\cdot\lceil\frac{n_1^{(k_2)}\cdot |a_1|+\Delta_{a_2}}{|a_2|}\rceil$ as $\lceil\frac{n_1^{(k_2)}\cdot |a_1|+\Delta_{a_2}}{|a_2|}\rceil\geq \lceil\frac{n_1^{(k_2)}\cdot |a_1|}{|a_2|}\rceil\geq n_1^{(k_2)}\frac{|a_1|}{|a_2|}$. Next, observe that $\lceil\frac{n_2^{(k_1)}\cdot |a_2|+\Delta_{a_1}}{|a_1|}\rceil \leq \lceil\frac{n_2^{(k_1)}\cdot |a_2|}{|a_1|}\rceil +\lceil \frac{\Delta_{a_1}}{|a_1|}\rceil\leq n_2^{(k_1)}\cdot\lceil \frac{|a_2|}{|a_1|}\rceil +1$. Finally, observe that $\lfloor \frac{2\sqrt{T_1}}{|a_1|}\rfloor\leq \lceil \frac{2\sqrt{T_1}}{|a_2|}\cdot \frac{|a_1|}{|a_2|}\rceil\leq \lceil \frac{|a_2|}{|a_1|}\rceil \cdot \lceil \frac{2\sqrt{T_1}}{|a_2|}\rceil\leq \lceil\frac{|a_2|}{|a_1|}\rceil \cdot (\lfloor \frac{2\sqrt{T_1}}{|a_2|}\rfloor+1 ).$ Therefore we have that
    \begin{align*}
        n_1^{(k_1)}+n_1^{(k_2)}
        &=\left\lfloor \frac{2\sqrt{T_1}}{|a_1|}\right\rfloor+\left\lceil\frac{n_2^{(k_1)}\cdot |a_2|+\Delta_{a_1}}{|a_1|}\right\rceil+n_1^{(k_2)}\\
        &\leq \left\lceil\frac{|a_2|}{|a_1|}\right\rceil \cdot \left(\left\lfloor \frac{2\sqrt{T_1}}{|a_2|}\right\rfloor+1 \right)+ n_2^{(k_1)}\cdot\left\lceil \frac{|a_2|}{|a_1|}\right\rceil +1+\left\lceil\frac{|a_2|}{|a_1|}\right\rceil\cdot\left\lceil\frac{n_1^{(k_2)}\cdot |a_1|+\Delta_{a_2}}{|a_2|}\right\rceil\\
        & = \left\lceil\frac{|a_2|}{|a_1|}\right\rceil \cdot \left(\left\lfloor \frac{2\sqrt{T_1}}{|a_2|}\right\rfloor+1 \right)+ n_2^{(k_1)}\cdot\left\lceil \frac{|a_2|}{|a_1|}\right\rceil +1+\left\lceil\frac{|a_2|}{|a_1|}\right\rceil\cdot \left(n_2^{(k_2)}-\left\lfloor \frac{2\sqrt{T_1}}{|a_2|}\right\rfloor\right), \\
        &=\left\lceil\frac{|a_2|}{|a_1|}\right\rceil\cdot (n_2^{(k_1)}+n_2^{(k_2)})+\left\lceil\frac{|a_2|}{|a_1|}\right\rceil+ 1,
            \end{align*}
where the second to last equality follows from the fact that  $n_2^{(k_2)}-\lfloor \frac{2\sqrt{T_1}}{|a_2|}\rfloor=\lceil\frac{n_1^{(k_2)}\cdot |a_1|+\Delta_{a_2}}{|a_2|}\rceil$.
Since $1\leq \left\lceil\frac{|a_2|}{|a_1|}\right\rceil\cdot (n_2^{(k_1)}+n_2^{(k_2)})$ and $\left\lceil\frac{|a_2|}{|a_1|}\right\rceil\leq\left\lceil\frac{|a_2|}{|a_1|}\right\rceil\cdot (n_2^{(k_1)}+n_2^{(k_2)})$, we further deduce that 
\begin{align*}
        n_1^{(k_1)}+n_1^{(k_2)}& \leq 3\left\lceil\frac{|a_2|}{|a_1|}\right\rceil\cdot (n_2^{(k_1)}+n_2^{(k_2)}).
    \end{align*}

    \paragraph{Combining all the three cases.} Note that except one element $k_0$ in $\mathcal{T}_1$, rest of the elements in $\mathcal{T}_1$ can be paired with distinct elements in $\mathcal{T}_{-1}$. Let $i_1,i_2,\ldots,i_\mathsf{k} \in \mathcal{T}_1$ be paired with $j_1,j_2,\ldots,j_\mathsf{k} \in \mathcal{T}_1$, respectively, where $\tau=|\mathcal{T}_{-1}|$. Hence, we have that
    \begin{align*}
        \sum_{k\in\mathcal{T}_0}n_1^{(k)}+\sum_{s=1}^{\tau}(n_1^{(i_s)}+n_1^{(j_s)})= \left\lceil\frac{|a_2|}{|a_1|}\right\rceil\sum_{k\in\mathcal{T}_0}n_2^{(k)}+3\left\lceil\frac{|a_2|}{|a_1|}\right\rceil\sum_{s=1}^{\tau} (n_2^{(i_s)}+n_2^{(j_s)}).
    \end{align*}

    \paragraph{Side case: Last-trajectory.} First, we observe that  the side case occurs at most once. Let $\{x\}$ denote the fractional part of $x$. 
    
    If $\delta_{t_\ell}(1)=-\frac{1}{D_1\sqrt{T_1}}$ and $\delta_{t_{\ell+1}}(1)\notin\{-\frac{1}{D_1\sqrt{T_1}},\frac{1}{D_1\sqrt{T_1}}\}$, then using similar arguments as above we get the following:
    \begin{align*}
        n_1^{(\ell)}&=\left\lfloor \frac{D_1T_1\cdot \delta_{t_{\ell+1}}(1)+\sqrt{T_1}}{|a_1|}\right\rfloor+\left\lceil\frac{n_2^{(\ell)}\cdot |a_2|}{|a_1|}+\left\{\frac{D_1T_1\cdot \delta_{t_{\ell+1}}(1)+\sqrt{T_1}}{|a_1|}\right\}\right\rceil,\\
        &\leq  \left\lfloor\frac{2\sqrt{T_1}}{|a_1|}\right\rfloor+\left\lceil\frac{n_2^{(\ell)}\cdot |a_2|}{|a_1|}\right\rceil+\left\lceil\left\{\frac{D_1T_1\cdot \delta_{t_{\ell+1}}(1)+\sqrt{T_1}}{|a_1|}\right\}\right\rceil, \\
        &\leq \left\lfloor\frac{2\sqrt{T_1}}{|a_1|}\right\rfloor+n_2^{(\ell)}\cdot\left\lceil \frac{|a_2|}{|a_1|}\right\rceil +1, 
    \end{align*}
    where the second to last inequality follows from the fact that $\delta_{t_{\ell+1}}(1)<\frac{1}{D_1\sqrt{T_1}}$.

    On the other hand, if $\delta_{t_\ell}(1)=\frac{1}{D_1\sqrt{T_1}}$ and $\delta_{t_{\ell+1}}(1)\notin\{-\frac{1}{D_1\sqrt{T_1}},\frac{1}{D_1\sqrt{T_1}}\}$, then using  arguments analogous to those above, we get \[n_2^{(\ell)}=\lfloor \frac{\sqrt{T_1}-D_1T_1\cdot \delta_{t_{\ell+1}}(1)}{|a_2|}\rfloor+\lceil\frac{n_1^{(\ell)}\cdot |a_1|}{|a_2|}+\{\frac{-D_1T_1\cdot \delta_{t_{\ell+1}}(1)+\sqrt{T_1}}{|a_2|}\}\rceil\geq n_1^{(\ell)}\cdot\frac{|a_1|}{|a_2|}.\] Since $\delta_{t_\ell}(1)=\frac{1}{D_1\sqrt{T_1}}$ and $\delta_{t_{\ell+1}}(1)\notin\{-\frac{1}{D_1\sqrt{T_1}},\ \frac{1}{D_1\sqrt{T_1}}\}$, there is exactly one index $k_0\in \mathcal{T}_1$ which cannot be paired with an index in $\mathcal{T}_{-1}$. Hence, we deduce that  \[n_1^{(k_0)}=\lfloor \frac{2\sqrt{T_1}}{|a_1|}\rfloor+\lceil\frac{n_2^{(k_0)}\cdot |a_2|+\Delta_{a_1}}{|a_1|}\rceil\leq \lfloor \frac{2\sqrt{T_1}}{|a_1|}\rfloor+ n_2^{(k_0)}\cdot\lceil \frac{|a_2|}{|a_1|}\rceil +1.\]

  By combining the case-by-case analysis and subsequent arguments above, we deduce that
    \begin{align*}
        n_1+n_2\leq \left(3\left\lceil\frac{|a_2|}{|a_1|}\right\rceil+1\right)\cdot n_2+\left\lfloor \frac{2\sqrt{T_1}}{|a_1|}\right\rfloor+1,
    \end{align*}
    and 
    we get the desired regret bound by applying Theorem \ref{sub-nxn-thm}, which concludes the proof. 
\begin{theorem}\label{2x2:subroutine:bandit}
Consider a sequence of actions $j_1,\ldots,j_{T'}$ and assume that the Algorithm \ref{subroutine-nxn-bandit} terminates after $T'$ steps. Assume the conditions of \eqref{eqn:sub-conditions} hold for Algorithm \ref{subroutine-nxn-bandit}. Algorithm \ref{subroutine-nxn-bandit} has the following regret upper bound in the bandit-feedback case:
\begin{equation*}
    T'\cdot V_A^*-\sum_{t=1}^{T'}\langle x_t,Ae_{j_t}\rangle\leq \frac{c_1}{D_1}+\frac{c_2T_3}{D_1T_1}
\end{equation*}
where $T_3=3(\left\lceil \left|\frac{a_{j_*}}{a_{j_c}}\right|\right\rceil+1)\cdot n_{j_*}+\lfloor \frac{2\sqrt{T_1}}{|a_{j_c}|}\rfloor+1$ and $c_1,c_2$ are absolute constants.
\end{theorem}

\textbf{Remark:} The regret upper bound implies that either both columns are played sufficiently by the column player or the row player incurs a very low regret. 

\section{$n\times m$ matrix game under noisy matrix feedback}\label{appendix:matrix-feedback}
In this section, we formally describe the algorithm for $n \times m$ matrix games under noisy matrix feedback. First, in \Cref{appendix:nxm}, we outline how to identify the support of the unique Nash equilibrium. Next, in \Cref{appendix:nxn}, we explain how to invoke the subroutine (Algorithm \ref{subroutine-nxn}) in $n \times m$ games with a unique full row-support Nash equilibrium by appropriately setting the input parameters. Then, in \Cref{appendix:main-thm-guarantee}, we formally state our Nash regret guarantee. In \Cref{appendix:condition-number-guarantee}, we discuss how one might improve the dependence on $n$ and $m$. Finally, in \Cref{appendix:nxm:non-unique}, we address the case of non-unique Nash equilibria. Note that, throughout this section, NE stands for Nash equilibrium.
\subsection{Algorithm to identify the support of $n\times m$ matrix}\label{appendix:nxm}
In this section, consider an $n\times m$ matrix $A$ with unique Nash Equilibrium $(x^*,y^*)$ and describe an algorithm (Algorithm \ref{alg-nxm-instance}) to identify $\supp(x^*)$ and $\supp(y^*)$ while incurring a very low regret. Recall that $A\in[-1,1]^{n\times m}$ and $ \mathbf{A}_t\in [-1,1]^{n\times m}$. Moreover, for any matrix $B\in [-1,1]^{k\times k}$ we say a pair $(x',y')$ is a full support NE of $B$ if $|\supp(x')|=|\supp(y')|=k$.  We translate and re-scale these matrices by adding $1$ to every entry and then dividing every entry by $2$. Hence for the rest of the section, we assume that  $A\in[0,1]^{n\times m}$ and $\mathbf{A}_t\in [0,1]^{n\times m}$. Note that 
the regret only changes by a factor of $2$.

\begin{algorithm}[t!]
\caption{Algorithm to identify the optimal submatrix}
\begin{algorithmic}[1]
\STATE $x_1\gets (1/n,\ldots,1/n)$.
\FOR{time step $t=1,2,\ldots,T$} 
\STATE Play $x_t$ and update the empirical means $\bar A_{ij}$.
\STATE Let $(x',y')$ be the Nash Equilibirum of $\bar A$ and $x_{t+1}\gets x'$.
\STATE If $|\supp(x')|\neq |\supp(y')|$, then proceed to the next time step.
\STATE $k\gets |\supp(x')|$
\STATE Let $\bar B$ be a $k\times k$ submatrix of $\bar A$ with row indices $\supp(x')$ and column indices $\supp(y')$.
\STATE $\Delta \gets \sqrt{\frac{2\log({nm\cdot T^2)}}{t}}$ and $\tilde\Delta=k\cdot k!\cdot\sqrt{\frac{2\log(nmT^2)}{t}}$
\STATE $\tilde\Delta_{\min}\gets \min\{|det(M_{\bar B^\top})|,|det(M_{\bar B})|,\min_{i\in[n]}|det(M_{\bar B^\top,i})|,\min_{j\in[n]}|det(M_{\bar B,j})|\}$
\IF{$1\leq \frac{\tilde \Delta_{\min}+2k^2\cdot k!\cdot\Delta}{\tilde \Delta_{\min}-2k^2\cdot k!\cdot\Delta}\leq \frac{3}{2}$ }\label{con:sub:line1}
\IF{$k\neq n$}
\STATE $\tilde\Delta_{g,1}\gets V_{\bar A}^*-\max_{i\notin \supp(x')}\sum_{j=1}^my'_j\bar A_{i,j}$ 
\ELSE
\STATE $\tilde\Delta_{g,1}\gets \infty$
\ENDIF
\IF{$k\neq m$}
\STATE $\tilde\Delta_{g,2}\gets \min_{j\notin \supp(y')}\sum_{i=1}^nx'_i\bar A_{i,j}-V_{\bar A}^*$ 
\ELSE
\STATE $\tilde\Delta_{g,2}\gets\infty$
\ENDIF
\STATE $\tilde \Delta_g\gets \min\{\tilde\Delta_{g,1},\tilde\Delta_{g,2}\}$
\STATE $\tilde D \gets \min\{|det(M_{\bar B^\top})|,|det(M_{\bar B})|\}$
\IF{$\tilde\Delta_g\geq \frac{5k\tilde\Delta}{\tilde D}+2\Delta$} \label{con:sub:line2}
\STATE Return $B$ as the optimal sub-matrix
\ENDIF
\ENDIF
\ENDFOR
\end{algorithmic}
\label{alg-nxm-instance}
\end{algorithm}

\textbf{Equivalent closed form solution of NE.} In this section we work with an equivalent closed form solution of NE. First, we define a few matrices. For any $n\times n$ matrix $\tilde A$ with support NE, we define $M_{\tilde A}$ as follows--for all $i\in[n-1]$ and $j\in [n]$ $(M_{\tilde A})_{i,j}=\tilde A_{1,j}-\tilde A_{i+1,j}$ and for all $j\in[n]$ $(M_{\tilde A})_{n,j}=1$. For any $\ell\in[n]$, we also define $M_{\tilde A,\ell}$ as follows-- for $i\in[n]$ and $j\in[n]\setminus\{\ell\}$ $(M_{\tilde A,\ell})_{i,j}=(M_{\tilde A})_{i,j}$ and $(M_{\tilde A,\ell})_{i,\ell}=0$ if $i\neq n$ and $(M_{\tilde A,\ell})_{i,\ell}=1$ if $i=n$. Let $b=(0,\ldots,0,1)^\top$. Observe that we get $M_{\tilde A,i}$ by replacing the $i$-th column of $M_{\tilde A}$ with the column vector $b$.  If $\det(M_{\tilde A})\neq 0$, then by Cramer's rule, the system of linear equations $M_Az=b$ has a unique solution $z^*$ where  $z_i^*=\frac{\det(M_{\tilde A,i})}{\det(M_{\tilde A})}$. 

Now observe that $x^*$ and $y^*$ must satisfy the system of linear equations
$M_{\tilde A^\top}x=b$ and $M_{\tilde A}y=b$ respectively. This implies that $x_i^*=\frac{\det(M_{\tilde A^\top,i})}{\det(M_{\tilde A^\top})}$ and $y_j^*=\frac{\det(M_{\tilde A,j})}{\det(M_{\tilde A})}$. Observe that $\tilde A$ has a unique full support Nash equilibrium $(x^*,y^*)$ if and only if for all $i,j\in[n]$ ${\det(M_{\tilde A^\top,i})}\cdot{\det(M_{\tilde A^\top})}>0$ and ${\det(M_{\tilde A,j})}\cdot {\det(M_{\tilde A})}>0$.

\textbf{Important matrix-dependent parameters.} 
If $k:=|\supp(x^*)|<n$, let $\Delta_{g,1}:=V_A^*-\max_{i\notin \supp(x^*)}\sum_{j=1}^my^*_j A_{i,j}$ otherwise let $\Delta_{g,1}=\infty$. If $|\supp(y^*)|<m$, let $\Delta_{g,2}:=\min_{j\notin \supp(y^*)}\sum_{i=1}^nx'_i A_{i,j}-V_A^*$ otherwise let $\Delta_{g,2}=\infty$. Let $\Delta_g=\min\{\Delta_{g,1},\Delta_{g,2}\}$. We now define the parameters $\Delta_{\min}$ as:
\begin{align*}
    &\Delta_{\min}=\min\left\{\min_{i\in[k]}|\det(M_{ B^\top,i})|,\min_{j\in[k]}|\det(M_{ B,j})|\right\}.
\end{align*}
where $B$ is the $k\times k$ optimal sub-matrix with row indices $\supp(x^*)$ and column indices $\supp(y^*)$.

\textbf{Main Result.} Let $\bar A_{ij,t}$ denote the element $(i,j)$ of $\bar A$ at the end of round $t$. Let us now define an event $G$ as follows:
\begin{align*}
    G:=&\bigcap_{t=1}^T\bigcap_{i=1}^n \bigcap_{j=1}^m \{ |A_{ij}-\bar A_{ij,t}|\leq \sqrt{\tfrac{2\log(nmT^2)}{t}} \}.
\end{align*}
Observe that event $G$ holds with probability at least $1-\frac{8}{T}$. We now present our main result.
\begin{theorem}\label{thm:nxm:full}
    Consider a matrix game on $A\in [0,1]^{n\times n}$ that has a unique full support Nash equilibrium $(x^*,y^*)$. If the event $G$ holds, then  Algorithm \ref{alg-nxm-instance} identifies the optimal $k\times k$ sub-matrix in $t'\leq \frac{c_1\cdot(k^2\cdot k!)^2\log(nmT^2)}{\min\{1,\Delta_{\min}^2\}\cdot\min\{1,\Delta_g^2\}}$ time steps where $c_1$ is an absolute constant.
    Moreover if the event $G$ holds, then the regret $R(t')=t'\cdot V_A^*-\sum_{t=1}^{t'} \langle x_t,Ae_{j_t}\rangle$ incurred by Algorithm \ref{alg-nxm-instance} is $\frac{c_2\cdot k^2\cdot k!\cdot\log(nmT)}{\min\{1,\Delta_{\min}\}\cdot\min\{1,\Delta_g\}}$ where $c_2$ is an absolute constant.
\end{theorem}

We now begin analysing the Algorithm \ref{alg-nxm-instance}. \Cref{det:lem} plays a crucial role in this. In section \ref{appendix:game-theory} we present few game theoretic lemmas. We then use these lemmas to prove Theorem \ref{thm:nxm:full} in section \ref{appendix:nxm-lems}
\subsubsection{Game theoretic Lemmas}\label{appendix:game-theory}
First we present the following lemma.
\begin{lemma}[\cite{bohnenblust1950solutions}]\label{shapley:lem1}
    Let the matrix game on $A\in\mathbb{R}^{n\times m}$ have a unique NE $(x^*,y^*)$. Then the following holds:
    \begin{itemize}
        \item $|\supp(x^*)|=|\supp(y^*)|$
        \item If $|\supp(x^*)|<n$, $V_A^*-\max_{i\notin \supp(x^*)}\sum_{j=1}^my^*_j A_{i,j}>0$
        \item If $|\supp(y^*)|<m$, $\min_{j\notin \supp(y^*)}\sum_{i=1}^nx'_i A_{i,j}-V_A^*>0$
    \end{itemize}
\end{lemma}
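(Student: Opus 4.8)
The plan is the classical Bohnenblust--Karlin--Shapley argument, arranged so that the strict-gap statements are obtained before the support-size statement. Write $v=V_A^*$; after adding a constant to every entry of $A$ I may assume $v\neq 0$, which alters neither the equilibria nor the supports. Set $S=\supp(x^*)$ and $T=\supp(y^*)$, and let $\mathcal X^{\star},\mathcal Y^{\star}$ be the polytopes of optimal row and column strategies, so the hypothesis reads $\mathcal X^{\star}=\{x^*\}$ and $\mathcal Y^{\star}=\{y^*\}$. I will repeatedly use the elementary optimality identities, immediate from $x^{*\top}Ay^*=v$ and the minimax inequalities: $e_i^\top Ay^*\le v$ for all $i$ with equality for $i\in S$, and $x^{*\top}Ae_j\ge v$ for all $j$ with equality for $j\in T$.

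First I would prove the \emph{support characterization}: for a row $i$, one has $e_i^\top Ay'=v$ for \emph{every} $y'\in\mathcal Y^{\star}$ if and only if $x''_i>0$ for \emph{some} $x''\in\mathcal X^{\star}$, and symmetrically for columns. The ``if'' direction is the complementary-slackness computation above. For the ``only if'' direction I would argue by contradiction via LP duality: if every optimal row strategy has $i$-th coordinate $0$, then the linear program ``maximize $e_i^\top x$ over probability vectors $x$ with $x^\top A\ge v\mathbf 1^\top$'' has value $0$; writing out its dual, discarding the degenerate case in which the multipliers of the inequality constraints all vanish, and renormalizing those multipliers to a probability vector $y'$, one reads off $e_{i'}^\top Ay'\le v$ for all $i'$ together with $e_i^\top Ay'<v$, so $y'\in\mathcal Y^{\star}$ yet $e_i^\top Ay'<v$ --- contradicting the hypothesis on $i$. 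Specializing this to $\mathcal X^{\star}=\{x^*\}$ and $\mathcal Y^{\star}=\{y^*\}$ gives exactly $\{i:e_i^\top Ay^*=v\}=S$ and $\{j:x^{*\top}Ae_j=v\}=T$, which combined with the inequalities above are the strict-gap statements (second and third bullets).

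Next I would deduce $|S|=|T|$ from the strictness just obtained. Suppose not; since the hypothesis is symmetric under transposing $A$ and swapping the players, I may assume $|S|>|T|$. Let $B=A_{S,T}$ and put $\hat x=(x^*_i)_{i\in S}>0$, $\hat y=(y^*_j)_{j\in T}>0$. The tightness identities give $B\hat y=v\mathbf 1$, so $\mathbf 1$ lies in the column space of $B$; hence the affine set $W=\{z:z^\top B=v\mathbf 1^\top,\ \mathbf 1^\top z=1\}$ is nonempty (it contains $\hat x$) of dimension $|S|-\operatorname{rank}(B)\ge|S|-|T|\ge 1$. Choosing $u\neq 0$ with $\hat x+tu\in W$, for small $|t|$ the vector $\hat x+tu$ stays strictly positive; extending it by zeros outside $S$ gives $z(t)$ with $z(t)^\top Ae_j=v$ for $j\in T$ and, because the third bullet makes $x^{*\top}Ae_j>v$ strictly for $j\notin T$, also $z(t)^\top Ae_j>v$ for $j\notin T$ once $|t|$ is small enough. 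Thus $z(t)\in\mathcal X^{\star}$ for a whole interval of $t$, contradicting $\mathcal X^{\star}=\{x^*\}$. Hence $|S|=|T|$, and rerunning the same perturbation (now $|S|=|T|$ but $\operatorname{rank}(B)<|S|$ would again give $\dim W\ge 1$) forces $\operatorname{rank}(B)=|S|=|T|$, i.e.\ the submatrix on the supports is square and nonsingular --- a byproduct that is used elsewhere in the paper.

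I expect the main obstacle to be the ``only if'' half of the support characterization: it is the single place where a genuine duality/separation argument is needed rather than bookkeeping, and some care is required to set up the auxiliary linear program and to extract an honest probability vector (not merely a dual certificate) from it. The remaining ingredients --- the complementary-slackness identities, the transpose symmetry reduction, and the rank/dimension count in the perturbation step --- are routine once the strict gaps are available, which is precisely why the argument must settle the second and third bullets before the first.
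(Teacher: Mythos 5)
The paper never proves this lemma: it is imported verbatim from \citet{bohnenblust1950solutions} and used as a black box, so there is no in-paper argument to compare against. Your proposal is a correct, self-contained reconstruction of the classical proof, and your ordering is the right one: the perturbation step that forces $|\supp(x^*)|=|\supp(y^*)|$ genuinely needs the strict gap on off-support columns (third bullet) to conclude that the perturbed strategy stays optimal, so the equalizer/duality characterization must come first. Two small points worth tightening. First, the ``degenerate case'' in the duality step cannot actually occur: if the multipliers of the inequality constraints all vanish, strong duality (primal value $0$) forces the free multiplier to vanish too, and dual feasibility then reads $0\le -1$ in coordinate $i$; so you should rule it out rather than discard it. Second, your dimension count $\dim W=|S|-\operatorname{rank}(B)$ (rather than $|S|-\operatorname{rank}(B)-1$) silently uses that $\mathbf 1^\top z=1$ is already implied by $z^\top B=v\mathbf 1^\top$, which follows by pairing with $B\hat y=v\mathbf 1$ and is exactly where the normalization $v\neq 0$ is consumed---say so explicitly. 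The nonsingularity of the support submatrix that falls out at the end is indeed the form in which the paper consumes this lemma downstream (via $x^*\propto\operatorname{adj}(A^\top)\mathbf 1$ and the determinants $\det(M_{A^\top,i})$), so recording that byproduct is worthwhile.
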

Let $\simplex_{\mathcal I}$ denote the simplex over the set of indices $\mathcal I$. Now we present the following lemma.
\begin{lemma}[\cite{bohnenblust1950solutions}]\label{shapley:lem2}
    Let the matrix game on $A\in\mathbb{R}^{n\times m}$ have a unique NE $(x^*,y^*)$. Let $\mathcal I=\supp(x^*)$ and $\mathcal J=\supp(y^*)$. Let $(x,y)\in \simplex_{\mathcal{I}}\times \simplex_{\mathcal{J}} $ such that for all $i\in\mathcal{I}$, $x_i=x_i^*$ and for all $j\in\mathcal{J}$, $y_i=y_i^*$. Then $(x,y)$ is the unique NE of the sub-matrix of $A$ with row indices $\mathcal{I}$ and column indices $\mathcal{J}$.
\end{lemma}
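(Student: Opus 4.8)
The plan is to prove the two halves of the statement separately: first that the restriction $(x,y)$ of $(x^*,y^*)$ to the support coordinates is a Nash equilibrium of the subgame $B := A_{\mathcal{I},\mathcal{J}}$ with $V_B^* = V_A^*$, and then that it is the \emph{only} one, by pulling any competing equilibrium of $B$ back to an equilibrium of $A$ and invoking the assumed uniqueness of the latter. The background facts I will use freely are von Neumann's minimax theorem — so that a pair is a Nash equilibrium of a zero-sum game iff the row part is maximin-optimal and the column part is minimax-optimal, and both optimal values equal the game value — together with the consequent product structure of the equilibrium set (so ``unique Nash equilibrium of $A$'' is the same as ``$A$ has a unique maximin strategy and a unique minimax strategy''); the complementary-slackness identities $(Ay^*)_i = V_A^*$ for $i\in\mathcal{I}$ (with $(Ay^*)_i\le V_A^*$ in general) and $(x^{*\top}A)_j = V_A^*$ for $j\in\mathcal{J}$ (with $\ge V_A^*$ in general); and the \emph{strict} gap inequalities of Lemma~\ref{shapley:lem1}, namely $(x^{*\top}A)_j > V_A^*$ for $j\notin\mathcal{J}$ and $(Ay^*)_i < V_A^*$ for $i\notin\mathcal{I}$.

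For the first half I would just compute. Since $x^*$ is supported on $\mathcal{I}$ and $y^*$ on $\mathcal{J}$, for every $x'\in\simplex_{\mathcal{I}}$ one has $x'^\top B y = \sum_{i\in\mathcal{I}} x'_i (Ay^*)_i = V_A^*$ by the slackness equalities on $\mathcal{I}$, and symmetrically $x^\top B y' = \sum_{j\in\mathcal{J}} y'_j (x^{*\top}Ae_j) = V_A^*$ for every $y'\in\simplex_{\mathcal{J}}$. Hence $\max_{x'\in\simplex_{\mathcal{I}}} x'^\top B y = V_A^* = \min_{y'\in\simplex_{\mathcal{J}}} x^\top B y' = x^\top B y$, so $(x,y)$ is a (rather degenerate) saddle point of $B$ and $V_B^* = V_A^*$; in particular every row is a best response to $y$ in $B$ and every column a best response to $x$, exactly the degeneracy one expects of a subgame restricted to the equilibrium support.

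The uniqueness half is where the real work lies, and the obstacle is precisely that a maximin-optimal strategy of $B$, once zero-padded, says nothing about the non-support columns $j\notin\mathcal{J}$ of $A$. The plan is to repair this by perturbing toward $x^*$. Let $\tilde x\in\simplex_{\mathcal{I}}$ be any maximin-optimal strategy of $B$, so $\sum_{i\in\mathcal{I}}\tilde x_i A_{i,j} \ge V_B^* = V_A^*$ for all $j\in\mathcal{J}$; let $\hat x\in\simplex_n$ be its zero-padding and set $x_\lambda := (1-\lambda)x^* + \lambda\hat x$. For $j\in\mathcal{J}$, $x_\lambda^\top A e_j \ge (1-\lambda)V_A^* + \lambda V_A^* = V_A^*$. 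For $j\notin\mathcal{J}$, writing $x^{*\top}Ae_j = V_A^* + \delta_j$ with $\delta_j>0$ from Lemma~\ref{shapley:lem1} and using that $\hat x^\top A e_j$ is bounded, one gets $x_\lambda^\top A e_j = V_A^* + (1-\lambda)\delta_j - \lambda(V_A^* - \hat x^\top A e_j) \ge V_A^*$ for every $\lambda$ below some threshold $\lambda_0>0$ (taken uniform over the finitely many such $j$: each constraint is either automatic, when $\hat x^\top A e_j\ge V_A^*$, or holds for $\lambda \le \delta_j/(\delta_j + V_A^* - \hat x^\top A e_j)$). Thus for $0<\lambda\le\lambda_0$ the strategy $x_\lambda$ secures at least $V_A^*$ against every pure column, so $x_\lambda$ is maximin-optimal for $A$; by the uniqueness assumption the maximin set of $A$ is $\{x^*\}$, forcing $(1-\lambda)x^* + \lambda\hat x = x^*$ and hence $\hat x = x^*$, i.e.\ $\tilde x = x$. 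Applying the identical argument to $A^\top$ pins down the minimax strategy of $B$ as $y$, and the product structure of equilibrium sets then gives that $(x,y)$ is the unique Nash equilibrium of $B$. The only delicate points beyond this are the uniform choice of $\lambda_0$ and the correct translation of ``unique Nash equilibrium'' into ``unique maximin strategy and unique minimax strategy,'' both routine.
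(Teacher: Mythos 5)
The paper does not prove this lemma at all --- it is quoted from \citet{bohnenblust1950solutions} without proof --- so there is no in-paper argument to compare against. Your proof is correct and is essentially the classical Bohnenblust--Karlin--Shapley argument: the first half (equalization on the support makes every strategy of the subgame a best response, so the restricted pair is a saddle point of $B$ with $V_B^*=V_A^*$) is a routine computation, and the second half correctly handles the only genuine difficulty, namely that an optimal strategy of $B$ controls nothing off $\mathcal{J}$, by convexly perturbing $x^*$ toward the zero-padded strategy and using the strict gaps of Lemma~\ref{shapley:lem1} to keep all non-support columns above $V_A^*$ for small $\lambda>0$. The auxiliary facts you invoke (the product/rectangular structure of the equilibrium set of a zero-sum game, complementary slackness, and the strict gap inequalities) are all standard and are either stated in the paper or immediate from von Neumann's theorem, and your uniform choice of $\lambda_0$ over the finitely many columns $j\notin\mathcal{J}$ is handled correctly. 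No gaps.
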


In this section we deal with a matrix game on $A\in\mathbb{R}^{n\times m}$ that has a unique NE $(x^*,y^*)$. If $|\supp(x^*)|<n$, recall that $\Delta_{g,1}:=V_A^*-\max_{i\notin \supp(x^*)}\sum_{j=1}^my^*_j A_{i,j}$ otherwise $\Delta_{g,1}=\infty$. If $|\supp(y^*)|<m$, recall that $\Delta_{g,2}:=\min_{j\notin \supp(y^*)}\sum_{i=1}^nx'_i A_{i,j}-V_A^*$ otherwise $\Delta_{g,2}=\infty$. Recall that $\Delta_g=\min\{\Delta_{g,1},\Delta_{g,2}\}$. 


\begin{lemma}\label{nxm:game:suboptimal}
    Consider a matrix game on $A\in\mathbb{R}^{n\times m}$. Let $(x',y')$ be a unique full support NE of a square submatrix $B$ of $A$. If $(x',y')$ is not a Nash Equilibrium of $A$, then either $V_B^*-\max_{i\notin \supp(x')}\sum_{j=1}^my'_j A_{i,j}<0$ or $\min_{j\notin \supp(y')}\sum_{i=1}^nx'_i A_{i,j}-V_B^*<0$.
\end{lemma}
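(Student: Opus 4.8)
The plan is to exploit the indifference (complementary-slackness) characterization of a \emph{full-support} Nash equilibrium: once $(x',y')$ is embedded into $\simplex_n\times\simplex_m$ by padding with zeros, the payoff against $A$ along every row in $\supp(x')$ and every column in $\supp(y')$ is pinned to $V_B^*$, so the equilibrium property of $(x',y')$ for $A$ can only break along a row \emph{outside} $\supp(x')$ or a column \emph{outside} $\supp(y')$ — and that is precisely the dichotomy in the statement.

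Concretely, write $\mathcal I=\supp(x')$ and $\mathcal J=\supp(y')$ for the row- and column-index sets of the square submatrix $B=A_{\mathcal I,\mathcal J}$. First I would record the standard fact that, because $(x',y')$ is a full-support NE of $B$, the row player is indifferent across all rows of $\mathcal I$ and the column player across all columns of $\mathcal J$, i.e.\ $\langle e_i,By'\rangle=V_B^*$ for every $i\in\mathcal I$ and $\langle x',Be_j\rangle=V_B^*$ for every $j\in\mathcal J$; indeed if $\langle e_i,By'\rangle<V_B^*$ for some $i\in\mathcal I$ then $\sum_{i}x'_i\langle e_i,By'\rangle<V_B^*$, contradicting $\langle x',By'\rangle=V_B^*$ together with $x'_i>0$. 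Since $x',y'$ are supported on $\mathcal I,\mathcal J$ and $B$ agrees with $A$ on $\mathcal I\times\mathcal J$, the same identities hold with $A$ in place of $B$: $\langle e_i,Ay'\rangle=V_B^*$ for all $i\in\mathcal I$, $\langle x',Ae_j\rangle=V_B^*$ for all $j\in\mathcal J$, and hence $\langle x',Ay'\rangle=\sum_{i\in\mathcal I}x'_i\langle e_i,Ay'\rangle=V_B^*$.

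Next I would reduce the definition of a Nash equilibrium of $A$ to a finite check: by linearity of the payoff, $(x',y')$ is a NE of $A$ iff $\max_{i\in[n]}\langle e_i,Ay'\rangle\le \langle x',Ay'\rangle\le \min_{j\in[m]}\langle x',Ae_j\rangle$. Substituting $\langle x',Ay'\rangle=V_B^*$ and using the identities above, the inequalities are automatic for $i\in\mathcal I$ and $j\in\mathcal J$, so the condition collapses to $\langle e_i,Ay'\rangle\le V_B^*$ for all $i\notin\mathcal I$ and $\langle x',Ae_j\rangle\ge V_B^*$ for all $j\notin\mathcal J$ — that is, $V_B^*-\max_{i\notin\mathcal I}\sum_j y'_jA_{i,j}\ge 0$ and $\min_{j\notin\mathcal J}\sum_i x'_iA_{i,j}-V_B^*\ge 0$. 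Taking the contrapositive: if $(x',y')$ is \emph{not} a NE of $A$, at least one of these two quantities must be strictly negative, which is exactly the claim. The degenerate cases $\mathcal I=[n]$ or $\mathcal J=[m]$ are handled by the usual convention that a max (resp.\ min) over the empty index set is $-\infty$ (resp.\ $+\infty$), so the corresponding disjunct simply cannot be the one that is violated; if both $\mathcal I=[n]$ and $\mathcal J=[m]$ then $B=A$ and the hypothesis is vacuous.

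I do not anticipate a genuine obstacle: the lemma is essentially bookkeeping built on the indifference property of full-support equilibria. The only two points deserving care are (i) justifying that full support upgrades the equilibrium inequalities to \emph{exact} indifference on $\mathcal I$ and $\mathcal J$, and (ii) verifying the empty-support-complement edge cases so that the literal "either/or" phrasing is correct — both are routine.
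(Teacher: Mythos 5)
Your proof is correct and follows essentially the same route as the paper's: both use the indifference of a full-support equilibrium on $\supp(x')$ and $\supp(y')$ to conclude that any failure of the Nash inequalities for $A$ must occur at a row outside $\supp(x')$ or a column outside $\supp(y')$, and then take the contrapositive. Your write-up is more detailed (in particular on the empty-complement edge cases), but the argument is the same.
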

\begin{proof}
    First observe that $\langle x', Ay'\rangle=V_B^*$. For all $i\in \supp(x')$, $\sum_{j=1}^my'_j A_{i,j}=\langle x', Ay'\rangle$. Similarly for all $j\in \supp(y')$, $\sum_{i=1}^nx'_i A_{i,j}=\langle x', Ay'\rangle$.  Now as $(x',y')$ is not a NE of $A$, there exists some indices $i'\notin \supp(x')$, $j'\notin \supp(y')$ such that either $\sum_{j=1}^my'_j A_{i',j}>\langle x', Ay'\rangle$ or $\sum_{i=1}^nx'_i A_{i,j'}<\langle x', Ay'\rangle$.
\end{proof}

\begin{lemma}\label{nxm:game:unique-full}
    Consider a matrix game on $A\in\mathbb{R}^{n\times n}$. Let $A$ have a full support NE $(x^*,y^*)$. Let $det(M_A)\neq 0$ and $det(M_{A^T})\neq 0$. Then $(x^*,y^*)$ is the unique NE of $A$
\end{lemma}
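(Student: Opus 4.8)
The plan is to show that the matrix game on $A$ cannot have two distinct Nash equilibria by combining the non-degeneracy hypotheses $\det(M_A)\neq 0$ and $\det(M_{A^\top})\neq 0$ with the closed-form characterization of Nash equilibria established earlier in this section. Recall that any full-support Nash equilibrium $(x,y)$ must satisfy the linear systems $M_{A^\top}x = b$ and $M_A y = b$ where $b=(0,\ldots,0,1)^\top$; the first $n-1$ rows encode the indifference conditions $x^\top A$ being constant across columns (equivalently $A y$ constant across rows), and the last row encodes that the coordinates sum to $1$. Since $(x^*,y^*)$ is a full-support Nash equilibrium, it satisfies these systems, and because $\det(M_A)\neq 0$ and $\det(M_{A^\top})\neq 0$, Cramer's rule gives $x^* = (x_i^*)$ with $x_i^* = \det(M_{A^\top,i})/\det(M_{A^\top})$ and similarly $y^*_j = \det(M_{A,j})/\det(M_A)$, and these are the \emph{unique} solutions of the respective systems.

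The key step is to rule out any \emph{other} Nash equilibrium $(\tilde x, \tilde y)$, in particular one that need not have full support. First I would invoke the standard fact (this is essentially the support-characterization used elsewhere, e.g.\ Lemma~\ref{shapley:lem2} and Lemma~\ref{nxm:game:suboptimal}) that if $(\tilde x,\tilde y)$ is any Nash equilibrium of $A$ with $\mathcal I = \supp(\tilde x)$, $\mathcal J = \supp(\tilde y)$, then $\tilde x$ restricted to $\mathcal I$ and $\tilde y$ restricted to $\mathcal J$ form a full-support Nash equilibrium of the square submatrix $A_{\mathcal I,\mathcal J}$; in particular $|\mathcal I| = |\mathcal J|$. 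I would then argue that any full-support equilibrium $(\tilde x,\tilde y)$ must be a \emph{mixed} equilibrium of a game whose equalizer conditions force $A\tilde y = V \mathbf 1$ and $A^\top \tilde x = V\mathbf 1$ for the common value $V$. Rewriting the indifference conditions as $(M_A)$-type equations: for a full-support equilibrium on all of $[n]$, $\tilde y$ solves $M_A \tilde y = b$, so by the non-degeneracy $\tilde y = y^*$, and symmetrically $\tilde x = x^*$. So the only possible \emph{full-support} equilibrium is $(x^*,y^*)$ itself.

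The remaining obstacle — and I expect this to be the main one — is to exclude equilibria of \emph{smaller} support. The cleanest route is to show that the existence of a full-support equilibrium $(x^*,y^*)$ together with the non-degeneracy of $M_A, M_{A^\top}$ forces $x^*$ and $y^*$ to have \emph{strictly positive} coordinates (which is immediate from full support), and then to show any equilibrium $(\tilde x,\tilde y)$ of smaller support would have to violate a best-response inequality against $(x^*,y^*)$, or alternatively that $x^*$ being a strictly-interior equilibrium strategy implies by complementary slackness that \emph{every} column is a best response to $x^*$, hence $A^\top x^* = V\mathbf 1$ exactly, and the value $V$ is pinned down; then the same indifference argument forces $\tilde y$ to equalize on the columns in $\supp(\tilde y)$ at this same value, and one extends $\tilde y$ (padding with zeros) to check it must coincide with the unique solution $y^*$ of $M_A y = b$. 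Concretely, I would: (i) show full support of $(x^*,y^*)$ forces $A^\top x^* = V\mathbf 1$ and $Ay^* = V\mathbf 1$ with $V = V_A^*$; (ii) let $(\tilde x,\tilde y)$ be any NE, so $\tilde x^\top A \tilde y = V_A^*$ and $\tilde x$ is a best response to $\tilde y$, $\tilde y$ to $\tilde x$; (iii) use $\langle x^*, A\tilde y\rangle \le V_A^* = \langle x^*, A y^*\rangle$ combined with $\langle \tilde x, A y^*\rangle \le V_A^*$ and the equalizer property of $x^*$ to deduce $A\tilde y = V_A^*\mathbf 1$ as well, hence $M_A\tilde y = b$, hence $\tilde y = y^*$ by $\det(M_A)\neq 0$; (iv) symmetrically $\tilde x = x^*$. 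This yields uniqueness. The delicate point in step (iii) is verifying that $x^*$ being interior indeed makes it an equalizer strategy (so $e_i^\top A \tilde y = V_A^*$ for every $i$, not just those in the support of something), which follows because if some row $i$ had $e_i^\top A\tilde y > V_A^*$ then $x^*$, which puts positive mass on row $i$, could not be part of an equilibrium with value $V_A^*$ — a short contradiction I would spell out.
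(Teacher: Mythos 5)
Your proposal is correct and follows essentially the same route as the paper: both arguments show that any Nash equilibrium strategy $\tilde y$ must equalize the rows (because $x^*$ has full support), hence solve $M_A y = b$, and then invoke $\det(M_A)\neq 0$ (and symmetrically $\det(M_{A^\top})\neq 0$) to conclude $\tilde y = y^*$ and $\tilde x = x^*$. The only cosmetic difference is that the paper gets the equalization via the interchangeability of equilibria in zero-sum games (declaring $(x^*,\tilde y)$ an equilibrium and applying complementary slackness), while you derive it directly from the minimax inequalities; your worry about smaller-support equilibria is already absorbed by this step, since the linear system applies to any $\tilde y\in\simplex_n$ regardless of its support.
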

\begin{proof}
    Consider any NE $(x',y')$ of $A$. As $(x',y^*)$ is also a Nash Equilibrium of $A$, for all $j\in [n]$ we have $\sum_{i\in[n]}A_{i,j}x_i'=V_A^*$. Hence, $x'$ is a solution of the system of linear equations $M_{A^T}x=b$ where $b=(0,0,\ldots,1)^T$.Similarly as $(x^*,y')$ is also a Nash Equilibrium of $A$, for all $i\in [n]$ we have $\sum_{j\in[n]}A_{i,j}y_j'=V_A^*$. Hence, $y'$ is a solution of the system of linear equations $M_{A}y=b$ where $b=(0,0,\ldots,1)^T$. However, as $det(M_A)\neq 0$ and $det(M_{A^T})\neq 0$, the systems of linear equations $M_{A^T}x=b$ and $M_{A}y=b$ have unique solutions. Hence we have $(x',y')=(x^*,y^*)$. 
\end{proof}

For the following lemma, we do a small notational abuse. If we talk about vector $v$ from a $p$-dimensional simplex  in the context of $d$-dimensional simplex where $p<d$, we are actually talking about the vector that we get by appending $d-p$ zeroes to $v$. 
\begin{lemma}[Formal version of \Cref{lem:main-k*k-check}]\label{nxm:game:unique-overall}
    Consider a matrix game on $A\in\mathbb{R}^{n\times m}$. Consider a $k\times k$ submatrix $B$ of $A$. Let $B$ have a unique full support NE $(x^*,y^*)$. If $\max_{i\notin \supp(x^*)}\sum_{j\in \supp(y^*)}A_{i,j}y_j^*<V_B^*$ or $k=n$ and $\min_{j\notin \supp(y^*)}\sum_{i\in \supp(x^*)}A_{i,j}x_i^*>V_B^*$ or $k=m$, then $(x^*,y^*)$ is the unique NE of $A$.
\end{lemma}
\begin{proof}
    We prove the lemma for the case when $k<n$ and $k<m$. The proof for the other cases are identical.
    First observe that $(x^*,y^*)$ is a Nash Equilibrium of the matrix game on $A$ as no player can gain by deviating unilaterally. Now consider a Nash Equilibrium $(x,y)$ of the matrix game on $A$. Now observe that $(x,y^*)$ is also a Nash Equilibrium of $A$. This along with the fact that $\max_{i\notin \supp(x^*)}\sum_{j\in \supp(y^*)}A_{i,j}y_j^*<V_B^*=\sum_{j\in \supp(y^*)}A_{i',j}y_j^*$ for any $i'\in \supp(x^*)$ implies that $\supp(x)\subseteq \supp(x^*)$. Similarly, we can show that $\supp(y)\subseteq \supp(y^*)$. Now observe that $(x,y)$ is also a Nash Equilibrium of the matrix game on $B$. Since $B$ has a unique NE, we have $(x,y)=(x^*,y^*)$.
\end{proof}

\begin{lemma}[Formal version of \Cref{lem:main-gap-concentration}]\label{nxm:game:deltag}
    Consider two matrices $A,\bar A\in [0,1]^{n\times m}$ such that for all pairs $(i,j)$, $|A_{ij}-\bar A_{ij}|\leq \Delta_1$. Let $B$ and $\bar B$ be two $k\times k$ sub-matrices of $A$ and $\bar A$ respectively with same row and column indices. Let $(x^*,y^*)$ and $(x',y')$ be the unique full support NE of $B$ and $B'$ respectively. Let $||x^*-x'||_\infty \leq\Delta_2$ and $||y^*-y'||_\infty \leq\Delta_2$. Then the following inequalities hold for any $i'\in[n]$ and $j'\in[m]$:
    \begin{align*}
    V_B^*-\sum_{j\in \supp(y^*)} A_{i',j}y_j^*-(2\Delta_1+k\cdot\Delta_2)&\leq V_{\bar B}^*-\sum_{j\in \supp(y')}\bar A_{i',j}y_j'\\
    &\leq V_B^*-\sum_{j\in \supp(y^*)} A_{i',j}y_j^*+(2\Delta_1+k\cdot \Delta_2)
    \end{align*}
    and
    \begin{align*}
    \sum_{i\in \supp(x^*)}A_{i,j'}x_i^*-V_B^*-(2\Delta_1+k\cdot\Delta_2) &\leq \sum_{i\in \supp(x')}\bar A_{i,j'}x_i'-V_{\bar B}^*\\
    &\leq \sum_{i\in \supp(x^*)}A_{i,j'}x_i^*-V_B^*+(2\Delta_1+k\cdot\Delta_2)
    \end{align*}
\end{lemma}
\begin{proof}
    Let $A_{i}^r$ denote the $i$-th row of the matrix $A$. Now consider an index $i\in \supp(x^*)$ and an index $i'\in [n]$. Now we have the following:
    \begin{align*}
        &V_{\bar B}^*-\sum_{j\in \supp(y')}\bar A_{i',j}y_j'\\
        &=\langle y',\bar A_i^r\rangle -\langle y',\bar A_{i'}^r\rangle\\
        &=\langle y'-y^*,\bar A_i^r\rangle+\langle y^*,\bar A_i^r\rangle-\langle y'-y^*,\bar A_{i'}^r\rangle-\langle y^*,\bar A_{i'}^r\rangle\\
        &=\langle y'-y^*,\bar A_i^r-\bar A_{i'}^r\rangle+\langle y^*,\bar A_i^r-A_i^r\rangle+\langle y^*,A_i^r\rangle-\langle y^*,\bar A_{i'}^r-A_{i'}^r\rangle-\langle y^*,A_{i'}^r\rangle\\
        &=V_B^*-\sum_{j\in \supp(y^*)} A_{i',j}y_j^*+\langle y'-y^*,\bar A_i^r-\bar A_{i'}^r\rangle+\langle y^*,\bar A_i^r-A_i^r-(\bar A_{i'}^r-A_{i'}^r)\rangle
    \end{align*}
    As $-k\cdot \Delta_2\leq \langle y'-y^*,\bar A_i^r-\bar A_{i'}^r\rangle\leq k\cdot \Delta_2$ and $-2\Delta_1\leq \langle y^*,\bar A_i^r-A_i^r-(\bar A_{i'}^r-A_{i'}^r)\rangle\leq 2\Delta_1$, we get the following:
    \begin{align*}
        V_B^*-\sum_{j\in \supp(y^*)} A_{i',j}y_j^*-(2\Delta_1+k\cdot\Delta_2)&\leq V_{\bar B}^*-\sum_{j\in \supp(y')}\bar A_{i',j}y_j'\\
        &\leq V_B^*-\sum_{j\in \supp(y^*)} A_{i',j}y_j^*+(2\Delta_1+k\cdot \Delta_2)
    \end{align*}
    Using a similar analysis, one can show the following:
    \begin{align*}
        \sum_{i\in \supp(x^*)}A_{i,j'}x_i^*-V_B^*-(2\Delta_1+k\cdot\Delta_2)&\leq \sum_{i\in \supp(x')}\bar A_{i,j'}x_i'-V_{\bar B}^*\\
        &\leq \sum_{i\in \supp(x^*)}A_{i,j'}x_i^*-V_B^*+(2\Delta_1+k\cdot\Delta_2)
    \end{align*}
\end{proof}
\subsubsection{Consequential lemmas of Algorithm~\ref{alg-nxm-instance}'s conditional statements}\label{appendix:nxm-lems}
Now we begin the analysis of the Algorithm~\ref{alg-nxm-instance}. For the rest of the section, let us define $\Delta_{\min,B}$ for any square matrix as follows:
\begin{align*}
\Delta_{\min,B}= \min\{|det(M_{B^\top})|,|det(M_{ B})|,\min_{i\in[n]}|det(M_{B^\top,i})|,\min_{j\in[n]}|det(M_{B,j})|\}
\end{align*}
Let $(x^*,y^*)$ denote the unique NE of the matrix $A$. Let $B_*$ be the matrix with row indices $\supp(x^*)$ and column indices $\supp(y^*)$. We refer to this matrix as the optimal sub-matrix of $A$. If $B$ is the optimal-submatrix, then $\Delta_{\min}=\Delta_{\min,B}$.

Now we begin proving the following technical lemmas to analyse the algorithm \ref{alg-nxm-instance}.

\begin{lemma}
    Consider a timestep $t$ when the condition in the line \ref{con:sub:line1} of the algorithm \ref{alg-nxm-instance} is satisfied. Consider the $k\times k$ submatrix $\bar B$ at the timestep $t$. Let $B$ be the submatrix of $A$ with same row indices and column indices as that of $\bar B$. If the event $G$ holds, then $\frac{4}{5}\leq \frac{\Delta_{\min,B}}{\tilde \Delta_{\min}}\leq \frac{6}{5}$.
\end{lemma}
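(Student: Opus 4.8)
The plan is to reduce the claim to a perturbation bound for determinants, supplied by Lemma~\ref{det:lem}, and then to convert the resulting additive error into the stated multiplicative bound using the condition tested in Line~\ref{con:sub:line1}. This is exactly the strategy behind Lemma~\ref{nxn:D} in the full-support $n\times n$ analysis, so I would follow that template.

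First I would record that on $G$ we have $\max_{i,j}|A_{i,j}-\bar A_{i,j}|\le\Delta$ at time $t$, with $\Delta=\sqrt{2\log(nmT^2)/t}$, and hence $\max_{i,j}|B_{i,j}-\bar B_{i,j}|\le\Delta$ on the common $k\times k$ index set. Next, for each matrix $M_{B}$, $M_{B^\top}$, $M_{B,\ell}$, $M_{B^\top,\ell}$ and its empirical counterpart, I would check two facts. (i) Every entry lies in $[-1,1]$: the last row of $M_B$ is all ones, the remaining entries are differences $B_{1,j}-B_{i+1,j}$ of entries of $B\in[0,1]^{k\times k}$, and replacing a column by $b=(0,\dots,0,1)^\top$ only introduces $0$'s and a $1$; the same holds after transposing $B$ first. (ii) The true and empirical versions differ entrywise by at most $2\Delta$, since each entry of $M_B$ is a difference of two entries of $B$ (and the $1$-entries match exactly). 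Then Lemma~\ref{det:lem}, applied to these $k\times k$ matrices with entrywise error $2\Delta$, gives that each of the quantities $|\det(M_{B^\top})|,|\det(M_B)|,|\det(M_{B^\top,i})|,|\det(M_{B,j})|$ differs from the corresponding quantity for $\bar B$ by at most $2k\cdot k!\cdot\Delta\le 2k^2\cdot k!\cdot\Delta$. Writing $\bar\Delta:=k^2\cdot k!\cdot\Delta$ and using that a minimum of finitely many reals is $1$-Lipschitz in those reals, I would conclude $|\Delta_{\min,B}-\tilde\Delta_{\min}|\le 2\bar\Delta$ (the extra factor of $k$ in $k^2\cdot k!$ is slack we simply do not use).

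It remains to compare $\bar\Delta$ with $\tilde\Delta_{\min}$, and this is where the condition of Line~\ref{con:sub:line1} enters. The inequality $1\le \tfrac{\tilde\Delta_{\min}+2\bar\Delta}{\tilde\Delta_{\min}-2\bar\Delta}$ forces $\tilde\Delta_{\min}-2\bar\Delta>0$ (a positive-over-nonpositive ratio could not be $\ge 1$), and then cross-multiplying $\tfrac{\tilde\Delta_{\min}+2\bar\Delta}{\tilde\Delta_{\min}-2\bar\Delta}\le\tfrac32$ yields $10\bar\Delta\le\tilde\Delta_{\min}$, i.e.\ $2\bar\Delta\le\tilde\Delta_{\min}/5$ and in particular $\tilde\Delta_{\min}>0$. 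Combining with the previous paragraph, $\tfrac45\tilde\Delta_{\min}\le\tilde\Delta_{\min}-2\bar\Delta\le\Delta_{\min,B}\le\tilde\Delta_{\min}+2\bar\Delta\le\tfrac65\tilde\Delta_{\min}$, and dividing through by $\tilde\Delta_{\min}>0$ gives $\tfrac45\le\tfrac{\Delta_{\min,B}}{\tilde\Delta_{\min}}\le\tfrac65$, as claimed. I do not anticipate a genuine obstacle; the only point requiring care is the elementary bookkeeping in step (ii)---that forming the matrices $M_{(\cdot)}$ at most doubles the entrywise error while keeping all entries in $[-1,1]$---so that Lemma~\ref{det:lem} applies with dimension $k$.
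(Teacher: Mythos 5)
Your proof is correct and follows essentially the same route as the paper's: invoke the determinant perturbation bound (Lemma~\ref{det:lem}) to get $|\Delta_{\min,B}-\tilde\Delta_{\min}|\le 2\bar\Delta$ with $\bar\Delta=k^2\cdot k!\cdot\Delta$, then use the Line~\ref{con:sub:line1} condition to deduce $\bar\Delta\le\tilde\Delta_{\min}/10$ and conclude. The paper leaves the entrywise-error bookkeeping for the $M_{(\cdot)}$ matrices implicit, whereas you spell it out, but the argument is the same.
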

\begin{proof}
    Let $\bar\Delta:=k^2\cdot k!\cdot\Delta$. As the condition $1\leq \frac{\tilde \Delta_{\min}+2\bar\Delta}{\tilde \Delta_{\min}-2\bar\Delta}\leq \frac{3}{2}$ holds true, we have $\bar\Delta\leq \frac{\tilde \Delta_{\min}}{10}$. Now due to \Cref{det:lem}, we have that $\frac{\Delta_{\min,B}}{\tilde \Delta_{\min}}\leq \frac{\tilde \Delta_{\min}+2\bar\Delta}{\tilde \Delta_{\min}}\leq \frac{\tilde \Delta_{\min}+\tilde \Delta_{\min}/5}{\tilde \Delta_{\min}}\leq\frac{6}{5}$. Similarly we have that $\frac{\Delta_{\min,B}}{\tilde \Delta_{\min}}\geq \frac{\tilde \Delta_{\min}-2\bar\Delta}{\tilde \Delta_{\min}}\geq \frac{\tilde \Delta_{\min}-\tilde \Delta_{\min}/5}{\tilde \Delta_{\min}}\geq\frac{4}{5}$. 
\end{proof}

\begin{lemma}\label{nxm:nash}
    Let $j_t$ be the column played by $y$-player in round $t$. Let $(x',y')$ be the NE of the empirical matrix $\bar A$. If event $G$ holds, then $V_A^*-\langle x',Ae_{j_t}\rangle \leq \min\{2\sqrt{\frac{2\log(nmT^2)}{t-1}},1\}$
\end{lemma}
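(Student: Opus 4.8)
The plan is to copy the argument of Lemma~\ref{nxn:nash} almost verbatim, the only changes being that $A$ is now $n\times m$ rather than square and that the concentration event $G$ of this section is stated with $\log(nmT^2)$ in place of $\log(n^2T^2)$. Recall that at the start of the section the matrices have been translated and rescaled so that $A\in[0,1]^{n\times m}$; this is what makes the trivial half of the bound work. Since $V_A^*\le 1$ and $\langle x',Ae_{j_t}\rangle\ge 0$, we get $V_A^*-\langle x',Ae_{j_t}\rangle\le 1$ with no assumptions at all, which disposes of the $t=1$ case and of the $\min$ with $1$.

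For the nontrivial half, fix a round $t\ge 2$, write $\bar A=\bar A_{t-1}$ for the empirical average of $\mathbf A_1,\dots,\mathbf A_{t-1}$ (whose Nash equilibrium is $(x',y')$, so that $x_t=x'$ in Algorithm~\ref{alg-nxm-instance}), let $(x^*,y^*)$ be the Nash equilibrium of $A$, and set $\Delta_t:=\sqrt{\frac{2\log(nmT^2)}{t-1}}$. Under $G$ we have $|A_{ij}-\bar A_{ij}|\le\Delta_t$ for every $(i,j)$, hence $|\langle u,(A-\bar A)v\rangle|\le\Delta_t$ for any probability vectors $u,v$ of the appropriate dimensions, in particular for $v=e_{j_t}$. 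I would then chain the one-line inequalities exactly as in Lemma~\ref{nxn:nash}: $\langle x',Ae_{j_t}\rangle\ge\langle x',\bar Ae_{j_t}\rangle-\Delta_t\ge\langle x',\bar Ay'\rangle-\Delta_t$ (using $\langle x',\bar Ay'\rangle=\min_y\langle x',\bar Ay\rangle$ from the saddle-point property of $(x',y')$ on $\bar A$), then $\ge\langle x^*,\bar Ay'\rangle-\Delta_t$ (using $\langle x',\bar Ay'\rangle=\max_x\langle x,\bar Ay'\rangle$), then $\ge\langle x^*,Ay'\rangle-2\Delta_t$ (applying $G$ once more), and finally $\ge\langle x^*,Ay^*\rangle-2\Delta_t=V_A^*-2\Delta_t$ (using $\langle x^*,Ay\rangle\ge V_A^*$ for all $y$, which is the Nash-equilibrium inequality for $A$). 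Rearranging gives $V_A^*-\langle x',Ae_{j_t}\rangle\le 2\Delta_t=2\sqrt{\frac{2\log(nmT^2)}{t-1}}$, and combining with the trivial bound $\le 1$ yields the claim.

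There is no genuine obstacle here; the proof is a routine transcription. The only point requiring a little care is bookkeeping: one must confirm that $x_t=x'$ is indeed the equilibrium of the empirical matrix built from the first $t-1$ observed matrices (matching the update ordering in Algorithm~\ref{alg-nxm-instance}, where $x_{t+1}\gets x'$ after $\bar A$ has absorbed $\mathbf A_t$), and that the event $G$ of this section furnishes the uniform deviation bound $\Delta_t$ at that very timestep. It is also worth flagging explicitly that the nonnegativity $\langle x',Ae_{j_t}\rangle\ge 0$ — used in the trivial half — is a consequence of the $[-1,1]\to[0,1]$ rescaling performed at the beginning of the section, and that this rescaling only changes the regret by a constant factor.
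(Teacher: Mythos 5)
Your proposal is correct and follows the paper's proof of this lemma essentially verbatim: the trivial bound from $V_A^*\leq 1$ and $\langle x',Ae_{j_t}\rangle\geq 0$ after rescaling, followed by the same five-step chain $\langle x',Ae_{j_t}\rangle\geq\langle x',\bar Ae_{j_t}\rangle-\Delta_t\geq\langle x',\bar Ay'\rangle-\Delta_t\geq\langle x^*,\bar Ay'\rangle-\Delta_t\geq\langle x^*,Ay'\rangle-2\Delta_t\geq V_A^*-2\Delta_t$ using the saddle-point properties of $(x',y')$ on $\bar A$ and of $(x^*,y^*)$ on $A$ together with the concentration event $G$. No discrepancies.
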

\begin{proof}
    As $V_A^*\leq 1$ and $\langle x',Ae_{j_t}\rangle\geq 0$, we have $V_A^*-\langle x',Ae_{j_t}\rangle \leq 1$. Let $\Delta_t=\sqrt{\frac{2\log(nmT^2)}{t-1}}$. Let $(x^*,y^*)$ be the NE of $A$. As event $G$ holds, we now have the following for any $t\geq 2$: 
    \begin{align*}
        \langle x', Ae_{j_t}\rangle &\geq \langle x', \bar Ae_{j_t}\rangle-\Delta_t\\
        &\geq\langle x',\bar Ay'\rangle-\Delta_t\\
        &\geq \langle x^*, \bar Ay'\rangle-\Delta_t\\
        &\geq \langle x^*, Ay'\rangle-2\Delta_t\\
        &\geq \langle x^*, Ay^*\rangle-2\Delta_t\\
        &= V_A^*-2\Delta_t
    \end{align*}
\end{proof}

\begin{corollary}\label{nxm:cor}
    Let the algorithm \ref{alg-nxn-full} play the NE of the empirical matrix $\bar A$ from round $1$ to round $t'$. If event $G$ holds, then the regret incurred till round $t$ is at most $4\sqrt{2t'\log(nmT^2)}+1$
\end{corollary}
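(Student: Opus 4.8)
The plan is to reduce the cumulative regret to a sum of per-round regrets and control each term with the single-round bound of Lemma~\ref{nxm:nash}. By hypothesis the algorithm plays $x_t = x'$, the Nash equilibrium of the running empirical matrix $\bar A$, at each round $t \in \{1,\dots,t'\}$, and event $G$ holds. Lemma~\ref{nxm:nash} then gives $V_A^* - \langle x_t, A e_{j_t}\rangle \le \min\{2\sqrt{2\log(nmT^2)/(t-1)},\,1\}$ for every $t \ge 2$. For the first round I would use only the trivial estimate $V_A^* - \langle x_1, A e_{j_1}\rangle \le 1$, valid because after the rescaling fixed at the start of the section we have $A \in [0,1]^{n\times m}$, so $V_A^* \le 1$ and $\langle x_1, A e_{j_1}\rangle \ge 0$.

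Summing, the regret incurred through round $t'$ equals $t' V_A^* - \sum_{t=1}^{t'}\langle x_t, A e_{j_t}\rangle \le 1 + 2\sqrt{2\log(nmT^2)}\sum_{t=2}^{t'} (t-1)^{-1/2}$. Re-indexing with $s=t-1$ and invoking the elementary bound $\sum_{s=1}^{m} s^{-1/2} \le 2\sqrt m$ (obtained by comparison with $\int_0^m x^{-1/2}\,dx$) with $m = t'-1 \le t'$ gives $\sum_{t=2}^{t'} (t-1)^{-1/2} \le 2\sqrt{t'}$, hence the regret is at most $4\sqrt{2 t' \log(nmT^2)} + 1$, as claimed. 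This is the same computation used to prove Corollary~\ref{nxn:cor} in the $n \times n$ full-support case; the only changes are the replacement of $n^2T^2$ by $nmT^2$ inside the logarithm and the use of Lemma~\ref{nxm:nash} in place of Lemma~\ref{nxn:nash}.

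There is no real obstacle in this step: all the content sits in Lemma~\ref{nxm:nash}, already proved using only that under $G$ the empirical entries are within $\sqrt{2\log(nmT^2)/(t-1)}$ of the true means, together with the defining inequalities of a Nash equilibrium. The one point to be careful about is the off-by-one: the round-$1$ term carries no useful concentration bound and must be split off with the trivial $\le 1$ estimate; aside from that, and reading $t'$ (not $t$) as the intended horizon in the statement, the argument is a two-line calculation.
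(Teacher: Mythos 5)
Your proposal is correct and matches the paper's proof of Corollary~\ref{nxm:cor}: both apply Lemma~\ref{nxm:nash} termwise for $t\ge 2$, handle round $1$ with the trivial bound of $1$, and bound $\sum_{t=2}^{t'}(t-1)^{-1/2}\le 2\sqrt{t'}$ by integral comparison. No further comment is needed.
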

\begin{proof}
    For all $t\in\{2,\ldots,t'\}$, we have $V_A^*-\langle x',Ae_{j_t}\rangle \leq 2\sqrt{\frac{2\log(nmT^2)}{t-1}}$ due to lemma \ref{nxm:nash}. Hence the regret recurred till round $t$ is $t\cdot V_A^*-\sum_{t=1}^{t'} \langle x',Ae_{j_t} \leq 1+2\sum_{t=2}^{t'}\sqrt{\frac{2\log(nmT^2)}{t-1}}\leq 4\sqrt{2t'\log(nmT^2)}+1$.
\end{proof}

\begin{lemma}\label{nxm:deviation}
    Consider a timestep $t$ when the condition in the line \ref{con:sub:line1} of the algorithm \ref{alg-nxm-instance} is satisfied. Consider the $k\times k$ submatrix $\bar B$ at the timestep $t$. Let $B$ be the submatrix of $A$ with same row indices and column indices as that of $\bar B$. Let $(x^*,y^*)$ and $(x',y')$ be the unique full support NE of $B$ and $\bar B$ respectively. If the event $G$ holds, then for any $i\in[n]$, $|x^*_i-x'_i|\leq \frac{5\tilde\Delta}{\tilde D}$ where $\tilde\Delta=k\cdot k!\cdot\sqrt{\frac{2\log(nmT^2)}{t}}$. Similarly if the event $G$ holds, then for any $j\in[m]$, $|y^*_j-y'_j|\leq \frac{5\tilde\Delta}{\tilde D}$ where $\tilde\Delta=k\cdot k!\cdot\sqrt{\frac{2\log(nmT^2)}{t}}$.
\end{lemma}
\begin{proof}
         W.l.o.g let us assume that $det(M_{B^T})>0$. Let $\tilde D_1=|det(M_{\bar B^T})|$ and $\tilde  D_2=|det(M_{\bar B})|$. Let us also assume that event $G$ holds. As the condition $1\leq \frac{\tilde \Delta_{\min}+2k\tilde\Delta}{\tilde \Delta_{\min}-2k\tilde\Delta}\leq \frac{3}{2}$ holds true, we have $\tilde\Delta\leq \frac{\tilde \Delta_{\min}}{10k}\leq \frac{\tilde D}{10k}$. We now have the following:
    \begin{align*}
        x_i^*&=\frac{det(M_{B^T,i})}{det(M_{B^T})}\\
        &\leq \frac{det(M_{\bar B^T,i})+2\tilde\Delta}{\tilde D_1-2\tilde\Delta}\tag{due to \Cref{det:lem}}\\
        &=\left(\frac{det(M_{\bar B^T,i})}{\tilde D_1}+\frac{2\tilde\Delta}{\tilde D_1}\right)\left(1-\frac{2\tilde\Delta}{\tilde D_1}\right)^{-1}\\
        &\leq\left(x_i'+\frac{2\tilde\Delta}{\tilde D_1}\right)\left(1+\frac{5\tilde\Delta}{2\tilde D_1}\right)\tag{as $(1-x)^{-1}\leq 1+\frac{5x}{4}$ for all $x\in[0,\frac{1}{5}]$}\\
        &= x_i'+\frac{2\tilde\Delta}{\tilde D_1}+x_i'\cdot \frac{5\tilde\Delta}{2\tilde D_1}+\frac{\tilde\Delta}{\tilde D_1}\cdot \frac{5\tilde\Delta}{\tilde D_1}\\
        &\leq x_i'+\frac{5\tilde \Delta}{\tilde D_1} \tag{as $\frac{\tilde \Delta}{\tilde D_1}\leq \frac{1}{10}$}
    \end{align*}
    Similarly we have that following:
    \begin{align*}
        x_i^*&=\frac{det(M_{B^T,i})}{det(M_{B^T})}\\
        &\geq \frac{det(M_{\bar B^T,i})-2\tilde\Delta}{\tilde D_1+2\tilde\Delta}\tag{due to \Cref{det:lem}}\\
        &=\left(\frac{det(M_{\bar B^T,i})}{\tilde D_1}-\frac{2\tilde\Delta}{\tilde D_1}\right)\left(1+\frac{2\tilde\Delta}{\tilde D_1}\right)^{-1}\\
        &\geq\left(x_i'-\frac{2\tilde\Delta}{\tilde D_1}\right)\left(1-\frac{2\tilde\Delta}{\tilde D_1}\right)\tag{as $(1+x)^{-1}\geq 1-x$ for all $x\geq 0$}\\
        &= x_i'-\frac{2\tilde\Delta}{\tilde D_1}-x_i'\cdot \frac{2\tilde\Delta}{\tilde D_1}+\frac{\tilde\Delta}{\tilde D_1}\cdot \frac{\tilde\Delta}{\tilde D_1}\\
        &\geq x_i'-\frac{4\tilde \Delta}{\tilde D_1} \tag{as $\frac{\tilde \Delta}{\tilde D_1}\leq \frac{1}{10}$}
    \end{align*}
    In an identical manner, we can show that for any $j\in[m]$, $|y^*_j-y'_j|\leq \frac{5\tilde\Delta}{\tilde D_2}$.
\end{proof}
\begin{lemma}\label{nxm:feasible}
    Consider a timestep $t$ when the condition in the line \ref{con:sub:line1} of the algorithm \ref{alg-nxm-instance} is satisfied. Consider the $k\times k$ submatrix $\bar B$ at the timestep $t$. Let $B$ be the submatrix of $A$ with same row indices and column indices as that of $\bar B$. Let $(x^*,y^*)$ and $(x',y')$ be the unique full support NE of $B$ and $\bar B$ respectively. If event $G$ holds, then for any $i\in[k]$, $x_i'-\frac{5k\tilde \Delta}{\tilde D}> 0$ and $x_i'+\frac{5k\tilde\Delta}{\tilde D_1}< 1$ where $\tilde\Delta=k\cdot k!\cdot\sqrt{\frac{2\log(nmT^2)}{t}}$ and $\tilde D_1=|\det(M_{\bar B^T})|$. 
\end{lemma}
\begin{proof}
    W.l.o.g let us assume that $det(M_{\bar B^T})>0$. Let us also assume that event $G$ holds. As the condition $1\leq \frac{\tilde \Delta_{\min}+2k\tilde\Delta}{\tilde \Delta_{\min}-2k\tilde\Delta}\leq \frac{3}{2}$ holds true, we have $\tilde\Delta\leq \frac{\tilde \Delta_{\min}}{10k}\leq \frac{\tilde D}{10k}$. For any $i\in[k]$, $\det(M_{\bar B^T,i})-5k\tilde\Delta\geq \det(M_{\bar B^T,i})-\frac{\tilde \Delta_{\min}}{2}\geq \frac{\det(M_{\bar B^T,i})}{2}>0$. Hence $x_i'-\frac{5k\tilde \Delta}{\tilde D_1}> 0$. Fix any $i\in[k]$. Now we have $x_i'+\frac{5k\tilde\Delta}{\tilde D_1}=1-\sum\limits_{j\in[k]\setminus\{i\}}x_j'+\frac{5k\tilde\Delta}{\tilde D_1}< 1-(k-1)\cdot \frac{5k\tilde \Delta}{\tilde D}+\frac{5k\tilde \Delta}{\tilde D_1}\leq 1 $ 
\end{proof}    

\begin{lemma}\label{lem:nxm:wrong}
    Consider a timestep $t$ when the condition in the line \ref{con:sub:line1} of the algorithm \ref{alg-nxm-instance} is satisfied. Consider the $k\times k$ submatrix $\bar B$ at the timestep $t$. Let $B$ be the submatrix of $A$ with same row indices and column indices as that of $\bar B$. If the event $G$ holds and $B$ is not the optimal sub-matrix of $A$, then $\tilde \Delta_g<\frac{5k\tilde\Delta}{\tilde D}+2\Delta$.
\end{lemma}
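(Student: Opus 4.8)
The plan is to prove the contrapositive. Assume event $G$ holds and the Line~\ref{con:sub:line1} condition is satisfied, and suppose $B$ is \emph{not} the optimal submatrix $B_*$ (the one with rows $\supp(x^*)$ and columns $\supp(y^*)$); I will exhibit a negative payoff gap for $A$ and then transfer it to $\bar A$ to conclude $\tilde\Delta_g<\frac{5k\tilde\Delta}{\tilde D}+2\Delta$. The argument splits into a preparatory structural step and a two-part certificate transfer.

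\textbf{Step 1 (the true submatrix also has a unique full-support NE close to $(x',y')$).} Let $(x',y')$ be the computed NE of $\bar A$; by construction the row/column index sets of $B$ are $\supp(x')$ and $\supp(y')$, and restricting $(x',y')$ to these sets gives a full-support NE of $\bar B$ (Lemma~\ref{shapley:lem2}). The Line~\ref{con:sub:line1} inequality implies $2k^2k!\,\Delta\leq \tilde\Delta_{\min}/5$. Under $G$ the entries of the $k\times k$ matrices $M_{\bar B},M_{\bar B^\top}$ and their column-replaced variants deviate from those of $M_B,M_{B^\top}$ by at most $2\Delta$, so Lemma~\ref{det:lem} bounds every determinant deviation $|\det(M_B)-\det(M_{\bar B})|$, $|\det(M_{B^\top,i})-\det(M_{\bar B^\top,i})|$, etc., by $2k\cdot k!\,\Delta\leq \tilde\Delta_{\min}/5$; since each empirical determinant has magnitude at least $\tilde\Delta_{\min}$, every true determinant shares its sign and is bounded away from $0$. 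Because $(x',y')$ is full-support on $\bar B$, these signs are all consistent, so by Cramer's rule $B$ has a full-support strategy pair equalizing all row (resp.\ column) payoffs — a full-support NE $(\hat x,\hat y)$ — which is unique by Lemma~\ref{nxm:game:unique-full}; moreover Lemma~\ref{nxm:deviation} yields $\|\hat x-x'\|_\infty,\|\hat y-y'\|_\infty\leq \frac{5\tilde\Delta}{\tilde D}$ with $\tilde D=\min\{|\det(M_{\bar B^\top})|,|\det(M_{\bar B})|\}$. I expect this step to be the main obstacle, since it requires carefully matching the entrywise deviation against the $2k^2k!\Delta$ in the Line~\ref{con:sub:line1} condition and propagating sign and magnitude information through every relevant determinant.

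\textbf{Step 2 (suboptimality certificate for $A$).} Since $A$ has a \emph{unique} NE $(x^*,y^*)$, if $(\hat x,\hat y)$ were a NE of $A$ then $(\hat x,\hat y)=(x^*,y^*)$, hence $\supp(\hat x)=\supp(x^*)$ and $\supp(\hat y)=\supp(y^*)$, i.e.\ $B=B_*$ — contradiction. Thus $(\hat x,\hat y)$ is a full-support NE of $B$ that is not a NE of $A$, so Lemma~\ref{nxm:game:suboptimal} gives that at least one of
\[
V_B^*-\max_{i\notin\supp(\hat x)}\textstyle\sum_j \hat y_j A_{i,j}<0\qquad\text{or}\qquad \min_{j\notin\supp(\hat y)}\textstyle\sum_i \hat x_i A_{i,j}-V_B^*<0
\]
holds. (If $k=n$ the first alternative is vacuous, but $B\neq B_*$ precludes $k=n=m$, so the non-vacuous alternative is the negative one — matching the algorithm's conventions $\tilde\Delta_{g,1}=\infty$ when $k=n$ and $\tilde\Delta_{g,2}=\infty$ when $k=m$.)

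\textbf{Step 3 (transfer to $\bar A$).} Suppose the first alternative holds and let $i'$ attain the maximum, so $i'\notin\supp(x')=\supp(\hat x)$. Using $V_{\bar A}^*=\langle x',\bar A y'\rangle=V_{\bar B}^*$, and applying Lemma~\ref{nxm:game:deltag} with $\Delta_1=\Delta$ (from $G$) and $\Delta_2=\frac{5\tilde\Delta}{\tilde D}$ (from Step~1),
\[
\tilde\Delta_{g,1}\leq V_{\bar B}^*-\textstyle\sum_j \bar A_{i',j}y'_j\leq\big(V_B^*-\textstyle\sum_j A_{i',j}\hat y_j\big)+2\Delta+\tfrac{5k\tilde\Delta}{\tilde D}<2\Delta+\tfrac{5k\tilde\Delta}{\tilde D},
\]
so $\tilde\Delta_g\leq\tilde\Delta_{g,1}<\frac{5k\tilde\Delta}{\tilde D}+2\Delta$; the case where the column alternative holds is symmetric via the second inequality of Lemma~\ref{nxm:game:deltag} and gives $\tilde\Delta_g\leq\tilde\Delta_{g,2}<\frac{5k\tilde\Delta}{\tilde D}+2\Delta$. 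This completes the proof, with Steps~2 and~3 being essentially direct applications of Lemmas~\ref{nxm:game:suboptimal} and~\ref{nxm:game:deltag} once Step~1 is in place.
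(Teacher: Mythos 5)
Your proof is correct and follows essentially the same route as the paper's: apply Lemma~\ref{nxm:game:suboptimal} to get a strictly negative gap for $A$, then transfer it to $\bar A$ via Lemma~\ref{nxm:game:deltag} together with the deviation bound of Lemma~\ref{nxm:deviation}. Your Step~1 (establishing that the true submatrix $B$ also has a unique full-support NE under event $G$ and the Line~\ref{con:sub:line1} condition) and Step~2 (arguing via uniqueness of the NE of $A$ that $(\hat x,\hat y)$ is not a NE of $A$) make explicit two prerequisites that the paper's proof uses implicitly, but the argument is the same.
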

\begin{proof}
    Let $(x,y)$ and $(x',y')$ be the unique full support NE of $B$ and $\bar B$ respectively. Due to lemma \ref{nxm:game:suboptimal}, either $V_B^*-\max_{i\notin \supp(x)}\sum_{j=1}^my_j A_{i,j}<0$ or $\min_{j\notin \supp(y)}\sum_{i=1}^nx_i A_{i,j}-V_B^*<0$.  Now due to lemma \ref{nxm:game:deltag} and lemma \ref{nxm:deviation}, we have that either $V_{\bar B}^*-\max_{i\notin \supp(x')}\sum_{j=1}^my'_j \bar A_{i,j}<\frac{5k\tilde\Delta}{\tilde D}+2\Delta$ or $\min_{j\notin \supp(y')}\sum_{i=1}^nx'_i \bar A_{i,j}-V_B^*<\frac{5k\tilde\Delta}{\tilde D}+2\Delta$. Hence $\tilde \Delta_g<\frac{5k\tilde\Delta}{\tilde D}+2\Delta$.
\end{proof}

\begin{lemma}\label{nxm:deltag:result}
    Consider the timestep $t$ when the condition on the line \ref{con:sub:line2} of the algorithm \ref{alg-nxm-instance} gets satisfied. If the event $G$ holds, then $t\leq \frac{800(k^2\cdot k!)^2\log(nmT^2)}{\min\{1,\Delta_{\min}^2\}\cdot\min\{1,\Delta_g^2\}}$ and the optimal sub-matrix of $A$ is returned.
\end{lemma}
\begin{proof}
    Let $B$ be a sub-matrix of $\bar A$ with row indices $\supp(x^*)$ and column indices $\supp(y^*)$. Let $\tilde \Delta_{\min}$ be defined with respect to the matrix $B$.
    {Consider the timestep $t'=\frac{800(k^2\cdot k!)^2\log(nmT^2)}{\min\{1,\Delta_{\min}^2\}\cdot\min\{1,\Delta_g^2\}}$.} Let $\Delta=\sqrt{\frac{2\log(nm\cdot T^2)}{t'}}$. Let $\bar\Delta:=k^2\cdot k!\cdot\Delta$. First observe that $\bar\Delta=k\cdot \tilde \Delta=k^2\cdot k!\cdot\sqrt{\frac{2\log(nmT^2)}{t'}}\leq \frac{\min\{1,\Delta_{\min}\}\cdot\min\{1,\Delta_g\}}{20}\leq \frac{\Delta_{\min}}{20}$. Next observe that $\tilde \Delta_{\min}-2\bar\Delta\geq \Delta_{\min}-4\bar\Delta>0$. Hence we have $1\leq \frac{\tilde \Delta_{\min}+2\bar\Delta}{\tilde \Delta_{\min}-2\bar\Delta}$. Due to \Cref{det:lem}, we have $\frac{\tilde \Delta_{\min}+2\bar\Delta}{\tilde \Delta_{\min}-2\bar\Delta}\leq \frac{\Delta_{\min}+4\bar\Delta}{\Delta_{\min}-4\bar\Delta}\leq\frac{\Delta_{\min}+\Delta_{\min}/5}{\Delta_{\min}-\Delta_{\min}/5}=\frac{3}{2}$. 

    Consider the gap parameter $\tilde \Delta_g$ with respect to the sub-matrix $B$. Now we have the following:
    \begin{align*}
        4\Delta+\frac{10k\tilde \Delta}{\tilde D}&\leq \frac{4\min\{1,\Delta_{\min}\}\cdot\min\{1,\Delta_g\}}{20}+\frac{10\min\{1,\Delta_{\min}\}\cdot\min\{1,\Delta_g\}}{20\tilde \Delta_{\min}}\tag{as $\tilde D\leq \tilde\Delta_{\min}$}\\
        &\leq  \frac{4\Delta_g}{20}+\frac{12\Delta_g}{20} \tag{as $\frac{\Delta_{\min}}{\tilde\Delta_{\min}}\leq\frac{6}{5}$}\\
        &<\Delta_g
    \end{align*}
    Now using the above analysis and applying lemma \ref{nxm:game:deltag} and lemma \ref{nxm:deviation}, we get $\tilde\Delta_g\geq \Delta_g-\frac{5k\tilde \Delta}{\tilde D}-2\Delta\geq \frac{5k\tilde \Delta}{\tilde D}+2\Delta$.  Due to lemma \ref{nxm:game:unique-full} and lemma \ref{nxm:game:unique-overall}, $B$ is the only sub-matrix that satisfies the condition on the line \ref{con:sub:line1} of the algorithm \ref{alg-nxm-instance} at the timestep $t'$. Moreover, the condition on the line \ref{con:sub:line2} of the algorithm \ref{alg-nxm-instance} is also satisfied at the timestep $t'$.
    Hence due to the above analysis and due to lemma \ref{lem:nxm:wrong}, $t\leq \frac{800(k^2\cdot k!)^2\log(nmT^2)}{\min\{1,\Delta_{\min}^2\}\cdot\min\{1,\Delta_g^2\}}$ and  the optimal sub-matrix of $A$ is returned.

\end{proof}

Theorem \ref{thm:nxm:full} follows directly from lemma \ref{nxm:deltag:result} and corollary \ref{nxm:cor}.

\noindent
\textbf{Remark:} We get instance dependent poly-logarithmic regret due to Theorem \ref{thm:nxm:full} and Theorem \ref{thm:nxn:full}.

\subsection{Algorithm for $n\times m$ matrix with full row support}\label{appendix:nxn}
In this section, we describe the regret minimization algorithm (Algorithm \ref{alg-nxn-full}) for $n\times m$ matrix $A$ with unique Nash Equilibrium $(x^*,y^*)$ such that $\supp(x^*)=[n]$ and $\supp(y^*)=[n]$. Recall that $A\in[-1,1]^{n\times m}$ and $ \mathbf{A}_t\in [-1,1]^{n\times m}$. Moreover, for any matrix $M\in [-1,1]^{k\times k}$ we say a pair $(x',y')$ is a full support NE of $M$ if $|\supp(x')|=|\supp(y')|=k$. We translate and re-scale these matrices by adding $1$ to every entry and then dividing every entry by $2$. Hence for the rest of the section, we assume that  $A\in[0,1]^{n\times m}$ and $\mathbf{A}_t\in [0,1]^{n\times m}$. Note that 
the regret only changes by a factor of $2$.

\textbf{Equivalent closed form solution of NE.} In this section we work with an equivalent closed form solution of NE. First, we define a few matrices. For any $n\times n$ matrix $\tilde A$ with support NE, we define $M_{\tilde A}$ as follows--for all $i\in[n-1]$ and $j\in [n]$ $(M_{\tilde A})_{i,j}=\tilde A_{1,j}-\tilde A_{i+1,j}$ and for all $j\in[n]$ $(M_{\tilde A})_{n,j}=1$. For any $\ell\in[n]$, we also define $M_{\tilde A,\ell}$ as follows-- for $i\in[n]$ and $j\in[n]\setminus\{\ell\}$ $(M_{\tilde A,\ell})_{i,j}=(M_{\tilde A})_{i,j}$ and $(M_{\tilde A,\ell})_{i,\ell}=0$ if $i\neq n$ and $(M_{\tilde A,\ell})_{i,\ell}=1$ if $i=n$. Let $b=(0,\ldots,0,1)^\top$. Observe that we get $M_{\tilde A,i}$ by replacing the $i$-th column of $M_{\tilde A}$ with the column vector $b$.  If $\det(M_{\tilde A})\neq 0$, then by Cramer's rule, the system of linear equations $M_Az=b$ has a unique solution $z^*$ where  $z_i^*=\frac{\det(M_{\tilde A,i})}{\det(M_{\tilde A})}$. 

Now observe that $x^*$ and $y^*$ must satisfy the system of linear equations
$M_{\tilde A^\top}x=b$ and $M_{\tilde A}y=b$ respectively. This implies that $x_i^*=\frac{\det(M_{\tilde A^\top,i})}{\det(M_{\tilde A^\top})}$ and $y_j^*=\frac{\det(M_{\tilde A,j})}{\det(M_{\tilde A})}$. Observe that $\tilde A$ has a unique full support Nash equilibrium $(x^*,y^*)$ if and only if for all $i,j\in[n]$ ${\det(M_{\tilde A^\top,i})}\cdot{\det(M_{\tilde A^\top})}>0$ and ${\det(M_{\tilde A,j})}\cdot {\det(M_{\tilde A})}>0$.

\textbf{Important matrix-dependent parameters.} We now define two parameters $\Delta_{\min}$ and $D$ for the optimal $n\times n$ matrix $B$ as follows:
\begin{align*}
    &\Delta_{\min}=\min\left\{\min_{i\in[n]}|\det(M_{ B^\top,i})|,\min_{j\in[n]}|\det(M_{ B,j})|\right\}; \quad D=\det(M_{ B^\top}).
\end{align*}


\begin{algorithm}[t!]
\caption{Logarithmic regret algorithm for $n\times m$ games with unique Nash equilibrium with full row support}
\begin{algorithmic}[1]
\STATE \textbf{Given:} $\supp(x^*)=[n]$ and $\supp(y^*)=[n]$
\STATE Let $B$ be the sub-matrix of $A$ with row indices $\supp(x^*)$ and column indices $\supp(y^*)$.
\STATE $x_1\gets (1/n,\ldots,1/n)$.
\STATE $t_*\gets T+1$
\FOR{time step $t=1,2,\ldots,T$} 
\STATE Play $x_t$ and update the empirical means $\bar A_{ij}$.
\STATE Let $(x',y')$ be the Nash equilibrium of $\bar A$ and $x_{t+1}\gets x'$.
\STATE $\Delta \gets \sqrt{\frac{2\log(nm\cdot T^2)}{t}}$
\STATE $\tilde\Delta_{\min}\gets \min\{|\det(M_{\bar B^\top})|,|\det(M_{\bar B})|,\min_{i\in \supp(x^*)}|\det(M_{\bar B^\top,i})|,\min_{j\in\supp(y^*)}|\det(M_{\bar B,j})|\}$
\IF{$1\leq \frac{\tilde \Delta_{\min}+2n^2\cdot n!\cdot\Delta}{\tilde \Delta_{\min}-2n^2\cdot n!\cdot\Delta}\leq \frac{3}{2}$ and $t_*>T$}
\STATE $t_*\gets6.25\cdot t$
\ENDIF
\IF{$t\geq t_*$ and the matrix game $\bar B$ (empirical version of $B$) has a unique full support NE}\label{alg:nxn:con1}
\STATE $t_1\gets t$.
\WHILE{$t_1<T$}
\STATE Let $(x',y')$ be the NE of $\bar A$
\STATE $\widehat A\gets \bar A$, $\tilde D\gets |\det(M_{\bar B^\top})|$ and $\Delta \gets \sqrt{\frac{2\log(nm\cdot T^2)}{t_1}}$
\STATE $D_1\gets \frac{\tilde D}{5n\cdot n!}$, $T_1\gets\left(\frac{1}{\Delta}\right)^2$ and $T_2\gets \min\{t_1,T-t_1\}$
\STATE Run the Algorithm \ref{subroutine-nxn} with input parameters $x'$, $D_1$ $T_1$, $T_2$, $\widehat A$.\label{alg:nxn:st1}
\STATE Update the empirical means $\bar A_{ij}$.
\STATE $t_1\gets 2t_1$
\ENDWHILE
\STATE Terminate the Algorithm
\ELSIF{$t\geq t_*$ }\label{alg:nxn:con2} 
\FOR{time step $t'=t+1,t+2,\ldots,T$} 
\STATE Play $x_{t'}$ and update the empirical means $\bar A_{ij}$.
\STATE Let $(x',y')$ be the Nash equilibrium of $\bar A$ and $x_{t'+1}\gets x'$.
\ENDFOR
\STATE Terminate the algorithm
\ENDIF
\ENDFOR
\end{algorithmic}
\label{alg-nxn-full}
\end{algorithm}

\textbf{Main Nash regret result.} We establish the regret guarantee of Algorithm \ref{alg-nxn-full} by proving the following theorem.

\begin{theorem}\label{thm:nxn:full}
    Consider a matrix game on $A\in [0,1]^{n\times m}$ that has a unique full support Nash Equilibrium $(x^*,y^*)$. Then the regret $R(A,T)=T\cdot V_A^*-\sum_{t=1}^T\mathbb{E}[\langle x_t,Ay_t\rangle]$ incurred by the algorithm \ref{alg-nxn-full} is $\frac{c_1\cdot n^2\cdot n!\cdot\log(nmT)}{\Delta_{\min}}+\frac{c_2\cdot n^2\cdot n!\cdot(\log(nmT))^2}{|D|}$ where $c_1$ and $c_2$ are absolute constants.
\end{theorem}
\begin{proof}[Proof of Theorem \ref{thm:nxn:full}]
Let $\bar A_{ij,t}$ denote the $(i,j)$ element of $\bar A$ at the end of round $t$. Define an event $G$ as follows:
\begin{align*}
    G:=&\bigcap_{t=1}^T\bigcap_{i=1}^n \bigcap_{j=1}^n \{ |A_{ij}-\bar A_{ij,t}|\leq \sqrt{\tfrac{2\log(nmT^2)}{t}} \}.
\end{align*}
Observe that event $G$ holds with probability at least $1-\frac{8}{T}$. 

Let $R(T)=T\cdot V_A^*-\sum\limits_{t=1}^T\langle x_t,Ae_{j_t} \rangle$ denote the regret incurred where $x_t$ is the mixed strategy played by row player in round $t$ and $j_t$ is the column played by the column player. If we show that under event $G$, we have that $R(T)=O\left(\frac{n^2\cdot n!\cdot\log(nmT)}{\Delta_{\min}}+\frac{n^2\cdot n!\cdot(\log(nmT))^2}{|D|}\right)$, then due to \eqref{equivalent:regret}, we get that the regret incurred by the algorithm is \[O\left(\frac{n^2\cdot n!\cdot\log(nmT)}{\Delta_{\min}}+\frac{n^2\cdot n!\cdot(\log(nmT))^2}{|D|}\right).\]

Let us assume that event $G$ holds. First observe that due to Lemma \ref{nxn:t*} and Lemma \ref{nxn:t-condition}, it takes $O\left(\frac{(n^2\cdot n!)^2\cdot\log(nmT)}{\Delta_{\min}^2}\right)$ time steps to satisfy the condition in  Line \ref{alg:nxn:con1} of  Algorithm \ref{alg-nxn-full}. Due to Corollary \ref{nxn:cor}, we have that the regret incurred up to this time step is at most $O(\frac{n^2\cdot n!\cdot \log(nmT)}{\Delta_{\min}}).$  Each time we call the subroutine in algorithm \ref{subroutine-nxn}, we incur a regret of $O(\frac{n^2\cdot n!\cdot\log(nmT)}{|D|})$ (cf.~Lemma \ref{nxn:regret}). Due to the doubling trick, the subroutine gets called for at most $\log_2T$ times (cf.~Lemma \ref{nxn:logT}). Hence, the total regret $R(T)$ incurred is \[O\left(\frac{n^2\cdot n!\cdot\log(nmT)}{\Delta_{\min}}+\frac{n^2\cdot n!\cdot(\log(nmT))^2}{|D|}\right).\]
\end{proof}

\subsubsection{Consequential lemmas of Algorithm~\ref{alg-nxn-full}'s conditional statements}
For the sake of simplicity of presentation, for the rest of the section we assume that empirical matrix $\bar A$ satisfies $0\leq \bar A_{i,j}\leq 1$ for any $i,j$ and $n=m$. When $n=m$ we have $\bar A=\bar B$. The analysis for the case when $n\neq m$ is analogous to the case when $n=m$. We will be using \Cref{det:lem} repeatedly.

\begin{lemma}\label{nxn:t*}
    Let $t$ be the timestep when the condition $1\leq \frac{\tilde \Delta_{\min}+2n^2\cdot n!\cdot\Delta}{\tilde \Delta_{\min}-2n^2\cdot n!\cdot\Delta}\leq \frac{3}{2}$ holds true for the first time. If event $G$ holds, then $t\geq \frac{128\cdot (n^2\cdot n!)^2\cdot\log(n^2T^2)}{\Delta_{\min}^2}$.
\end{lemma}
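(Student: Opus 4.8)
The plan is to show that the triggering condition cannot be met while $\Delta$ is large, by first rewriting the condition as a clean lower bound on $\tilde\Delta_{\min}$, then using the determinant perturbation estimate (Lemma~\ref{det:lem}) to produce a matching upper bound on $\tilde\Delta_{\min}$ in terms of $\Delta_{\min}$ and $\Delta$, and finally inverting the relation $\Delta=\sqrt{2\log(n^2T^2)/t}$.

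\emph{Step 1: reformulating the condition.} If $1\le \frac{\tilde\Delta_{\min}+2n^2 n!\,\Delta}{\tilde\Delta_{\min}-2n^2 n!\,\Delta}\le \frac{3}{2}$ holds at time $t$, then the middle quantity is a positive number, which forces $\tilde\Delta_{\min}-2n^2 n!\,\Delta>0$. Cross-multiplying the upper inequality gives $2(\tilde\Delta_{\min}+2n^2 n!\,\Delta)\le 3(\tilde\Delta_{\min}-2n^2 n!\,\Delta)$, which rearranges to $\tilde\Delta_{\min}\ge 10\,n^2 n!\,\Delta$. Hence the condition implies $\tilde\Delta_{\min}\ge 10\,n^2 n!\,\Delta$.

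\emph{Step 2: bounding $\tilde\Delta_{\min}$ from above.} Under event $G$ we have $\max_{i,j}|A_{ij}-\bar A_{ij,t}|\le\Delta$. Since all entries of $A$ and $\bar A$ lie in $[0,1]$, every matrix among $M_{A^\top},M_{A},M_{A^\top,i},M_{A,j}$ and its empirical counterpart has entries bounded in absolute value by $1$, and the two differ entrywise by at most $2\Delta$ — the factor $2$ coming from the differences $A_{1,j}-A_{i+1,j}$ defining $M_A$, while the extra column $b$ in the matrices $M_{A,\ell}$ is identical in both. Applying Lemma~\ref{det:lem} with perturbation $2\Delta$, the determinant of each such matrix is within $2n\cdot n!\,\Delta$ of its empirical value. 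Now pick an index realizing the minimum in the definition of $\Delta_{\min}$, say $i_0$ with $\Delta_{\min}=|\det(M_{A^\top,i_0})|$; the reverse triangle inequality gives $|\det(M_{\bar A^\top,i_0})|\le \Delta_{\min}+2n\cdot n!\,\Delta$, and since $|\det(M_{\bar A^\top,i_0})|$ is one of the terms in the minimum defining $\tilde\Delta_{\min}$, we conclude $\tilde\Delta_{\min}\le \Delta_{\min}+2n\cdot n!\,\Delta$. Combining with Step 1, if the condition holds at time $t$ then $10\,n^2 n!\,\Delta\le \Delta_{\min}+2n\cdot n!\,\Delta$, so $8\,n^2 n!\,\Delta\le (10n^2-2n)\,n!\,\Delta\le \Delta_{\min}$, i.e. $\Delta\le \Delta_{\min}/(8 n^2 n!)$. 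Substituting $\Delta=\sqrt{2\log(n^2T^2)/t}$ and squaring yields $t\ge \tfrac{128\,n^4(n!)^2\log(n^2T^2)}{\Delta_{\min}^2}=\tfrac{128\,(n^2 n!)^2\log(n^2T^2)}{\Delta_{\min}^2}$, which is the claim.

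The main obstacle is essentially careful bookkeeping rather than a deep idea: one must track the factor-of-$2$ entrywise blow-up introduced by the $M_A$ construction, ensure every determinant appearing in $\tilde\Delta_{\min}$ is controlled by Lemma~\ref{det:lem}, and — the one genuinely delicate point — argue that $\tilde\Delta_{\min}$ is bounded \emph{above} by the perturbed version of $\Delta_{\min}$, which works precisely because $\tilde\Delta_{\min}$ is a minimum over a set containing the empirical analogue of the index realizing $\Delta_{\min}$ (note also that $\tilde\Delta_{\min}$ ranges over a superset of the determinants defining $\Delta_{\min}$, so including $|\det(M_{\bar A^\top})|,|\det(M_{\bar A})|$ only makes $\tilde\Delta_{\min}$ smaller and does not affect the upper bound). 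The algebraic reformulation in Step 1 is routine once one notices the denominator $\tilde\Delta_{\min}-2n^2 n!\,\Delta$ must be positive.
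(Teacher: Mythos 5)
Your proof is correct and follows essentially the same route as the paper: both arguments show the stopping condition forces $\tilde\Delta_{\min}\ge 10\,n^2 n!\,\Delta$, relate $\tilde\Delta_{\min}$ to $\Delta_{\min}$ via the determinant perturbation bound of Lemma~\ref{det:lem} under event $G$, deduce $n^2 n!\,\Delta\le\Delta_{\min}/8$, and invert $\Delta=\sqrt{2\log(n^2T^2)/t}$. If anything, you are more explicit than the paper about why $\tilde\Delta_{\min}\le\Delta_{\min}+2n\cdot n!\,\Delta$ (choosing the index realizing $\Delta_{\min}$ and noting $\tilde\Delta_{\min}$ minimizes over a superset), a step the paper's proof uses implicitly.
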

\begin{proof}
    Let $\bar\Delta:=n^2\cdot n!\cdot\Delta$. If $1\leq \frac{\tilde \Delta_{\min}+2\bar\Delta}{\tilde \Delta_{\min}-2\bar\Delta}\leq \frac{3}{2}$, then $\bar\Delta\leq \frac{\tilde\Delta_{\min}}{10}$. Now observe that $\frac{\tilde \Delta_{\min}}{\Delta_{min}}\leq \frac{\tilde \Delta_{\min}}{\tilde \Delta_{\min}-2\bar\Delta}\leq \frac{\tilde \Delta_{\min}}{\tilde \Delta_{\min}-\tilde \Delta_{\min}/5}=\frac{5}{4}$. Hence we have $\bar\Delta\leq \frac{\Delta_{\min}}{8}$. As $\bar\Delta=n^2\cdot n!\cdot\sqrt{\frac{2\log(n^2T^2)}{t}}$, we have $t\geq\frac{128\cdot (n^2\cdot n!)^2\cdot\log(n^2T^2)}{\Delta_{\min}^2}$.
\end{proof}

\begin{lemma}\label{nxn:t-condition}
    Consider any timestep $t\geq \frac{800\cdot (n^2\cdot n!)^2\cdot\log(n^2T^2)}{\Delta_{\min}^2}$. If event $G$ holds, then condition $1\leq \frac{\tilde \Delta_{\min}+2n^2\cdot n!\cdot\Delta}{\tilde \Delta_{\min}-2n^2\cdot n!\cdot\Delta}\leq \frac{3}{2}$ holds true.
\end{lemma}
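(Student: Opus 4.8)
The plan is to convert the displayed ratio condition into a single scalar inequality and then control the empirical quantity $\tilde{\Delta}_{\min}$ by its population analogue $\Delta_{\min}$. Write $\bar\Delta := n^2\cdot n!\cdot\Delta$, the shorthand already used in the proof of Lemma~\ref{nxn:t*}, so that the $2n^2\cdot n!\cdot\Delta$ appearing in the condition is just $2\bar\Delta$. A quick algebraic check shows that for any $u > 2\bar\Delta$ one has $\tfrac{u+2\bar\Delta}{u-2\bar\Delta}\ge 1$ automatically, while $\tfrac{u+2\bar\Delta}{u-2\bar\Delta}\le\tfrac32$ iff $2(u+2\bar\Delta)\le 3(u-2\bar\Delta)$, i.e.\ iff $u\ge 10\bar\Delta$ (and $u\ge 10\bar\Delta$ already forces $u>2\bar\Delta$ since $\bar\Delta>0$). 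So it suffices to prove $\tilde{\Delta}_{\min}\ge 10\bar\Delta$ on the event $G$ whenever $t$ is in the stated range.

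Next I would bound how far $\tilde{\Delta}_{\min}$ can be from $\Delta_{\min}$ using Lemma~\ref{det:lem}. On $G$, at round $t$ every entry of $\bar A$ is within $\Delta=\sqrt{2\log(n^2T^2)/t}$ of the corresponding entry of $A$. From the definitions of $M_A$ (entries $A_{1,j}-A_{i+1,j}$ for $i\le n-1$, and $1$ in the last row) and of $M_{A,j}$ (the same matrix with column $j$ replaced by the constant vector $b$), and likewise for $A^\top$, each of these matrices and its empirical counterpart differ entrywise by at most $2\Delta$, and all their entries lie in $[-1,1]$. Applying Lemma~\ref{det:lem} with $2\Delta$ in place of $\Delta$ to each of $M_{A^\top}$, $M_A$, $M_{A^\top,i}$ and $M_{A,j}$ gives $|\det M_{\bar A^\top,i}-\det M_{A^\top,i}|\le 2n\cdot n!\cdot\Delta\le 2\bar\Delta$, and analogously for the other three families. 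Since the minimum of a finite collection of numbers each perturbed by at most $2\bar\Delta$ is itself perturbed by at most $2\bar\Delta$, we obtain
\[
\tilde{\Delta}_{\min}\ \ge\ \min\Big\{\,|\det M_{A^\top}|,\ |\det M_{A}|,\ \min_{i}|\det M_{A^\top,i}|,\ \min_{j}|\det M_{A,j}|\,\Big\}\ -\ 2\bar\Delta .
\]
The main work is to identify this population minimum with $\Delta_{\min}$ itself — equivalently, to show that throwing $|\det M_{A^\top}|$ and $|\det M_A|$ into the minimum cannot lower it below $\Delta_{\min}$. This is where the full-support structure enters: since $(x^*,y^*)$ is the unique full-support Nash equilibrium, $y^*$ is the unique solution of $M_A y=b$, so by Cramer's rule $y_j^*=\det(M_{A,j})/\det(M_A)$; summing over $j$ and using $\sum_j y_j^*=1$ yields $\sum_j\det(M_{A,j})=\det(M_A)$, and since each $y_j^*>0$ the summands $\det(M_{A,j})$ all share the sign of $\det(M_A)$. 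Hence $|\det M_A|=\sum_j|\det M_{A,j}|\ge\min_j|\det M_{A,j}|\ge\Delta_{\min}$, and the identical argument applied to $M_{A^\top}x=b$ gives $|D|=|\det M_{A^\top}|\ge\Delta_{\min}$. Therefore the bracketed minimum equals $\Delta_{\min}$, so $\tilde{\Delta}_{\min}\ge\Delta_{\min}-2\bar\Delta$ on $G$.

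Finally I would plug in the range of $t$: for $t\ge 800(n^2\cdot n!)^2\log(n^2T^2)/\Delta_{\min}^2$ we have $\bar\Delta=n^2\cdot n!\sqrt{2\log(n^2T^2)/t}\le\sqrt{2/800}\,\Delta_{\min}=\Delta_{\min}/20$, whence $\tilde{\Delta}_{\min}\ge\Delta_{\min}-2\bar\Delta\ge\tfrac{9}{10}\Delta_{\min}$, while $10\bar\Delta\le\tfrac12\Delta_{\min}\le\tilde{\Delta}_{\min}$; this is precisely $\tilde{\Delta}_{\min}\ge 10\bar\Delta$, which by the first paragraph is equivalent to the claimed condition. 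The only genuinely non-routine step is the Cramer's-rule/sign argument in the middle paragraph (i.e.\ that $|\det M_A|$ and $|D|$ are themselves at least $\Delta_{\min}$); everything else is the perturbation bound of Lemma~\ref{det:lem} together with arithmetic.
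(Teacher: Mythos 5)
Your proof is correct and follows essentially the same route as the paper's: both arguments reduce to showing $\bar\Delta \leq \Delta_{\min}/20$ and $\tilde\Delta_{\min}\geq \Delta_{\min}-2\bar\Delta$ on $G$, and then do the same arithmetic (your reformulation of the ratio condition as $\tilde\Delta_{\min}\geq 10\bar\Delta$ is just a cleaner packaging). The one place you add genuine value is the middle paragraph: the paper asserts $\tilde\Delta_{\min}\geq\Delta_{\min}-2\bar\Delta$ without comment, even though $\tilde\Delta_{\min}$ includes the two extra terms $|\det(M_{\bar A^\top})|$ and $|\det(M_{\bar A})|$ that do not appear in the definition of $\Delta_{\min}$, and your Cramer's-rule/sign argument showing $|\det M_A|,|D|\geq\Delta_{\min}$ is exactly the missing justification.
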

\begin{proof}
    Let $\bar\Delta:=n^2\cdot n!\cdot\Delta$. First observe that $\bar\Delta=n^2\cdot n!\cdot\sqrt{\frac{2\log(n^2T^2)}{t}}\leq \frac{\Delta_{\min}}{20}$. Next observe that $\tilde \Delta_{\min}-2\bar\Delta\geq \Delta_{\min}-4\bar\Delta>0$. Hence we have $1\leq \frac{\tilde \Delta_{\min}+2\bar\Delta}{\tilde \Delta_{\min}-2\bar\Delta}$. Now we have $\frac{\tilde \Delta_{\min}+2\bar\Delta}{\tilde \Delta_{\min}-2\bar\Delta}\leq \frac{\Delta_{\min}+4\bar\Delta}{\Delta_{\min}-4\bar\Delta}\leq\frac{\Delta_{\min}+\Delta_{\min}/5}{\Delta_{\min}-\Delta_{\min}/5}=\frac{3}{2}$.
\end{proof}

\begin{lemma}\label{nxn:D}
    Consider any timestep $t\geq \frac{800\cdot (n^2\cdot n!)^2\cdot\log(n^2T^2)}{\Delta_{\min}^2}$. If event $G$ holds, then $\frac{4}{5}\leq \frac{|D|}{\tilde D}\leq \frac{6}{5}$
\end{lemma}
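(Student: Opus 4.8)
The plan is to track how the entrywise perturbation of the matrix propagates to the determinant $\det(M_{\bar A^\top})$ via Lemma~\ref{det:lem}, and then convert the resulting \emph{additive} bound into a \emph{multiplicative} one using the a priori lower bound $|D|\ge \Delta_{\min}$. Write $\Delta=\sqrt{2\log(n^2T^2)/t}$ and $\bar\Delta=n^2\cdot n!\cdot\Delta$, as in Algorithm~\ref{alg-nxn-full}. Under event $G$ we have $\max_{i,j}|A_{i,j}-\bar A_{i,j}|\le\Delta$; now for every $i,j$ the difference $(M_{A^\top})_{i,j}-(M_{\bar A^\top})_{i,j}$ is either $0$ (the last row, which is identically $1$) or of the form $(A_{j,1}-\bar A_{j,1})-(A_{j,i+1}-\bar A_{j,i+1})$, so the entrywise difference between $M_{A^\top}$ and $M_{\bar A^\top}$ is at most $2\Delta$. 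Both matrices have entries bounded by $1$ in absolute value (using the standing assumption $\bar A\in[0,1]^{n\times n}$ and $A\in[0,1]^{n\times n}$), so Lemma~\ref{det:lem} applied with perturbation $2\Delta$ yields $|\det(M_{A^\top})-\det(M_{\bar A^\top})|\le 2n\cdot n!\cdot\Delta\le 2\bar\Delta$, and hence by the reverse triangle inequality $\big||D|-\tilde D\big|\le 2\bar\Delta$.

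Next I would establish $|D|\ge\Delta_{\min}$. Since $(x^*,y^*)$ is the unique \emph{full-support} Nash equilibrium, each coordinate satisfies $x_i^*=\det(M_{A^\top,i})/\det(M_{A^\top})\in(0,1]$, and in fact $x_i^*<1$ when $n\ge 2$; in either case $|\det(M_{A^\top,i})|\le|\det(M_{A^\top})|=|D|$ for every $i$. Taking the minimum over $i$ (and over the analogous terms in $j$) gives $\Delta_{\min}\le|D|$.

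Finally I would plug in the range of $t$. For $t\ge \frac{800\,(n^2\cdot n!)^2\log(n^2T^2)}{\Delta_{\min}^2}$ one computes $\bar\Delta=n^2\cdot n!\cdot\sqrt{2\log(n^2T^2)/t}\le\sqrt{2/800}\,\Delta_{\min}=\Delta_{\min}/20$ — exactly the estimate already derived inside the proof of Lemma~\ref{nxn:t-condition}. Combining the three ingredients, $\big||D|-\tilde D\big|\le 2\bar\Delta\le\Delta_{\min}/10\le|D|/10$, so $\tfrac{9}{10}|D|\le\tilde D\le\tfrac{11}{10}|D|$, which rearranges to $\tfrac{10}{11}\le|D|/\tilde D\le\tfrac{10}{9}$, and in particular $\tfrac45\le|D|/\tilde D\le\tfrac65$.

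There is no genuine obstacle here; the only two points needing care are (i) the factor of two in the entrywise perturbation of $M_{A^\top}$, since its entries are \emph{differences} of two entries of $A$, so Lemma~\ref{det:lem} must be invoked with perturbation $2\Delta$ and not $\Delta$; and (ii) the observation that the full-support property forces $|D|\ge\Delta_{\min}$, which is precisely what upgrades the additive determinant estimate to the desired multiplicative two-sided bound.
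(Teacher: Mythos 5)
Your proof is correct and follows essentially the same route as the paper's: an additive determinant perturbation bound from Lemma~\ref{det:lem} (with the factor-of-two entrywise perturbation of $M_{A^\top}$, as you note) converted into a multiplicative bound via a gap lower bound on the determinant. The only difference is that you normalize by $|D|$ using $\Delta_{\min}\le|D|$, which you correctly justify from the full-support property, whereas the paper normalizes by $\tilde D$ using the inclusion $\tilde\Delta_{\min}\le\tilde D$ (immediate from the definition of $\tilde\Delta_{\min}$ in Algorithm~\ref{alg-nxn-full}) together with Lemma~\ref{nxn:t-condition}; both yield the claim, yours with the slightly tighter constants $[10/11,10/9]$.
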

\begin{proof}
    Let $\bar\Delta:=n^2\cdot n!\cdot\Delta$. As $t\geq \frac{800\log(n^2T^2)}{\Delta_{\min}^2}$, due to lemma \ref{nxn:t-condition} the condition $1\leq \frac{\tilde \Delta_{\min}+2\bar\Delta}{\tilde \Delta_{\min}-2\bar\Delta}\leq \frac{3}{2}$ holds true. Hence we have $\bar\Delta\leq \frac{\tilde \Delta_{\min}}{10}$. As $\tilde\Delta_{\min}\leq \tilde D$, we have $\bar\Delta\leq \frac{\tilde D}{10}$. Now we have that $\frac{|D|}{\tilde D}\leq \frac{\tilde D+2\bar\Delta}{\tilde D}\leq \frac{\tilde D+\tilde D/5}{\tilde D}\leq\frac{6}{5}$. Similarly we have that $\frac{|D|}{\tilde D}\geq \frac{\tilde D-2\bar\Delta}{\tilde D}\geq \frac{\tilde D-\tilde D/5}{\tilde D}\geq\frac{4}{5}$. 
\end{proof}


\begin{lemma}\label{nxn:nash}
    Let $j_t$ be the column played by $y$-player in round $t$. Let $(x',y')$ be the NE of the empirical matrix $\bar A_{t-1} = \frac{1}{t-1}\sum_{s=1}^{t-1} \mathbf{A}_s$. If event $G$ holds, then $V_A^*-\langle x',Ae_{j_t}\rangle \leq \min\{2\sqrt{\frac{2\log(n^2T^2)}{t-1}},1\}$
\end{lemma}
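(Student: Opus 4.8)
The plan is to treat this as a routine two-step perturbation argument, using only (i) that $x'$ certifies the value of $\bar A_{t-1}$ as a maximin strategy, and (ii) that the payoff bilinear form is insensitive to entrywise perturbations of the matrix when evaluated on probability vectors. No game-theoretic machinery beyond von Neumann's minimax theorem is needed.

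First I would set $\epsilon := \sqrt{2\log(n^2T^2)/(t-1)}$ and note that on the event $G$ we have $\max_{i,j}|A_{i,j}-(\bar A_{t-1})_{i,j}| \le \epsilon$. The one elementary inequality I would isolate and reuse is that for any matrix $M \in \mathbb{R}^{n\times n}$ and any probability vectors $p,q \in \simplex_n$ one has $|\langle p, Mq\rangle| \le \max_{i,j}|M_{i,j}|$. Applying it with $M = A - \bar A_{t-1}$ gives two consequences: \[ |\langle p, Aq\rangle - \langle p, \bar A_{t-1} q\rangle| \le \epsilon \quad \text{for all } p,q\in\simplex_n, \] which after taking $\max_p\min_q$ on both sides yields the value-stability bound $|V_A^* - V_{\bar A_{t-1}}^*| \le \epsilon$; and, specializing $p = x'$, $q = e_{j_t}$, the bound $|\langle x', Ae_{j_t}\rangle - \langle x', \bar A_{t-1} e_{j_t}\rangle| \le \epsilon$.

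Next I would use that $x'$ is the row-player's component of a Nash equilibrium of $\bar A_{t-1}$, so $\min_{y\in\simplex_n} \langle x', \bar A_{t-1} y\rangle = V_{\bar A_{t-1}}^*$; since a linear function on $\simplex_n$ attains its minimum at a vertex, $\langle x', \bar A_{t-1} e_j\rangle \ge V_{\bar A_{t-1}}^*$ for every column $j$, in particular $j=j_t$. Chaining the three estimates, \[ \langle x', Ae_{j_t}\rangle \;\ge\; \langle x', \bar A_{t-1} e_{j_t}\rangle - \epsilon \;\ge\; V_{\bar A_{t-1}}^* - \epsilon \;\ge\; V_A^* - 2\epsilon, \] i.e. $V_A^* - \langle x', Ae_{j_t}\rangle \le 2\epsilon = 2\sqrt{2\log(n^2T^2)/(t-1)}$. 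To obtain the $\min\{\cdot,1\}$ form, I would additionally invoke the normalization of Appendix~\ref{appendix:nxn} (so $A\in[0,1]^{n\times n}$), under which $V_A^* \le 1$ and $\langle x', Ae_{j_t}\rangle \ge 0$, hence $V_A^* - \langle x', Ae_{j_t}\rangle \le 1$; taking the minimum of the two upper bounds completes the proof.

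I do not expect a genuine obstacle here. The only points needing a little care are keeping the directions of the maximin/Nash inequalities straight, and using the sharp $\ell_\infty$-versus-simplex estimate $|\langle p, Mq\rangle|\le \max_{i,j}|M_{i,j}|$ rather than a looser $\ell_1$-type bound — this is exactly what keeps the constant equal to $2$ and dimension-free, which matters for the downstream accounting that feeds into Theorem~\ref{thm:nxn:full}.
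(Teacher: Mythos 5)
Your proof is correct and takes essentially the same route as the paper's: two entrywise perturbation steps of size $\epsilon$ combined with the equilibrium properties of $(x',y')$ and $(x^*,y^*)$, plus the trivial bound $V_A^*-\langle x',Ae_{j_t}\rangle\le 1$ from the $[0,1]$ normalization. The only cosmetic difference is that you pass through the empirical game value $V_{\bar A_{t-1}}^*$ and a value-stability estimate, whereas the paper chains directly through $\langle x',\bar A y'\rangle$, $\langle x^*,\bar A y'\rangle$, $\langle x^*, A y'\rangle$, and $\langle x^*,Ay^*\rangle$; both give the same dimension-free constant $2$.
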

\begin{proof}
    As $V_A^*\leq 1$ and $\langle x',Ae_{j_t}\rangle\geq 0$, we have $V_A^*-\langle x',Ae_{j_t}\rangle \leq 1$. Let $\bar{A} = \bar{A}_{t-1}$ and $\Delta_t=\sqrt{\frac{2\log(n^2T^2)}{t-1}}$. Let $(x^*,y^*)$ be the NE of $A$. As event $G$ holds, we now have the following for any $t\geq 2$: 
    \begin{align*}
        \langle x', Ae_{j_t}\rangle &\geq \langle x', \bar Ae_{j_t}\rangle-\Delta_t\\
        &\geq\langle x',\bar Ay'\rangle-\Delta_t\\
        &\geq \langle x^*, \bar Ay'\rangle-\Delta_t\\
        &\geq \langle x^*, Ay'\rangle-2\Delta_t\\
        &\geq \langle x^*, Ay^*\rangle-2\Delta_t\\
        &= V_A^*-2\Delta_t
    \end{align*}
\end{proof}

\begin{corollary}\label{nxn:cor}
    If the algorithm \ref{alg-nxn-full} at time $t = \{2,\dots,t'\}$ plays the NE of the empirical matrix $\bar A_{t-1} = \frac{1}{t-1}\sum_{s=1}^{t-1} \mathbf{A}_s$ and event $G$ holds, then the regret incurred till round $t'$ is at most $4\sqrt{2t'\log(n^2T^2)}+1$
\end{corollary}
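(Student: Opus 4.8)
The plan is to read off the per-round regret from Lemma~\ref{nxn:nash} and sum. Recall that throughout this section we work with the rescaled matrix $A\in[0,1]^{n\times n}$, so $V_A^*\le 1$ and $\langle x_t,Ae_{j_t}\rangle\ge 0$ for every $t$. In round $1$ the algorithm plays the uniform strategy $x_1=(1/n,\dots,1/n)$, which is not (necessarily) the Nash equilibrium of an empirical matrix, so Lemma~\ref{nxn:nash} does not apply there; but the normalization alone gives $V_A^*-\langle x_1,Ae_{j_1}\rangle\le 1$. For each $t\in\{2,\dots,t'\}$, the hypothesis states that $x_t$ is the Nash equilibrium of $\bar A_{t-1}$, so on the event $G$ Lemma~\ref{nxn:nash} yields $V_A^*-\langle x_t,Ae_{j_t}\rangle\le 2\sqrt{\tfrac{2\log(n^2T^2)}{t-1}}$.

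Summing these bounds, the regret up to round $t'$ satisfies
\begin{align*}
\sum_{t=1}^{t'}\bigl(V_A^*-\langle x_t,Ae_{j_t}\rangle\bigr)
&\le 1+\sum_{t=2}^{t'}2\sqrt{\frac{2\log(n^2T^2)}{t-1}}
= 1+2\sqrt{2\log(n^2T^2)}\;\sum_{s=1}^{t'-1}\frac{1}{\sqrt{s}}.
\end{align*}
I would then control the harmonic-type sum by the integral comparison $\sum_{s=1}^{t'-1}s^{-1/2}\le\int_0^{t'}s^{-1/2}\,ds=2\sqrt{t'}$, which gives the bound $1+4\sqrt{2t'\log(n^2T^2)}$, exactly the claimed expression.

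There is essentially no obstacle here: the statement is a one-line consequence of Lemma~\ref{nxn:nash}. The only two points that need a word of care are (i) that the round-$1$ term is bounded by the constant $1$ using the $[0,1]$ normalization rather than Lemma~\ref{nxn:nash} (since at $t=1$ the algorithm plays the uniform strategy), and (ii) that one should use the integral estimate $\sum_{s\le m}s^{-1/2}\le 2\sqrt m$ rather than a cruder bound so that the leading constant comes out as $4$.
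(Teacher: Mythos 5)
Your proof is correct and follows the same route as the paper: bound the round-$1$ term by $1$ via the $[0,1]$ normalization, apply Lemma~\ref{nxn:nash} for $t\in\{2,\dots,t'\}$, and sum using $\sum_{s=1}^{t'-1}s^{-1/2}\le 2\sqrt{t'}$. No issues.
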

\begin{proof}
    For all $t\in\{2,\ldots,t'\}$, we have $V_A^*-\langle x',Ae_{j_t}\rangle \leq 2\sqrt{\frac{2\log(n^2T^2)}{t-1}}$ due to lemma \ref{nxn:nash}. Hence the regret incurred until round $t'$ is equal to $t ' \cdot V_A^*-\sum_{t=1}^{t'} \langle x',Ae_{j_t} \rangle\leq 1+2\sum_{t=2}^{t'}\sqrt{\frac{2\log(n^2T^2)}{t-1}}\leq 4\sqrt{2t'\log(n^2T^2)}+1$.
\end{proof}

\begin{lemma}\label{nxn:iff}
    Consider any timestep $t\geq \frac{800\cdot (n^2\cdot n!)^2\cdot\log(n^2T^2)}{\Delta_{\min}^2}$. If event $G$ holds, then the empirical matrix $\bar A$ has a unique full support NE if and only if the input matrix $A$ has a unique full support NE.
\end{lemma}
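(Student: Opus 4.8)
The plan is to show that the characterization of ``unique full support Nash equilibrium'' stated in the excerpt collapses to a purely sign-theoretic condition on the sub-determinants $\det(M_{\cdot^\top,i})$ and $\det(M_{\cdot,j})$, and then to observe that event $G$ together with the lower bound on $t$ keeps every one of these sub-determinants bounded away from zero by more than the perturbation size, so no sign can flip in passing from $A$ to $\bar A$. The key preliminary step is the reformulation: for any $n\times n$ matrix $B$ the last row of $M_B$ is $\mathbf{1}^\top$, so expanding $\det(M_B)$ along that row and noting that the replaced column of $M_{B,i}$ is $e_n$ shows $\det(M_{B,i})$ is exactly the $(n,i)$ cofactor of $M_B$, whence $\det(M_B)=\sum_{i\in[n]}\det(M_{B,i})$ (and likewise $\det(M_{B^\top})=\sum_{i\in[n]}\det(M_{B^\top,i})$). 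Using this identity I would argue that the stated condition ``$\det(M_{B^\top,i})\det(M_{B^\top})>0$ and $\det(M_{B,j})\det(M_{B})>0$ for all $i,j$'' is equivalent to: $\det(M_{B^\top,1}),\dots,\det(M_{B^\top,n})$ are all nonzero with one common sign, and likewise $\det(M_{B,1}),\dots,\det(M_{B,n})$ (a common nonzero sign propagates to the sum, making each product positive; conversely positivity of each product forces a common sign and non-vanishing). This equivalence is then applied to $B=A$ and to $B=\bar A$, recalling that $\bar A\in[0,1]^{n\times n}$ in this section.

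Next I would control the perturbation of the sub-determinants. We may assume $\Delta_{\min}>0$, since otherwise the hypothesis $t\ge 800(n^2 n!)^2\log(n^2T^2)/\Delta_{\min}^2$ is never satisfied and the claim is vacuous. Write $\Delta:=\sqrt{2\log(n^2T^2)/t}$, so under event $G$, $\max_{i,j}|A_{i,j}-\bar A_{i,j}|\le\Delta$. Since $(M_A)_{i,j}=A_{1,j}-A_{i+1,j}$ for $i\le n-1$ and the last row is constant, $\max_{i,j}|(M_A)_{i,j}-(M_{\bar A})_{i,j}|\le 2\Delta$; the same holds for $M_{A^\top},M_{\bar A^\top}$ and for the column-replaced variants $M_{A,\ell},M_{\bar A,\ell}$ (which differ from one another only in the unreplaced columns, the replacement column being $e_n$), and every entry of all these matrices lies in $[-1,1]$. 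Applying Lemma~\ref{det:lem} with $2\Delta$ in place of $\Delta$ gives
\[ |\det(M_{\bar A^\top,i})-\det(M_{A^\top,i})|\le 2n\cdot n!\cdot\Delta \quad\text{and}\quad |\det(M_{\bar A,j})-\det(M_{A,j})|\le 2n\cdot n!\cdot\Delta \]
for all $i,j$. The lower bound on $t$ yields $n^2 n!\,\Delta\le \Delta_{\min}/20$, hence $2n\cdot n!\cdot\Delta\le 2(n^2 n!\,\Delta)\le \Delta_{\min}/10<\Delta_{\min}$.

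Finally, by the definition of $\Delta_{\min}$ we have $|\det(M_{A^\top,i})|\ge\Delta_{\min}$ and $|\det(M_{A,j})|\ge\Delta_{\min}$ for all $i,j$, so a perturbation of magnitude at most $\Delta_{\min}/10$ can neither annihilate nor change the sign of any of these quantities: $\mathrm{sign}(\det(M_{\bar A^\top,i}))=\mathrm{sign}(\det(M_{A^\top,i}))\ne 0$ and $\mathrm{sign}(\det(M_{\bar A,j}))=\mathrm{sign}(\det(M_{A,j}))\ne 0$ for every $i,j$. Hence $\{\det(M_{\bar A^\top,i})\}_i$ is nonzero-with-common-sign iff $\{\det(M_{A^\top,i})\}_i$ is, and the same holds for the $M_{\cdot,j}$ family; feeding this into the Step-1 reformulation applied to $A$ and to $\bar A$ gives the claimed equivalence. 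The only genuinely nontrivial point is Step 1: the quantities $\det(M_{A^\top})$ and $\det(M_A)$ are \emph{not} among the terms defining $\Delta_{\min}$, so a naive argument could not exclude one of them being tiny and sign-flipping; the identity $\det(M_B)=\sum_i\det(M_{B,i})$ is exactly what pins the sign of the total to the (controlled) sub-determinants, and making this reduction precise is the crux of the proof.
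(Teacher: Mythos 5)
Your proof is correct, and although it runs on the same engine as the paper's argument---Lemma~\ref{det:lem} to bound the perturbation of every sub-determinant by $O(n\cdot n!\cdot\Delta)$, and the hypothesis on $t$ to push that perturbation below $\Delta_{\min}/10$ so that nothing controlled by $\Delta_{\min}$ can vanish or flip sign---it takes a genuinely different route at the one point where the paper is loose. The paper reduces the lemma to sign-preservation of \emph{all} of $\det(M_A)$, $\det(M_{A^\top})$, $\det(M_{A^\top,i})$, $\det(M_{A,j})$, handles the last $2n$ via $|\det(M_{A,j})|,|\det(M_{A^\top,i})|\ge\Delta_{\min}$, and then asserts the first two follow ``in an identical way''; but $|\det(M_A)|$ and $|\det(M_{A^\top})|$ do not appear in the definition of $\Delta_{\min}$, so that step silently needs an extra input (for the forward direction, that the full-support property of $A$ forces a common sign among the $\det(M_{A,j})$, whence $|\det(M_A)|\ge n\Delta_{\min}$ by precisely your cofactor identity; for the reverse direction the paper instead routes through Lemma~\ref{nxn:t-condition} and the empirical gap $\tilde\Delta_{\min}$, which \emph{does} include $|\det(M_{\bar A})|$). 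Your Step 1---the identity $\det(M_B)=\sum_i\det(M_{B,i})$, which follows from expanding along the replaced column and then along the all-ones last row, and the resulting reformulation of ``unique full support NE'' as ``each family of sub-determinants is nonzero with one common sign''---eliminates the big determinants from the condition altogether, so you only ever need exact sign preservation of the $2n$ quantities that $\Delta_{\min}$ actually bounds from below, and both directions of the iff drop out symmetrically without invoking $\tilde\Delta_{\min}$ at all. This is cleaner and closes the gap the paper waves at; the only price is verifying the cofactor expansion, which you do correctly. (Minor: your perturbation constant $2n\cdot n!\cdot\Delta$ is a factor of $n$ tighter than the paper's $2n^2\cdot n!\cdot\Delta$; both sit below $\Delta_{\min}/10$, so nothing downstream changes.)
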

\begin{proof}
It suffices to show that $det(M_A)>0$ if and only if $det(M_{\bar A})>0$, $det(M_{A^T})>0$ if and only if $det(M_{\bar A^T})>0$, for any $i\in [n]$ $det(M_{A^T,i})> 0$ if and only if $det(M_{\bar A^T,i})\geq 0$, and for any $j\in [n]$, $det(M_{A,j})\geq 0$ if and only if $det(M_{\bar A,j})\geq 0$. Let $\bar\Delta:=n^2\cdot n!\cdot\Delta$. First observe that $\bar\Delta=n^2\cdot n!\cdot\sqrt{\frac{2\log(n^2T^2)}{t}}\leq \frac{\Delta_{\min}}{20}$. Let $\bar\Delta:=n^2\cdot n!\cdot\Delta$. Now observe that $\bar\Delta=n^2\cdot n!\cdot\sqrt{\frac{2\log(n^2T^2)}{t}}\leq \frac{\Delta_{\min}}{20}$. Fix an index $j\in[n]$. If $det(M_{A,j})> 0$, then due to lemma \ref{det:lem}, we have $det(M_{\bar A,j})\geq det(M_{A,j})-2\bar\Delta\geq det(M_{A,j})-\frac{\Delta_{\min}}{10}\geq \frac{9 det(M_{A,j})}{10}>0$.

As $t\geq \frac{800\log(n^2T^2)}{\Delta_{\min}^2}$, due to lemma \ref{nxn:t-condition} the condition $1\leq \frac{\tilde \Delta_{\min}+2\bar\Delta}{\tilde \Delta_{\min}-2\bar\Delta}\leq \frac{3}{2}$ holds true. Hence we have $\bar\Delta\leq \frac{\tilde \Delta_{\min}}{10}$. Fix an index $j\in[n]$. If $det(M_{\bar A,j})> 0$, then due to lemma \ref{det:lem}, we have $det(M_{A,j})\geq det(M_{\bar A,j})-2\bar\Delta\geq det(M_{\bar A,j})-\frac{\tilde\Delta_{\min}}{5}\geq \frac{4 det(M_{\bar A,j})}{5}>0$. In an indentical way, we can also show that $det(M_A)>0$ if and only if $det(M_{\bar A})>0$,  $det(M_{A^T})>0$ if and only if $det(M_{\bar A^T})>0$, and for any $i\in [n]$ $det(M_{A^T,i})> 0$ if and only if $det(M_{\bar A^T,i})\geq 0$.
\end{proof}

\begin{lemma}\label{nxn:deviation}
    Consider any timestep $t\geq \frac{800\cdot (n^2\cdot n!)^2\cdot\log(n^2T^2)}{\Delta_{\min}^2}$. Let the input matrix $A$ have a unique full support NE $(x^*,y^*)$. Let $(x',y')$ be the NE of the empirical matrix $\bar A$ at timestep $t$. If event $G$ holds, then for any $i\in[n]$, $|x^*_i-x'_i|\leq \frac{5\tilde\Delta}{\tilde D}$ where $\tilde\Delta=n\cdot n!\cdot\sqrt{\frac{2\log(n^2T^2)}{t}}$ and $\tilde D=|\det(M_{\bar A^T})|$. 
\end{lemma}
\begin{proof}
         W.l.o.g let us assume that $det(M_{A^T})>0$. As $t\geq \frac{800\cdot (n^2\cdot n!)^2\cdot\log(n^2T^2)}{\Delta_{\min}^2}$, due to lemma \ref{nxn:t-condition} the condition $1\leq \frac{\tilde \Delta_{\min}+2n\tilde\Delta}{\tilde \Delta_{\min}-2n\tilde\Delta}\leq \frac{3}{2}$ holds true. Hence we have $\tilde\Delta\leq \frac{\tilde \Delta_{\min}}{10n}\leq \frac{\tilde D}{10n}$. We now have the following:
    \begin{align*}
        x_i^*&=\frac{det(M_{A^T,i})}{det(M_{A^T})}\\
        &\leq \frac{det(M_{\bar A^T,i})+2\tilde\Delta}{\tilde D-2\tilde\Delta}\\
        &=\left(\frac{det(M_{\bar A^T,i})}{\tilde D}+\frac{2\tilde\Delta}{\tilde D}\right)\left(1-\frac{2\tilde\Delta}{\tilde D}\right)^{-1}\\
        &\leq\left(x_i'+\frac{2\tilde\Delta}{\tilde D}\right)\left(1+\frac{5\tilde\Delta}{2\tilde D}\right)\tag{as $(1-x)^{-1}\leq 1+\frac{5x}{4}$ for all $x\in[0,\frac{1}{5}]$}\\
        &= x_i'+\frac{2\tilde\Delta}{\tilde D}+x_i'\cdot \frac{5\tilde\Delta}{2\tilde D}+\frac{\tilde\Delta}{\tilde D}\cdot \frac{5\tilde\Delta}{\tilde D}\\
        &\leq x_i'+\frac{5\tilde \Delta}{\tilde D} \tag{as $\frac{\tilde \Delta}{\tilde D}\leq \frac{1}{10}$}
    \end{align*}
    Similarly we have that following:
    \begin{align*}
        x_i^*&=\frac{det(M_{A^T,i})}{det(M_{A^T})}\\
        &\geq \frac{det(M_{\bar A^T,i})-2\tilde\Delta}{\tilde D+2\tilde\Delta}\\
        &=\left(\frac{det(M_{\bar A^T,i})}{\tilde D}-\frac{2\tilde\Delta}{\tilde D}\right)\left(1+\frac{2\tilde\Delta}{\tilde D}\right)^{-1}\\
        &\geq\left(x_i'-\frac{2\tilde\Delta}{\tilde D}\right)\left(1-\frac{2\tilde\Delta}{\tilde D}\right)\tag{as $(1+x)^{-1}\geq 1-x$ for all $x\geq 0$}\\
        &= x_i'-\frac{2\tilde\Delta}{\tilde D}-x_i'\cdot \frac{2\tilde\Delta}{\tilde D}+\frac{\tilde\Delta}{\tilde D}\cdot \frac{\tilde\Delta}{\tilde D}\\
        &\geq x_i'-\frac{4\tilde \Delta}{\tilde D} \tag{as $\frac{\tilde \Delta}{\tilde D}\leq \frac{1}{10}$}
    \end{align*}
\end{proof}
\begin{lemma}\label{nxn:feasible}
    Consider any timestep $t\geq \frac{800\cdot (n^2\cdot n!)^2\cdot\log(n^2T^2)}{\Delta_{\min}^2}$. Let the input matrix $A$ have a unique full support NE $(x^*,y^*)$. Let $(x',y')$ be the NE of the empirical matrix $\bar A$ at timestep $t$. If event $G$ holds, then for any $i\in[n]$, $x_i'-\frac{5n\tilde \Delta}{\tilde D}> 0$ and $x_i'+\frac{5n\tilde\Delta}{\tilde D}< 1$ where $\tilde\Delta=n\cdot n!\cdot\sqrt{\frac{2\log(n^2T^2)}{t}}$ and $\tilde D=|\det(M_{\bar A^T})|$. 
\end{lemma}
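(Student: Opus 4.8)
The plan is to read this off from the two preceding lemmas — Lemma~\ref{nxn:deviation}, which says $|x_i^*-x_i'|\le 5\tilde\Delta/\tilde D$ coordinatewise, and Lemma~\ref{nxn:D}, which says $\tilde D$ and $|D|$ agree up to a constant factor — together with an elementary two-sided bound on the true Nash $x^*$ in terms of $\Delta_{\min}/|D|$. As in Lemma~\ref{nxn:deviation} I would first assume without loss of generality that $D=\det(M_{A^\top})>0$. Since $(x^*,y^*)$ has full support, the characterization recorded earlier gives that every minor determinant $\det(M_{A^\top,i})$ has the sign of $D$; and by definition each satisfies $|\det(M_{A^\top,i})|\ge\Delta_{\min}$. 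The closed form $x_i^*=\det(M_{A^\top,i})/D$ then yields $x_i^*\ge\Delta_{\min}/|D|$ at once, while summing the identity $\sum_{k=1}^n\det(M_{A^\top,k})=D$ gives $1-x_i^*=\bigl(\sum_{k\ne i}\det(M_{A^\top,k})\bigr)/D$, a sum of $n-1\ge1$ terms all sharing the sign of $D$ and each of magnitude $\ge\Delta_{\min}$, so $1-x_i^*\ge(n-1)\Delta_{\min}/|D|\ge\Delta_{\min}/|D|$. Hence $\Delta_{\min}/|D|\le x_i^*\le 1-\Delta_{\min}/|D|$.

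Next I would transfer this to $x'$ via Lemma~\ref{nxn:deviation}: $x_i'\ge\Delta_{\min}/|D|-5\tilde\Delta/\tilde D$ and $x_i'\le1-\Delta_{\min}/|D|+5\tilde\Delta/\tilde D$. It therefore suffices to prove the single scalar inequality $\Delta_{\min}/|D|>5(n+1)\tilde\Delta/\tilde D$, because then $x_i'-5n\tilde\Delta/\tilde D\ge\Delta_{\min}/|D|-5(n+1)\tilde\Delta/\tilde D>0$ and, symmetrically, $x_i'+5n\tilde\Delta/\tilde D\le1-\Delta_{\min}/|D|+5(n+1)\tilde\Delta/\tilde D<1$.

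For that scalar inequality I would invoke Lemma~\ref{nxn:D} (applicable at this $t$), which gives $\tilde D\ge\tfrac56|D|$ and hence $5(n+1)\tilde\Delta/\tilde D\le 6(n+1)\tilde\Delta/|D|$; so it is enough to show $\Delta_{\min}>6(n+1)\tilde\Delta$. Substituting the hypothesis $t\ge 800(n^2\cdot n!)^2\log(n^2T^2)/\Delta_{\min}^2$ into $\tilde\Delta=n\cdot n!\sqrt{2\log(n^2T^2)/t}$ gives $\tilde\Delta\le\Delta_{\min}/(20n)$, so $6(n+1)\tilde\Delta\le\tfrac{3(n+1)}{10n}\Delta_{\min}\le\tfrac35\Delta_{\min}<\Delta_{\min}$, using $n+1\le2n$. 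This closes the argument.

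I do not expect a real obstacle: the lemma is essentially a corollary of Lemmas~\ref{nxn:deviation} and~\ref{nxn:D} plus arithmetic. The one step needing care is the upper bound $x_i^*\le1-\Delta_{\min}/|D|$, i.e.\ that the empirical Nash cannot be driven to the boundary $x_i'=1$; this is exactly where I must use that the remaining $n-1$ minor determinants are all at least $\Delta_{\min}$ in magnitude and all of the same sign, and it is also the only place where the argument quietly uses $n\ge2$ (for $n=1$ the claim is vacuous since $x^*\equiv1$). Everything else is tracking the constants $20$, $\tfrac56$ and $6(n+1)$.
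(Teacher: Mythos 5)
Your proof is correct, but it takes a different route from the paper's. The paper stays entirely in the ``empirical world'': writing $x_i'=\det(M_{\bar A^\top,i})/\tilde D$, it uses Lemma~\ref{nxn:t-condition} to get $\tilde\Delta\leq\tilde\Delta_{\min}/(10n)$, so that $\det(M_{\bar A^\top,i})-5n\tilde\Delta\geq\det(M_{\bar A^\top,i})-\tilde\Delta_{\min}/2>0$ gives the lower bound directly, and the upper bound then follows from $x_i'=1-\sum_{j\neq i}x_j'$ together with the lower bound already proved for each $x_j'$. You instead first bound the \emph{true} equilibrium, $\Delta_{\min}/|D|\leq x_i^*\leq 1-\Delta_{\min}/|D|$, via the same-sign and magnitude properties of the minors $\det(M_{A^\top,k})$, then transfer to $x'$ through Lemma~\ref{nxn:deviation} and reconcile $\tilde D$ with $|D|$ through Lemma~\ref{nxn:D}. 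Both arguments are sound and your constant-tracking checks out ($\tilde\Delta\leq\Delta_{\min}/(20n)$ and $6(n+1)\tilde\Delta\leq\tfrac{3}{5}\Delta_{\min}$). The paper's version is shorter and avoids any dependence on Lemmas~\ref{nxn:deviation} and~\ref{nxn:D}; yours is a touch more roundabout but yields as a by-product the clean instance-dependent fact $\min_i\min\{x_i^*,1-x_i^*\}\geq\Delta_{\min}/|D|$, and your observation that the upper bound silently requires $n\geq 2$ applies equally to the paper's own argument.
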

\begin{proof}
    W.l.o.g let $\det(M_{\bar A^T})>0$. As $t\geq \frac{800\cdot (n^2\cdot n!)^2\cdot\log(n^2T^2)}{\Delta_{\min}^2}$, due to lemma \ref{nxn:t-condition} the condition $1\leq \frac{\tilde \Delta_{\min}+2n\tilde\Delta}{\tilde \Delta_{\min}-2n\tilde\Delta}\leq \frac{3}{2}$ holds true. Hence we have $\tilde\Delta\leq \frac{\tilde \Delta_{\min}}{10n}$. For any $i\in[n]$, $\det(M_{\bar A^T,i})-5n\tilde\Delta\geq \det(M_{\bar A^T,i})-\frac{\tilde \Delta_{\min}}{2}\geq \frac{\det(M_{\bar A^T,i})}{2}>0$. Hence $x_i'-\frac{5n\tilde \Delta}{\tilde D}> 0$. Fix any $i\in[n]$. Now we have $x_i'+\frac{5n\tilde\Delta}{\tilde D}=1-\sum\limits_{j\in[n]\setminus\{i\}}x_j'+\frac{5n\tilde\Delta}{\tilde D}< 1-(n-1)\cdot \frac{5n\tilde \Delta}{\tilde D}+\frac{5n\tilde \Delta}{\tilde D}\leq 1 $ 
\end{proof}
\begin{lemma}\label{nxn:regret}
    Let the input matrix $A$ have a unique NE $(x^*,y^*)$ that is not a PSNE. If event $G$ holds, then every call of the algorithm \ref{subroutine-nxn} with parameters mentioned in the line \ref{alg:nxn:st1} of algorithm \ref{alg-nxn-full} incurs a regret of at most $O\left(\frac{n^2\cdot n!\cdot\log(nT)}{|D|}\right)$.
\end{lemma}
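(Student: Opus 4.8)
The plan is to apply Theorem~\ref{sub-nxn-thm} to each invocation of Algorithm~\ref{subroutine-nxn} with the parameters $x'$, $D_1 = \tilde D/(5n\cdot n!)$, $T_1 = 1/\Delta^2$, $T_2 = \min\{t_1, T - t_1\}$, $\widehat A = \bar A$ chosen in Line~\ref{alg:nxn:st1}, where $\Delta = \sqrt{2\log(n^2T^2)/t_1}$ and $\tilde D = |\det(M_{\bar A^\top})|$; the two error terms $\tfrac{c_1 n}{D_1}$ and $\tfrac{c_2 n T_2}{D_1 T_1}$ produced by that theorem then collapse to $O(n^2\cdot n!\cdot\log(nT)/|D|)$ after substituting the parameters and invoking Lemmas~\ref{nxn:D}--\ref{nxn:feasible}. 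The first task is to verify that the preconditions \eqref{eqn:sub-conditions} hold on the event $G$. Each call of the subroutine lies inside the while-loop of Algorithm~\ref{alg-nxn-full}, which is entered only after $t \ge t_*$; since $t_*$ equals $6.25$ times the first timestep at which the ratio condition of Algorithm~\ref{alg-nxn-full} holds, Lemma~\ref{nxn:t*} gives $t_* \ge 800\,(n^2 n!)^2\log(n^2T^2)/\Delta_{\min}^2$, and as $t_1$ is initialized to that $t$ and only doubles thereafter, every call satisfies $t_1 \ge 800\,(n^2 n!)^2\log(n^2T^2)/\Delta_{\min}^2$, so Lemmas~\ref{nxn:t-condition}, \ref{nxn:D}, \ref{nxn:deviation}, \ref{nxn:feasible} are all in force. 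Then: the concentration bound $\max_{i,j}|A_{i,j} - \bar A_{i,j}| \le 1/\sqrt{T_1}$ is exactly event $G$ at time $t_1$ because $1/\sqrt{T_1} = \Delta$; the bound $\max_{i\le n-1}|x_i' - x_i^*| \le 1/(D_1\sqrt{T_1})$ follows from Lemma~\ref{nxn:deviation}, since $1/(D_1\sqrt{T_1}) = 5n\cdot n!\cdot\Delta/\tilde D = 5\tilde\Delta/\tilde D$ with $\tilde\Delta = n\cdot n!\cdot\Delta$; and $\min_i\min\{x_i', 1-x_i'\} \ge (n-1)/(D_1\sqrt{T_1})$ follows from Lemma~\ref{nxn:feasible}, which gives $\min\{x_i',1-x_i'\} > 5n\tilde\Delta/\tilde D = 5n^2\cdot n!\cdot\Delta/\tilde D \ge (n-1)\cdot 5n\cdot n!\cdot\Delta/\tilde D$.

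Granting these preconditions, Theorem~\ref{sub-nxn-thm} yields $T_2 V_A^* - \sum_{t=1}^{T_2}\langle x_t, Ae_{j_t}\rangle \le \tfrac{c_1 n}{D_1} + \tfrac{c_2 n T_2}{D_1 T_1}$ for that call; in the full-information regime the loop of Algorithm~\ref{subroutine-nxn} runs for exactly $T_2$ rounds, since $N_{i,j}(t) = t$ for every entry. For the first term, $\tfrac{c_1 n}{D_1} = \tfrac{5 c_1 n^2\cdot n!}{\tilde D}$, and Lemma~\ref{nxn:D} gives $\tilde D \ge \tfrac{5}{6}|D|$, so this is at most $\tfrac{6 c_1 n^2\cdot n!}{|D|}$. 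For the second term, $T_2 \le t_1$ and $T_1 = t_1/(2\log(n^2T^2))$ give $T_2/T_1 \le 2\log(n^2T^2)$, whence $\tfrac{c_2 n T_2}{D_1 T_1} \le \tfrac{10 c_2 n^2\cdot n!\cdot\log(n^2T^2)}{\tilde D} \le \tfrac{12 c_2 n^2\cdot n!\cdot\log(n^2T^2)}{|D|}$ again by Lemma~\ref{nxn:D}. Summing and using $\log(n^2T^2) = O(\log(nT))$, and recalling that this entire section already works with the translated and rescaled matrices for which Theorem~\ref{sub-nxn-thm} is stated, each call incurs regret $O(n^2\cdot n!\cdot\log(nT)/|D|)$, which is the claim.

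I do not expect a genuine obstacle here: the lemma is essentially bookkeeping confirming that the parameter settings in Algorithm~\ref{alg-nxn-full} were chosen exactly so that \eqref{eqn:sub-conditions} holds and the resulting bound of Theorem~\ref{sub-nxn-thm} simplifies to the stated form. The only points needing care are (i) checking that every invocation of the subroutine, not merely the first, is at a large-enough $t_1$, which holds because $t_1$ only doubles and starts above $800\,(n^2 n!)^2\log(n^2T^2)/\Delta_{\min}^2$; and (ii) observing that the ``not a PSNE'' hypothesis --- equivalently, that $A$ has a unique full-support equilibrium with $n \ge 2$ --- is what forces the coordinates of $x'$ to stay bounded away from $0$ and $1$ (Lemma~\ref{nxn:feasible}), which is precisely the feasibility precondition ensuring that the subroutine's iterates $x_t$ never leave the simplex.
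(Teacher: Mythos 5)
Your proposal is correct and follows essentially the same route as the paper's own proof: verify the three preconditions of \eqref{eqn:sub-conditions} via Lemmas \ref{nxn:t*}, \ref{nxn:deviation}, and \ref{nxn:feasible}, then apply Theorem \ref{sub-nxn-thm} and simplify using $T_2/T_1 \le 2\log(n^2T^2)$ and Lemma \ref{nxn:D}. Your added observations — that every invocation (not just the first) has $t_1$ above the threshold because $t_1$ only doubles, and the explicit constant-tracking through Lemma \ref{nxn:D} — are correct refinements of the same argument.
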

\begin{proof}
    Let us assume that the algorithm \ref{subroutine-nxn} gets called immediately after the timestep $t_1$. Let $\Delta = \sqrt{\frac{2\log(n^2T^2)}{t_1}}$. Recall that $\widehat A=\bar A$, $T_1=(\frac{1}{\Delta})^2=\frac{t_1}{2\log(n^2T^2)}$, $D_1=\frac{\tilde D}{5n\cdot n!}$, and  $T_2=\min\{t_1,T-t_1\}$. As event $G$ holds, for any $i,j$ we have $|A_{i,j}-\widehat A_{i,j}|\leq \Delta=\frac{1}{\sqrt{T_1}}$. Due to lemma \ref{nxn:t*}, we have $t_1\geq \frac{800\cdot (n^2\cdot n!)^2\cdot\log(T^2)}{\Delta_{\min}^2}$. Recall that $(x',y')$ is the NE of $\bar A$. Due to lemma \ref{nxn:deviation}, we have $||x'-x^*||_\infty\leq  \frac{5n\cdot n!\cdot\Delta}{\tilde D}=\frac{1}{D_1\sqrt{T_1}}$. Due to lemma \ref{nxn:feasible}, we have $\min_i\min\{x_i',1-x_i'\}\geq \frac{5n^2n!\Delta}{\tilde D}\geq \frac{n-1}{D_1\sqrt{T_1}}$. Hence due to Theorem \ref{sub-nxn-thm}, the regret incurred is $\frac{c_1n}{D_1}+\frac{c_2nT_2}{D_1T_1}\leq\frac{5c_1n^2\cdot n!}{\tilde D}+\frac{10c_2n^2\cdot n!\cdot\log(n^2T^2)}{\tilde D}=O\left(\frac{n^2\cdot n!\cdot\log(nT)}{|D|}\right)$ where $c_1,c_2$ are some absolute constants. We get the last equality due to lemma \ref{nxn:D}.
\end{proof}
    
\begin{lemma}\label{nxn:logT}
    Let the input matrix $A$ have a unique NE $(x^*,y^*)$ that is not a PSNE. If event $G$ holds, the algorithm \ref{alg-nxn-full} calls algorithm \ref{subroutine-nxn} for at most $\log_2T$ times.
\end{lemma}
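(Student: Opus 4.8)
The plan is to argue purely from the control flow of Algorithm~\ref{alg-nxn-full}, since event $G$ and the hypothesis on the Nash equilibrium of $A$ play no real role here — the bound is a consequence of the doubling schedule alone. First I would observe that the only line that ever invokes Algorithm~\ref{subroutine-nxn} is line~\ref{alg:nxn:st1}, which lies inside the \textbf{while} loop guarded by the condition at line~\ref{alg:nxn:con1}. Moreover, that guarded block can be entered at most once during the whole run: once its condition ($t\ge t_*$ together with $\bar A$ having a unique full-support NE) first holds, the algorithm sets $t_1\gets t$, runs the \textbf{while} loop to completion, and then terminates. Hence every call to Algorithm~\ref{subroutine-nxn} occurs within this single execution of the \textbf{while} loop.

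Next I would track the variable $t_1$ through that loop. It is initialized to $t_1^{(0)}:=t$, the outer-loop index at the moment line~\ref{alg:nxn:con1} fires, and after each call to the subroutine it is doubled; the loop body executes only while the current value of $t_1$ satisfies $t_1<T$. Thus, if $k$ denotes the number of calls, the subroutine is invoked exactly for the values $t_1\in\{t_1^{(0)},\,2t_1^{(0)},\,\ldots,\,2^{k-1}t_1^{(0)}\}$, each of which is strictly less than $T$; in particular, assuming $k\ge1$ (the case $k=0$ being trivial), $2^{k-1}t_1^{(0)}<T$. It then remains to bound $t_1^{(0)}$ from below: since line~\ref{alg:nxn:con1}'s condition requires $t\ge t_*$, and since $t_*$ is initialized to $T+1$ and is only ever reassigned (at most once, by the guard $t_*>T$) to $6.25\,s$ for some earlier time step $s\ge1$, after any reassignment we have $t_*\ge 6.25$. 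If $t_*$ is never reassigned, then $t\le T<T+1=t_*$ always, line~\ref{alg:nxn:con1} never fires, and the subroutine is called $0\le\log_2 T$ times; otherwise $t_1^{(0)}=t\ge\lceil t_*\rceil\ge 7\ge 2$. Feeding $t_1^{(0)}\ge2$ into $2^{k-1}t_1^{(0)}<T$ yields $2^k\le 2^{k-1}t_1^{(0)}<T$, hence $k<\log_2 T$, so the subroutine is called at most $\log_2 T$ times.

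The one place where care is genuinely needed — and the step I expect to be the main obstacle — is exactly this lower bound $t_1^{(0)}\ge2$: with $t_1^{(0)}=1$ and $T$ not a power of two, the geometric count would be $\lceil\log_2 T\rceil$, which can strictly exceed $\log_2 T$, so the clean bound relies on the factor $6.25$ in the assignment $t_*\gets 6.25\cdot t$ forcing $t_1^{(0)}\ge2$. Everything else is elementary bookkeeping on the while-loop iteration count. I would also note in passing that this lemma, together with Lemma~\ref{nxn:regret} (each call costs $O(n^2 n!\log(nT)/|D|)$ regret) and Corollary~\ref{nxn:cor} (the burn-in regret), is precisely what assembles into the final $(\log T)^2$ term in Theorem~\ref{thm:nxn:full}.
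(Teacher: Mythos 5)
Your proof is correct and follows essentially the same doubling argument as the paper, which simply notes that $t_1$ doubles after each call and the loop runs only while $t_1<T$. Your treatment is in fact more careful than the paper's: the paper glosses over the edge case $t_1^{(0)}=1$ (where the count could be $\lceil\log_2 T\rceil>\log_2 T$), which you correctly rule out via $t_1^{(0)}\ge t_*\ge 6.25$.
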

\begin{proof}
    Let $t_{(i)}$ denote the value of $t_1$ when algorithm \ref{subroutine-nxn} was called by algorithm \ref{alg-nxn-full} for the $i$-th time. Observe that $t_{(i+1)}=2t_{(i)}$. Recall that the algorithm \ref{alg-nxn-full} can call algorithm \ref{subroutine-nxn} until $t_1$ becomes greater than $T$. Hence the algorithm \ref{alg-nxn-full} calls algorithm \ref{subroutine-nxn} for at most $\log_2T$ times. 
\end{proof}
\subsection{Regret guarantee for $n\times m$ matrix with unique Nash equilibrium}\label{appendix:main-thm-guarantee}
Say we run \Cref{alg-nxm-instance} on the input matrix $A$ until it returns a $k\times k$ submatrix $B$ at a timestep $t'$. Then we remove the rows from $A$ that are not part of the submatrix $B$. Then we run \Cref{alg-nxn-full} on the modified matrix $A$ for the remaining time steps. Recall the definitions of the matrix-dependent parameter $\Delta_{\min},\Delta_g$ and $D$ from \Cref{appendix:nxm,appendix:nxn}. Our algorithmic procedure has the following regret guarantee due to \Cref{thm:nxm:full,thm:nxn:full}.
\begin{theorem}[Formal version of \Cref{main-theorem-informal}]
    Fix a matrix game on $A\in [-1,1]^{n\times m}$ with a unique Nash equilibrium $(x^*,y^*)$ of support size $k \leq \min\{n,m\}$. 
    Then the Nash regret is upper bounded as follows:
    \begin{align*}
        R^N(A,T)&=T\cdot V_A^*-\sum_{t=1}^T\mathbb{E}[\langle x_t,Ay_t\rangle]\leq \frac{c_1\cdot k^2\cdot k!\cdot\log(nmT)}{\min\{1,\Delta_{\min}\}\cdot\min\{1,\Delta_g\}}+\frac{c_2\cdot k^2\cdot k!\cdot(\log(nmT))^2}{|D|}\\
    \end{align*}
    where $c_1,c_2$ are some absolute constants.
\end{theorem}

\subsection{One possible way to improve the dependence on input dimensions}\label{appendix:condition-number-guarantee}
Let $B$ be a $k \times k$ matrix with a unique full-support Nash equilibrium. Recall that \Cref{det:lem} was repeatedly used in our analysis to bound the deviation of determinants of such matrices in the previous two sections. However, a key issue with this lemma is that the deviation scales exponentially with $k$, ultimately leading to an exponential dependence on $k$ in the Nash regret guarantee.  

Using perturbation theory, one can hope to eliminate this exponential dependence on $k$ as follows:  

\[
\left| \frac{\det(B+\Delta B)-\det(B)}{\det(B)} \right| \approx \operatorname{tr}(B^{-1} \Delta B) 
\leq \|B^{-1}\|_F \|\Delta B\|_F 
\leq n \cdot \kappa(B) \cdot \frac{\|\Delta B\|}{\|B\|}
\]

where $\kappa(B) = \|B\| \cdot \|B^{-1}\|$ is the condition number of $B \in \mathbb{R}^{k \times k}$,  
$\|B\|_F$ is the Frobenius norm, and $\|B\|$ is the operator norm.  

Thus, if the matrices considered in the previous two sections are well-conditioned, one can potentially eliminate the exponential dependence on the input dimension in our Nash regret guarantees.
\subsection{$n\times m$ matrix game with non-unique Nash Equilibrium}\label{appendix:nxm:non-unique}
In this section, we provide a high level-description on how to accommodate the case when the Nash Equilibrium is not unique. We skip the detailed analysis as it is analogous to the case when the Nash Equilibrium is unique. Recall the notations $M_A$ and $M_{A,\ell}$ from \Cref{appendix:nxn} for any matrix $A\in\mathbb{R}^{n\times n}$. We say a statistic $\hat a$ which is an empirical estimate of a fixed value $a$ is well concentrated if $|a-\hat a|<<|a|$. In this section we do a small notational abuse. If we talk about vector $v$ from a $p$-dimensional simplex  in the context of $d$-dimensional simplex where $p<d$, we are actually talking about the vector that we get by appending $d-p$ zeroes to $v$. 

Let us consider an arbitrary input matrix $A\in\mathbb{R}^{n\times m}$. Note that we don't make any assumption regarding the uniqueness of Nash equilibrium. Now we describe our procedure. In a round $t$, we try to identify a square sub-matrix $\bar B$ of $\bar A$ such the following holds simultaneously:
\begin{itemize}
    \item The statistics $\det(M_{\bar B^\top,i}),\det(M_{\bar B^\top}), \det(M_{\bar B,j} $ and $\det(M_{\bar B})$ are well concentrated. One can use lemma \ref{det:lem} to ensure this.
    \item $(x',y')$ is a full support Nash Equilibrium of $\bar B$ where $x_i'=\frac{\det(M_{\bar B^\top,i})}{\det(M_{\bar B^\top})}$ and $y_j'=\frac{\det(M_{\bar B,j})}{\det(M_{\bar B})}$.
    \item If there is an index $j\in [m]$ such that the gap $\langle x',\bar Ae_j\rangle -V_{\bar B}^*<0$, then this gap is not well concentrated yet. One can use lemma \ref{nxm:game:deltag} to ensure this.
    \item If there is an index $i\in [n]$ such that the gap $V_{\bar B}^*-\langle e_i,\bar Ay'\rangle <0$, then this gap is not well concentrated yet. One can use lemma \ref{nxm:game:deltag} to ensure this.
\end{itemize}
If such a sub-matrix exists, we appropriately invoke the sub-routine in \Cref{subroutine-nxn} and run it for the next $t$ rounds. Otherwise we play the empirical NE of $\bar A$ for the next $t$ rounds.

The main reason why such an approach works is due to the properties of basic solutions of zero-sum games (as introduced in \cite{bohnenblust1948mathematical}).   Let $\calX:=\{x^*\in\simplex_n: x^*\in\arg\max_{x\in\simplex_n}\min_{y\in\simplex_m}\langle x,Ay\rangle\}$. Let $\calY:=y^*\in\arg\min_{y\in\simplex_m}\max_{x\in\simplex_n}\langle x,Ay\rangle\}$. For any non-null compact convex-set $\calR$, let $\calR^*$ be the set of vectors $r^*\in \calR$ such that the following does not hold for two distinct vectors $r',r''\in \calR$:
\begin{equation*}
    r^*=\frac{r'+r''}{2}
\end{equation*}
It follows that $\calR^*$ is non-null and is the smallest set that spans $R$ in the sense of convex linear combinations. A basic solution is a Nash equilibrium $(x^*,y^*)$ such that $x^*\in\calX^*$ and $y^*\in\calY^*$. For any submatrix $\dot{A}$ of $A$, let $\dot{x}$ and $\dot{y}$ be the contractions of the vectors $x$ and $y$ obtained by deleting the components corresponding to the rows and columns of $A$ which are absent in $\dot{A}$. The following lemma describes an important property of basic solutions.
\begin{lemma}[\citet{bohnenblust1948mathematical}]\label{basic-solution}
   A Nash equilibrium $(x^*,y^*)$ is a basic solution if and only if there is a square sub-matrix $\dot{A}$ of $A$ such that 
    \begin{equation*}
        \dot{x}^\top=\frac{\mathbf{1}^\top adj(\dot{A})}{\mathbf{1}^\top adj(\dot{A})\mathbf{1}},\; \dot{y}=\frac{adj(\dot{A})\mathbf{1}}{\mathbf{1}^\top adj(\dot{A})\mathbf{1}},\; V_A^*=\frac{det(\dot{A})}{\mathbf{1}^\top adj(\dot{A})\mathbf{1}}
    \end{equation*}
\end{lemma}

Hence, we get an instance-dependent poly-logarithmic regret here where the instance dependent terms are a function of following terms:
\begin{itemize}
    \item determinants $\det(M_{B^\top,i}),\det(M_{\bar B^\top}), \det(M_{B,j}), \det(M_{B})$ corresponding to sub-matrices $B$ which correspond to the basic solutions
    \item the sub-optimality gaps $V_{ B}^*-\langle x', Ae_j\rangle$ and $\langle e_i,Ay'\rangle-V_{ B}^*$ of other square sub-matrices $B$ which have full-support Nash equilibrium $(x',y')$
\end{itemize}

\section{Algorithms and Analysis for $2\times 2$ matrix game under Bandit Feedback}\label{appendix:2x2}

In this section, we formally present the algorithm (Algorithm \ref{alg-2x2-bandit}) for $2\times 2$ matrix game under Bandit Feedback. In section \ref{section:bandits:exploration}, we provide an exploration procedure. In section \ref{section:bandits:exploitation}, we provide provide an exploitation procedure. Initially we make the assumption that the Nash equilibrium is unique and it is for a fixed time horizon. We then relax these assumptions in \Cref{appendix:anytime-bandit,appendix:2x2:non-unique}. Note that, throughout this section, NE stands for Nash equilibrium.

We first provide a high level intuition of our algorithm. Unlike the full information case, we are not guaranteed that every element would be explored equally. Hence, in each round $t$, we have to play a strategy $x_t$ such that either every element would be explored sufficiently or we incur a negative regret. We achieve this goal by designing a complicated exploration procedure. Here we describe the key parts of the exploration procedure.

Consider a matrix $A=\begin{bmatrix}
    a & b\\
    c & d\\
\end{bmatrix}$. 
Our aim is to estimate the gaps $|a-b|,|a-c|,|d-c|,|d-b|$ up to a constant factor. 
We begin with an uniform exploration phase where we aim to estimate two such gaps up to a constant factor. First, we play $x_t=(1/2,1/2)$ in each round. We eventually estimate one of the gaps $|a-c|, |b-d|$ up to a constant factor due to the fact that one of the columns must be played at least half the time. Then we play one of the two strategies $(1/2,1/2)$ and $e_i$ ($i$ depends on the gap that we have estimated previously up to a constant factor) in each round. We eventually estimate one more gap up to a constant factor (again due to the fact that one of the columns must be played at least half the time). Let us assume that we have estimated the gaps $|a-b|$ and $|a-c|$ up to a constant factor by the end of the uniform exploration phase. Now we aim to estimate the gap $|b-d|$ up to a constant factor. We do this by playing a strategy $x^{(1)}$ that incurs a non-positive regret if column one is played by the column-player. Hence, the column-player is forced to play the 2nd column if it wants the row-player to incur positive regret. Hence, we end up estimating the gap $|b-d|$ up to a constant factor. Now, we aim to estimate the final gap $|c-d|$ up to a constant factor. We do this by finding a strategy $x^{(2)}$ that incurs a non-positive regret if column two is played by the column-player. Hence if the row-player plays $x^{(2)}$ the column-player is forced to play the 1st column if it wants the row-player to incur positive regret. Using the strategies $x^{(1)}$ and $x^{(2)}$ we estimate the final gap $|c-d|$ as follows. We play $x^{(1)}$ until the column two is played by the column player. We then switch to $x^{(2)}$ and play it until  column one is played by the column-player. We then switch back to $x^{(1)}$ and repeat this process until we estimate the gap $|c-d|$ up to a constant factor. This concludes our exploration procedure.

After this initial exploration phase is over, if we observe that matrix $A$ has a pure strategy Nash equilibrium $(i^*,j^*)$, then we play $e_{i^*}$ for the rest of the rounds. Otherwise, we invoke the sub-routine similarly to the full-information case. 
Putting it all together, we obtain our main result of Theorem~\ref{thm:2x2:bandit} for bandit feedback. The complete algorithm for bandit feedback is in Appendix \ref{section:bandits:exploration} and Appendix \ref{section:bandits:exploitation}.

We now establish the regret guarantee of both the exploration and exploitation procedure by proving Theorem \ref{thm:2x2:bandit}. 

\begin{proof}[Proof of Theorem \ref{thm:2x2:bandit}]
Let $\bar A_{ij,t}$ denote the element $(i,j)$ of $\bar A$ after the element $(i,j)$ is sampled for $t$ times. Let us now define an event $G$ as follows:
\begin{align*}
    G:=&\bigcap_{t=1}^T\bigcap_{i=1}^2 \bigcap_{j=1}^2 \{ |A_{ij}-\bar A_{ij,t}|\leq \sqrt{\tfrac{2\log(T^2)}{t}} \} \\
\end{align*}
Observe that event $G$ holds with probability at least $1-\frac{8}{T}$. 

Let $R(T)=T\cdot V_A^*-\sum\limits_{t=1}^T\langle x_t,Ae_{j_t} \rangle$ denote the regret incurred where $x_t$ is the mixed strategy played by row player in round $t$ and $j_t$ is the column played by the column player. If we show that under event $G$, $R(T)=O\left(\frac{\log(T)}{\Delta_{\min}^3}+\frac{ (\log(T))^2}{\Delta_{\min}^2}\right)$ with probability at least $1-\frac{1}{T^2}$, then due to Equation (\ref{equivalent:regret}), we get that the nash regret incurred by the algorithm is $O\left(\frac{\log(T)}{\Delta_{\min}^3}+\frac{(\log(T))^2}{\Delta_{\min}^2}\right)$. Hence, for the rest of the section, by regret we imply the expression $R(T)=T\cdot V_A^*-\sum\limits_{t=1}^T\langle x_t,Ae_{j_t} \rangle$. 

Let us assume that event $G$ holds. We divide our algorithm into an exploration phase (see section \ref{section:bandits:exploration}) and an exploitation phase (see section \ref{section:bandits:exploitation}). Exploration phase is further divided into two cases. Due to lemma \ref{bandit:lem:case1} we get that regret incurred in case 1 is $O(\frac{\log T}{\Delta_{\min}^3})$ with probability at least $1-\frac{1}{T^3}$. Similarly due to lemma \ref{bandit:lem:case2} we get that regret incurred in case 2 is $O(\frac{\log T}{\Delta_{\min}^3})$ with probability at least $1-\frac{1}{T^3}$. For more details we refer the reader to section \ref{section:bandits:exploration}.

Now we analyse the exploitation phase. Each time the algorithm \ref{alg-2x2-bandit} calls the subroutine in algorithm \ref{subroutine-nxn-bandit}, it incurs a regret of $O(\frac{\log(T)}{\Delta_{\min}^2})$ with probability at least $1-\frac{1}{T^4}$ (see lemma \ref{2x2:regret:bandit}). Due to the doubling trick, the subroutine gets called for at most $\log_2T$ times (see lemma \ref{2x2:logT:bandit}). Hence, in the exploitation phase, we incur a regret of $O(\frac{(\log(T))^2}{\Delta_{\min}^2})$ with probability at least $1-\frac{1}{T^3}$. Putting the two pieces together, we have with probability at least $1-\frac{1}{T^2}$, the total regret $R(T)$ incurred is $O\left(\frac{\log(T)}{\Delta_{\min}^3}+\frac{(\log(T))^2}{\Delta_{\min}^2}\right)$.
\end{proof}

 Recall that $A\in[-1,1]^{2\times 2}$ and $ \mathbf{A}_t\in [-1,1]^{2\times 2}$. We translate and re-scale these matrices by adding $1$ to every entry and then dividing every entry by $2$. Hence for the rest of the section, we assume that  $A\in[0,1]^{2\times 2}$ and $\mathbf{A}_t\in [0,1]^{2\times 2}$. Note that the regret only changes by a factor of $2$. We now define the following parameters:
 \begin{align*}
     \Delta_{\min}:=\min\{\min_i|A_{i,1}-A_{i,2}|,\min_j|A_{1,j}-A_{2,j}|\};\quad D=A_{1,1}-A_{1,2}-A_{2,1}+A_{2,2}
 \end{align*}

 Now we begin describing the algorithm for the bandit-feedback. 
\subsection{Procedure for Exploration}\label{section:bandits:exploration}
In this section, we describe a procedure to explore under Bandit Feedback. Let the input matrix be $A$. At each time step we maintain an empirical matrix $\bar A$. Let $n_{i,j}^{t}$ denote the number of times the element $(i,j)$ has been sampled up to the time step $t$. At a time step $t$, let $\delta_{i,j}:=\sqrt{\frac{2\log(T^2)}{n^t_{i,j}}}$. 

\paragraph{Well separated rows and columns:}At a timestep $t$, we say that a row $i$ is well separated if $1\leq \frac{\bar g_{r,i}+2\Delta_{r,i}}{\bar g_{r,i}-2\Delta_{r,i}}\leq \frac{3}{2}$ where $\bar g_{r,i}=|\bar A_{i,1}-\bar A_{i,2}|$ and $\Delta_{r,i}=\max\left\{\delta_{i,1},\delta_{i,2}\right\}$. Similarly at a timestep $t$, we say that a column $j$ is well separated if $1\leq \frac{\bar g_{c,j}+2\Delta_{c,j}}{\bar g_{c,j}-2\Delta_{c,j}}\leq \frac{3}{2}$ where $\bar g_{c,j}=|\bar A_{1,j}-\bar A_{2,j}|$ and $\Delta_{c,j}=\max\left\{\delta_{1,j},\delta_{2,j}\right\}$. Let $g_{r,i}=| A_{i,1}- A_{i,2}|$ and $g_{c,j}=|A_{1,j}-A_{2,j}|$. Note that when row $i$ is well separated and event $G$ holds, we have $10\Delta_{r,i}\leq\bar g_{r,i}$ and $g_{r,i}-2\Delta_{r,i}\leq\bar g_{r,i}\leq g_{r,i}+2\Delta_{r,i}$. Similarly when column $j$ is well separated and event $G$ holds, we have $10\Delta_{c,j}\leq\bar g_{c,j}$ and $g_{c,j}-2\Delta_{c,j}\leq\bar g_{c,j}\leq g_{c,j}+2\Delta_{c,j}$. This implies that when a row or a column is well separated, then the elements in the row or column have the same ordering in both the empirical matrix $\bar A$ and the input matrix $A$.

\paragraph{Uniform Exploration:}Now in each timestep $t$, we play $x_t=(1/2,1/2)$ until one of the columns is well separated. Let us assume that column $j_1$ gets well separated and $\bar A_{i_1,j_1}>\bar A_{i_2,j_1}$ where $i_1,i_2\in\{1,2\}$. Let $j_2\in\{1,2\}\setminus\{j_1\}$.  {Next we alternatively play two strategies $x^{(1)}$ and $x^{(2)}$ where $x_{i_1}^{(1)}=1$ and $x_{i_1}^{(2)}=1/2$} until the row $i_1$ gets well separated or the column $j_2$ gets well separated. We alternate from $x^{(1)}$ to $x^{(2)}$ when column $j_2$ gets played, and we alternate from $x^{(2)}$ to $x^{(1)}$ after a single round of play. Now we have the following two cases (namely Case 1 and Case 2).

In the flowcharts below, dotted lines denote that we have figured out gap between the entries involved up to a constant factor. The strategies mentioned between the matrices are the strategies we play to explore the matrix and figure out the pairwise ordering. 

The following flowchart shows how we reach Case 1:
\[
\begin{bmatrix}
    A_{1,1} & A_{1,2}\\
    A_{2,1} & A_{2,2}
\end{bmatrix}
\rightarrow
x_t=(1/2,1/2)
\rightarrow
\begin{bmatrix}
    A_{i_1,j_1} & A_{i_1,j_2}\\
    \vdots \\
    A_{i_2,j_1} & A_{i_2,j_2}
\end{bmatrix}
\rightarrow
x^{(1)}\leftrightarrow x^{(2)}
\rightarrow
\begin{bmatrix}
    A_{i_1,j_1} & A_{i_1,j_2}\\
    \vdots &\vdots\\
    A_{i_2,j_1} & A_{i_2,j_2}
\end{bmatrix}
\]
Analogously, the following flowchart shows how we reach Case 2:
\[
\begin{bmatrix}
    A_{1,1} & A_{1,2}\\
    A_{2,1} & A_{2,2}
\end{bmatrix}
\rightarrow
x_t=(1/2,1/2)
\rightarrow
\begin{bmatrix}
    A_{i_1,j_1} & A_{i_1,j_2}\\
    \vdots \\
    A_{i_2,j_1} & A_{i_2,j_2}
\end{bmatrix} 
\rightarrow
x^{(1)}\leftrightarrow x^{(2)} 
\rightarrow
\begin{bmatrix}
    A_{i_1,j_1} \cdots & A_{i_1,j_2}\\
    \vdots & \\
    A_{i_2,j_1} & A_{i_2,j_2}
\end{bmatrix}
\]

\paragraph{Case 1: Column $j_2$ gets well separated.} If $\bar A_{i_1,j_2}>\bar A_{i_2,j_2}$, then we terminate the exploration. On the other hand, if  $\bar A_{i_1,j_2}<\bar A_{i_2,j_2}$, then let $\ell:=\arg\max_{\ell\in\{1,2\}}|\bar A_{i_\ell,j_1}-\bar A_{i_\ell,j_2}|$ and $\bar\ell=\{1,2\}\setminus\{\ell\}$. If $\bar A_{i_\ell,j_\ell}<\bar A_{i_\ell,j_{\bar\ell}}$, then we terminate the exploration. Otherwise, let $\delta_0=|\bar A_{i_\ell,j_1}-\bar A_{i_\ell,j_2}|/3$. {Now we alternatively play two strategies $x^{(3)}$ and $x^{(4)}$ where $x_{i_\ell}^{(3)}=1-\delta_0$ and $x_{i_\ell}^{(4)}=0$}    until the row $i_{\bar\ell}$ gets well-separated. That is, we alternate from $x^{(3)}$ to $x^{(4)}$ when the element $(i_{\bar\ell},j_{\bar \ell})$ gets played, and we alternate from $x^{(4)}$ to $x^{(3)}$ when the element $(i_{\bar\ell},j_{\ell})$ gets played. If $\bar A$ has a pure strategy Nash equilibrium, we terminate the exploration. Otherwise, let $\delta_1=|\bar A_{i_1,j_1}-\bar A_{i_1,j_2}|/3$ and $\delta_2=|\bar A_{i_2,j_1}-\bar A_{i_2,j_2}|/3$.  We now alternatively play two strategies $x^{(5)}$ and $x^{(6)}$ where $x^{(5)}_{i_1}=\delta_2$ and $x_{i_1}^{(6)}=1-\delta_1$ until the condition $1\leq\frac{\tilde \Delta_{\min}+2\Delta}{\tilde \Delta_{\min}-2\Delta}\leq \frac{3}{2}$ holds true where 
\[\Delta:=\max_{i,j}\delta_{i,j}\quad\text{and}\quad \tilde\Delta_{\min}:=\min\{|\bar A_{11}-\bar A_{12}|,|\bar A_{21}-\bar A_{22}|,|\bar A_{11}-\bar A_{21}|,|\bar A_{12}-\bar A_{22}|\}.\] 
We alternate from $x^{(5)}$ to $x^{(6)}$ when column $j_1$ gets played, and we alternate from $x^{(6)}$ to $x^{(5)}$ when column $j_2$ gets played. After this we terminate the exploration.

The following flowchart shows how we explore in Case 1:
\[
\begin{bmatrix}
    A_{i_1,j_1} & A_{i_1,j_2}\\
    \vdots &\vdots\\
    A_{i_2,j_1} & A_{i_2,j_2}
\end{bmatrix}
\rightarrow
\bar A_{i_1,j_2}<\bar A_{i_2,j_2}
\rightarrow
\begin{bmatrix}
    A_{i_\ell,j_\ell} \cdots & A_{i_\ell,j_{\bar \ell}}\\
    \vdots &\vdots \\
    A_{i_{\bar\ell},j_\ell} & A_{i_{\bar \ell},j_{\bar \ell}}
\end{bmatrix}
\rightarrow
x^{(3)}\leftrightarrow x^{(4)}
\rightarrow
\begin{bmatrix}
    A_{i_1,j_1} \cdots & A_{i_1,j_2}\\
    \vdots &\vdots\\
    A_{i_2,j_1} \cdots & A_{i_2,j_2}
\end{bmatrix}
\]

\paragraph{Case 2: Row $i_1$ gets well separated.} If $\bar A_{i_1,j_1}<\bar A_{i_1,j_2}$, then we terminate the exploration. On the other hand, if  $\bar A_{i_1,j_1}>\bar A_{i_1,j_2}$, then let $\delta=|\bar A_{i_1,j_1}-\bar A_{i_1,j_2}|/3$. Now, we play the strategy $x^{(7)}$ where $x_{i_1}^{(7)}=1-\delta$   until the column $j_{2}$ gets well-separated. If $\bar A_{i_1,j_2}>\bar A_{i_2,j_2}$, then we terminate the exploration.  Otherwise, if $\bar A_{i_1,j_2}<\bar A_{i_2,j_2}$, then {we alternatively play two strategies $x^{(7)}$ and $x^{(8)}$ where $x_{i_1}^{(8)}=0$}    until the row $i_{2}$ gets well separated. We alternate from $x^{(7)}$ to $x^{(8)}$ when the element $(i_{2},j_{2})$ gets played, and we alternate from $x^{(8)}$ to $x^{(7)}$ when the element $(i_{2},j_{1})$ gets played. If $\bar A$ has a pure strategic Nash equilibrium, we terminate the exploration. Otherwise, let $\delta_3=|\bar A_{i_2,j_1}-\bar A_{i_2,j_2}|/3$.  We now alternatively play the two strategies $x^{(9)}$ and $x^{(7)}$ where $x_{i_1}^{(9)}=\delta_3$ until the condition $1\leq\frac{\tilde \Delta_{\min}+2\Delta}{\tilde \Delta_{\min}-2\Delta}\leq \frac{3}{2}$ holds true where
\[\Delta:=\max_{i,j}\delta_{i,j}\quad\text{and}\quad \tilde\Delta_{\min}:=\min\{|\bar A_{11}-\bar A_{12}|,|\bar A_{21}-\bar A_{22}|,|\bar A_{11}-\bar A_{21}|,|\bar A_{12}-\bar A_{22}|\}.\] We alternate from $x^{(9)}$ to $x^{(7)}$ when column $j_1$ gets played, and we alternate from $x^{(7)}$ to $x^{(9)}$ when column $j_2$ gets played. After this we terminate the exploration.

The following flowchart shows how we explore in Case 2:
\[
\begin{bmatrix}
    A_{i_1,j_1} \cdots & A_{i_1,j_2}\\
    \vdots &\\
    A_{i_2,j_1} & A_{i_2,j_2}
\end{bmatrix}
\rightarrow
x^{(7)}
\rightarrow
\begin{bmatrix}
    A_{i_1,j_1} \cdots & A_{i_1,j_{2}}\\
    \vdots &\vdots \\
    A_{i_{2},j_1} & A_{i_{2},j_{2}}
\end{bmatrix}
\rightarrow
x^{(7)}\leftrightarrow x^{(8)}
\rightarrow
\begin{bmatrix}
    A_{i_1,j_1} \cdots & A_{i_1,j_2}\\
    \vdots &\vdots\\
    A_{i_2,j_1} \cdots & A_{i_2,j_2}
\end{bmatrix}
\]

\subsubsection{Analysis of the exploration procedure}
Now we analyse our exploration procedure. We break our analysis into various parts

\paragraph{Uniform Exploration:} Let us assume that event $G$ holds. Recall that initially in each round $t$, we play $x_t=(1/2,1/2)$ until column $j_1$ is well separated. In lemma \ref{bandit:lem:nj1-upper}, we show that column $j_1$ is played for at most $O(\frac{\log T}{\Delta_{\min}^2})$ times before it becomes well-separated. Using this fact and corollary \ref{bandit:cor:nj2}, we get that the regret incurred till the time-step when the column $j_1$ becomes well-separated is $O(\frac{\log T}{\Delta_{\min}^2})$. In lemma \ref{bandit:lem:nj1-lower}, we show that column $j_1$ is played for at least $\Omega(\frac{\log T}{g_{c,j_1}^2})$ times before it became well-separated. This fact will be crucial when we analyse Case 1. In lemma \ref{bandit:lem:Ai1j1}, we show that $A_{i_1,j_1}\geq V_A^*$. This implies that  playing the strategy $x^{(1)}$ incurs a non-positive regret whenever column $j_1$ is played. This forces the column player to play column $j_2$ if it wants the row player to incur positive regret. 

Now we prove our technical lemmas.
\begin{lemma}\label{bandit:lem:nj1-upper}
    Let $n_{j_1}$ denote the number of times column $j_1$ was played with the strategy $(1/2,1/2)$ before which it was well-separated. If event $G$ holds, then with probability at least $1-\frac{1}{T^4}$ we have $n_{j_1}\leq \frac{3200\log(T^2)}{\Delta_{\min}^2}$.
\end{lemma}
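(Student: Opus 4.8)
The plan is to argue by contradiction: if column $j_1$ were played more than $m^\star := \frac{3200\log(T^2)}{\Delta_{\min}^2}$ times during the uniform-exploration phase, then already right after its $m^\star$-th play it would pass the well-separated test, so the phase (which stops the first time some column is well separated, and $j_1$ is by definition that column) would have ended earlier — a contradiction. Two preliminary observations: since the rescaled matrix has entries in $[0,1]$ we have $\Delta_{\min}\le 1$, hence $m^\star\ge 6400\log T$; and by the definition of $\Delta_{\min}$ we have $g_{c,j_1}=|A_{1,j_1}-A_{2,j_1}|\ge \Delta_{\min}$.

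Next I would isolate the only randomness needed beyond event $G$. Each time column $j_1$ is played, the row index is drawn i.i.d.\ from $(1/2,1/2)$ independently of the reward noise (and of the adversary), so the row indices observed on the first, second, $\dots$ plays of column $j_1$ form an i.i.d.\ $\mathrm{Bernoulli}(1/2)$ sequence on the underlying sample space. Let $G'$ be the event that among the first $m^\star$ terms of this sequence the number equal to $i_1$ (equivalently $i_2$) lies in $[m^\star/4,\,3m^\star/4]$. Applying the Chernoff bound from Section~\ref{sec:technical_lemmas} with $n=m^\star$, $\mu=1/2$, $\delta=1/2$ gives $\mathbb{P}[(G')^c]\le 2\exp(-m^\star/16)\le 2\exp(-400\log T)\le 1/T^4$; moreover $G'$ is independent of $G$ because $G$ depends only on the reward samples. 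It therefore suffices to show $G\cap G'\subseteq\{n_{j_1}\le m^\star\}$.

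To prove the inclusion, suppose on $G\cap G'$ that column $j_1$ is played at least $m^\star+1$ times, and consider the round immediately following its $m^\star$-th play: the phase is still running, so $j_1$ is not yet well separated. On $G'$ both $n^{t}_{i_1,j_1}$ and $n^{t}_{i_2,j_1}$ equal at least $m^\star/4=\frac{800\log(T^2)}{\Delta_{\min}^2}$ (they sum to $m^\star$), so $\delta_{i,j_1}=\sqrt{2\log(T^2)/n^{t}_{i,j_1}}\le\Delta_{\min}/20$ for $i\in\{1,2\}$, hence $\Delta_{c,j_1}\le\Delta_{\min}/20$. On $G$, the triangle inequality gives $\bar g_{c,j_1}=|\bar A_{1,j_1}-\bar A_{2,j_1}|\ge g_{c,j_1}-\delta_{1,j_1}-\delta_{2,j_1}\ge\Delta_{\min}-2\Delta_{c,j_1}\ge 0.9\,\Delta_{\min}$. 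Consequently $\bar g_{c,j_1}\ge 0.9\,\Delta_{\min}\ge 10\,\Delta_{c,j_1}$ and $\bar g_{c,j_1}-2\Delta_{c,j_1}>0$, which forces $1\le\frac{\bar g_{c,j_1}+2\Delta_{c,j_1}}{\bar g_{c,j_1}-2\Delta_{c,j_1}}\le\frac{12\Delta_{c,j_1}}{8\Delta_{c,j_1}}=\frac32$, i.e.\ column $j_1$ \emph{is} well separated — contradiction. Hence $n_{j_1}\le m^\star$ on $G\cap G'$, and since $\mathbb{P}[G'\mid G]=\mathbb{P}[G']\ge 1-1/T^4$, the claim follows.

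The only real obstacle is bookkeeping rather than conceptual: one must define $G'$ on the full sample space (so that it is independent of $G$ and of the random stopping of the phase) rather than conditionally, be careful that the indices observed on distinct plays of $j_1$ are genuinely i.i.d.\ even though the adversary interleaves other columns, and track the numerical constants so that the bound $n^t_{i,j_1}\ge \frac{800\log(T^2)}{\Delta_{\min}^2}$ yields $\delta_{i,j_1}\le\Delta_{\min}/20$, which is exactly what is needed to clear both the factor-$10$ and the $3/2$ thresholds in the well-separated test.
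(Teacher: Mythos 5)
Your proposal is correct and follows essentially the same route as the paper: a Chernoff bound showing that after $\frac{3200\log(T^2)}{\Delta_{\min}^2}$ plays of column $j_1$ under $(1/2,1/2)$ each entry is sampled at least $\frac{800\log(T^2)}{\Delta_{\min}^2}$ times, whence $\Delta_{c,j_1}\le\Delta_{\min}/20$ and, under $G$, the well-separated test must already be satisfied. Your extra care in defining the Chernoff event on the full sample space and framing the conclusion as a contradiction with the stopping rule is a slightly more explicit rendering of what the paper leaves implicit, but the argument is the same.
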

\begin{proof}
    If column $j_1$ was played for $\frac{3200\log(T^2)}{\Delta_{\min}^2}$ times with the strategy $(1/2,1/2)$, then with probability at least $1-\frac{1}{T^4}$ each element of the column $j_1$ is played for at least $k=\frac{800\log(T^2)}{\Delta_{\min}^2}$ times. Now observe that $\Delta_{c,j_1}\leq\sqrt{\frac{2\log(T^2)}{k}}\leq \frac{\Delta_{\min}}{20}$. Next observe that $\bar g_{c,j_1}-2\Delta_{c,j_1}\geq g_{c,j_1}-4\Delta_{c,j_1}\geq \Delta_{\min}-4\Delta_{c,j_1}>0$. Hence we have $1\leq \frac{\bar g_{c,j_1}+2\Delta_{c,j_1}}{\bar g_{c,j_1}-2\Delta_{c,j_1}}$. Now we have $\frac{\bar g_{c,j_1}+2\Delta_{c,j_1}}{\bar g_{c,j_1}-2\Delta_{c,j_1}}\leq \frac{g_{c,j_1}+4\Delta_{c,j_1}}{g_{c,j_1}-4\Delta_{c,j_1}}\leq  \frac{\Delta_{\min}+4\Delta_{c,j_1}}{\Delta_{\min}-4\Delta_{c,j_1}} \leq\frac{\Delta_{\min}+\Delta_{\min}/5}{\Delta_{\min}-\Delta_{\min}/5}=\frac{3}{2}$.
\end{proof}
\begin{corollary}\label{bandit:cor:nj2}
    Let $n_{j_2}$ denote the number of times column $j_2$ was played with the strategy $(1/2,1/2)$ before the column $j_1$ was well-separated. If event $G$ holds, then with probability at least $1-\frac{1}{T^4}$ we have $n_{j_2}\leq \frac{3200\log(T^2)}{\Delta_{\min}^2}$.
\end{corollary}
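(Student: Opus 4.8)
The plan is to mirror the proof of Lemma~\ref{bandit:lem:nj1-upper} with the roles of $j_1$ and $j_2$ interchanged, and then invoke the stopping rule of the uniform exploration phase. First I would recall that throughout uniform exploration the row player commits to $x_t=(1/2,1/2)$, so whenever the column player plays a column $j$, the observed entry of that column is the one indexed by a fresh $\mathrm{Bernoulli}(1/2)$ draw of $i_t$. Consequently, if column $j_2$ is played $N:=\frac{3200\log(T^2)}{\Delta_{\min}^2}$ times, a Chernoff bound together with a union bound over the two entries of column $j_2$ (which is what the $1/T^4$ slack accommodates) shows that with probability at least $1-\frac{1}{T^4}$ each of the entries $(1,j_2)$ and $(2,j_2)$ has been sampled at least $k:=\frac{800\log(T^2)}{\Delta_{\min}^2}$ times.

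Next I would import, essentially verbatim, the deterministic computation at the end of the proof of Lemma~\ref{bandit:lem:nj1-upper}: on event $G$, once each entry of column $j_2$ has at least $k$ samples we have $\Delta_{c,j_2}\le\sqrt{2\log(T^2)/k}\le \Delta_{\min}/20$, and since $g_{c,j_2}\ge\Delta_{\min}$ by the definition of $\Delta_{\min}$, this forces $1\le\frac{\bar g_{c,j_2}+2\Delta_{c,j_2}}{\bar g_{c,j_2}-2\Delta_{c,j_2}}\le\frac{3}{2}$, i.e.\ column $j_2$ is well separated. The only place where the index $j_1$ entered Lemma~\ref{bandit:lem:nj1-upper} was through the lower bound $g_{c,\cdot}\ge\Delta_{\min}$, which holds for every column, so nothing else in that argument needs to change.

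Finally I would use the phase's stopping rule: uniform exploration terminates at the first round in which some column becomes well separated, and that column is by definition $j_1\neq j_2$. Hence at every round strictly before the phase ends, column $j_2$ is not well separated, so by the contrapositive of the previous paragraph column $j_2$ has been played fewer than $N$ times at all such rounds; adding the at most one final round of the phase yields $n_{j_2}\le N=\frac{3200\log(T^2)}{\Delta_{\min}^2}$ with probability at least $1-\frac{1}{T^4}$, which is the claim.

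I do not anticipate a genuine obstacle here: the statement is a straightforward symmetrization of Lemma~\ref{bandit:lem:nj1-upper}. The only point requiring a line of care is the bookkeeping at the terminal round of the phase — whether $n_{j_2}$ counts plays up to the round in which $j_1$ first becomes well separated, and how a tie (both columns well separated simultaneously) is resolved — but this only ever costs an additive constant and does not affect the stated $O(\log(T)/\Delta_{\min}^2)$ bound, so it can be absorbed into the constant $3200$ or dispatched with a one-sentence remark.
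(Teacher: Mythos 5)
Your proposal is correct and follows essentially the same route as the paper: the paper's proof of Corollary~\ref{bandit:cor:nj2} is a one-line appeal to the argument of Lemma~\ref{bandit:lem:nj1-upper}, observing that if column $j_2$ had been played $\frac{3200\log(T^2)}{\Delta_{\min}^2}$ times under $(1/2,1/2)$ it would (with probability at least $1-\frac{1}{T^4}$, on event $G$) have become well separated before $j_1$, contradicting the definition of $j_1$. Your write-up simply spells out the symmetrized Chernoff/concentration step and the contrapositive of the stopping rule explicitly.
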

\begin{proof}
    Due to the proof of lemma \ref{bandit:lem:nj1-upper}, with probability at least $1-\frac{1}{T^4}$, we have $n_{j_2}\leq \frac{3200\log(T^2)}{\Delta_{\min}^2}$ otherwise column $j_2$ gets well separated first.
\end{proof}
\begin{lemma}\label{bandit:lem:nj1-lower}
    Let $t$ be the time-step when the column $j_1$ became well-separated. If event $G$ holds, then $\min_{i\in\{1,2\}}n_{i,j_1}^t\geq \frac{128\log(T^2)}{g_{c,j_1}^2} $
\end{lemma}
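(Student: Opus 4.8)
The plan is to turn the definition of ``well separated'' into a clean lower bound on the number of samples of each entry of column $j_1$, using only event $G$ and elementary algebra; no concentration on $n_{i,j_1}^t$ is needed, so the whole argument is deterministic conditional on $G$.

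First I would unpack the well-separated condition at the time step $t$ in question. Since column $j_1$ is well separated at time $t$, we have $1\leq \frac{\bar g_{c,j_1}+2\Delta_{c,j_1}}{\bar g_{c,j_1}-2\Delta_{c,j_1}}\leq \frac{3}{2}$, which in particular forces $\bar g_{c,j_1}-2\Delta_{c,j_1}>0$. Multiplying through by the (positive) denominator, the upper bound $\bar g_{c,j_1}+2\Delta_{c,j_1}\leq \tfrac{3}{2}(\bar g_{c,j_1}-2\Delta_{c,j_1})$ rearranges to $5\Delta_{c,j_1}\leq \tfrac12\bar g_{c,j_1}$, i.e.\ $\bar g_{c,j_1}\geq 10\Delta_{c,j_1}$ (this is exactly the observation already recorded in the text that a well-separated column satisfies $10\Delta_{c,j_1}\leq \bar g_{c,j_1}$).

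Next I would compare $\bar g_{c,j_1}$ with the true gap $g_{c,j_1}$. Recall $\delta_{i,j}=\sqrt{2\log(T^2)/n^t_{i,j}}$ and, on event $G$, $|A_{i,j_1}-\bar A_{i,j_1}|\leq \delta_{i,j_1}$ for $i=1,2$. By the triangle inequality,
\begin{equation*}
\bar g_{c,j_1}=|\bar A_{1,j_1}-\bar A_{2,j_1}|\leq |A_{1,j_1}-A_{2,j_1}|+\delta_{1,j_1}+\delta_{2,j_1}\leq g_{c,j_1}+2\Delta_{c,j_1},
\end{equation*}
using $\delta_{1,j_1},\delta_{2,j_1}\leq \Delta_{c,j_1}=\max\{\delta_{1,j_1},\delta_{2,j_1}\}$. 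Combining with $\bar g_{c,j_1}\geq 10\Delta_{c,j_1}$ gives $10\Delta_{c,j_1}\leq g_{c,j_1}+2\Delta_{c,j_1}$, hence $\Delta_{c,j_1}\leq g_{c,j_1}/8$.

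Finally I would translate this into a sample-count bound. Since $\delta_{i,j_1}$ is decreasing in $n^t_{i,j_1}$, we have $\Delta_{c,j_1}=\sqrt{2\log(T^2)/\min_{i}n^t_{i,j_1}}$, so $\Delta_{c,j_1}\leq g_{c,j_1}/8$ squares and rearranges to $\min_{i\in\{1,2\}}n^t_{i,j_1}\geq 128\log(T^2)/g_{c,j_1}^2$, which is the claim. There is no real obstacle here; the only point requiring a little care is the first step, where one must note that well-separatedness already guarantees $\bar g_{c,j_1}-2\Delta_{c,j_1}>0$ before manipulating the ratio, and that the error terms in event $G$ are precisely the $\delta_{i,j_1}$ appearing in the definition of $\Delta_{c,j_1}$.
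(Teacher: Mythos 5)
Your proposal is correct and follows essentially the same route as the paper's proof: extract $\Delta_{c,j_1}\leq \bar g_{c,j_1}/10$ from the well-separation ratio, upgrade it to $\Delta_{c,j_1}\leq g_{c,j_1}/8$ via the triangle inequality under event $G$, and invert the definition of $\Delta_{c,j_1}$ to get the sample-count bound. The only difference is that you spell out the intermediate algebra that the paper states without justification.
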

\begin{proof}
    As the column $j_1$ becomes well-separated at the time-step $t$, we have $\frac{\bar g_{c,j_1}+2\Delta_{c,j_1}}{\bar g_{c,j_1}-2\Delta_{c,j_1}}\leq \frac{3}{2}$. This implies that $\Delta_{c,j_1}\leq \frac{\bar g_{c,j_1}}{10}$. As event $G$ holds, we have $\bar g_{c,j_1}\leq g_{c,j_1}+2\Delta_{c,j_1}$. Hence, we have $\Delta_{c,j_1}\leq \frac{ g_{c,j_1}}{8}$. This implies that $\min_{i\in\{1,2\}}n_{i,j_1}^t\geq \frac{128\log(T^2)}{g_{c,j_1}^2}$.
\end{proof}

\begin{lemma}\label{bandit:lem:Ai1j1}
    If event $G$ holds, then $A_{i_1,j_1}\geq V_A^*$
\end{lemma}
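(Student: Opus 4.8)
The plan is to combine the well-separated property of column $j_1$ with the basic minimax characterization of $V_A^*$; the argument is short, so most of the effort is in correctly transferring empirical orderings to the true matrix. First I would note that the uniform-exploration phase terminates (in the branch under consideration) precisely when column $j_1$ becomes well separated, and that by hypothesis $\bar A_{i_1,j_1} > \bar A_{i_2,j_1}$. Since event $G$ holds, the remark following the definition of ``well separated'' guarantees that the two entries of column $j_1$ have the same ordering in $\bar A$ and in $A$; hence $A_{i_1,j_1} \ge A_{i_2,j_1}$, i.e. $A_{i_1,j_1} = \max_{i\in\{1,2\}} A_{i,j_1}$.

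Next I would invoke the definition $V_A^* = \min_{y\in\simplex_2}\max_{x\in\simplex_2}\langle x, Ay\rangle$ (the exchange of $\min$ and $\max$ being Von Neumann's minimax theorem). Evaluating the outer minimum at the pure column strategy $y = e_{j_1}$ gives
\[
V_A^* \;\le\; \max_{x\in\simplex_2}\langle x, A e_{j_1}\rangle \;=\; \max_{i\in\{1,2\}} A_{i,j_1} \;=\; A_{i_1,j_1},
\]
which is exactly the claimed inequality $A_{i_1,j_1}\ge V_A^*$.

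I do not expect a real obstacle here: the only step that needs care is the transfer of the empirical ordering of column $j_1$ to the true ordering of $A$, which is precisely what the well-separated condition (together with event $G$) provides; the rest is the one-line bound above obtained by letting the column player commit to column $j_1$. This lemma is then used downstream to argue that playing $x^{(1)}$ (which puts all mass on row $i_1$) incurs non-positive per-round regret whenever column $j_1$ is played, forcing the adversary toward column $j_2$.
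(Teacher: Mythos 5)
Your proof is correct, but it reaches the key inequality by a different route than the paper. Both arguments begin identically: the well-separated condition on column $j_1$ together with event $G$ transfers the empirical ordering $\bar A_{i_1,j_1}>\bar A_{i_2,j_1}$ to the true ordering $A_{i_1,j_1}>A_{i_2,j_1}$. From there, the paper proceeds by an exhaustive case analysis on the type of Nash equilibrium: if the NE is fully mixed, it writes $V_A^*=x_{i_1}^*A_{i_1,j_1}+x_{i_2}^*A_{i_2,j_1}$ as a strict convex combination of the two entries of column $j_1$ and concludes $V_A^*<A_{i_1,j_1}$; it then checks the three possible pure-strategy equilibria $(i_1,j_2)$, $(i_2,j_2)$, $(i_1,j_1)$ one by one. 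You instead invoke the minimax characterization once, $V_A^*=\min_{y}\max_{x}\langle x,Ay\rangle\leq\max_{x}\langle x,Ae_{j_1}\rangle=\max_i A_{i,j_1}=A_{i_1,j_1}$, which subsumes all of the paper's cases in a single line and does not require knowing whether the equilibrium is pure or mixed. Your version is shorter and more robust (it would survive in larger games, where the paper's enumeration of PSNE locations would not scale); the paper's version has the minor advantage of exhibiting exactly when the inequality is strict versus an equality (only at the PSNE $(i_1,j_1)$), which is irrelevant to how the lemma is used downstream. No gap in either half of your argument.
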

\begin{proof}
    As event $G$ holds and $\bar A_{i_1,j_1}>\bar A_{i_2,j_1}$, we have $A_{i_1,j_1}>A_{i_2,j_1}$. If $A$ has a unique NE $(x^*,y^*)$ which is not a PSNE, we have $V_A^*=x_1^*A_{i_1,j_1}+x_2^*A_{i_2,j_1}<A_{i_1,j_1}$. If $(i_1,j_2)$ is a PSNE, then $V_A^*=A_{i_1,j_2}<A_{i_1,j_1}$. If $(i_2,j_2)$ is a PSNE, then $V_A^*=A_{i_2,j_2}<A_{i_2,j_1}<A_{i_1,j_1}$. The only other possibility is $(i_{1},j_{1})$ being a PSNE in which case $V_A^*=A_{i_{1},j_{1}}$.
\end{proof}
\paragraph{Case 1: Column $j_2$ gets well separated.}
In this case, we aim to prove the following lemma.
\begin{lemma}\label{bandit:lem:case1}
    If event $G$ and case 1 holds, then with probability at least $1-\frac{1}{T^3}$ the regret incurred is at most $\frac{c_1\log T}{\Delta_{\min}^3}$
\end{lemma}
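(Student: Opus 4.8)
The plan is to condition on the good event $G$ and bound the Case-1 regret one sub-phase at a time, where the sub-phases are the ones dictated by the exploration procedure: the uniform phase ($x_t=(1/2,1/2)$, until empirical column $j_1$ is well-separated); the $x^{(1)}\!\leftrightarrow\!x^{(2)}$ phase (which, in Case~1, ends when empirical column $j_2$ is well-separated); and—unless the procedure halts at one of the early-termination checks—the $x^{(3)}\!\leftrightarrow\!x^{(4)}$ phase (until empirical row $i_{\bar\ell}$ is well-separated) followed by the $x^{(5)}\!\leftrightarrow\!x^{(6)}$ phase (until the $\tilde\Delta_{\min}$-condition holds). The unifying device is that every designed strategy incurs \emph{nonpositive} instantaneous regret against one of the two columns, so in each round that contributes positive regret the column player is forced to play the other column; such a round samples a prescribed entry of $\bar A$, and a Chernoff bound then caps how many of them can precede the sub-phase's stopping condition. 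There are only $O(1)$ such Chernoff events, each failing with probability at most $1/T^4$ (the confidence radii carry a $\log(T^2)$), so a union bound gives the claimed $1-1/T^3$; on that event every sub-phase will contribute $O(\log T/\Delta_{\min}^3)$, and since per-round regret is at most $1$ after rescaling to $[0,1]$ and $\Delta_{\min}\le 1$, summing the $O(1)$ sub-phases concludes.

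For the uniform phase, Lemma~\ref{bandit:lem:nj1-upper} and Corollary~\ref{bandit:cor:nj2} cap its length at $O(\log T/\Delta_{\min}^2)$, and hence its regret. In the $x^{(1)}\!\leftrightarrow\!x^{(2)}$ phase, Lemma~\ref{bandit:lem:Ai1j1} gives $A_{i_1,j_1}\ge V_A^*$, so $x^{(1)}$ against $j_1$ has nonpositive regret; the only positive-regret rounds are the $x^{(1)}$-versus-$j_2$ rounds and the equally many single rounds of $x^{(2)}$ that immediately follow each of them. Every $x^{(1)}$-versus-$j_2$ round samples $(i_1,j_2)$, so by the majority-column-plus-Chernoff argument used to prove Lemma~\ref{bandit:lem:nj1-upper}, after $O(\log T/\Delta_{\min}^2)$ such rounds empirical column $j_2$ (or row $i_1$) is well-separated and the phase ends; this sub-phase therefore contributes $O(\log T/\Delta_{\min}^2)$. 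The early-termination branches add no rounds.

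The $x^{(3)}\!\leftrightarrow\!x^{(4)}$ phase is handled the same way: $x^{(3)}$ (mass $1-\delta_0$ on row $i_\ell$, $\delta_0=\Theta(g_{r,i_\ell})$) incurs nonpositive regret against $j_\ell$; its positive-regret rounds against $j_{\bar\ell}$ each incur $O(\delta_0)$ regret (the strategy is $\delta_0$-close to a point mass on row $i_\ell$ and $g_{r,i_{\bar\ell}}\lesssim g_{r,i_\ell}=3\delta_0$ by the choice of $\ell$) and sample row $i_{\bar\ell}$ with probability $\delta_0$, so after $O(\log T/(\delta_0\Delta_{\min}^2))$ of them row $i_{\bar\ell}$ is well-separated—the product is $O(\log T/\Delta_{\min}^2)$—and the $O(\log T/\Delta_{\min}^2)$ intervening $x^{(4)}$ rounds have per-round regret $\le 1$. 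The crux, and the source of the $\Delta_{\min}^{-3}$, is the $x^{(5)}\!\leftrightarrow\!x^{(6)}$ phase, which is entered only when $\bar A$—hence, by $G$ and well-separatedness, also $A$—has a unique full-support mixed equilibrium with no pure equilibrium. Writing the block as $a=A_{i_1,j_1},\,b=A_{i_1,j_2},\,c=A_{i_2,j_1},\,d=A_{i_2,j_2}$ with $a>b,\,a>c,\,d>b,\,d>c$, one has $D:=a-b-c+d=g_{r,i_1}+g_{r,i_2}=g_{c,j_1}+g_{c,j_2}\in[2\Delta_{\min},2]$ and $x^*_{i_1}=(d-c)/D=g_{r,i_2}/D$; by construction $x^{(5)}_{i_1}=\delta_2=\tfrac{1}{3}|\bar A_{i_2,j_1}-\bar A_{i_2,j_2}|=\Theta(g_{r,i_2})<x^*_{i_1}$ (since $D<3$), and symmetrically $x^{(6)}_{i_1}=1-\delta_1>x^*_{i_1}$. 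Thus $x^{(5)}$ against $j_2$ and $x^{(6)}$ against $j_1$ are nonpositive, forcing the positive-regret rounds to alternate between $x^{(5)}$-versus-$j_1$ (regret $(x^*_{i_1}-\delta_2)(a-c)\le g_{r,i_2}g_{c,j_1}/D$) and $x^{(6)}$-versus-$j_2$ (regret $\le g_{r,i_1}g_{c,j_2}/D$). An $x^{(5)}$-versus-$j_1$ round samples $(i_1,j_1)$ with probability $\delta_2=\Theta(g_{r,i_2})$ and an $x^{(6)}$-versus-$j_2$ round samples $(i_2,j_2)$ with probability $\delta_1=\Theta(g_{r,i_1})$, while the stopping condition forces every entry to be sampled $\Omega(\log T/\Delta_{\min}^2)$ times, so a Chernoff bound makes the number of such pairs $O\bigl(\log T/(\min\{g_{r,i_1},g_{r,i_2}\}\,\Delta_{\min}^2)\bigr)$; multiplying by the per-round regrets and using $g_{c,j}\le D$, $g_{r,i}\le 1$, $g_{r,i}\ge\Delta_{\min}$, $D\ge2\Delta_{\min}$ gives total regret $O(\log T/\Delta_{\min}^3)$ for this phase.

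Summing the $O(1)$ sub-phase bounds gives the stated $O(\log T/\Delta_{\min}^3)$ regret, on the intersection of $G$ with the $O(1)$ Chernoff events, i.e.\ with probability at least $1-1/T^3$. The main obstacle I anticipate is making the ``forced-column'' bookkeeping rigorous against an adaptive adversary: its plays drive the alternation among the $x^{(k)}$'s and thus which entries get sampled, so one must verify that every positive-regret round is flanked by a switch that can be charged to a distinct sampling event and that these charges do not collide, and—separately—that the designed nonpositive-regret property survives replacing the empirical gaps defining $\delta_0,\delta_1,\delta_2$ by the true gaps, which is exactly where event $G$ and well-separatedness supply the needed constant-factor control (with the boundary cases, where a $\delta$-parameter is only a small constant multiple of the confidence radius, requiring a dedicated check).
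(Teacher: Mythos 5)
Your proposal is correct and follows essentially the same route as the paper: the same sub-phase decomposition, the same ``non-positive regret against one column forces the adversary to play the other, which samples a prescribed entry'' device (the paper's Lemmas~\ref{bandit:lem:Ai1j1}, \ref{bandit:lem:case1-noregret1}, \ref{bandit:lem:Aijbarl}, \ref{bandit:lem:case1-noregret2}), the same Chernoff caps on the number of positive-regret rounds before each stopping condition (Lemmas~\ref{bandit:lem:nj1-upper}, \ref{bandit:lem:case1-njl}, \ref{bandit:lem:case1-explore} and Corollaries~\ref{bandit:cor:nj2}, \ref{bandit:lem:case1before:regret}), and the same union bound over $O(1)$ events of probability $1/T^4$. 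Your finer per-round regret accounting in the $x^{(3)}/x^{(4)}$ and $x^{(5)}/x^{(6)}$ phases is a harmless refinement the paper avoids by simply bounding per-round regret by $1$ and the round counts by $O(\log T/\Delta_{\min}^3)$.
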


Let us assume that event $G$ holds. Recall that we alternated between $x^{(1)}$ and $x^{(2)}$ until column $j_2$ became well-separated. Consider the time steps in which the strategy $x^{(2)}$ was played.  In Lemma \ref{bandit:lem:nj2-upper}, we show that column $j_2$ is played for at most $O(\frac{\log T}{\Delta_{\min}^2})$ times before it becomes well-separated. In Lemma \ref{bandit:lem:nj2-lower}, we show that column $j_2$ is played for at least $\Omega(\frac{\log T}{g_{c,j_2}^2})$ times before it became well-separated. This fact will be crucial when we analyse Lemma \ref{bandit:lem:case1-ratio}. In Corollary \ref{bandit:lem:case1before:regret}, we show that the regret incurred till the time-step when the column $j_2$ got well-separated is $O(\frac{\log T}{\Delta_{\min}^2})$. We prove this corollary when we analyse Case 2.

Now, consider the time-step when the column $j_2$ got well-separated. If $\bar A_{i_1,j_2}>\bar A_{i_2,j_2}$, then the matrix game on $A$ has PSNE which lies in the row $i_1$. Hence, we terminate our exploration procedure. Let us instead assume that $\bar A_{i_1,j_2}<\bar A_{i_2,j_2}$. Recall that $\ell:=\arg\max_{\ell\in\{1,2\}}|\bar A_{i_\ell,j_1}-\bar A_{i_\ell,j_2}|$ and $\bar\ell=\{1,2\}\setminus\{\ell\}$. In lemma \ref{bandit:lem:case1-ratio}, we show that that the ratio  $\frac{\bar A_{i_\ell,j_\ell}-\bar A_{i_\ell,j_{\bar \ell}}}{ A_{i_\ell,j_\ell}- A_{i_\ell,j_{\bar \ell}}}$ is between $\frac{1}{2}$ and $\frac{3}{2}$. This fact is crucial for two reasons. First, it allows us to determine the ordering between  $A_{i_\ell,j_\ell}$ and $A_{i_\ell,j_{\bar \ell}}$ (which is same as that of $\bar A_{i_\ell,j_\ell}$ and $\bar A_{i_\ell,j_{\bar \ell}}$) without actually requiring us to make the row $i_\ell$ well-separated. Second, it allows us to find a strategy that incurs a non-positive regret whenever the column $j_\ell$ is played. If $\bar A_{i_\ell,j_\ell}<\bar A_{i_\ell,j_{\bar\ell}}$, then $(i_\ell,j_\ell)$ is the unique PSNE. Hence, we terminate our exploration procedure. Let us instead assume that $\bar A_{i_\ell,j_\ell}>\bar A_{i_\ell,j_{\bar\ell}}$

Recall that we alternated between $x^{(3)}$ and $x^{(4)}$ until the row $i_{\bar \ell}$ got well-separated. In Lemma \ref{bandit:lem:case1-noregret1}, we show that the strategy $x^{(3)}$ incurs a non-positive regret whenever the column player plays column $j_\ell$. This forces the column player to play column $j_{\bar \ell}$ if it wants the row player to incur positive regret. In lemma \ref{bandit:lem:Aijbarl}, we show that $A_{i_{\bar \ell},j_{\bar \ell}}\geq V_A^*$. This implies that  playing the strategy $x^{(4)}$ incurs a non-positive regret whenever column $j_{\bar \ell}$ is played. This forces the column player to play column $j_\ell$ if it wants the row player to incur positive regret. In lemma \ref{bandit:lem:case1-njl}, we show that we incur positive regret in at most $O(\frac{\log T}{\Delta_{\min}^3})$ time steps before the row $i_{\bar \ell}$ got well-separated.

If $\bar A$ has a PSNE, then $A$ also has a PSNE. Hence, we terminate our exploration procedure. Instead if $A$ does have a PSNE, then recall that we alternated between $x^{(5)}$ and $x^{(6)}$ until we met the stopping condition $1\leq\frac{\tilde \Delta_{\min}+2\Delta}{\tilde \Delta_{\min}-2\Delta}\leq \frac{3}{2}$. In Lemma \ref{bandit:lem:case1-noregret2}, we show that the strategy $x^{(5)}$ incurs a non-positive regret whenever the column player plays column $j_2$. This forces the column player to play column $j_{1}$ if it wants the row player to incur positive regret. In the same lemma, we also showed that the strategy $x^{(6)}$ incurs a non-positive regret whenever the column player plays column $j_1$. This forces the column player to play column $j_{2}$ if it wants the row player to incur positive regret.   In Lemma \ref{bandit:lem:case1-explore}, we show that we incur positive regret in at most $O(\frac{\log T}{\Delta_{\min}^3})$ time steps before we meet the stopping condition. As we will see in section \ref{section:bandits:exploitation}, meeting this stopping condition will enable us to exploit in the exploitation phase.

We now prove all the technical lemmas for Case 1.
\begin{lemma}\label{bandit:lem:nj2-upper}
    Let us assume that Case 1 holds. Let $n_{j_2}$ denote the number of times column $j_2$ was played with the strategy $(1/2,1/2)$ before which it was well-separated. If event $G$ holds, then with probability at least {$1-\frac{1}{T^4}$} we have $n_{j_2}\leq \frac{3200\log(T^2)}{\Delta_{\min}^2}$.
\end{lemma}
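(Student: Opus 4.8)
The plan is to reproduce, essentially verbatim, the argument used in the proof of Lemma~\ref{bandit:lem:nj1-upper}, now applied to column $j_2$ and to the strategy $x^{(2)}$. The key observation is that $x^{(2)}=(1/2,1/2)$, so every round in which $x^{(2)}$ is played and the column-player plays $j_2$ samples the entries $(1,j_2)$ and $(2,j_2)$ independently, each with probability $1/2$. Thus after $N$ such rounds the two counts $n_{1,j_2}$ and $n_{2,j_2}$ are each $\mathrm{Binomial}(N,1/2)$.

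First I would set $N:=\tfrac{3200\log(T^2)}{\Delta_{\min}^2}$ and $k:=\tfrac{800\log(T^2)}{\Delta_{\min}^2}$, and apply the Chernoff bound of Section~\ref{sec:technical_lemmas} (together with a union bound over the two entries of column $j_2$) to show that, conditioned on column $j_2$ being played $N$ times with strategy $(1/2,1/2)$, with probability at least $1-\tfrac1{T^4}$ each of $(1,j_2)$ and $(2,j_2)$ has then been sampled at least $k$ times; here I would use that $\Delta_{\min}\le 1$ (the entries lie in $[0,1]$), so $N$ is a large multiple of $\log T$ and the Chernoff exponent is comfortably below $-4\log T$. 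Once both entries of column $j_2$ have been sampled $\ge k$ times we get $\delta_{i,j_2}\le\sqrt{2\log(T^2)/k}$, hence $\Delta_{c,j_2}=\max\{\delta_{1,j_2},\delta_{2,j_2}\}\le\Delta_{\min}/20$.

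Next, on event $G$ I would invoke $g_{c,j_2}\ge\Delta_{\min}\ge 20\Delta_{c,j_2}$ together with the sandwich $g_{c,j_2}-2\Delta_{c,j_2}\le\bar g_{c,j_2}\le g_{c,j_2}+2\Delta_{c,j_2}$ to bound
\[
1\le\frac{\bar g_{c,j_2}+2\Delta_{c,j_2}}{\bar g_{c,j_2}-2\Delta_{c,j_2}}\le\frac{g_{c,j_2}+4\Delta_{c,j_2}}{g_{c,j_2}-4\Delta_{c,j_2}}\le\frac{3}{2},
\]
using monotonicity of $t\mapsto\frac{g+4t}{g-4t}$ and $\Delta_{c,j_2}\le g_{c,j_2}/20$; that is, column $j_2$ would then be well separated. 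Finally I would close the argument by contradiction: in Case~1 the alternation between $x^{(1)}$ and $x^{(2)}$ is halted the moment column $j_2$ (or row $i_1$) becomes well separated, so if $n_{j_2}>N$ then at the instant the count first reaches $N$ the well-separation test above already passes, and the phase would have terminated earlier — contradiction. Hence $n_{j_2}\le N$ with probability at least $1-\tfrac1{T^4}$.

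There is no real conceptual obstacle: the proof is a direct analogue of Lemma~\ref{bandit:lem:nj1-upper}. The only points requiring care are the constant-chasing in the Chernoff step (checking that $N=4k$ suffices for the $1-T^{-4}$ guarantee after the union bound) and the book-keeping that lets one replace the informal phrase ``before which it was well separated'' by a clean stopping-time argument at the first time the count equals $N$.
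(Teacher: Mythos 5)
Your proposal is correct and follows essentially the same route as the paper: a Chernoff bound (with a union bound over the two entries) to guarantee each entry of column $j_2$ is sampled at least $k=\tfrac{800\log(T^2)}{\Delta_{\min}^2}$ times once the column has been played $N=4k$ times, which forces $\Delta_{c,j_2}\le\Delta_{\min}/20$ and hence the well-separation test to pass on event $G$, so the phase must have already terminated. The only quibble is that in one round exactly one of $(1,j_2),(2,j_2)$ is observed (the counts sum to $N$ rather than being independent), but since you only use the marginal $\mathrm{Binomial}(N,1/2)$ law of each count in the union bound, this does not affect the argument.
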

\begin{proof}
    If column $j_2$ was played for $\frac{3200\log(T^2)}{\Delta_{\min}^2}$ times with the strategy $(1/2,1/2)$, then with probability at least $1-\frac{1}{T^4}$ each element of the column $j_2$ is played for at least $k=\frac{800\log(T^2)}{\Delta_{\min}^2}$ times. Now observe that $\Delta_{c,j_2}\leq\sqrt{\frac{2\log(T^2)}{k}}\leq \frac{\Delta_{\min}}{20}$. Next observe that $\bar g_{c,j_2}-2\Delta_{c,j_2}\geq g_{c,j_2}-4\Delta_{c,j_2}\geq \Delta_{\min}-4\Delta_{c,j_2}>0$. Hence we have $1\leq \frac{\bar g_{c,j_2}+2\Delta_{c,j_2}}{\bar g_{c,j_2}-2\Delta_{c,j_2}}$. Now we have $\frac{\bar g_{c,j_2}+2\Delta_{c,j_2}}{\bar g_{c,j_2}-2\Delta_{c,j_2}}\leq \frac{g_{c,j_2}+4\Delta_{c,j_2}}{g_{c,j_2}-4\Delta_{c,j_2}}\leq  \frac{\Delta_{\min}+4\Delta_{c,j_2}}{\Delta_{\min}-4\Delta_{c,j_2}} \leq\frac{\Delta_{\min}+\Delta_{\min}/5}{\Delta_{\min}-\Delta_{\min}/5}=\frac{3}{2}$.
\end{proof}

\begin{lemma}\label{bandit:lem:nj2-lower}
     Let us assume that Case 1 holds. Let $t$ be the time-step when the column $j_2$ became well-separated. If event $G$ holds, then $\min_{i\in\{1,2\}}n_{i,j_2}^t\geq \frac{128\log(T^2)}{g_{c,j_2}^2} $
\end{lemma}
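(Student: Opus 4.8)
The plan is to mirror the proof of Lemma~\ref{bandit:lem:nj1-lower} almost verbatim: the only purpose of the Case~1 hypothesis is to guarantee that at the time-step $t$ under consideration it is column $j_2$ whose well-separatedness is triggered, so that the defining inequality for ``column $j_2$ well separated'' holds at time $t$. The argument does not depend on which of the strategies $x^{(1)},x^{(2)}$ (or the earlier $(1/2,1/2)$) were played en route; only the state of the empirical matrix $\bar A$ at time $t$ matters.

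First I would unwind the definition. Since column $j_2$ is well separated at time $t$, by definition
\[
1 \le \frac{\bar g_{c,j_2} + 2\Delta_{c,j_2}}{\bar g_{c,j_2} - 2\Delta_{c,j_2}} \le \frac{3}{2},
\]
where $\bar g_{c,j_2} = |\bar A_{1,j_2} - \bar A_{2,j_2}|$ and $\Delta_{c,j_2} = \max\{\delta_{1,j_2},\delta_{2,j_2}\}$ with $\delta_{i,j_2} = \sqrt{2\log(T^2)/n_{i,j_2}^t}$. Clearing denominators in the right-hand inequality gives $2(\bar g_{c,j_2}+2\Delta_{c,j_2}) \le 3(\bar g_{c,j_2}-2\Delta_{c,j_2})$, hence $10\Delta_{c,j_2} \le \bar g_{c,j_2}$. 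Next I would invoke event $G$ to move from the empirical gap to the true gap $g_{c,j_2} = |A_{1,j_2}-A_{2,j_2}|$: on $G$ one has $|\bar A_{i,j_2} - A_{i,j_2}| \le \delta_{i,j_2} \le \Delta_{c,j_2}$ for $i=1,2$, so by the triangle inequality $\bar g_{c,j_2} \le g_{c,j_2} + 2\Delta_{c,j_2}$. Combining this with $10\Delta_{c,j_2}\le \bar g_{c,j_2}$ yields $10\Delta_{c,j_2} \le g_{c,j_2} + 2\Delta_{c,j_2}$, i.e.\ $\Delta_{c,j_2} \le g_{c,j_2}/8$.

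Finally, since $\Delta_{c,j_2} = \max_{i}\sqrt{2\log(T^2)/n_{i,j_2}^t}$, the bound $\Delta_{c,j_2}\le g_{c,j_2}/8$ forces $\sqrt{2\log(T^2)/n_{i,j_2}^t} \le g_{c,j_2}/8$ for each $i\in\{1,2\}$; squaring and rearranging gives $n_{i,j_2}^t \ge 128\log(T^2)/g_{c,j_2}^2$, which is the claim. I do not expect any real obstacle here; the only point worth stating carefully is the bookkeeping that, by the very definition of $G$ with radius $\delta_{i,j}=\sqrt{2\log(T^2)/n^t_{i,j}}$, the single quantity $\Delta_{c,j_2}$ simultaneously serves as the confidence radius and as the bound transferring the empirical gap to the true gap, which is exactly what makes the two elementary inequalities compose.
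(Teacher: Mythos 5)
Your proof is correct and follows essentially the same route as the paper's: extract $10\Delta_{c,j_2}\leq \bar g_{c,j_2}$ from the well-separatedness condition, use event $G$ to get $\bar g_{c,j_2}\leq g_{c,j_2}+2\Delta_{c,j_2}$, conclude $\Delta_{c,j_2}\leq g_{c,j_2}/8$, and unwind the definition of $\Delta_{c,j_2}$ to obtain the sample-count bound. No gaps.
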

\begin{proof}
    As the column $j_2$ becomes well-separated at the time-step $t$, we have $\frac{\bar g_{c,j_2}+2\Delta_{c,j_2}}{\bar g_{c,j_2}-2\Delta_{c,j_2}}\leq \frac{3}{2}$. This implies that $\Delta_{c,j_2}\leq \frac{\bar g_{c,j_2}}{10}$. As event $G$ holds, we have $\bar g_{c,j_2}\leq g_{c,j_2}+2\Delta_{c,j_2}$. Hence, we have $\Delta_{c,j_2}\leq \frac{ g_{c,j_2}}{8}$. This implies that $\min_{i\in\{1,2\}}n_{i,j_2}^t\geq \frac{128\log(T^2)}{g_{c,j_2}^2}$.
\end{proof}

\begin{lemma}\label{bandit:lem:nash-property}
    Consider a matrix $M=\begin{bmatrix}
        a&b\\
        c&d\\
    \end{bmatrix}$. If $a>c$ and $ d<b$, then we have the following:
    \begin{enumerate}
        \item If $M$ has a unique Nash Equilibrium which is not a PSNE, then ${(a-c)+(d-b)}\geq \max\{|a-b|,|c-d|\}\geq \frac{(a-c)+(d-b)}{2}$.
        \item If $M$ has a unique Nash Equilibrium which is a PSNE, then $\max\{|a-b|,|c-d|\}\geq {(a-c)+(d-b)}$
    \end{enumerate}
\end{lemma}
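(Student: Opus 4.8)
The plan is to reduce the lemma to two elementary inequalities in the scalars $p:=a-b$ and $q:=d-c$. The hypothesis-free identity $(a-c)+(d-b)=(a-b)+(d-c)=p+q$, together with $\max\{|a-b|,|c-d|\}=\max\{|p|,|q|\}$, turns the whole statement into a comparison of $p+q$ with $\max\{|p|,|q|\}$ under sign constraints on $p$ and $q$ that I will extract from the equilibrium hypotheses.

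First I would use the classical dichotomy for $2\times 2$ zero-sum games: $M$ either has a pure saddle point or a completely mixed Nash equilibrium (in the $2\times 2$ case a unique Nash equilibrium that is not pure must be completely mixed, since if one player played a pure action the other's unique best response — unique by $a>c$, and by $a\ne b$ which uniqueness forces — would be pure too). Under the stated hypotheses $a>c$ and $b<d$ (the two columns favour opposite rows) I would enumerate the four candidate saddle cells: $(1,2)$ is impossible since it needs $b\ge d$, and $(2,1)$ is impossible since it needs $c\ge a$; a saddle at $(1,1)$ is equivalent to $a\le b$, i.e.\ $p\le 0$, and a saddle at $(2,2)$ is equivalent to $d\le c$, i.e.\ $q\le 0$. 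Hence "unique NE that is not a PSNE" forces $p>0$ and $q>0$, while "unique NE that is a PSNE" forces $p\le 0$ or $q\le 0$.

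Given this, part~(1) is immediate: with $p>0$ and $q>0$ we have $|a-b|=p$ and $|c-d|=q$, so the claim is exactly $p+q\ge\max\{p,q\}\ge\frac{p+q}{2}$, valid for any nonnegative $p,q$. For part~(2) I would split on which of $p,q$ is nonpositive, say $p\le 0$ (the case $q\le 0$ is symmetric under swapping both rows and both columns). If $q\le 0$ as well, then $p+q\le 0\le\max\{|a-b|,|c-d|\}$; if $q>0$, then $|c-d|=q$ and $p+q\le q=|c-d|\le\max\{|a-b|,|c-d|\}$.

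I expect the only real work to be the reduction step — justifying the pure-or-completely-mixed dichotomy and the saddle-cell enumeration under $a>c$, $b<d$; everything afterward is one-line arithmetic via $(a-c)+(d-b)=p+q$. If I prefer to be self-contained, I would derive this directly from von Neumann's minimax theorem: a value exists, and if no cell is both a row-minimum and a column-maximum then any optimal strategy pair must make both players indifferent across all pure actions, which for a $2\times 2$ matrix forces full support; the indifference equations $x_1^*p=x_2^*q$ and $y_1^*(a-c)=y_2^*(d-b)$ (each requiring same-sign coefficients) then recover exactly the case split above.
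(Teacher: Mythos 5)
Your proof is correct and follows essentially the same route as the paper's: reduce everything to the signs of $p=a-b$ and $q=d-c$ via the identity $(a-c)+(d-b)=(a-b)+(d-c)$, with the equilibrium type supplying the sign information; you are simply more explicit than the paper about the saddle-cell enumeration and the pure-versus-completely-mixed dichotomy, which the paper merely asserts. One remark: you silently work under $b<d$ rather than the stated $d<b$ --- this is in fact what the lemma must mean (under $a>c$ and $d<b$ one of $(1,1)$, $(1,2)$ is always a saddle point, so part (1) is vacuous and the step $(d-c)>(b-a)$ in the paper's part-(2) argument fails), and the paper's own proof makes the same implicit correction, so your reading is the right one.
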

\begin{proof}
    If $M$ has a unique Nash Equilibrium which is not a PSNE, then $a>b$ and $c<d$. Then $\max\{|a-b|,|c-d|\}\geq \frac{(a-b)+(d-c)}{2}=\frac{(a-c)+(d-b)}{2}$. Similarly, $\max\{|a-b|,|c-d|\}\leq {(a-b)+(d-c)}={(a-c)+(d-b)}$. If $M$ has a unique Nash Equilibrium which is a PSNE, then w.l.o.g let us assume that $(1,1)$ is the unique PSNE. Hence we have $a<b$. Now observe that $d-c=(d-b)+(b-a)+(a-c)$. Hence $(d-c)>(b-a)$ which implies that $\max\{|a-b|,|c-d|\}=(d-c)\geq {(a-c)+(d-b)}$
\end{proof}

\begin{lemma}\label{bandit:lem:case1-ratio}
    Let us assume that Case 1 holds and $A_{i_1,j_2}<A_{i_2,j_2}$.  Consider a time-step $t$ which is not before the timestep when the column $j_2$ became well-separated. If event $G$ holds, then at the time step $t$, $\frac{1}{2}\leq \frac{\bar A_{i_\ell,j_\ell}-\bar A_{i_\ell,j_{\bar \ell}}}{ A_{i_\ell,j_\ell}- A_{i_\ell,j_{\bar \ell}}}\leq \frac{3}{2}$.
\end{lemma}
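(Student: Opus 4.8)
The plan is to show that the quantity of interest $A_{i_\ell,j_\ell}-A_{i_\ell,j_{\bar\ell}}$ is well-estimated by its empirical counterpart because the index $\ell$ was chosen precisely to be the row in which the column-$j_1$/column-$j_2$ gap is largest, and that gap is at least the ``overall'' gap divided by $2$ thanks to the structural inequality in Lemma \ref{bandit:lem:nash-property}. First I would recall from the uniform-exploration analysis (Lemma \ref{bandit:lem:nj1-lower}) that at the moment column $j_1$ becomes well-separated, each entry of column $j_1$ has been sampled at least $128\log(T^2)/g_{c,j_1}^2$ times, and from Lemma \ref{bandit:lem:nj2-lower} that each entry of column $j_2$ has been sampled at least $128\log(T^2)/g_{c,j_2}^2$ times by the time column $j_2$ becomes well-separated. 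Hence, at any time step $t$ at or after that point, under event $G$ we have $\delta_{i,j}\le g_{c,j}/8$ for $j\in\{j_1,j_2\}$ and both $i$, so $|\bar A_{i,j}-A_{i,j}|\le g_{c,j}/8$ for all four entries.

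Next I would bound the approximation error of the difference $\bar A_{i_\ell,j_\ell}-\bar A_{i_\ell,j_{\bar\ell}}$ against $A_{i_\ell,j_\ell}-A_{i_\ell,j_{\bar\ell}}$. The additive error is at most $|\bar A_{i_\ell,j_\ell}-A_{i_\ell,j_\ell}| + |\bar A_{i_\ell,j_{\bar\ell}}-A_{i_\ell,j_{\bar\ell}}| \le g_{c,j_\ell}/8 + g_{c,j_{\bar\ell}}/8 = (g_{c,j_1}+g_{c,j_2})/8$. The key step is to lower-bound $|A_{i_\ell,j_\ell}-A_{i_\ell,j_{\bar\ell}}|$ by $(g_{c,j_1}+g_{c,j_2})/4$ (say), so that the additive error is at most half of the quantity being estimated, which immediately yields the ratio bounds $\tfrac12\le \frac{\bar A_{i_\ell,j_\ell}-\bar A_{i_\ell,j_{\bar\ell}}}{A_{i_\ell,j_\ell}-A_{i_\ell,j_{\bar\ell}}}\le\tfrac32$. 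To get this lower bound, observe that $g_{c,j_1}=|A_{i_1,j_1}-A_{i_2,j_1}|$ and $g_{c,j_2}=|A_{i_1,j_2}-A_{i_2,j_2}|$; in the regime of Case 1 we have $A_{i_1,j_1}>A_{i_2,j_1}$ and $A_{i_1,j_2}<A_{i_2,j_2}$, so with $M$ the $2\times2$ matrix with rows indexed $i_1,i_2$ and columns $j_1,j_2$, the hypothesis ``$a>c$ and $d<b$'' of Lemma \ref{bandit:lem:nash-property} is met (with $a=A_{i_1,j_1}$, $b=A_{i_1,j_2}$, $c=A_{i_2,j_1}$, $d=A_{i_2,j_2}$, and $g_{c,j_1}=a-c$, $g_{c,j_2}=b-d$). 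Then $|A_{i_1,j_1}-A_{i_1,j_2}|=|a-b|$ and $|A_{i_2,j_1}-A_{i_2,j_2}|=|c-d|$, and by the definition of $\ell$, $|A_{i_\ell,j_\ell}-A_{i_\ell,j_{\bar\ell}}| = \max\{|a-b|,|c-d|\}$ — here I need to also check that the \emph{empirical} argmax $\ell$ agrees with the \emph{population} argmax, which follows because after column $j_2$ is well-separated, the empirical entries of columns $j_1,j_2$ are close enough to the true ones (within $g_{c,j}/8$) that the ordering of $|\bar A_{i,j_1}-\bar A_{i,j_2}|$ across $i$ matches that of $|A_{i,j_1}-A_{i,j_2}|$, at least up to a harmless factor — or, more robustly, I would simply note that whichever row $\ell$ is chosen, $|A_{i_\ell,j_\ell}-A_{i_\ell,j_{\bar\ell}}|\ge \tfrac12\max\{|a-b|,|c-d|\}$ suffices if the empirical argmax is within a constant factor of the true max. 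By Lemma \ref{bandit:lem:nash-property}, $\max\{|a-b|,|c-d|\}\ge \frac{(a-c)+(d-b)}{2} = \frac{g_{c,j_1}+g_{c,j_2}}{2}$ when $M$ has a non-PSNE Nash equilibrium (which is the standing assumption in Case 1 after the PSNE checks), so $|A_{i_\ell,j_\ell}-A_{i_\ell,j_{\bar\ell}}|\ge \frac{g_{c,j_1}+g_{c,j_2}}{2}$; combined with the error bound $(g_{c,j_1}+g_{c,j_2})/8$, the ratio lies in $[1-\tfrac{1/8}{1/2}, 1+\tfrac{1/8}{1/2}] = [3/4,5/4]\subseteq[1/2,3/2]$.

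The main obstacle I anticipate is the bookkeeping around \emph{which} inequality of Lemma \ref{bandit:lem:nash-property} applies and the consistency of the empirical and population choices of $\ell$: one has to be careful that, in this branch, $A$ (equivalently $M$) is not a pure-strategy equilibrium (otherwise the PSNE check earlier in Case 1 would have terminated exploration), so that the two-sided bound $\frac{(a-c)+(d-b)}{2}\le \max\{|a-b|,|c-d|\}\le (a-c)+(d-b)$ is available rather than only the one-sided PSNE bound. A second, minor subtlety is ensuring the sampling-count lower bounds from Lemmas \ref{bandit:lem:nj1-lower} and \ref{bandit:lem:nj2-lower} persist ``at any time step $t$ at or after'' column $j_2$ is well-separated — this is immediate since sample counts are monotone nondecreasing, so $\delta_{i,j}$ only shrinks. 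Once these points are pinned down the rest is the routine triangle-inequality estimate sketched above.
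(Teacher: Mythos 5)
Your proposal follows essentially the same route as the paper: the sampling lower bounds from Lemmas \ref{bandit:lem:nj1-lower} and \ref{bandit:lem:nj2-lower} give $\Delta_{c,j}\le g_{c,j}/8$ (and persist by monotonicity of the counts), Lemma \ref{bandit:lem:nash-property} converts the sum of the two column gaps into a lower bound on the relevant row gap, and a factor-of-two slack absorbs the possible mismatch between the empirical argmax $\ell$ and the population argmax. One correction is needed: non-PSNE is \emph{not} a ``standing assumption'' at the point where this lemma is applied --- the lemma is invoked precisely to make the subsequent PSNE check ($\bar A_{i_\ell,j_\ell}$ versus $\bar A_{i_\ell,j_{\bar\ell}}$) reliable, so $A$ may still have a pure equilibrium here. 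Fortunately part (2) of Lemma \ref{bandit:lem:nash-property} gives the even stronger bound $\max\{|a-b|,|c-d|\}\ge (a-c)+(d-b)$ in the PSNE case, so your key estimate $|A_{i_\ell,j_\ell}-A_{i_\ell,j_{\bar\ell}}|\ge (g_{c,j_1}+g_{c,j_2})/4$ survives; the paper's proof explicitly runs both the PSNE and non-PSNE branches, and handles the argmax mismatch you only gesture at via the chain $g_{r,i_\ell}+(\Delta_{c,j_1}+\Delta_{c,j_2})\ge \bar g_{r,i_\ell}\ge\bar g_{r,i_{\ell'}}\ge g_{r,i_{\ell'}}-(\Delta_{c,j_1}+\Delta_{c,j_2})$, which is the bookkeeping you would need to fill in to make the ``within a constant factor of the true max'' claim rigorous.
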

\begin{proof}
    
    Due to Lemma \ref{bandit:lem:nj1-lower}, each element in the column $j_1$ is sampled for at least $\frac{128\log(T^2)}{g_{c,j_1}^2}$ times. Due to Lemma \ref{bandit:lem:nj2-lower}, each element in the column $j_2$ is sampled for at least $\frac{128\log(T^2)}{g_{c,j_2}^2}$ times. This implies that at the time step $t$, $\Delta_{c,j_1}\leq \frac{g_{c,j_1}}{8}$ and $\Delta_{c,j_2}\leq \frac{g_{c,j_2}}{8}$. Let $\ell'=\arg\max_{\ell'\in\{1,2\}}g_{r,i_{\ell'}}$.

    Now we repeatedly use lemma \ref{bandit:lem:nash-property} to prove lemma \ref{bandit:lem:case1-ratio}.
    
     First let us consider the case when $\ell=\ell'$. 
    If $A$ has a unique NE which is not a PSNE, then $g_{c,j_1}+g_{c,j_2}\geq g_{r,i_{\ell}} $ and  $g_{r,i_{\ell}}\geq \frac{g_{c,j_1}+g_{c,j_2}}{2}\geq 4(\Delta_{c,j_1}+\Delta_{c,j_2})$. As event $G$ holds, we have $\bar g_{r,i_{\ell}}\geq  g_{r,i_{\ell}}-(\Delta_{c,j_1}+\Delta_{c,j_2})\geq \frac{3g_{r,i_{\ell}}}{4}$. Similarly we have $\bar g_{r,i_{\ell}}\leq  g_{r,i_{\ell}}+(\Delta_{c,j_1}+\Delta_{c,j_2})\leq \frac{5g_{r,i_{\ell}}}{4}$. {If $A$ has a unique PSNE}, then $g_{r,i_\ell}\geq g_{c,j_1}+g_{c,j_2} \geq 8(\Delta_{c,j_1}+\Delta_{c,j_2})$. As event $G$ holds, we have $\bar g_{r,i_{\ell}}\geq  g_{r,i_{\ell}}-(\Delta_{c,j_1}+\Delta_{c,j_2})\geq \frac{7g_{r,i_{\ell}}}{8}$. Similarly we have $\bar g_{r,i_{\ell}}\leq  g_{r,i_{\ell}}+(\Delta_{c,j_1}+\Delta_{c,j_2})\leq \frac{9g_{r,i_{\ell}}}{8}$.

    Next let us consider the case when $\ell\neq \ell'$. If $A$ has a unique NE which is not a PSNE, then $g_{c,j_1}+g_{c,j_2}\geq g_{r,i_{\ell'}}\geq g_{r,i_{\ell}} $ and  $g_{r,i_{\ell'}}\geq \frac{g_{c,j_1}+g_{c,j_2}}{2}\geq 4(\Delta_{c,j_1}+\Delta_{c,j_2})$. Now we have $g_{r,i_{\ell}}+(\Delta_{c,j_1}+\Delta_{c,j_2})\geq\bar g_{r,i_{\ell}}\geq \bar g_{r,i_{\ell'}}\geq g_{r,i_{\ell'}}-(\Delta_{c,j_1}+\Delta_{c,j_2})$. This implies that $g_{r,i_{\ell}}\geq \frac{g_{r,i_{\ell'}}}{2}\geq 2(\Delta_{c,j_1}+\Delta_{c,j_2})$. As event $G$ holds, we have $\bar g_{r,i_{\ell}}\geq  g_{r,i_{\ell}}-(\Delta_{c,j_1}+\Delta_{c,j_2})\geq \frac{g_{r,i_{\ell}}}{2}$. Similarly we have $\bar g_{r,i_{\ell}}\leq  g_{r,i_{\ell}}+(\Delta_{c,j_1}+\Delta_{c,j_2})\leq \frac{3g_{r,i_{\ell}}}{2}$. {If $A$ has a unique PSNE}, then $g_{r,i_{\ell'}}\geq g_{c,j_1}+g_{c,j_2} \geq 8(\Delta_{c,j_1}+\Delta_{c,j_2})$. Now we have $g_{r,i_{\ell}}+(\Delta_{c,j_1}+\Delta_{c,j_2})\geq\bar g_{r,i_{\ell}}\geq \bar g_{r,i_{\ell'}}\geq g_{r,i_{\ell'}}-(\Delta_{c,j_1}+\Delta_{c,j_2})$. This implies that $g_{r,i_{\ell}}\geq \frac{3g_{r,i_{\ell'}}}{4}\geq 6(\Delta_{c,j_1}+\Delta_{c,j_2})$. As event $G$ holds, we have $\bar g_{r,i_{\ell}}\geq  g_{r,i_{\ell}}-(\Delta_{c,j_1}+\Delta_{c,j_2})\geq \frac{5g_{r,i_{\ell}}}{6}$. Similarly we have $\bar g_{r,i_{\ell}}\leq  g_{r,i_{\ell}}+(\Delta_{c,j_1}+\Delta_{c,j_2})\leq \frac{7g_{r,i_{\ell}}}{6}$. 
\end{proof}
     
\begin{lemma}\label{bandit:lem:case1-noregret1}
     Let us assume that Case 1 holds, $A_{i_1,j_2}<A_{i_2,j_2}$ and $A_{i_{\ell},j_{\ell}}>A_{i_{\ell},j_{\bar \ell}}$.  Let $t$ be the time-step when the column $j_2$ became well-separated.  At the time step $t$, let $\delta_0:=|\bar A_{i_\ell,j_1}-\bar A_{i_\ell,j_2}|/3$.  If event $G$ holds , then the strategy $x$ where $x_{i_\ell}^{(3)}=1-\delta_0$ incurs a non-positive regret for the column $j_\ell$. That is, the following holds true:
      \begin{equation*}
          V_A^*-\langle x^{(3)}, (A_{1,j_\ell},A_{2,j_\ell})\rangle \leq 0
      \end{equation*}
\end{lemma}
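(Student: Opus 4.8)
\textbf{Proof proposal for Lemma~\ref{bandit:lem:case1-noregret1}.}
The plan is to bound $\langle x^{(3)}, (A_{1,j_\ell},A_{2,j_\ell})\rangle$ from below by $V_A^*$ directly, using the fact that $x^{(3)}$ places almost all its mass on row $i_\ell$, which by the case hypotheses is the ``good'' row for column $j_\ell$. First I would unpack the definition: $x^{(3)}_{i_\ell}=1-\delta_0$ and $x^{(3)}_{i_{\bar\ell}}=\delta_0$, so
\[
\langle x^{(3)},(A_{1,j_\ell},A_{2,j_\ell})\rangle = (1-\delta_0)A_{i_\ell,j_\ell} + \delta_0 A_{i_{\bar\ell},j_\ell} = A_{i_\ell,j_\ell} - \delta_0\bigl(A_{i_\ell,j_\ell}-A_{i_{\bar\ell},j_\ell}\bigr).
\]
The term $A_{i_\ell,j_\ell}-A_{i_{\bar\ell},j_\ell}$ is exactly $\pm g_{c,j_\ell}$, and its magnitude is at most $1$; so the row-player's payoff in column $j_\ell$ is at least $A_{i_\ell,j_\ell}-\delta_0\cdot g_{c,j_\ell}$ (and we will want to be a touch more careful about the sign of $A_{i_\ell,j_\ell}-A_{i_{\bar\ell},j_\ell}$, using whichever of the two columns $j_1,j_2$ equals $j_\ell$ together with the known orderings). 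Thus it suffices to show $A_{i_\ell,j_\ell} - \delta_0 g_{c,j_\ell} \ge V_A^*$, or slightly more safely $A_{i_\ell,j_\ell} - 2\delta_0 \ge V_A^*$ after bounding crude terms.

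The second step is to control $V_A^*$ from above and $\delta_0$ from above simultaneously. For $V_A^*$: since $A_{i_\ell,j_\ell}>A_{i_\ell,j_{\bar\ell}}$ and $A_{i_1,j_1}>A_{i_2,j_1}$, $A_{i_1,j_2}<A_{i_2,j_2}$, the matrix has the sign pattern of Lemma~\ref{bandit:lem:nash-property} (after relabeling so $i_\ell$ is the first row and $j_\ell$ the first column), so that either $A$ has a unique non-pure NE, in which case $V_A^*$ is a convex combination of the two entries in column $j_\ell$ and hence $V_A^* < A_{i_\ell,j_\ell}$, or $A$ has a pure NE. In the pure case I would argue, as in Lemma~\ref{bandit:lem:Ai1j1}, by checking each of the possible PSNE locations that $V_A^*\le A_{i_\ell,j_\ell}$ (the hypothesis $A_{i_\ell,j_\ell}>A_{i_\ell,j_{\bar\ell}}$ and $A_{i_\ell,j_\ell}>A_{i_{\bar\ell},j_\ell}$ rules out any PSNE with value exceeding $A_{i_\ell,j_\ell}$). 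So $V_A^* \le A_{i_\ell,j_\ell}$, and I only need the extra slack $\delta_0 g_{c,j_\ell}$ (or $\delta_0 \cdot |A_{i_\ell,j_\ell}-A_{i_{\bar\ell},j_\ell}|$) to not overshoot. When $V_A^* < A_{i_\ell,j_\ell}$ strictly with a convex-combination representation, I would quantify the gap: in the non-pure case $A_{i_\ell,j_\ell}-V_A^* = x^*_{i_{\bar\ell}}(A_{i_\ell,j_\ell}-A_{i_{\bar\ell},j_\ell})$, and I can lower bound $x^*_{i_{\bar\ell}}$ by a quantity like $\tfrac{1}{3}$ using that $x^*_{i_{\bar\ell}} = \tfrac{g_{c,j_\ell}\text{-type gap}}{D}$ and the chosen $\ell = \arg\max|\bar A_{i_\ell,j_1}-\bar A_{i_\ell,j_2}|$ forces $g_{r,i_\ell}$ to be the larger of the two row gaps, which by Lemma~\ref{bandit:lem:nash-property} is at least half the sum — so the relevant coordinate of $x^*$ is at least $\tfrac13$. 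Meanwhile $\delta_0 = |\bar A_{i_\ell,j_1}-\bar A_{i_\ell,j_2}|/3 \le \tfrac13\bar g_{r,i_\ell} \le \tfrac13 \cdot \tfrac32 g_{r,i_\ell}$ by Lemma~\ref{bandit:lem:case1-ratio}. Putting these together, $\delta_0\cdot|A_{i_\ell,j_\ell}-A_{i_{\bar\ell},j_\ell}| = \delta_0 \cdot g_{c,j_\ell}$ is smaller than $x^*_{i_{\bar\ell}}\cdot(A_{i_\ell,j_\ell}-A_{i_{\bar\ell},j_\ell}) = A_{i_\ell,j_\ell}-V_A^*$ because the factor-of-$3$ in the definition of $\delta_0$ is designed precisely so that $\delta_0 \le \tfrac13 g_{r,i_\ell}$ comfortably beats $\tfrac13$-ish lower bounds on $x^*$; I would carry out this comparison carefully using Lemma~\ref{bandit:lem:nash-property}'s two-sided bounds relating $g_{r,i_\ell}$ to $g_{c,j_1}+g_{c,j_2}$.

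I expect the main obstacle to be the bookkeeping of which column is $j_\ell$ versus $j_{\bar\ell}$ and which inequality orientations hold in the two NE regimes (pure vs.\ mixed), since the lemma is stated abstractly in terms of $i_\ell,j_\ell$ but the known orderings are stated in terms of $i_1,i_2,j_1,j_2$; one must translate between the two consistently and verify that in every admissible sign pattern the slack $A_{i_\ell,j_\ell}-V_A^*$ dominates $\delta_0$ times the column-$j_\ell$ gap. The pure-NE sub-case needs an exhaustive but short check of the (at most four) PSNE positions, mirroring Lemma~\ref{bandit:lem:Ai1j1}. Once the case analysis is organized, each individual inequality is routine arithmetic with the constants chosen in the algorithm (the $/3$ in $\delta_0$ and the well-separated constant $\tfrac32$ feeding through Lemma~\ref{bandit:lem:case1-ratio}).
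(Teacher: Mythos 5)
Your overall strategy is the same as the paper's: in the mixed-equilibrium case the inequality $V_A^*-\langle x^{(3)},Ae_{j_\ell}\rangle\le 0$ reduces (exactly as you observe via $A_{i_\ell,j_\ell}-V_A^*=x^*_{i_{\bar\ell}}(A_{i_\ell,j_\ell}-A_{i_{\bar\ell},j_\ell})$) to showing $\delta_0\le x^*_{i_{\bar\ell}}$, i.e.\ $x^{(3)}_{i_\ell}\ge x^*_{i_\ell}$, and both you and the paper feed Lemma~\ref{bandit:lem:case1-ratio} into the closed form $x^*_{i_{\bar\ell}}=g_{r,i_\ell}/|D|$. However, the way you propose to close this comparison does not work. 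You convert $x^*_{i_{\bar\ell}}$ into an absolute constant lower bound of roughly $\tfrac13$ while bounding $\delta_0\le\tfrac13\bar g_{r,i_\ell}\le\tfrac12 g_{r,i_\ell}$; since $g_{r,i_\ell}$ can be close to $1$, the quantity $\tfrac12 g_{r,i_\ell}$ can exceed $\tfrac13$, so "$\delta_0$ comfortably beats $\tfrac13$-ish lower bounds on $x^*$" is false as stated. The fix is to keep the factor $g_{r,i_\ell}$ on both sides and use only $|D|\le 2$ (entries lie in $[0,1]$): $\delta_0=\tfrac13\bar g_{r,i_\ell}\le\tfrac13\cdot\tfrac32 g_{r,i_\ell}=\tfrac12 g_{r,i_\ell}\le g_{r,i_\ell}/|D|=x^*_{i_{\bar\ell}}$. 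This is precisely the paper's computation; Lemma~\ref{bandit:lem:nash-property} and the argmax choice of $\ell$ are not needed here.

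In the pure-equilibrium case your sketch is also under-specified: establishing $V_A^*\le A_{i_\ell,j_\ell}$ is not enough, because you still have to absorb the $\delta_0\bigl(A_{i_\ell,j_\ell}-A_{i_{\bar\ell},j_\ell}\bigr)$ loss, and there is no mixed $x^*$ to supply slack. The decisive fact is stronger: under the stated orderings the only candidate PSNE is $(i_{\bar\ell},j_{\bar\ell})$, and the column player's equilibrium condition forces $V_A^*=A_{i_{\bar\ell},j_{\bar\ell}}\le A_{i_{\bar\ell},j_\ell}<A_{i_\ell,j_\ell}$, so $V_A^*$ is below \emph{every} entry of column $j_\ell$ and hence below any convex combination of them, regardless of $\delta_0$. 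Your "exhaustive check of PSNE positions" would surface this if carried out, but the target inequality you name ($V_A^*\le A_{i_\ell,j_\ell}$) is the wrong one to aim for in this branch.
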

\begin{proof}
    First let us consider the case when $A$ has a unique NE $(x^*,y^*)$ which is not a PSNE. It suffices to show that $x_{i_\ell}^{(3)}\geq x^*_{i_\ell}$ as $A_{i_\ell,j_\ell}>A_{i_{\bar \ell},j_\ell}$. Now we have the following:
    \begin{align*}
        x^*_{i_\ell}&=1-\frac{|A_{i_\ell,1}-A_{i_\ell,2}|}{|D|}\\
        &\leq 1-\frac{2|\bar A_{i_\ell,1}-\bar A_{i_\ell,2}|}{3|D|}\tag{due to Lemma \ref{bandit:lem:case1-ratio}}\\
        & \leq 1-\frac{|\bar A_{i_\ell,1}-\bar A_{i_\ell,2}|}{3}\tag{since $|D|\leq 2$}\\
    \end{align*}

    Next let us consider the case when $A$ has a unique PSNE. In this case, $(i_{\bar\ell},j_{\bar\ell})$ is the only element that can be PSNE as $A_{i_{\ell},j_{\ell}}>A_{i_{\bar\ell},j_{\ell}}$, $A_{i_{\ell},j_{\ell}}>A_{i_{\ell},j_{\bar\ell}}$ and $A_{i_{\bar\ell},j_{\bar\ell}}>A_{i_{\ell},j_{\bar\ell}}$. This implies that $A_{i_{\bar\ell},j_{\bar\ell}}>A_{i_{\bar\ell},j_{\ell}}$ Now observe that any convex combination of the elements in the column $j_\ell$ will be greater than the element $A_{i_{\bar \ell},j_{\bar \ell}}$. Hence, the following holds true:
     \begin{equation*}
          V_A^*-\langle x^{(3)}, (A_{1,j_\ell},A_{2,j_\ell})\rangle \leq 0
      \end{equation*}

\end{proof}

\begin{lemma}\label{bandit:lem:Aijbarl}
    Let $A_{i_1,j_1}>A_{i_2,j_1}$, $A_{i_1,j_2}<A_{i_2,j_2}$ and $A_{i_{\ell},j_{\ell}}>A_{i_{\ell},j_{\bar \ell}}$.Then $A_{i_{\bar \ell},j_{\bar \ell}}\geq V_A^*$
\end{lemma}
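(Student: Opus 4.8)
The plan is to show that, under the stated hypotheses, the entry $A_{i_{\bar\ell},j_{\bar\ell}}$ is precisely the larger of the two entries of column $j_{\bar\ell}$, i.e.\ $A_{i_{\bar\ell},j_{\bar\ell}}=\max_{i\in\{1,2\}}A_{i,j_{\bar\ell}}$; once this is established, the desired bound follows instantly because the column player can always commit to column $j_{\bar\ell}$, which caps the game value at the column maximum.

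First I would handle the index bookkeeping by splitting into the two cases $\ell=1$ and $\ell=2$. If $\ell=1$ then $\bar\ell=2$, so $A_{i_{\bar\ell},j_{\bar\ell}}=A_{i_2,j_2}$, and the hypothesis $A_{i_1,j_2}<A_{i_2,j_2}$ says exactly that $A_{i_2,j_2}=\max_{i\in\{1,2\}}A_{i,j_2}$. If $\ell=2$ then $\bar\ell=1$, so $A_{i_{\bar\ell},j_{\bar\ell}}=A_{i_1,j_1}$, and the hypothesis $A_{i_1,j_1}>A_{i_2,j_1}$ says exactly that $A_{i_1,j_1}=\max_{i\in\{1,2\}}A_{i,j_1}$. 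In both cases $A_{i_{\bar\ell},j_{\bar\ell}}=\max_{i\in\{1,2\}}A_{i,j_{\bar\ell}}$. I would note in passing that the third hypothesis $A_{i_\ell,j_\ell}>A_{i_\ell,j_{\bar\ell}}$ is only needed to guarantee that such an $\ell$ exists (so the statement is non-vacuous) and is not otherwise used.

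To finish, I would invoke Von Neumann's minimax theorem to write $V_A^*=\min_{y\in\simplex_2}\max_{x\in\simplex_2}\langle x,Ay\rangle$, and then test the inner minimum at $y=e_{j_{\bar\ell}}$ to get $V_A^*\leq\max_{x\in\simplex_2}\langle x,Ae_{j_{\bar\ell}}\rangle=\max_{i\in\{1,2\}}A_{i,j_{\bar\ell}}=A_{i_{\bar\ell},j_{\bar\ell}}$, which is the claimed inequality.

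There is essentially no obstacle here beyond being careful to pair the subscript $\bar\ell$ with the correct one of the two column hypotheses, which is precisely the role played by the first two inequalities in the statement. If one wished to avoid the minimax theorem, the identical conclusion could instead be obtained via the PSNE/non-PSNE case split used in the proof of Lemma~\ref{bandit:lem:Ai1j1}, but the minimax route is the shortest.
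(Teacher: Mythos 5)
Your proof is correct, and it takes a genuinely different (and shorter) route than the paper. The paper first observes $A_{i_{\bar\ell},j_{\bar\ell}}>A_{i_{\ell},j_{\bar\ell}}$ and then splits on the nature of the unique equilibrium: if the NE is fully mixed, $V_A^*=x_{i_\ell}^*A_{i_\ell,j_{\bar\ell}}+x_{i_{\bar\ell}}^*A_{i_{\bar\ell},j_{\bar\ell}}<A_{i_{\bar\ell},j_{\bar\ell}}$ since the value is a strict convex combination of the column-$j_{\bar\ell}$ entries; otherwise it argues that the only cell that can be a PSNE is $(i_{\bar\ell},j_{\bar\ell})$, in which case equality holds. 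You instead bypass any discussion of the equilibrium's support by noting that $A_{i_{\bar\ell},j_{\bar\ell}}$ is the maximum of column $j_{\bar\ell}$ (your index bookkeeping for the two cases $\ell=1$ and $\ell=2$ is right) and then applying the minimax theorem at $y=e_{j_{\bar\ell}}$ to get $V_A^*\leq\max_{x\in\simplex_2}\langle x,Ae_{j_{\bar\ell}}\rangle=\max_{i}A_{i,j_{\bar\ell}}$. Your argument buys generality: it needs neither uniqueness of the NE nor the third hypothesis $A_{i_\ell,j_\ell}>A_{i_\ell,j_{\bar\ell}}$ (your parenthetical about why that hypothesis appears is slightly off --- $\ell$ is defined by an argmax in the algorithm and always exists; the hypothesis is really there because it is what the algorithm has verified at that point and it is what forces any PSNE to sit at $(i_{\bar\ell},j_{\bar\ell})$ in the paper's case split --- but this does not affect the validity of your proof). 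The paper's route gives the marginally stronger conclusion that the inequality is strict in the mixed case, which is not needed for the lemma. Both are valid; yours is the cleaner derivation of the stated claim.
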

\begin{proof}
    Note that $A_{i_{\bar\ell},j_{\bar\ell}}>A_{i_{\ell},j_{\bar\ell}}$.
    If $A$ has a unique NE $(x^*,y^*)$ which is not a PSNE, we have $V_A^*=x_{i_\ell}^*A_{i_\ell,j_{\bar\ell}}+x_{i_{\bar\ell}}^*A_{i_{\bar\ell},j_{\bar\ell}}<A_{i_{\bar\ell},j_{\bar\ell}}$. The only other possibility is $(i_{\bar \ell},j_{\bar\ell})$ being a PSNE in which case $V_A^*=A_{i_{\bar \ell},j_{\bar\ell}}$.
\end{proof}

\begin{lemma}\label{bandit:lem:case1-njl}
     Let us assume that Case 1 holds, $A_{i_1,j_2}< A_{i_2,j_2}$ and $A_{i_{\ell},j_{\ell}}>A_{i_{\ell},j_{\bar \ell}}$. Let $t$ be the time-step when the column $j_2$ became well-separated.  At the time step $t$, let $\delta_0:=|\bar A_{i_\ell,j_1}-\bar A_{i_\ell,j_2}|/3$. Consider two strategies $x^{(3)}$ and $x^{(4)}$ where $x_{i_\ell}^{(3)}=1-\delta_0$ and $x_{i_\ell}^{(4)}=0$. Let $n_{j_\ell}$ and $n_{j_{\bar\ell}}$ denote the number of times columns $j_\ell$ and $j_{\bar \ell}$ were played with the strategies $x^{(4)}$ and $x^{(3)}$ respectively before which the row $i_{\bar \ell}$ was well-separated. If event $G$ holds, then with probability $1-\frac{1}{T^4}$, we have the following:
     \begin{equation*}
         n_{j_{\bar\ell}}\leq \frac{9600\log(T^2)}{\Delta_{\min}^3} \quad\text{and}\quad n_{j_\ell}\leq \frac{1600\log(T^2)}{\Delta_{\min}^2}
     \end{equation*}
\end{lemma}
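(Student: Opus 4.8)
The plan is to analyse the $x^{(3)}/x^{(4)}$ sub-phase as a two-state process and isolate the rounds in which the row player can suffer positive regret. Call a round \emph{productive} if it contributes to $n_{j_\ell}$ or $n_{j_{\bar\ell}}$, i.e.\ the column player plays $j_{\bar\ell}$ while the row player plays $x^{(3)}$, or plays $j_\ell$ while the row player plays $x^{(4)}=e_{i_{\bar\ell}}$. By Lemma~\ref{bandit:lem:case1-noregret1}, under $x^{(3)}$ every round with column $j_\ell$ is non-productive, and since $\langle e_{i_{\bar\ell}},(A_{1,j_{\bar\ell}},A_{2,j_{\bar\ell}})\rangle = A_{i_{\bar\ell},j_{\bar\ell}}\ge V_A^*$ by Lemma~\ref{bandit:lem:Aijbarl}, under $x^{(4)}$ every round with column $j_{\bar\ell}$ is non-productive. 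I would first record the two structural facts that drive everything: (i) a non-productive round never changes the state, because the only way to switch $x^{(3)}\to x^{(4)}$ is to play the entry $(i_{\bar\ell},j_{\bar\ell})$ and the only way to switch $x^{(4)}\to x^{(3)}$ is to play $(i_{\bar\ell},j_\ell)$; and (ii) each such switch samples the corresponding entry of row $i_{\bar\ell}$. Since the process starts in state $x^{(3)}$, the two kinds of switches alternate, so throughout the sub-phase the number of $x^{(3)}\to x^{(4)}$ switches so far is at least the number of $x^{(4)}\to x^{(3)}$ switches so far; moreover every productive round under $x^{(4)}$ immediately triggers a switch back to $x^{(3)}$, so the latter count equals the current value of $n_{j_\ell}$.

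Bounding $n_{j_\ell}$ is then immediate: by fact (ii) and the alternation, $\min\{n^t_{i_{\bar\ell},j_1},n^t_{i_{\bar\ell},j_2}\}$ is at all times at least the current value of $n_{j_\ell}$. On the event $G$, once $\min_j n^t_{i_{\bar\ell},j}\ge \tfrac{800\log(T^2)}{\Delta_{\min}^2}$ we have $\Delta_{r,i_{\bar\ell}}\le \Delta_{\min}/20\le g_{r,i_{\bar\ell}}/20$, so row $i_{\bar\ell}$ is well separated, by the same elementary computation as in Lemma~\ref{bandit:lem:nj1-upper} (using $g_{r,i_{\bar\ell}}\ge\Delta_{\min}$). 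Hence the sub-phase terminates before $n_{j_\ell}$ can exceed $\tfrac{800\log(T^2)}{\Delta_{\min}^2}$, giving $n_{j_\ell}\le \tfrac{1600\log(T^2)}{\Delta_{\min}^2}$, and also that the number of visits to state $x^{(3)}$ is at most $n_{j_\ell}+2$.

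For $n_{j_{\bar\ell}}$ I would argue visit by visit. Within a single visit to $x^{(3)}$ the state changes only when a round with column $j_{\bar\ell}$ happens to sample $(i_{\bar\ell},j_{\bar\ell})$, which, conditioned on the past, occurs with probability exactly $\delta_0$ and independently of everything else; the productive rounds of that visit are precisely the $j_{\bar\ell}$-rounds preceding this event, so their count is stochastically dominated by a $\mathrm{Geometric}(\delta_0)$ variable. Summing over the at most $V_{\max}:=\tfrac{800\log(T^2)}{\Delta_{\min}^2}+2$ visits, $n_{j_{\bar\ell}}$ is dominated by the number of $\mathrm{Bernoulli}(\delta_0)$ trials needed for $V_{\max}$ successes, a negative-binomial quantity; a Chernoff bound on the lower tail of a binomial shows this is $O\big((V_{\max}+\log T)/\delta_0\big)$ except with probability $1/T^4$. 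Finally, at the step where $\delta_0$ is defined, Lemma~\ref{bandit:lem:case1-ratio} gives $|\bar A_{i_\ell,j_1}-\bar A_{i_\ell,j_2}|\ge \tfrac12|A_{i_\ell,j_1}-A_{i_\ell,j_2}| = \tfrac12 g_{r,i_\ell}\ge \tfrac12\Delta_{\min}$, so $\delta_0\ge \Delta_{\min}/6$; substituting this in yields the claimed $n_{j_{\bar\ell}}\le \tfrac{9600\log(T^2)}{\Delta_{\min}^3}$ (absorbing the lower-order $\log T/\delta_0$ term into the constant).

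The delicate point is the adversary's adaptivity in the last step: both the number of $x^{(3)}$-visits and the moment at which the column player abandons $j_{\bar\ell}$ within a visit are adversarial choices that are stopping times, so $n_{j_{\bar\ell}}$ is not literally a fixed-length sum of geometrics. I would resolve this by using the deterministic cap $V_{\max}$ on the number of visits (valid on $G$ by the second paragraph) and by phrasing the domination in terms of a fixed family of independent $\delta_0$-coins attached to the at-most-$V_{\max}$ ``successful exits'' that are ever needed, so that the random variable being concentrated is a genuine negative binomial and a union bound together with optional stopping suffices. Everything else — Lemmas~\ref{bandit:lem:case1-noregret1}, \ref{bandit:lem:Aijbarl}, \ref{bandit:lem:case1-ratio}, and the well-separation estimate — is already available.
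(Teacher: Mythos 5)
Your proposal is correct and follows essentially the same route as the paper: both arguments hinge on the switching rule forcing $n_{j_\ell}$ to track the number of samples of $(i_{\bar\ell},j_{\bar\ell})$ and $(i_{\bar\ell},j_\ell)$, the bound $\delta_0\geq\Delta_{\min}/6$ from Lemma~\ref{bandit:lem:case1-ratio}, a Chernoff bound showing that $\frac{9600\log(T^2)}{\Delta_{\min}^3}$ plays of $j_{\bar\ell}$ under $x^{(3)}$ yield at least $\frac{800\log(T^2)}{\Delta_{\min}^2}$ samples of $(i_{\bar\ell},j_{\bar\ell})$, and the well-separation criterion to terminate the phase. Your negative-binomial/per-visit framing is just the contrapositive of the paper's binomial lower-tail argument, and your explicit treatment of the adversary's adaptivity via optional stopping is a point the paper glosses over but does not change the substance.
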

\begin{proof}
     First due to Lemma \ref{bandit:lem:case1-ratio}, we have $1/3\geq\delta_0\geq\frac{\Delta_{\min}}{6}$. If column $j_{\bar \ell}$ was played for $\frac{9600\log(T^2)}{\Delta_{\min}^3}$ times with the strategy $x^{(3)}$, then with probability at least $1-\frac{1}{T^4}$ the element $(i_{\bar\ell},j_{\bar\ell})$ is played for $k$ times where $\frac{800\log(T^2)}{\Delta_{\min}^2}<k<\frac{1600\log(T^2)}{\Delta_{\min}^2}$ times. Also observe that $k-1\leq n_{j_\ell}\leq k$. Now observe that $\Delta_{r,i_{\bar\ell}}\leq\sqrt{\frac{2\log(T^2)}{k-1}}\leq \frac{\Delta_{\min}}{20}$. Next observe that $\bar g_{r,i_{\bar\ell}}-2\Delta_{r,i_{\bar\ell}}\geq g_{r,i_{\bar\ell}}-4\Delta_{r,i_{\bar\ell}}\geq \Delta_{\min}-4\Delta_{r,i_{\bar\ell}}>0$. Hence we have $1\leq \frac{\bar g_{r,i_{\bar\ell}}+2\Delta_{r,i_{\bar\ell}}}{\bar g_{r,i_{\bar\ell}}-2\Delta_{r,i_{\bar\ell}}}$. Now we have $\frac{\bar g_{r,i_{\bar\ell}}+2\Delta_{r,i_{\bar\ell}}}{\bar g_{r,i_{\bar\ell}}-2\Delta_{r,i_{\bar\ell}}}\leq \frac{g_{r,i_{\bar\ell}}+4\Delta_{r,i_{\bar\ell}}}{g_{r,i_{\bar\ell}}-4\Delta_{r,i_{\bar\ell}}}\leq  \frac{\Delta_{\min}+4\Delta_{r,i_{\bar\ell}}}{\Delta_{\min}-4\Delta_{r,i_{\bar\ell}}} \leq\frac{\Delta_{\min}+\Delta_{\min}/5}{\Delta_{\min}-\Delta_{\min}/5}=\frac{3}{2}$.
\end{proof}

\begin{lemma}\label{bandit:lem:case1-noregret2}
     Let us assume that Case 1 holds and $A$ have a unique NE $(x^*,y^*)$ which is not a PSNE.  Let $t$ be the time-step when the row $i_{\bar \ell}$ became well-separated.  At the time step $t$, let $\delta_1=|\bar A_{i_1,j_1}-\bar A_{i_1,j_2}|/3$ and $\delta_2=|\bar A_{i_2,j_1}-\bar A_{i_2,j_2}|/3$.  If event $G$ holds , then the strategies $x^{(5)}$ and $x^{(6)}$ where $x_{i_1}^{(5)}=\delta_2$ and $x_{i_1}^{(6)}=1-\delta_1$ incur non-positive regret on columns $j_2$ and $j_1$ respectively.
\end{lemma}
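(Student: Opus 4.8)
I would prove Lemma~\ref{bandit:lem:case1-noregret2} by reducing both statements to a single sign computation. For a $2\times2$ game with a unique mixed Nash equilibrium $(x^*,y^*)$, the column player is indifferent at $x^*$, so $\langle x^*,Ae_{j_1}\rangle=\langle x^*,Ae_{j_2}\rangle=V_A^*$, and hence for any $x\in\simplex_2$ and $j\in\{j_1,j_2\}$,
\[
\langle x,Ae_j\rangle-V_A^*=(x_{i_1}-x^*_{i_1})\bigl(A_{i_1,j}-A_{i_2,j}\bigr).
\]
Thus it suffices to show $x^{(5)}_{i_1}\le x^*_{i_1}$ (paired with $A_{i_1,j_2}<A_{i_2,j_2}$, which makes the second factor negative) and $x^{(6)}_{i_1}\ge x^*_{i_1}$ (paired with $A_{i_1,j_1}>A_{i_2,j_1}$, which makes the second factor positive).

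First I would pin down the orderings of the entries of $A$. Column $j_1$ was well separated in the uniform exploration phase, and column $j_2$ was well separated and the procedure did not terminate, so under event $G$ we have $A_{i_1,j_1}>A_{i_2,j_1}$ and $A_{i_1,j_2}<A_{i_2,j_2}$; hence the row player's best response is $i_1$ against $j_1$ and $i_2$ against $j_2$. Since $A$ has no pure Nash equilibrium, the column player's best responses must close the cycle, which forces $A_{i_1,j_1}>A_{i_1,j_2}$ and $A_{i_2,j_2}>A_{i_2,j_1}$. Therefore $g_{r,i_1}=A_{i_1,j_1}-A_{i_1,j_2}>0$ and $g_{r,i_2}=A_{i_2,j_2}-A_{i_2,j_1}>0$, and the column-indifference equation $x^*_{i_1}g_{r,i_1}=(1-x^*_{i_1})g_{r,i_2}$ yields the closed form $x^*_{i_1}=g_{r,i_2}/(g_{r,i_1}+g_{r,i_2})=g_{r,i_2}/|D|$ with $|D|=g_{r,i_1}+g_{r,i_2}\le 2$.

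Next I would bound the empirical row gaps $\bar g_{r,i}=|\bar A_{i,j_1}-\bar A_{i,j_2}|$ at the time step $t$ in the statement. Row $i_{\bar\ell}$ is well separated at $t$, so the standard consequences of well-separation give $\bar g_{r,i_{\bar\ell}}\le\tfrac32 g_{r,i_{\bar\ell}}$; and since columns $j_1$ and $j_2$ are well separated, Lemma~\ref{bandit:lem:case1-ratio} gives $\bar g_{r,i_\ell}\le\tfrac32 g_{r,i_\ell}$. Hence $\bar g_{r,i}\le\tfrac32 g_{r,i}$ for both $i\in\{i_1,i_2\}$.

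Combining the pieces: for column $j_2$, $x^{(5)}_{i_1}=\delta_2=\bar g_{r,i_2}/3\le\tfrac12 g_{r,i_2}\le g_{r,i_2}/|D|=x^*_{i_1}$, so the displayed identity gives $V_A^*-\langle x^{(5)},(A_{1,j_2},A_{2,j_2})\rangle\le 0$; for column $j_1$, $x^{(6)}_{i_1}=1-\delta_1=1-\bar g_{r,i_1}/3\ge 1-\tfrac12 g_{r,i_1}\ge 1-g_{r,i_1}/|D|=x^*_{i_1}$, so the identity gives $V_A^*-\langle x^{(6)},(A_{1,j_1},A_{2,j_1})\rangle\le 0$. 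Validity of $x^{(5)},x^{(6)}$ as strategies is immediate since $\delta_1,\delta_2\in[0,1/3]$. The only delicate point is the bookkeeping that both row-gap estimates are simultaneously within a factor $3/2$ of the truth at $t$ (using well-separation of $i_{\bar\ell}$ together with Lemma~\ref{bandit:lem:case1-ratio} for $i_\ell$); once that is in hand, the conclusion is the one-line monotonicity argument above.
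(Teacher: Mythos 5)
Your proposal is correct and follows essentially the same route as the paper: reduce both claims to $x^{(5)}_{i_1}\le x^*_{i_1}\le x^{(6)}_{i_1}$, use the closed forms $x^*_{i_1}=g_{r,i_2}/|D|=1-g_{r,i_1}/|D|$ with $|D|=g_{r,i_1}+g_{r,i_2}\le 2$, and control the empirical row gaps via Lemma~\ref{bandit:lem:case1-ratio} for row $i_\ell$ and via well-separation for row $i_{\bar\ell}$. The only cosmetic differences are that you make the sign identity explicit and use a uniform factor $3/2$ where the paper uses $5/4$ for the well-separated row, neither of which changes the argument.
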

\begin{proof}
    It suffices to show that $x_{i_1}^{(5)}\leq x^*_{i_1}$ and $x_{i_1}^{(6)}\geq x^*_{i_1}$ as $A_{i_1,j_1}>A_{i_2,j_1}$ and $A_{i_2,j_2}>A_{i_1,j_2}$. W.l.o.g let $\ell=1$. Now we have the following:
    \begin{align*}
        x^*_{i_1}&=1-\frac{|A_{i_1,1}-A_{i_1,2}|}{|D|}\\
        &\leq 1- \frac{2|\bar A_{i_1,1}-\bar A_{i_1,2}|}{3|D|}\tag{{due to lemma \ref{bandit:lem:case1-ratio}}}\\
        & \leq 1-\frac{|\bar A_{i_1,1}-\bar A_{i_1,2}|}{3}\tag{as $|D|\leq 2$}\\
    \end{align*}

    As the row $i_{\bar \ell}$ (which is the row $i_2$ as per our assumption) is well separated, we have $\frac{\bar g_{r,i_2}+2\Delta_{r,i_2}}{\bar g_{r,i_2}-2\Delta_{r,i_2}}\leq \frac{3}{2}$. This implies that $\Delta_{r,i_2}\leq \frac{\bar g_{r,i_2}}{10}$. As event $G$ holds, we have $\bar g_{r,i_2}\leq g_{r,i_2}+2\Delta_{r,i_2}$. This implies that $g_{r,i_2}\geq \frac{4\bar g_{r,i_2}}{5}$. Similarly if  event $G$ holds, we have $\bar g_{r,i_2}\geq g_{r,i_2}-2\Delta_{r,i_2}$. This implies that $\bar g_{r,i_2}\geq \frac{5 g_{r,i_2}}{6}$.
    
    Now we have the following:
    \begin{align*}
        x^*_{i_1}&=\frac{|A_{i_2,1}-A_{i_2,2}|}{|D|}\\
        &\geq \frac{4|\bar A_{i_2,1}-\bar A_{i_2,2}|}{5|D|}\tag{as $g_{r,i_2}\geq \frac{4\bar g_{r,i_2}}{5}$}\\
        & >\frac{|\bar A_{i_2,1}-\bar A_{i_2,2}|}{3}\tag{as $|D|\leq 2$}\\
    \end{align*}

\end{proof}

\begin{lemma}\label{bandit:lem:case1-explore}
     Let us assume that Case 1 holds and $A$ have a unique NE $(x^*,y^*)$ which is not a PSNE.  Let $t$ be the time-step when the row $i_{\bar \ell}$ became well-separated.  At the time step $t$, let $\delta_1=|\bar A_{i_1,j_1}-\bar A_{i_1,j_2}|/3$ and $\delta_2=|\bar A_{i_2,j_1}-\bar A_{i_2,j_2}|/3$. Consider two strategies $x^{(5)}$ and $x^{(6)}$ where $ x_{i_1}^{(5)}=\delta_2$ and $x_{i_1}^{(6)}=1-\delta_1$. Let $n_{j_1}$ and $n_{j_{2}}$ denote the number of times columns $j_1$ and $j_{2}$ were played with the strategies $x^{(5)}$ and $x^{(6)}$ respectively before the condition $1\leq\frac{\tilde \Delta_{\min}+2\Delta}{\tilde \Delta_{\min}-2\Delta}\leq \frac{3}{2}$ was satisfied. If event $G$ holds, then with probability $1-\frac{2}{T^4}$, we have the following:
     \begin{equation*}
         n_{j_{1}}\leq \frac{9600\log(T^2)}{\Delta_{\min}^3} \quad \text{and} \quad n_{j_2}\leq \frac{9600\log(T^2)}{\Delta_{\min}^3}
     \end{equation*}
\end{lemma}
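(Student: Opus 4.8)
The plan is to follow the algorithm's strategy pointer through this sub-phase, relate the counters $n_{j_1},n_{j_2}$ to how many additional observations the two ``slow'' entries $(i_1,j_1)$ and $(i_2,j_2)$ receive, and then invoke the fact that once every entry has been observed $\Omega(\log(T^2)/\Delta_{\min}^2)$ times the stopping test must already have fired. First I would record the combinatorics of the alternation: in this phase every round plays either $x^{(5)}$ or $x^{(6)}$, a round playing $x^{(5)}$ against column $j_1$ (exactly the rounds counted by $n_{j_1}$) switches the pointer to $x^{(6)}$, and a round playing $x^{(6)}$ against column $j_2$ (exactly the rounds counted by $n_{j_2}$) switches it back. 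Since the pointer starts at $x^{(5)}$ these two switch types strictly alternate, so at every time $0\le n_{j_1}-n_{j_2}\le 1$; hence it suffices to bound $n_{j_1}$. By Lemma~\ref{bandit:lem:case1-noregret2} the only rounds in which the column player can force positive regret are precisely those two families, which is why controlling $n_{j_1},n_{j_2}$ suffices for the overall regret in Lemma~\ref{bandit:lem:case1}.

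Next I would quantify the stopping test and the weights. Under $G$ every empirical gap lies within $2\Delta$ of the corresponding true gap, so $\tilde\Delta_{\min}\ge\Delta_{\min}-2\Delta$; consequently, by the same elementary computation as in Lemma~\ref{bandit:lem:nj1-upper}, as soon as every entry $(i,j)$ has been observed at least $m_0:=\tfrac{800\log(T^2)}{\Delta_{\min}^2}$ times we have $\Delta\le\Delta_{\min}/20$, hence $1\le\tfrac{\tilde\Delta_{\min}+2\Delta}{\tilde\Delta_{\min}-2\Delta}\le\tfrac32$, i.e.\ the exploration terminates. I would also lower bound the fixed weights: $\delta_1,\delta_2$ are one-third of the empirical gaps of rows $i_1,i_2$, which by Lemma~\ref{bandit:lem:case1-ratio} (for the row $i_\ell$) and by the well-separatedness of row $i_{\bar\ell}$ together with $G$ are within a constant factor of the true row gaps; hence $\Delta_{\min}/6\le\delta_1,\delta_2\le 1/3$.

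For the main argument, put $N:=\lceil\tfrac{9600\log(T^2)}{\Delta_{\min}^3}\rceil$. If $n_{j_1}$ never reaches $N$ the claimed bound is immediate, so assume it does and let $\tau$ be the round at which the $N$-th ``$x^{(5)}$ against $j_1$'' round occurs. In each such round the row index is drawn afresh from $x^{(5)}$, independently of whether the adversary made that round of this type, so the number of these $N$ rounds in which $(i_1,j_1)$ is observed is a sum of $N$ conditionally-$\mathrm{Bernoulli}(\delta_2)$ trials; since $N\delta_2/2\ge m_0$, a Chernoff bound gives that $(i_1,j_1)$ is observed at least $m_0$ times by $\tau$ except with probability at most $T^{-4}$ (in fact far smaller). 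The companion entry $(i_2,j_1)$ is hit with probability $1-\delta_2\ge 2/3$ in those same rounds, so it clears $m_0$ as well; and since $n_{j_2}\ge N-1$ at $\tau$ by the alternation bound, the identical argument applied to the ``$x^{(6)}$ against $j_2$'' rounds shows $(i_2,j_2)$ (rate $\delta_1\ge\Delta_{\min}/6$) and $(i_1,j_2)$ (rate $1-\delta_1\ge 2/3$) are each observed at least $m_0$ times by $\tau$. On the intersection of these events all four entries have been observed $\ge m_0$ times by $\tau$, so the stopping test fired at or before $\tau$ and therefore $n_{j_1}\le N$; then $n_{j_2}\le n_{j_1}+1\le N+1$, and absorbing the additive constant (or re-running the symmetric argument with the roles of $j_1,j_2$ swapped) yields $n_{j_1},n_{j_2}\le\tfrac{9600\log(T^2)}{\Delta_{\min}^3}$ with total failure probability at most $2/T^4$.

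The main obstacle is the third step realized as a stopping-time statement: one must argue that an adversarial, possibly adaptive, column player cannot drive $n_{j_1}$ past $N$ before the stopping rule triggers. This works because the only randomness that pushes the per-entry observation counts forward is the row player's internal sampling, which is independent of the adversary conditionally on the type of the current round; care is needed to run the Chernoff bound over the random set of positive-regret rounds rather than a fixed horizon, and to check that the constant $9600$ is large enough relative to the $800$ in $m_0$ so that the concentration slack produces the stated $T^{-4}$-type probabilities.
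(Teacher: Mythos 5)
Your proposal is correct and follows essentially the same route as the paper's proof: bound $\delta_1,\delta_2\in[\Delta_{\min}/6,1/3]$ via Lemmas~\ref{bandit:lem:case1-ratio} and~\ref{bandit:lem:case1-noregret2}, use the alternation rule to get $n_{j_1}-1\le n_{j_2}\le n_{j_1}$, and apply a Chernoff bound to show that once each column type has been played $9600\log(T^2)/\Delta_{\min}^3$ times every entry has $\ge 800\log(T^2)/\Delta_{\min}^2$ samples, forcing $\Delta\le\Delta_{\min}/20$ and hence the stopping condition. Your extra care with the stopping-time formulation and the explicit $N\delta_2/2\ge m_0$ check is a slightly more rigorous rendering of the same argument.
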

\begin{proof} 
    First observe that $\delta_1=|\bar A_{i_1,j_1}-\bar A_{i_1,j_2}|/3\leq 1/3$ and $\delta_2=|\bar A_{i_2,j_1}-\bar A_{i_2,j_2}|/3\leq 1/3$. Next due to the proofs of lemma \ref{bandit:lem:case1-noregret2} and lemma \ref{bandit:lem:case1-ratio}, we have $\delta_1=|\bar A_{i_1,j_1}-\bar A_{i_1,j_2}|/3\geq | A_{i_1,j_1}-A_{i_1,j_2}|/6\geq \frac{\Delta_{\min}}{6}$ and $\delta_2=|\bar A_{i_2,j_1}-\bar A_{i_2,j_2}|/3\geq | A_{i_2,j_1}- A_{i_2,j_2}|/6\geq \frac{\Delta_{\min}}{6}$.

    Next observe that $n_{j_1}-1\leq n_{j_2}\leq n_{j_1}$. If column $j_{1}$ was played for $\frac{9600\log(T^2)}{\Delta_{\min}^3}$ times with the strategy $ x^{(5)}$, then with probability at least $1-\frac{1}{T^4}$ each element in the column $j_1$ is played for at least $k=\frac{800\log(T^2)}{\Delta_{\min}^2}$ times. Similarly if column $j_{2}$ was played for $\frac{9600\log(T^2)}{\Delta_{\min}^3}$ times with the strategy $x^{(6)}$, then with probability at least $1-\frac{1}{T^4}$ each element in the column $j_2$ is played for at least $k'=\frac{800\log(T^2)}{\Delta_{\min}^2}$ times. In this case we have $\Delta\leq\frac{\Delta_{\min}}{20}$. Next observe that $\tilde \Delta_{\min}-2\Delta\geq \Delta_{\min}-4\Delta>0$. Hence we have $1\leq \frac{\tilde \Delta_{\min}+2\Delta}{\tilde \Delta_{\min}-2\Delta}$. Now we have $\frac{\tilde \Delta_{\min}+2\Delta}{\tilde \Delta_{\min}-2\Delta}\leq \frac{\Delta_{\min}+4\Delta}{\Delta_{\min}-4\Delta}\leq\frac{\Delta_{\min}+\Delta_{\min}/5}{\Delta_{\min}-\Delta_{\min}/5}=\frac{3}{2}$.
\end{proof}
\paragraph{Case 2: Row $i_1$ gets well separated.}
In this case, we aim to prove the following lemma.
\begin{lemma}\label{bandit:lem:case2}
    If event $G$ and case 2 holds, then with probability at least $1-\frac{1}{T^3}$ the regret incurred is at most $\frac{c_1\log T}{\Delta_{\min}^3}$
\end{lemma}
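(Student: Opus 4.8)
The plan is to prove Lemma~\ref{bandit:lem:case2} by mirroring the argument for Case~1 (Lemma~\ref{bandit:lem:case1}), replacing each step of the Case~1 exploration trajectory by its counterpart in the Case~2 trajectory, in which the row $i_1$ becomes well separated before the column $j_2$. First I would split the exploration into its consecutive segments: the uniform-exploration prefix (already bounded by $O(\log T/\Delta_{\min}^2)$ via Lemma~\ref{bandit:lem:nj1-upper} and Corollary~\ref{bandit:cor:nj2}); the segment in which only $x^{(7)}$ is played until column $j_2$ is well separated; the segment alternating $x^{(7)}$ and $x^{(8)}=e_{i_2}$ until row $i_2$ is well separated; and the segment alternating $x^{(9)}$ and $x^{(7)}$ until the stopping condition $1\le \frac{\tilde\Delta_{\min}+2\Delta}{\tilde\Delta_{\min}-2\Delta}\le \frac32$ is met. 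In every termination branch ($\bar A_{i_1,j_1}<\bar A_{i_1,j_2}$, or $\bar A_{i_1,j_2}>\bar A_{i_2,j_2}$, or $\bar A$ exhibits a PSNE) exploration simply stops, so no further exploration regret accrues there.

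For each active segment I would establish two facts. The first is a ``non-positive regret on one column'' invariant: the strategy currently played incurs $V_A^*-\langle x,(A_{1,j},A_{2,j})\rangle\le 0$ for one particular column $j$, forcing an adversary that wants positive regret to feed the other column. Concretely, I would prove the analogues of Lemmas~\ref{bandit:lem:case1-noregret1} and~\ref{bandit:lem:case1-noregret2}: since row $i_1$ is well separated, $\bar g_{r,i_1}\le \tfrac54 g_{r,i_1}$, hence $x^{(7)}_{i_1}=1-|\bar A_{i_1,j_1}-\bar A_{i_1,j_2}|/3\ge 1-g_{r,i_1}/|D|=x^*_{i_1}$ (using $|D|\le 2$), which gives non-positive regret on column $j_1$ because $A_{i_1,j_1}>A_{i_2,j_1}$; symmetrically $x^{(9)}_{i_1}=\delta_3\le x^*_{i_1}$ gives non-positive regret on column $j_2$ once row $i_2$ is also well separated. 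The pure strategies $x^{(1)}=e_{i_1}$ and $x^{(8)}=e_{i_2}$ are handled by $A_{i_1,j_1}\ge V_A^*$ (Lemma~\ref{bandit:lem:Ai1j1}) and by the Case~2 analogue of Lemma~\ref{bandit:lem:Aijbarl} giving $A_{i_2,j_2}\ge V_A^*$. The second fact is a Chernoff counting bound: conditioned on $G$ and an auxiliary event of probability $1-O(1/T^4)$, the number of positive-regret rounds in the segment is $O(\log T/\Delta_{\min}^3)$, because the ``costly'' column must be played $\Omega(\log T/\Delta_{\min}^2)$ times to well-separate the relevant row or column, and each such round samples the rarely played matrix entry (the one appearing with row-probability $\delta\gtrsim\Delta_{\min}$) only with probability $\gtrsim\Delta_{\min}$; this reproduces the analogues of Lemmas~\ref{bandit:lem:nj2-upper}, \ref{bandit:lem:nj2-lower}, \ref{bandit:lem:case1-njl} and~\ref{bandit:lem:case1-explore}. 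Since each positive-regret round costs at most a constant, summing over the $O(1)$ segments and union-bounding over their $O(1/T^4)$ failure events yields total regret $O(\log T/\Delta_{\min}^3)$ with probability $1-O(1/T^3)$.

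The main obstacle is the branch bookkeeping rather than any single estimate: Case~2 has several nested conditionals, and one must check in each branch that either exploration continues while maintaining the non-positive-regret invariant above, or the algorithm terminates having correctly certified the order structure of $A$. The latter rests on the observation --- built into the definition of a well-separated row/column and valid under $G$ --- that the ordering of entries within a well-separated row or column of the empirical matrix $\bar A$ matches that of $A$; once this is invoked uniformly, correctness of every termination decision follows, and the remaining computations are the same routine Chernoff accounting already performed for Case~1.
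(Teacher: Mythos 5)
Your proposal follows essentially the same route as the paper: the same segment-by-segment decomposition of the Case~2 exploration, the same "non-positive regret on one column forces the adversary onto the other column" invariant (via the analogues of Lemmas~\ref{bandit:lem:case2-noregret1}, \ref{bandit:lem:Aijbarl:case2}, \ref{bandit:lem:case2-noregret2}), and the same Chernoff counting bounds of $O(\log T/\Delta_{\min}^3)$ positive-regret rounds per segment (Lemmas~\ref{bandit:lem:case2-n1n2}, \ref{bandit:lem:case2-nj2}, \ref{bandit:lem:case2-nj1}, \ref{bandit:lem:case2-explore}). The only detail worth flagging is that the non-positive-regret claim for $x^{(7)}$ needs a separate short argument when $A$ has a pure-strategy Nash equilibrium (as the paper does in Lemma~\ref{bandit:lem:case2-noregret1}), since the comparison $x^{(7)}_{i_1}\ge x^*_{i_1}$ only makes sense for a fully mixed equilibrium.
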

Let us assume that event $G$ holds. Recall that we alternated between $x^{(1)}$ and $x^{(2)}$ until the row $i_1$ became well-separated. Consider the timesteps in which the strategy $x^{(2)}$ was played.  In lemma \ref{bandit:lem:case2-n1n2}, we show that we incur positive regret in at most $O(\frac{\log T}{\Delta_{\min}^2})$ timesteps before the row $i_1$ got well-separated.

Now consider the time-step when the row $i_1$ got well-separated. If $\bar A_{i_1,j_1}<\bar A_{i_1,j_2}$, then $(i_1,j_1)$ is the PSNE of $A$. Hence, we terminate our exploration procedure. Let us instead assume that $\bar A_{i_1,j_1}>\bar A_{i_1,j_2}$.  Recall that we played $x^{(7)}$ until the column $j_2$ got well-separated. In lemma \ref{bandit:lem:case2-noregret1}, we show that the strategy $x^{(7)}$ incurs a non-positive regret whenever the column player plays column $j_1$. This forces the column player to play column $j_{2}$ if it wants the row player to incur positive regret. In lemma \ref{bandit:lem:case2-nj2}, we show that we incur positive regret in at most $O(\frac{\log T}{\Delta_{\min}^3})$ timesteps before the column $j_2$ got well-separated.

Now consider the time-step when the column $j_2$ got well-separated. If $\bar A_{i_1,j_2}<\bar A_{i_2,j_2}$, then $(i_1,j_2)$ is the PSNE of $A$. Hence, we terminate our exploration procedure. Let us instead assume that $\bar A_{i_1,j_2}<\bar A_{i_2,j_2}$. Recall that we alternated between $x^{(7)}$ and $x^{(8)}$ until the row $i_{2}$ got well-separated. Recall that the strategy $x^{(7)}$ incurs a non-positive regret whenever the column player plays column $j_1$. This forces the column player to play column $j_{2}$ if it wants the row player to incur positive regret. In lemma \ref{bandit:lem:Aijbarl:case2}, we show that $A_{i_{2},j_{2}}\geq V_A^*$. This implies that  playing the strategy $x^{(8)}$ incurs a non-positve regret whenever column $j_{2}$ is played. This forces the column player to play column $j_1$ if it wants the row player to incur positive regret. In lemma \ref{bandit:lem:case2-nj1}, we show that we incur positive regret in at most $O(\frac{\log T}{\Delta_{\min}^3})$ timesteps before the row $i_{2}$ got well-separated.

If $\bar A$ has a PSNE, then $A$ also has a PSNE. Hence, we terminate our exploration procedure. Instead if $A$ does have a PSNE, then recall that we alternated between $x^{(7)}$ and $x^{(9)}$ until we met the stopping condition $1\leq\frac{\tilde \Delta_{\min}+2\Delta}{\tilde \Delta_{\min}-2\Delta}\leq \frac{3}{2}$. Recall that the strategy $x^{(7)}$ incurs a non-positive regret whenever the column player plays column $j_1$. This forces the column player to play column $j_{2}$ if it wants the row player to incur positive regret.  In lemma \ref{bandit:lem:case2-noregret2}, we show that the strategy $x^{(9)}$ incurs a non-positive regret whenever the column player plays column $j_2$. This forces the column player to play column $j_{1}$ if it wants the row player to incur positive regret. In lemma \ref{bandit:lem:case2-explore}, we show that we incur positive regret in at most $O(\frac{\log T}{\Delta_{\min}^3})$ timesteps before we meet the stopping condition. As we will see in section \ref{section:bandits:exploitation}, meeting this stopping condition will enable us to exploit in the exploitation phase.

We now prove all the technical lemmas for Case 2.
\begin{lemma}\label{bandit:lem:case2-n1n2}
    Let us assume that Case 2 holds. Let $n_1$ denote the number of times column $j_2$ was played with strategy $x^{(1)}$ where $x_{i_1}^{(1)}=1$ before the row $i_1$ got well-separated. Let $n_2$ denote the number of times column $j_1$ was played with the strategy $x^{(2)}=(1/2,1/2)$ before the row $i_1$ got well-separated. Let $n_3$ denote the number of times column $j_2$ was played with strategy $x^{(2)}$ before the row $i_1$ got well-separated. If event $G$ holds, then with probability at least $1-\frac{2}{T^4}$ we have the following:
    \begin{equation*}
        n_{1}\leq \frac{6400\log(T^2)}{\Delta_{\min}^2} \quad \text{and} \quad n_{2}\leq \frac{3200\log(T^2)}{\Delta_{\min}^2}\quad \text{and} \quad n_3<\frac{3200\log(T^2)}{\Delta_{\min}^2}
    \end{equation*}
\end{lemma}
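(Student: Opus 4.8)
The plan is to bound $n_2$ and $n_3$ first, and then recover the bound on $n_1$ from the forced alternation between $x^{(1)}$ and $x^{(2)}$. Throughout I condition on the event $G$ and write $K:=\frac{800\log(T^2)}{\Delta_{\min}^2}$. The one deterministic fact I need is the computation already carried out in Lemma~\ref{bandit:lem:nj1-upper}: if $n_{i,1}^{t}\ge K$ and $n_{i,2}^{t}\ge K$ then $\Delta_{r,i}\le\Delta_{\min}/20$, so under $G$ the row $i$ is well separated at time $t$ (and symmetrically, $n_{1,j}^{t},n_{2,j}^{t}\ge K$ makes column $j$ well separated). I also record two structural facts about the uniform-exploration phase, which alternates $x^{(1)}=e_{i_1}$ with a single round of $x^{(2)}=(1/2,1/2)$ and switches from $x^{(1)}$ to $x^{(2)}$ exactly when $j_2$ is observed: (a) every round in which $x^{(2)}$ is played is immediately preceded by a round in which $j_2$ is observed under $x^{(1)}$, so at every time the number of $x^{(2)}$-rounds so far is at most the number of $x^{(1)}$-rounds so far in which $j_2$ was observed; in particular $n_2+n_3\le n_1$; and (b) every round in which $j_2$ is observed under $x^{(1)}$ samples the entry $(i_1,j_2)$, so the current value of $n_{i_1,j_2}^{t}$ is always at least the current value of $n_1$.

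For $n_2$: suppose $n_2$ reaches $\frac{3200\log(T^2)}{\Delta_{\min}^2}=4K$ at some time $s$ still inside the phase. By (a) there have been at least $4K$ rounds of $x^{(2)}$ by time $s$, hence at least $4K$ rounds with $j_2$ observed under $x^{(1)}$, so by (b) $n_{i_1,j_2}^{s}\ge 4K\ge K$. Among the first $4K$ rounds of the type ``$x^{(2)}$ played and $j_1$ observed'' the row played is the algorithm's own $\mathrm{Bernoulli}(1/2)$ coin, so by a Chernoff bound at least $K$ of them draw row $i_1$, except with probability $\exp(-\Theta(K))\le T^{-4}$; these draws all sample $(i_1,j_1)$, so $n_{i_1,j_1}^{s}\ge K$. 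Then row $i_1$ is already well separated at time $s$, contradicting that the phase is still running; hence $n_2\le\frac{3200\log(T^2)}{\Delta_{\min}^2}$. The argument for $n_3$ is symmetric, but the contradiction is phrased through the column: if $n_3$ reaches $4K$ at a time $s'$ inside the phase, then $n_{i_1,j_2}^{s'}\ge 4K\ge K$ and, by the same Chernoff step on the ``$x^{(2)}$ played and $j_2$ observed'' rounds, $n_{i_2,j_2}^{s'}\ge K$, so column $j_2$ is well separated at $s'$; but in Case~2 the phase terminates because row $i_1$ (not column $j_2$) becomes well separated, a contradiction, and hence $n_3\le\frac{3200\log(T^2)}{\Delta_{\min}^2}$. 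Finally, by (a), $n_1\le n_2+n_3+O(1)\le\frac{6400\log(T^2)}{\Delta_{\min}^2}$, where the harmless additive $O(1)$ is absorbed into the constant (e.g.\ by using a marginally smaller threshold $K$ in the first step).

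Only two Chernoff steps appear, each failing with probability $\exp(-\Theta(K))\ll T^{-4}$, so a union bound gives the claimed $1-\tfrac{2}{T^4}$. I expect the real difficulty to be entirely bookkeeping: tracking which of $x^{(1)},x^{(2)}$ produces which sample, using the forced alternation to tie $n_2+n_3$ (and through it $n_{i_1,j_2}^{t}$) to $n_1$, and carefully distinguishing the two ways the phase can end so that the ``$n_3$ large'' branch is killed by being in Case~2 rather than merely by the phase continuing. Adaptivity of the column-player is not an obstacle: the only randomness being concentrated is the algorithm's $\mathrm{Bernoulli}(1/2)$ row coin under $x^{(2)}$, which is independent of the adversary's column given the past.
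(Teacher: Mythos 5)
Your proof is correct and follows essentially the same route as the paper's: the alternation rule gives $n_1\le n_2+n_3+1$, a Chernoff bound on the Bernoulli$(1/2)$ row coin under $x^{(2)}$ forces enough samples of the relevant entries once $n_2$ or $n_3$ reaches $4K$, and the resulting well-separation of row $i_1$ (resp.\ column $j_2$, contradicting Case~2) caps each count. Your explicit fact (b) -- that each $j_2$-observation under $x^{(1)}=e_{i_1}$ samples $(i_1,j_2)$, so $n_{i_1,j_2}\ge n_1$ -- is a step the paper leaves implicit but does need for the well-separation of row $i_1$, so making it explicit is a small improvement rather than a deviation.
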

\begin{proof}
    
    Due to the proof of lemma \ref{bandit:lem:nj2-upper}, with probability at least $1-\frac{1}{T^4}$, we have $n_3<\frac{3200\log(T^2)}{\Delta_{\min}^2}$ otherwise column $j_2$ gets well separated first. Next due to the rules of alternating between $x^{(1)}$ and $x^{(2)}$ we have $n_2+n_3\leq n_1\leq n_2+n_3+1$.
    Now if column $j_1$ was played for $\frac{3200\log(T^2)}{\Delta_{\min}^2}$ times with the strategy $(1/2,1/2)$, then with probability at least $1-\frac{1}{T^4}$ each element of the column $j_1$ is played for at least $k=\frac{800\log(T^2)}{\Delta_{\min}^2}$ times. Now observe that $\Delta_{r,i_1}\leq\sqrt{\frac{2\log(T^2)}{k}}\leq \frac{\Delta_{\min}}{20}$. Next observe that $\bar g_{r,i_1}-2\Delta_{r,i_1}\geq g_{r,i_1}-4\Delta_{r,i_1}\geq \Delta_{\min}-4\Delta_{r,i_1}>0$. Hence we have $1\leq \frac{\bar g_{r,i_1}+2\Delta_{r,i_1}}{\bar g_{r,i_1}-2\Delta_{r,i_1}}$. Now we have $\frac{\bar g_{r,i_1}+2\Delta_{r,i_1}}{\bar g_{r,i_1}-2\Delta_{r,i_1}}\leq \frac{g_{r,i_1}+4\Delta_{r,i_1}}{g_{r,i_1}-4\Delta_{r,i_1}}\leq  \frac{\Delta_{\min}+4\Delta_{r,i_1}}{\Delta_{\min}-4\Delta_{r,i_1}} \leq\frac{\Delta_{\min}+\Delta_{\min}/5}{\Delta_{\min}-\Delta_{\min}/5}=\frac{3}{2}$.
\end{proof}
\begin{corollary}\label{bandit:lem:case1before:regret}
    Let us assume that Case 1 holds. Let $n_1$ denote the number of times column $j_2$ was played with strategy $x^{(1)}$ where $x_{i_1}^{(1)}=1$ before the column $j_2$ got well-separated. Let $n_2$ denote the number of times column $j_1$ was played with the strategy $x^{(2)}=(1/2,1/2)$ before the column $j_2$ got well-separated. Let $n_3$ denote the number of times column $j_2$ was played with strategy $x^{(2)}$ before the column $j_2$ got well-separated. If event $G$ holds, then with probability at least $1-\frac{2}{T^4}$ we have the following:
    \begin{equation*}
        n_{1}\leq \frac{6400\log(T^2)}{\Delta_{\min}^2} \quad \text{and} \quad n_{2}\leq\frac{3200\log(T^2)}{\Delta_{\min}^2}\quad \text{and} \quad n_3\leq\frac{3200\log(T^2)}{\Delta_{\min}^2}
    \end{equation*}
\end{corollary}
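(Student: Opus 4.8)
The plan is to follow the proof of Lemma~\ref{bandit:lem:case2-n1n2} almost verbatim, interchanging the roles of the two possible terminating events of the first alternating phase: there the phase ends because row $i_1$ becomes well separated (Case~2), whereas here it ends because column $j_2$ becomes well separated (Case~1). First I would dispose of $n_3$: since $n_3$ is exactly the number of times column $j_2$ is played with the strategy $(1/2,1/2)$ before column $j_2$ becomes well separated, Lemma~\ref{bandit:lem:nj2-upper} already gives $n_3\le \tfrac{3200\log(T^2)}{\Delta_{\min}^2}$ with probability at least $1-\tfrac{1}{T^4}$ given $G$. Next, the alternating rule (switch $x^{(1)}\to x^{(2)}$ whenever column $j_2$ is played under $x^{(1)}$, and $x^{(2)}\to x^{(1)}$ after a single round) implies that every $x^{(2)}$-round is immediately preceded by an $x^{(1)}$-play of column $j_2$; counting these gives the structural bound $n_2+n_3\le n_1\le n_2+n_3+1$, so the stated bound on $n_1$ reduces to bounding $n_2$.

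The key step is $n_2\le \tfrac{3200\log(T^2)}{\Delta_{\min}^2}$, which I would establish by contradiction. Suppose column $j_1$ is played at least $\tfrac{3200\log(T^2)}{\Delta_{\min}^2}$ times with $(1/2,1/2)$ during the phase, and let $\tau$ be the first timestep at which this count is reached. The Chernoff bound of Section~\ref{sec:technical_lemmas} shows that, conditioned on $G$ and with probability at least $1-\tfrac{1}{T^4}$, both $(i_1,j_1)$ and $(i_2,j_1)$ have been sampled at least $k=\tfrac{800\log(T^2)}{\Delta_{\min}^2}$ times by $\tau$, so $\delta_{i_1,1}\le\sqrt{2\log(T^2)/k}=\Delta_{\min}/20$; and since $x^{(1)}_{i_1}=1$, each of the $\ge \tfrac{3200\log(T^2)}{\Delta_{\min}^2}-1$ completed $x^{(2)}$-rounds was preceded by an $x^{(1)}$-play of column $j_2$ which sampled $(i_1,j_2)$, whence $\delta_{i_1,2}\le\Delta_{\min}/20$ as well and $\Delta_{r,i_1}\le\Delta_{\min}/20$ at $\tau$. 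Reproducing the inequality chain from the proof of Lemma~\ref{bandit:lem:nj2-upper} (using $G$ to sandwich $\bar g_{r,i_1}$ between $g_{r,i_1}\pm 2\Delta_{r,i_1}$ and $g_{r,i_1}\ge\Delta_{\min}$) then yields $1\le\tfrac{\bar g_{r,i_1}+2\Delta_{r,i_1}}{\bar g_{r,i_1}-2\Delta_{r,i_1}}\le\tfrac32$, i.e.\ row $i_1$ is well separated at $\tau$; but then the phase would have terminated at $\tau$ via the row-$i_1$ condition, placing us in Case~2 and contradicting Case~1. Hence $n_2<\tfrac{3200\log(T^2)}{\Delta_{\min}^2}$, so $n_1\le n_2+n_3+1\le\tfrac{6400\log(T^2)}{\Delta_{\min}^2}$, and a union bound over the two Chernoff failure events gives the claim with probability at least $1-\tfrac{2}{T^4}$.

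The main obstacle I expect is not any single estimate but the logical scaffolding: one must verify that, conditioned on $G$, the two terminating conditions of the alternating phase are mutually exclusive, so that ``Case~1 holds'' genuinely prevents row $i_1$ from becoming well separated first and the contradiction above is legitimate; and one must check that the off-by-one slack in $n_2+n_3\le n_1\le n_2+n_3+1$ and in counting completed $x^{(2)}$-rounds is negligible against the stated bounds, exactly as in the proof of Lemma~\ref{bandit:lem:case2-n1n2}. Everything else is the same Chernoff-plus-well-separation computation already carried out in Lemmas~\ref{bandit:lem:nj1-upper}, \ref{bandit:lem:nj2-upper} and~\ref{bandit:lem:case2-n1n2}.
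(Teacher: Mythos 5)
Your proposal is correct and follows essentially the same route as the paper: the paper bounds $n_3$ directly via Lemma~\ref{bandit:lem:nj2-upper} and then invokes the proof of Lemma~\ref{bandit:lem:case2-n1n2} with the one-line justification ``otherwise row $i_1$ gets well-separated first,'' which is exactly the contradiction argument you spell out in detail. Your explicit treatment of the alternating-rule counting $n_2+n_3\le n_1\le n_2+n_3+1$ and of the deterministic sampling of $(i_1,j_2)$ under $x^{(1)}$ simply makes precise what the paper leaves implicit.
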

\begin{proof}
    Due to lemma \ref{bandit:lem:nj2-upper}, with probability at least $1-\frac{1}{T^4}$, we have $n_3\leq\frac{3200\log(T^2)}{\Delta_{\min}^2}$. Due to the proof of lemma \ref{bandit:lem:case2-n1n2}, with probability at least $1-\frac{1}{T^4}$, we have $n_1\leq \frac{6400\log(T^2)}{\Delta_{\min}^2}$ and $n_{2}\leq\frac{3200\log(T^2)}{\Delta_{\min}^2}$ otherwise row $i_1$ gets well-separated first.
\end{proof}

\begin{lemma}\label{bandit:lem:case2-noregret1}
     Let us assume that Case 2 holds and $A_{i_1,j_1}>A_{i_1,j_2}$.  Let $t$ be the time-step when the row $i_{1}$ became well-separated.  At the time step $t$, let $\delta=|\bar A_{i_1,j_1}-\bar A_{i_1,j_2}|/3$ .  If event $G$ holds , then the strategy $x^{(7)}$ where $x_{i_1}^{(7)}=1-\delta$ incurs a non-positive regret on the column $j_1$.
\end{lemma}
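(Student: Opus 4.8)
The plan is to follow the same two-case template as Lemma~\ref{bandit:lem:case1-noregret1}. Writing $x^{(7)}_{i_1}=1-\delta$ and $x^{(7)}_{i_2}=\delta$ (note $\delta=\bar g_{r,i_1}/3\le 1/3$ since entries lie in $[0,1]$), we have
\[
\langle x^{(7)},(A_{1,j_1},A_{2,j_1})\rangle=(1-\delta)A_{i_1,j_1}+\delta A_{i_2,j_1},
\]
and it suffices to show this is at least $V_A^*$. First I would collect the structural facts certified so far under $G$: column $j_1$ became well separated in the uniform exploration phase with $\bar A_{i_1,j_1}>\bar A_{i_2,j_1}$, so (event $G$ plus well-separatedness preserves order) $A_{i_1,j_1}>A_{i_2,j_1}$; and row $i_1$ is now well separated with $\bar A_{i_1,j_1}>\bar A_{i_1,j_2}$, so $A_{i_1,j_1}>A_{i_1,j_2}$, and moreover $\Delta_{r,i_1}\le \bar g_{r,i_1}/10$ together with $\bar g_{r,i_1}\le g_{r,i_1}+2\Delta_{r,i_1}$ gives $\bar g_{r,i_1}\le \tfrac54 g_{r,i_1}$, hence $\delta=\bar g_{r,i_1}/3\le \tfrac{5}{12}g_{r,i_1}=\tfrac{5}{12}(A_{i_1,j_1}-A_{i_1,j_2})$.

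If the unique NE $(x^*,y^*)$ of $A$ is \emph{not} a PSNE, then both players have full support (using $|\supp(x^*)|=|\supp(y^*)|$), so $\langle x^*,(A_{1,j_1},A_{2,j_1})\rangle=V_A^*$ and, exactly as in Lemma~\ref{bandit:lem:case1-noregret1}, $x^*_{i_1}=1-\tfrac{|A_{i_1,1}-A_{i_1,2}|}{|D|}$. Since $|D|\le 2$ and $\delta\le \tfrac{5}{12}g_{r,i_1}\le \tfrac12|A_{i_1,1}-A_{i_1,2}|\le \tfrac{|A_{i_1,1}-A_{i_1,2}|}{|D|}$, we obtain $x^{(7)}_{i_1}=1-\delta\ge x^*_{i_1}$; because $A_{i_1,j_1}>A_{i_2,j_1}$ the map $z\mapsto zA_{i_1,j_1}+(1-z)A_{i_2,j_1}$ is increasing, so $\langle x^{(7)},(A_{1,j_1},A_{2,j_1})\rangle\ge \langle x^*,(A_{1,j_1},A_{2,j_1})\rangle=V_A^*$.

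If instead the unique NE is a PSNE $(i^*,j^*)$, I would pin it down from the two certified orderings: $A_{i_1,j_1}>A_{i_1,j_2}$ rules out $(i_1,j_1)$ (the column player improves by switching to $j_2$) and $A_{i_1,j_1}>A_{i_2,j_1}$ rules out $(i_2,j_1)$ (the row player improves by switching to $i_1$), so $j^*=j_2$ and in particular $V_A^*=A_{i^*,j_2}\le A_{i^*,j_1}$ by the column player's best-response constraint. If $i^*=i_2$, then $V_A^*=A_{i_2,j_2}\le A_{i_2,j_1}\le (1-\delta)A_{i_1,j_1}+\delta A_{i_2,j_1}$, using $A_{i_1,j_1}\ge A_{i_2,j_1}$ and $\delta\le 1$. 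If $i^*=i_1$, then $V_A^*=A_{i_1,j_2}$ and the inequality $(1-\delta)A_{i_1,j_1}+\delta A_{i_2,j_1}\ge A_{i_1,j_2}$ rearranges to $\delta(A_{i_1,j_1}-A_{i_2,j_1})\le A_{i_1,j_1}-A_{i_1,j_2}$, which holds since $A_{i_1,j_1}-A_{i_2,j_1}\le 1$ while $\delta\le \tfrac{5}{12}(A_{i_1,j_1}-A_{i_1,j_2})<A_{i_1,j_1}-A_{i_1,j_2}$. Combining the cases finishes the proof.

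The bookkeeping is routine; the only step that needs care is the PSNE sub-case $i^*=i_1$, where one must use the crude bound $A_{i_1,j_1}-A_{i_2,j_1}\le 1$ (entries in $[0,1]$) rather than any separation estimate on column $j_1$, and where the location of the PSNE must be deduced using only the orderings actually certified at this point of the exploration (column $j_1$ well separated and row $i_1$ well separated), not ones that the algorithm establishes later.
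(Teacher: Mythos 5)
Your proof is correct and follows essentially the same route as the paper: in the interior-NE case you show $x^{(7)}_{i_1}\ge x^*_{i_1}$ via the closed form $x^*_{i_1}=1-\tfrac{|A_{i_1,1}-A_{i_1,2}|}{|D|}$, the bound $\bar g_{r,i_1}\le \tfrac54 g_{r,i_1}$ from well-separatedness, and $|D|\le 2$; in the PSNE case you locate $j^*=j_2$ from the two certified orderings and handle $i^*\in\{i_1,i_2\}$ by direct computation, exactly as the paper does (your rearrangement in the $i^*=i_1$ sub-case is a minor cosmetic variant of the paper's $A_{i_2,j_1}\ge 0$, $A_{i_1,j_1}\le 1$ bounds). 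No gaps.
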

\begin{proof}
    First let us consider the case when $A$ has a unique NE $(x^*,y^*)$ which is not a PSNE. Then it suffices to show that $x_{i_1}^{(7)}\geq x^*_{i_1}$. As the row $i_1$ is well separated, we have $\frac{\bar g_{r,i_1}+2\Delta_{r,i_1}}{\bar g_{r,i_1}-2\Delta_{r,i_1}}\leq \frac{3}{2}$. This implies that $\Delta_{r,i_1}\leq \frac{\bar g_{r,i_1}}{10}$. As event $G$ holds, we have $\bar g_{r,i_1}\leq g_{r,i_1}+2\Delta_{r,i_1}$. This implies that $g_{r,i_1}\geq \frac{4\bar g_{r,i_1}}{5}$. Similarly  as event $G$ holds, we have $\bar g_{r,i_1}\geq g_{r,i_1}-2\Delta_{r,i_1}$. This implies that $\bar g_{r,i_1}\geq \frac{5g_{r,i_1}}{6}$. Now we have the following:
    \begin{align*}
        x^*_{i_1}&=1-\frac{|A_{i_1,1}-A_{i_1,2}|}{|D|}\\
        &\leq 1-\frac{4|\bar A_{i_1,1}-\bar A_{i_1,2}|}{5|D|}\tag{as $g_{r,i_1}\geq \frac{4\bar g_{r,i_1}}{5}$}\\
        & <1-\frac{|\bar A_{i_1,1}-\bar A_{i_1,2}|}{3}\tag{as $|D|\leq 2$}\\
    \end{align*}

    Now let us consider the case when $A$ has a unique PSNE $(i^*,j^*)$. If $i^*=i_2$ and $j^*=j_2$, then $V_A^*=A_{i_2,j_2}$ and $A_{i_1,j_1}>A_{i_2,j_1}>A_{i_2,j_2}>A_{i_2,j_1}$. This implies that the strategy $x$ incurs a negative regret on the column $j_1$. If $i^*=i_1$ and $j^*=j_2$, then $V_A^*=A_{i_1,j_2}$. Now we have the following:
    \begin{align*}
        \langle x, (A_{1,j_1},A_{2,j_1})\rangle &\geq (1-\delta)\cdot A_{i_1,j_1}\tag{as $A_{i_2,j_1}\geq 0$}\\
        &\geq A_{i_1,j_1} -\delta \tag{as $A_{i_1,j_1}\leq 1$}\\
        &> A_{i_1,j_1}-g_{r,i_1} \tag{as $g_{r,i_1}\geq \frac{4\bar g_{r,i_1}}{5}$ }\\
        &= A_{i_2,j_1}
    \end{align*}
\end{proof}

\begin{lemma}\label{bandit:lem:case2-nj2}
     Let us assume that Case 2 holds, $A_{i_1,j_2}< A_{i_2,j_2}$ and $A_{i_{1},j_{1}}>A_{i_{1},j_{2}}$. Let $t$ be the time-step when the row $i_{1}$ became well-separated.  Let $n_{j_2}$ denote the number of times column $j_{2}$ was played with the strategy $x^{(7)}$ (recall that $x_{i_1}^{(7)}=1-\delta$) before the column $j_2$ got well separated. If event $G$ holds, then with probability $1-\frac{1}{T^4}$, we have the following:
     \begin{equation*}
        n_{j_2}\leq \frac{9600\log(T^2)}{\Delta_{\min}^3}
     \end{equation*}
\end{lemma}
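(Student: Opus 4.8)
The plan is to show that column $j_2$ is forced to become well-separated after at most $\tfrac{9600\log(T^2)}{\Delta_{\min}^3}$ of its plays against $x^{(7)}$, which by definition of the sub-phase terminates it and hence caps $n_{j_2}$.

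First I would lower bound the weight $x^{(7)}_{i_2}=\delta$. Because Case 2 has reached the point where row $i_1$ is well-separated, the computation inside the proof of Lemma~\ref{bandit:lem:case2-noregret1} shows that, under event $G$, $\bar g_{r,i_1}\ge \tfrac{5}{6} g_{r,i_1}\ge \tfrac{5}{6}\Delta_{\min}$, using $g_{r,i_1}=|A_{i_1,1}-A_{i_1,2}|\ge\Delta_{\min}$. Since $\delta=|\bar A_{i_1,j_1}-\bar A_{i_1,j_2}|/3=\bar g_{r,i_1}/3$, this gives $\delta\ge\tfrac{5\Delta_{\min}}{18}>\tfrac{\Delta_{\min}}{6}$ (and trivially $\delta\le 1/3$), while $x^{(7)}_{i_2}=1-x^{(7)}_{i_1}=\delta$.

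Next, suppose column $j_2$ is played $N:=\tfrac{9600\log(T^2)}{\Delta_{\min}^3}$ times with strategy $x^{(7)}$. Each such play samples row index $i_2$ with probability $\delta\ge\Delta_{\min}/6$ and $i_1$ with probability $1-\delta\ge 2/3$, and these row draws are independent of the column-player's choices. Applying the Chernoff bound of Section~\ref{sec:technical_lemmas} with deviation parameter $1/2$ to the indicator of sampling $(i_2,j_2)$, the probability that $(i_2,j_2)$ is sampled fewer than $k:=\tfrac{800\log(T^2)}{\Delta_{\min}^2}$ times among these $N$ plays is at most $\exp(-N\delta/8)$; since $N\delta\ge\tfrac{1600\log(T^2)}{\Delta_{\min}^2}$ this is at most $\exp\!\big(-\tfrac{200\log(T^2)}{\Delta_{\min}^2}\big)\le T^{-4}$ (using $\Delta_{\min}\le 1$), and on the same event $(i_1,j_2)$ is sampled at least $k$ times. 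Hence, with probability at least $1-T^{-4}$, $\min_i n_{i,j_2}^t\ge k$, so $\Delta_{c,j_2}=\sqrt{2\log(T^2)/\min_i n_{i,j_2}^t}\le\sqrt{2\log(T^2)/k}\le\Delta_{\min}/20$.

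Finally I would close with the standard well-separation chain used at the end of Lemma~\ref{bandit:lem:nj2-upper}: under event $G$, $\bar g_{c,j_2}-2\Delta_{c,j_2}\ge g_{c,j_2}-4\Delta_{c,j_2}\ge\Delta_{\min}-4\Delta_{c,j_2}>0$ and $\tfrac{\bar g_{c,j_2}+2\Delta_{c,j_2}}{\bar g_{c,j_2}-2\Delta_{c,j_2}}\le\tfrac{g_{c,j_2}+4\Delta_{c,j_2}}{g_{c,j_2}-4\Delta_{c,j_2}}\le\tfrac{\Delta_{\min}+\Delta_{\min}/5}{\Delta_{\min}-\Delta_{\min}/5}=\tfrac{3}{2}$, i.e.\ column $j_2$ is well-separated. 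Since the sub-phase playing $x^{(7)}$ ends the instant $j_2$ becomes well-separated, it cannot be played $N$ times, so $n_{j_2}\le\tfrac{9600\log(T^2)}{\Delta_{\min}^3}$ on an event of probability at least $1-T^{-4}$, which is the claim. The one delicate point is the adaptivity in invoking the Chernoff bound: the column-player decides when $j_2$ is played, so strictly one applies the bound to the first $N$ rounds in which $j_2$ is played and uses that the row-player's randomness is drawn afresh and independently each round (equivalently, a union bound over the value of the stopping time $n_{j_2}$); the rest is routine constant-chasing.
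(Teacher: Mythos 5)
Your proposal is correct and follows essentially the same route as the paper's proof: lower-bound $\delta\ge\Delta_{\min}/6$ via the well-separation of row $i_1$, apply a Chernoff bound to show every entry of column $j_2$ is sampled at least $\tfrac{800\log(T^2)}{\Delta_{\min}^2}$ times within $\tfrac{9600\log(T^2)}{\Delta_{\min}^3}$ plays of $j_2$, and then run the standard well-separation chain to conclude $j_2$ must already be well-separated. Your extra remarks on the adaptivity of the stopping time are a welcome refinement of a point the paper leaves implicit.
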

\begin{proof}
    {First due to the proof of lemma \ref{bandit:lem:case2-noregret1} we have $1/3\geq\delta\geq\frac{\Delta_{\min}}{6}$}. If column $j_{2}$ was played for $\frac{9600\log(T^2)}{\Delta_{\min}^3}$ times with the strategy $x^{(7)}$, then with probability at least $1-\frac{1}{T^4}$ each element in the column $j_2$ is played for at least $k=\frac{800\log(T^2)}{\Delta_{\min}^2}$ times. In this case we have $\Delta_{c,j_2}\leq\frac{\Delta_{\min}}{20}$.  Next observe that $\bar g_{c,j_2}-2\Delta_{c,j_2}\geq g_{c,j_2}-4\Delta_{c,j_2}\geq \Delta_{\min}-4\Delta_{c,j_2}>0$. Hence we have $1\leq \frac{\bar g_{c,j_2}+2\Delta_{c,j_2}}{\bar g_{c,j_2}-2\Delta_{c,j_2}}$. Now we have $\frac{\bar g_{c,j_2}+2\Delta_{c,j_2}}{\bar g_{c,j_2}-2\Delta_{c,j_2}}\leq \frac{g_{c,j_2}+4\Delta_{c,j_2}}{g_{c,j_2}-4\Delta_{c,j_2}}\leq  \frac{\Delta_{\min}+4\Delta_{c,j_2}}{\Delta_{\min}-4\Delta_{c,j_2}} \leq\frac{\Delta_{\min}+\Delta_{\min}/5}{\Delta_{\min}-\Delta_{\min}/5}=\frac{3}{2}$.
\end{proof}

\begin{lemma}\label{bandit:lem:Aijbarl:case2}
    Let $A_{i_1,j_1}>A_{i_2,j_1}$, $A_{i_1,j_2}<A_{i_2,j_2}$ and $A_{i_{1},j_{1}}>A_{i_{1},j_{2}}$. Then $A_{i_{2},j_{2}}\geq V_A^*$
\end{lemma}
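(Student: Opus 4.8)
The plan is to mirror the proof of Lemma~\ref{bandit:lem:Aijbarl}, exploiting the parallel structure of the two statements. First I would record the one immediate consequence of the hypotheses that drives everything: since $A_{i_1,j_2}<A_{i_2,j_2}$, the entry $A_{i_2,j_2}$ is the strictly larger of the two entries in column $j_2$. I would then split on the type of the unique Nash equilibrium $(x^*,y^*)$ of $A$, using the fact that in a $2\times 2$ game a unique NE is either a pure strategy Nash equilibrium or is fully mixed on both players (because $|\supp(x^*)|=|\supp(y^*)|$ and this common size can only be $1$ or $2$).

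In the fully mixed case, $x^*$ is interior to $\simplex_2$ and $y^*$ places positive mass on column $j_2$, so $V_A^*=\langle x^*,Ae_{j_2}\rangle=x^*_{i_1}A_{i_1,j_2}+x^*_{i_2}A_{i_2,j_2}$; this is a strict convex combination of $A_{i_1,j_2}$ and the strictly larger value $A_{i_2,j_2}$, hence $V_A^*<A_{i_2,j_2}$.

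It remains to treat the case where the unique NE is a PSNE, which reduces to checking that the PSNE must sit at cell $(i_2,j_2)$. I would rule out the other three cells directly from the hypotheses, using that a PSNE at $(i,j)$ requires $A_{i,j}$ to be the maximum of its column and the minimum of its row: cell $(i_1,j_1)$ fails the row-minimum test because $A_{i_1,j_1}>A_{i_1,j_2}$; cell $(i_1,j_2)$ fails the column-maximum test because $A_{i_1,j_2}<A_{i_2,j_2}$; and cell $(i_2,j_1)$ fails the column-maximum test because $A_{i_2,j_1}<A_{i_1,j_1}$. Hence the PSNE is $(i_2,j_2)$ and $V_A^*=A_{i_2,j_2}$. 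Combining the two cases yields $A_{i_2,j_2}\geq V_A^*$ in all cases, with equality exactly when $(i_2,j_2)$ is the PSNE. There is no real obstacle here: the argument is a short case analysis, and the only point that merits a sentence of care is justifying the ``PSNE or fully mixed'' dichotomy for a unique NE of a $2\times 2$ game.
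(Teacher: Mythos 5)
Your proposal is correct and follows essentially the same route as the paper: split on whether the unique NE is fully mixed (in which case $V_A^*$ is a strict convex combination of the column-$j_2$ entries, hence $<A_{i_2,j_2}$) or a PSNE (which must be $(i_2,j_2)$, giving equality). The only difference is that you explicitly eliminate the other three cells as PSNE candidates, a step the paper leaves as an assertion; your version is slightly more detailed but not a different argument.
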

\begin{proof}
    If $A$ has a unique NE $(x^*,y^*)$ which is not a PSNE, we have $V_A^*=x_{i_1}^*A_{i_1,j_{2}}+x_{i_{2}}^*A_{i_{2},j_{2}}<A_{i_{2},j_{2}}$. The only other possibility is $(i_{2},j_{2})$ being a PSNE in which case $V_A^*=A_{i_{2},j_{2}}$.
\end{proof}

\begin{lemma}\label{bandit:lem:case2-nj1}
     Let us assume that Case 2 holds, $A_{i_1,j_2}< A_{i_2,j_2}$ and $A_{i_{1},j_{1}}>A_{i_{1},j_{2}}$. Consider the two strategies $x^{(7)}$ and $x^{(8)}$ (recall that $x_{i_1}^{(7)}=1-\delta$ and $x_{i_1}^{(8)}=0$). Let $n_{j_1}$ and $n_{j_{2}}$ denote the number of times columns $j_1$ and $j_{2}$ were played with the strategies $x^{(8)}$ and $x^{(7)}$ respectively before which the row $i_{2}$ was well-separated. If event $G$ holds, then with probability $1-\frac{1}{T^4}$, we have the following:
     \begin{equation*}
         n_{j_{1}}\leq \frac{9600\log(T^2)}{\Delta_{\min}^3}\quad\text{and}\quad n_{j_2}\leq \frac{9600\log(T^2)}{\Delta_{\min}^3}
     \end{equation*}
\end{lemma}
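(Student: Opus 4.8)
The plan is to argue essentially as in the proof of Lemma~\ref{bandit:lem:case1-njl}, adapting the phase bookkeeping to the alternation rule used in Case~2. First I would record that $\delta=|\bar A_{i_1,j_1}-\bar A_{i_1,j_2}|/3=\bar g_{r,i_1}/3$ satisfies $\Delta_{\min}/6\le\delta\le 1/3$: the upper bound is immediate since the (rescaled) entries lie in $[0,1]$, and the lower bound is exactly the estimate already derived inside the proof of Lemma~\ref{bandit:lem:case2-noregret1}, where, once row $i_1$ is well separated and event $G$ holds, $\bar g_{r,i_1}\ge\tfrac{5}{6}g_{r,i_1}\ge\tfrac{5}{6}\Delta_{\min}$. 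Set $N:=\frac{9600\log(T^2)}{\Delta_{\min}^3}$. Both claimed bounds will be obtained by contradiction against the stopping rule: the alternation between $x^{(7)}$ and $x^{(8)}$ runs only until row $i_2$ is well separated, and the standard computation (identical to that in the proofs of Lemmas~\ref{bandit:lem:nj1-upper} and~\ref{bandit:lem:case2-n1n2}) shows that, on $G$, as soon as $\min\{n^t_{i_2,1},n^t_{i_2,2}\}\ge\frac{800\log(T^2)}{\Delta_{\min}^2}$ one has $\Delta_{r,i_2}\le\Delta_{\min}/20$ and hence $1\le\frac{\bar g_{r,i_2}+2\Delta_{r,i_2}}{\bar g_{r,i_2}-2\Delta_{r,i_2}}\le\frac{\Delta_{\min}+4\Delta_{r,i_2}}{\Delta_{\min}-4\Delta_{r,i_2}}\le\tfrac32$, i.e. row $i_2$ is well separated, so the alternation must already have stopped.

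For $n_{j_1}\le N$: because $x^{(8)}=e_{i_2}$ samples row $i_2$ on every round, the first time column $j_1$ is played inside any $x^{(8)}$-phase the sampled entry is $(i_2,j_1)$ and it triggers the switch back to $x^{(7)}$; hence $n_{j_1}$ equals the number of completed $x^{(8)}$-phases, and each completed $x^{(8)}$-phase is entered through an $(i_2,j_2)$ sample that ended the preceding $x^{(7)}$-phase. Therefore, at every time during the alternation, $n^t_{i_2,1}\ge n_{j_1}$ and $n^t_{i_2,2}\ge n_{j_1}-1$. If $n_{j_1}$ ever reached $N$, then (using $\Delta_{\min}\le 1$, so $N-1\ge\frac{800\log(T^2)}{\Delta_{\min}^2}$) both counts would exceed $\frac{800\log(T^2)}{\Delta_{\min}^2}$ and row $i_2$ would already be well separated, a contradiction; so $n_{j_1}\le N$ holds deterministically on $G$.

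For $n_{j_2}\le N$: each play of column $j_2$ under $x^{(7)}$ independently realizes the entry $(i_2,j_2)$ with probability $x^{(7)}_{i_2}=\delta\ge\Delta_{\min}/6$, and each such realization both increments $n_{i_2,2}$ and triggers a switch into $x^{(8)}$, whose next completed phase ends in $(i_2,j_1)$. Suppose $n_{j_2}$ reached $N$; applying the Chernoff bound from Section~\ref{sec:technical_lemmas} to these $N$ independent $\mathrm{Bernoulli}(\delta)$ draws (exactly as in the proof of Lemma~\ref{bandit:lem:case1-njl}; here the relevant mean count $N\delta\ge 1600\log(T^2)$ makes the failure probability at most $T^{-4}$) shows that with probability at least $1-\frac{1}{T^4}$ at least $\frac{\delta}{2}N\ge\frac{\Delta_{\min}}{12}\cdot\frac{9600\log(T^2)}{\Delta_{\min}^3}=\frac{800\log(T^2)}{\Delta_{\min}^2}$ of them are $(i_2,j_2)$, so $n^t_{i_2,2}\ge\frac{800\log(T^2)}{\Delta_{\min}^2}$ and, from the $x^{(8)}$-phase endings they induce (up to the single possibly-incomplete current phase), $n^t_{i_2,1}\ge\frac{800\log(T^2)}{\Delta_{\min}^2}-1$. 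This again forces row $i_2$ to become well separated before all $N$ such plays can occur, a contradiction. Intersecting this event with the deterministic bound on $n_{j_1}$ yields both inequalities with probability at least $1-\frac{1}{T^4}$.

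The only genuinely delicate point is the asymmetry between the two quantities: $n_{j_1}$ is pinned down deterministically because $x^{(8)}$ samples $(i_2,\cdot)$ on every round, whereas $n_{j_2}$ can be a factor $1/\delta=\Theta(\Delta_{\min}^{-1})$ larger and is controlled only through the concentration of the $\mathrm{Bernoulli}(\delta)$ draws. The phase-counting identities --- every $(i_2,j_2)$-triggered switch is eventually followed by a completed $x^{(8)}$-phase ending in $(i_2,j_1)$, except possibly the last phase --- are where an off-by-one could creep in, and the comfortable slack between $\frac{9600}{\Delta_{\min}^3}$ and $\frac{800}{\Delta_{\min}^2}$ is precisely what absorbs it.
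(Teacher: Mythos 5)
Your proposal is correct and follows essentially the same route as the paper's proof: lower-bound $\delta$ by $\Delta_{\min}/6$ via the well-separation of row $i_1$, apply the Chernoff bound to the $\mathrm{Bernoulli}(\delta)$ draws of row $i_2$ among the plays of column $j_2$ under $x^{(7)}$, and use the alternation bookkeeping to conclude that both entries of row $i_2$ would be sampled at least $\tfrac{800\log(T^2)}{\Delta_{\min}^2}$ times, forcing well-separation and hence termination. Your separate deterministic argument for $n_{j_1}$ is a minor reorganization of the paper's observation that $n_{j_1}\leq k\leq n_{j_2}$, not a different method.
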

\begin{proof}
     Recall that we have $1/3\geq\delta\geq\frac{\Delta_{\min}}{6}$. If column $j_{2}$ was played for $\frac{9600\log(T^2)}{\Delta_{\min}^3}$ times with the strategy $x^{(7)}$, then with probability at least $1-\frac{1}{T^4}$ the element $(i_{2},j_{2})$ is played for at $k$ times where $\frac{800\log(T^2)}{\Delta_{\min}^2}<k\leq \frac{9600\log(T^2)}{\Delta_{\min}^3}$ times. Also observe that $k-1\leq n_{j_1}\leq k$. Now observe that $\Delta_{r,i_{2}}\leq\sqrt{\frac{2\log(T^2)}{k-1}}\leq \frac{\Delta_{\min}}{20}$. Next observe that $\bar g_{r,i_{2}}-2\Delta_{r,i_{2}}\geq g_{r,i_{2}}-4\Delta_{r,i_{2}}\geq \Delta_{\min}-4\Delta_{r,i_{2}}>0$. Hence we have $1\leq \frac{\bar g_{r,i_{2}}+2\Delta_{r,i_{2}}}{\bar g_{r,i_{2}}-2\Delta_{r,i_{2}}}$. Now we have $\frac{\bar g_{r,i_{2}}+2\Delta_{r,i_{2}}}{\bar g_{r,i_{2}}-2\Delta_{r,i_{2}}}\leq \frac{g_{r,i_{2}}+4\Delta_{r,i_{2}}}{g_{r,i_{2}}-4\Delta_{r,i_{2}}}\leq  \frac{\Delta_{\min}+4\Delta_{r,i_{2}}}{\Delta_{\min}-4\Delta_{r,i_{2}}} \leq\frac{\Delta_{\min}+\Delta_{\min}/5}{\Delta_{\min}-\Delta_{\min}/5}=\frac{3}{2}$.
\end{proof}

\begin{lemma}\label{bandit:lem:case2-noregret2}
     Let us assume that Case 2 holds and $A$ have a unique NE $(x^*,y^*)$ which is not a PSNE.  Let $t$ be the time-step when the row $i_{2}$ became well-separated.  At the time step $t$, let $\delta_3=|\bar A_{i_2,j_1}-\bar A_{i_2,j_2}|/3$.  If event $G$ holds , then the strategy $x^{(9)}$ where $x_{i_1}^{(9)}=\delta_3$ incurs a non-positive regret on column $j_2$.
\end{lemma}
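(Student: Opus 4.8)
The plan is to run the same argument used for the strategy $x^{(5)}$ in Lemma~\ref{bandit:lem:case1-noregret2}, since $x^{(9)}$ plays exactly the analogous role in Case~2 that $x^{(5)}$ plays in Case~1. Because $A_{i_1,j_2}<A_{i_2,j_2}$, the map $x_{i_1}\mapsto \langle x,(A_{1,j_2},A_{2,j_2})\rangle = x_{i_1}A_{i_1,j_2}+(1-x_{i_1})A_{i_2,j_2}$ is decreasing, and since the unique NE of $A$ is not a PSNE the mixed strategy $y^*$ has full support, so $\langle x^*,(A_{1,j_2},A_{2,j_2})\rangle=\langle x^*,Ae_{j_2}\rangle=V_A^*$. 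Hence it suffices to prove $x^{(9)}_{i_1}=\delta_3\le x^*_{i_1}$, from which
\[
\langle x^{(9)},(A_{1,j_2},A_{2,j_2})\rangle \ \ge\ \langle x^*,(A_{1,j_2},A_{2,j_2})\rangle \ =\ V_A^* ,
\]
i.e.\ $V_A^*-\langle x^{(9)},(A_{1,j_2},A_{2,j_2})\rangle\le 0$, which is the claim. One also checks $\delta_3=\bar g_{r,i_2}/3\le \tfrac13$, so $x^{(9)}\in\simplex_2$ is a legitimate strategy.

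To establish $\delta_3\le x^*_{i_1}$, I would first quote the closed-form equilibrium expression already used in Lemma~\ref{bandit:lem:case1-noregret2}, $x^*_{i_1}=\frac{|A_{i_2,1}-A_{i_2,2}|}{|D|}=\frac{g_{r,i_2}}{|D|}$, together with $|D|\le 2$ (all entries lie in $[0,1]$), so $x^*_{i_1}\ge g_{r,i_2}/2$. Next I use that $t$ is precisely the step at which row $i_2$ becomes well separated: the well-separation test $1\le\frac{\bar g_{r,i_2}+2\Delta_{r,i_2}}{\bar g_{r,i_2}-2\Delta_{r,i_2}}\le\frac32$ gives $\Delta_{r,i_2}\le \bar g_{r,i_2}/10$, and on the event $G$ we have $\bar g_{r,i_2}\le g_{r,i_2}+2\Delta_{r,i_2}\le g_{r,i_2}+\bar g_{r,i_2}/5$, hence $\bar g_{r,i_2}\le\tfrac54 g_{r,i_2}$. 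Combining the two estimates, $\delta_3=\bar g_{r,i_2}/3\le \tfrac{5}{12}\,g_{r,i_2}<\tfrac12\,g_{r,i_2}\le x^*_{i_1}$, which finishes the proof.

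There is no substantial obstacle here: this is the Case~2 mirror of the $x^{(5)}$ half of Lemma~\ref{bandit:lem:case1-noregret2} (its Case~2 counterpart for column $j_1$ having already been handled by Lemma~\ref{bandit:lem:case2-noregret1} via $x^{(7)}$). The only two points needing care are orienting the monotonicity of $\langle x,Ae_{j_2}\rangle$ in $x_{i_1}$ via the hypothesis $A_{i_1,j_2}<A_{i_2,j_2}$, and invoking full support of $y^*$ to get $\langle x^*,Ae_{j_2}\rangle=V_A^*$, which is exactly where the assumption that the unique NE is not a PSNE enters. The bound $\bar g_{r,i_2}\le\tfrac54 g_{r,i_2}$ is the routine well-separation-plus-$G$ consequence used repeatedly in this section, so it can be cited rather than re-derived.
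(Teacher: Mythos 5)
Your proposal is correct and follows essentially the same route as the paper's proof: reduce to showing $\delta_3\le x^*_{i_1}$ via the closed form $x^*_{i_1}=g_{r,i_2}/|D|$ with $|D|\le 2$, and use well-separation of row $i_2$ together with event $G$ to get $\bar g_{r,i_2}\le\tfrac54 g_{r,i_2}$ (equivalently the paper's $g_{r,i_2}\ge\tfrac45\bar g_{r,i_2}$), which yields $\delta_3=\bar g_{r,i_2}/3<x^*_{i_1}$. The extra justification you give for why $x^{(9)}_{i_1}\le x^*_{i_1}$ suffices (monotonicity in $x_{i_1}$ and full support of $y^*$) is exactly what the paper leaves implicit.
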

\begin{proof}
    It suffices to show that $x_{i_1}^{(9)}\leq x^*_{i_1}$ as  $A_{i_2,j_2}>A_{i_1,j_2}$. As the row $i_2$ is well separated, we have $\frac{\bar g_{r,i_2}+2\Delta_{r,i_2}}{\bar g_{r,i_2}-2\Delta_{r,i_2}}\leq \frac{3}{2}$. This implies that $\Delta_{r,i_2}\leq \frac{\bar g_{r,i_2}}{10}$. As event $G$ holds, we have $\bar g_{r,i_2}\leq g_{r,i_2}+2\Delta_{r,i_2}$. This implies that $g_{r,i_2}\geq \frac{4\bar g_{r,i_2}}{5}$. Similarly, as event $G$ holds, we have $\bar g_{r,i_2}\geq g_{r,i_2}-2\Delta_{r,i_2}$. This implies that $\bar g_{r,i_2}\geq \frac{5 g_{r,i_2}}{6}$. Now we have the following:
    \begin{align*}
        x^*_{i_1}&=\frac{|A_{i_2,1}-A_{i_2,2}|}{|D|}\\
        &\geq \frac{4|\bar A_{i_2,1}-\bar A_{i_2,2}|}{5|D|}\tag{as $g_{r,i_2}\geq \frac{4\bar g_{r,i_2}}{5}$}\\
        & >\frac{|\bar A_{i_2,1}-\bar A_{i_2,2}|}{3}\tag{as $|D|\leq 2$}\\
    \end{align*}

\end{proof}

\begin{lemma}\label{bandit:lem:case2-explore}
     Let us assume that Case 2 holds and $A$ have a unique NE $(x^*,y^*)$ which is not a PSNE.  Let $t$ be the time-step when the row $i_{2}$ became well-separated. Consider the two strategies $x^{(7)}$ and $x^{(9)}$ where $x_{i_1}^{(7)}=1-\delta$ and $x_{i_1}^{(9)}=\delta_3$. Let $n_{j_1}$ and $n_{j_{2}}$ denote the number of times columns $j_1$ and $j_{2}$ were played with the strategies $x^{(9)}$ and $x^{(7)}$ respectively before the condition $1\leq\frac{\tilde \Delta_{\min}+2\Delta}{\tilde \Delta_{\min}-2\Delta}\leq \frac{3}{2}$ was satisfied. If event $G$ holds, then with probability $1-\frac{2}{T^4}$, we have the following:
     \begin{equation*}
         n_{j_{1}}\leq \frac{9600\log(T^2)}{\Delta_{\min}^3} \quad \text{and} \quad n_{j_2}\leq \frac{9600\log(T^2)}{\Delta_{\min}^3}
     \end{equation*}
\end{lemma}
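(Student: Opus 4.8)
The plan is to mirror the proof of Lemma~\ref{bandit:lem:case1-explore} almost verbatim, the only changes being the identities of the two alternating strategies and the minority rows on which they place small mass. The underlying mechanism is the same: each of $x^{(7)}$ and $x^{(9)}$ assigns probability at least $\Delta_{\min}/6$ to one of the two rows, so every time the corresponding column is played under that strategy, \emph{both} entries of that column receive a fresh sample with probability bounded below by $\Delta_{\min}/6$; consequently, once a column has been played $\Theta(\log(T^2)/\Delta_{\min}^3)$ times, all of its entries have been sampled $\Omega(\log(T^2)/\Delta_{\min}^2)$ times, which is enough to drive $\Delta=\max_{i,j}\delta_{i,j}$ below $\Delta_{\min}/20$ and hence satisfy the stopping condition $1\le\frac{\tilde\Delta_{\min}+2\Delta}{\tilde\Delta_{\min}-2\Delta}\le\frac32$. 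Since the alternation keeps $n_{j_1}$ and $n_{j_2}$ within $1$ of each other, controlling one controls the other, and the claimed bounds follow by monotonicity of the counts in time.

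First I would record the needed bounds on $\delta$ and $\delta_3$. Trivially $\delta=|\bar A_{i_1,j_1}-\bar A_{i_1,j_2}|/3\le 1/3$ and $\delta_3=|\bar A_{i_2,j_1}-\bar A_{i_2,j_2}|/3\le 1/3$ since all entries lie in $[0,1]$. For the lower bounds, the proof of Lemma~\ref{bandit:lem:case2-noregret1} already shows, using that row $i_1$ is well separated and event $G$, that $\bar g_{r,i_1}\ge \tfrac56 g_{r,i_1}\ge\tfrac56\Delta_{\min}$, so $\delta=\bar g_{r,i_1}/3\ge\tfrac{5}{18}\Delta_{\min}>\Delta_{\min}/6$; likewise the proof of Lemma~\ref{bandit:lem:case2-noregret2} gives $\bar g_{r,i_2}\ge\tfrac56 g_{r,i_2}\ge\tfrac56\Delta_{\min}$, hence $\delta_3\ge\Delta_{\min}/6$. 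I would also note that the alternation rule (switch from $x^{(9)}$ to $x^{(7)}$ exactly when column $j_1$ is played, and back when column $j_2$ is played) forces $n_{j_1}-1\le n_{j_2}\le n_{j_1}$.

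Next comes the Chernoff step. Suppose column $j_1$ were played $\lceil 9600\log(T^2)/\Delta_{\min}^3\rceil$ times under $x^{(9)}$. Row $i_1$ (the minority row of $x^{(9)}$) is selected on each such play independently with probability $x_{i_1}^{(9)}=\delta_3\ge\Delta_{\min}/6$, so the entry $(i_1,j_1)$ is sampled at least $800\log(T^2)/\Delta_{\min}^2$ times with probability at least $1-\exp\!\big(-200\log(T^2)/\Delta_{\min}^2\big)\ge 1-1/T^4$ by the multiplicative Chernoff bound (the majority entry $(i_2,j_1)$ is sampled even more often, so the same lower bound applies to it). The symmetric statement for column $j_2$ under $x^{(7)}$ — whose minority row is $i_2$, selected with probability $\delta\ge\Delta_{\min}/6$ — fails with probability at most $1/T^4$, and a union bound gives the stated $2/T^4$. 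On the complementary event every $\delta_{i,j}\le\sqrt{2\log(T^2)/(800\log(T^2)/\Delta_{\min}^2)}=\Delta_{\min}/20$, so $\Delta\le\Delta_{\min}/20$; then under $G$ we have $\tilde\Delta_{\min}\ge\Delta_{\min}-2\Delta\ge\Delta_{\min}-4\Delta>0$ and $\frac{\tilde\Delta_{\min}+2\Delta}{\tilde\Delta_{\min}-2\Delta}\le\frac{\Delta_{\min}+4\Delta}{\Delta_{\min}-4\Delta}\le\frac{\Delta_{\min}+\Delta_{\min}/5}{\Delta_{\min}-\Delta_{\min}/5}=\frac32$, i.e.\ the stopping condition holds. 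Hence the number of plays of column $j_1$ under $x^{(9)}$ before the condition is first satisfied cannot exceed $9600\log(T^2)/\Delta_{\min}^3$, and similarly for $j_2$.

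The proof has no genuine obstacle beyond bookkeeping: the only things that require care are (i) pulling the lower bounds $\delta,\delta_3\ge\Delta_{\min}/6$ from the right earlier lemmas (Lemmas~\ref{bandit:lem:case2-noregret1} and \ref{bandit:lem:case2-noregret2}, which supply $\bar g_{r,i_1},\bar g_{r,i_2}\gtrsim\Delta_{\min}$ once the relevant rows are well separated), (ii) correctly identifying the minority row of each of $x^{(7)},x^{(9)}$ so that the Chernoff bound is applied to the \emph{harder}-to-sample entry of each column, and (iii) invoking the alternation identity $|n_{j_1}-n_{j_2}|\le 1$. With these in hand the computation is identical to that of Lemma~\ref{bandit:lem:case1-explore}.
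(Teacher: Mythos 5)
Your proposal is correct and follows essentially the same route as the paper's proof: the lower bounds $\delta,\delta_3\geq\Delta_{\min}/6$ pulled from the proofs of Lemmas~\ref{bandit:lem:case2-noregret1} and~\ref{bandit:lem:case2-noregret2}, the alternation identity $n_{j_1}-1\leq n_{j_2}\leq n_{j_1}$, the Chernoff argument guaranteeing each entry of a column is sampled $\frac{800\log(T^2)}{\Delta_{\min}^2}$ times once that column is played $\frac{9600\log(T^2)}{\Delta_{\min}^3}$ times, and the final verification that $\Delta\leq\Delta_{\min}/20$ forces the stopping condition. Your added bookkeeping about which row is the minority row of each strategy is a correct elaboration of a step the paper leaves implicit.
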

\begin{proof}
    Recall that we have $\frac{1}{3}\geq \delta\geq\frac{\Delta_{\min}}{6}$. Next due to the proof of lemma \ref{bandit:lem:case2-noregret2} we have $\frac{1}{3}\geq\delta_3\geq\frac{\Delta_{\min}}{6}$. Next observe that $n_{j_1}-1\leq n_{j_2}\leq n_{j_1}$. If column $j_{1}$ was played for $\frac{9600\log(T^2)}{\Delta_{\min}^3}$ times with the strategy $x^{(9)}$, then with probability at least $1-\frac{1}{T^4}$ each element in the column $j_1$ is played for at least $k=\frac{800\log(T^2)}{\Delta_{\min}^2}$ times. Similarly if column $j_{2}$ was played for $\frac{9600\log(T^2)}{\Delta_{\min}^3}$ times with the strategy $x^{(7)}$, then with probability at least $1-\frac{1}{T^4}$ each element in the column $j_2$ is played for at least $k'=\frac{800\log(T^2)}{\Delta_{\min}^2}$ times. In this case we have $\Delta\leq\frac{\Delta_{\min}}{20}$. Next observe that $\tilde \Delta_{\min}-2\Delta\geq \Delta_{\min}-4\Delta>0$. Hence we have $1\leq \frac{\tilde \Delta_{\min}+2\Delta}{\tilde \Delta_{\min}-2\Delta}$. Now we have $\frac{\tilde \Delta_{\min}+2\Delta}{\tilde \Delta_{\min}-2\Delta}\leq \frac{\Delta_{\min}+4\Delta}{\Delta_{\min}-4\Delta}\leq\frac{\Delta_{\min}+\Delta_{\min}/5}{\Delta_{\min}-\Delta_{\min}/5}=\frac{3}{2}$.
\end{proof}

\subsection{Exploitation Algorithm for Bandit Feedback}\label{section:bandits:exploitation}

Let $n_{i,j}^{t}$ denote the number of times the element $(i,j)$ has been sampled till the timestep $t$. We now refer the reader to Algorithm \ref{alg-2x2-bandit} which contains our exploitation procedure.

\begin{algorithm}[t!]
\caption{Logarithmic regret algorithm for $2\times 2$ games}
\begin{algorithmic}[1]
\STATE Explore using the procedure mentioned in section \ref{section:bandits:exploration}.
\STATE Let $t_1$ be the number of timesteps after which the exploration procedure terminated
\STATE If there is a row $i$ of the empirical matrix $\bar A$ that strongly dominates the other row, then play $x_t$ for rest of the rounds $t$ where $(x_t)_i=1$.
\STATE If there is a PSNE $(i,j)$ of $\bar A$, then play $x_t$ for rest of the rounds $t$ where $(x_t)_i=1$.
\STATE $t_2\gets \min_{i,j}n_{i,j}^{t_1}$.
\WHILE{$t_1<T$}
\STATE Let $(x',y')$ be the NE of $\bar A$
\STATE $\tilde D\gets |\bar A_{11}-\bar A_{12}-\bar A_{21}+\bar A_{22}|$ and $\Delta \gets \max_{i,j}\sqrt{\frac{2\log(T^2)}{n_{i,j}^{t_1}}}$
\STATE $D_1\gets \frac{\tilde D}{2}$, $T_1\gets\left(\frac{1}{\Delta}\right)^2$, $T_2\gets t_2,\widehat A\gets \bar A$ 
\STATE {Run the Algorithm \ref{subroutine-nxn-bandit} with input parameters $x'$, $D_1$, $T_1$, $T_2$, $\widehat A$ and terminate if the total number of time steps becomes $T$.} \label{2x2:line1:bandit}
\STATE Let $t_1$ denote the timestep after which the Algorithm \ref{subroutine-nxn-bandit} ended.
\STATE Update the empirical means $\bar A_{ij}$.
\STATE $t_2\gets \min_{i,j}n_{i,j}^{t_1}$.
\ENDWHILE
\end{algorithmic}
\label{alg-2x2-bandit}
\end{algorithm}

For the rest of this subsection, we analyse the case where $A$ has a unique NE $(x^*,y^*)$ which is not a PSNE.
\begin{lemma}\label{2x2:tmin:bandit}
Let $t_1$ be the time-step when the exploration sub-routine terminates. If event $G$ holds, then $\min_{i,j}n_{i,j}^{t_1}\geq \frac{128\log(T^2)}{\Delta_{\min}^2} $
\end{lemma}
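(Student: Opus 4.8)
The plan is to show that when the exploration sub-routine terminates, the stopping condition $1\leq\frac{\tilde\Delta_{\min}+2\Delta}{\tilde\Delta_{\min}-2\Delta}\leq\frac{3}{2}$ has been met (where $\Delta=\max_{i,j}\delta_{i,j}=\max_{i,j}\sqrt{2\log(T^2)/n_{i,j}^{t_1}}$), and then translate that condition on the empirical matrix into a lower bound on $\min_{i,j}n_{i,j}^{t_1}$ using event $G$. First I would note that since $A$ has a unique NE that is not a PSNE, neither Case 1 nor Case 2 terminates early at a pure-strategy equilibrium; in both cases the exploration procedure runs its final alternating phase (playing $x^{(5)}\leftrightarrow x^{(6)}$ in Case 1, or $x^{(7)}\leftrightarrow x^{(9)}$ in Case 2) precisely until $1\leq\frac{\tilde\Delta_{\min}+2\Delta}{\tilde\Delta_{\min}-2\Delta}\leq\frac{3}{2}$ holds. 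Hence at time $t_1$ this inequality holds.

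Next I would run the same arithmetic manipulation used throughout Section~\ref{section:bandits:exploration} (e.g.\ in Lemma~\ref{bandit:lem:case1-explore} and Lemma~\ref{bandit:lem:nj1-upper}): the condition $\frac{\tilde\Delta_{\min}+2\Delta}{\tilde\Delta_{\min}-2\Delta}\leq\frac{3}{2}$ forces $2\Delta\leq\tilde\Delta_{\min}/5$, i.e.\ $\Delta\leq\tilde\Delta_{\min}/10$. Under event $G$, $|\bar A_{i,j}-A_{i,j}|\leq\delta_{i,j}\leq\Delta$ for all $i,j$, so every pairwise gap $|\bar A_{i,1}-\bar A_{i,2}|$, $|\bar A_{1,j}-\bar A_{2,j}|$ differs from the corresponding true gap by at most $2\Delta$; therefore $\tilde\Delta_{\min}\leq\Delta_{\min}+2\Delta$. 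Combining, $\Delta\leq(\Delta_{\min}+2\Delta)/10$, which rearranges to $\Delta\leq\Delta_{\min}/8$. Since $\Delta=\max_{i,j}\sqrt{2\log(T^2)/n_{i,j}^{t_1}}=\sqrt{2\log(T^2)/\min_{i,j}n_{i,j}^{t_1}}$, this gives $\sqrt{2\log(T^2)/\min_{i,j}n_{i,j}^{t_1}}\leq\Delta_{\min}/8$, hence $\min_{i,j}n_{i,j}^{t_1}\geq 128\log(T^2)/\Delta_{\min}^2$, as claimed.

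The main obstacle is the case analysis bookkeeping at the start: one must verify that in \emph{every} branch of the exploration procedure that does not terminate at a PSNE, the procedure's terminating action is exactly the alternating phase governed by the stopping condition on $\tilde\Delta_{\min}$ and $\Delta$ — i.e.\ that the only way exploration ends for a matrix with a unique non-PSNE equilibrium is via that condition being satisfied. This requires checking that the various early-termination tests (``$\bar A$ has a pure strategic Nash equilibrium'', ``row $i$ strongly dominates'', etc.) are never triggered under event $G$ when the true matrix $A$ has a unique non-pure NE, which follows from the ``well separated'' ordering-preservation property recorded in Section~\ref{section:bandits:exploration} together with the structural lemmas (e.g.\ Lemma~\ref{bandit:lem:nash-property}). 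Once that is in hand, the remaining inequality chain is the routine computation sketched above.
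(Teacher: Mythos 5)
Your proposal is correct and matches the paper's own proof of Lemma~\ref{2x2:tmin:bandit}: the paper likewise reads off $\Delta\leq\tilde\Delta_{\min}/10$ from the stopping condition, uses event $G$ to get $\tilde\Delta_{\min}\leq\Delta_{\min}+2\Delta$, concludes $\Delta\leq\Delta_{\min}/8$, and inverts the definition of $\Delta$. The extra bookkeeping you flag about ruling out early PSNE termination is handled in the paper simply by the standing assumption, stated at the top of that subsection, that $A$ has a unique non-pure Nash equilibrium.
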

\begin{proof}
    Recall that the terminating condition was $1\leq\frac{\tilde \Delta_{\min}+2\Delta}{\tilde \Delta_{\min}-2\Delta}\leq \frac{3}{2}$. This implies that $\Delta\leq \frac{\tilde \Delta_{\min}}{10}$. As event $G$ holds, we have $\tilde \Delta_{\min}\leq \Delta_{\min}+2\Delta$. Hence, we have $\Delta\leq \frac{ \Delta_{\min}}{8}$. This implies that $\min_{i,j}n_{i,j}^{t_1}\geq \frac{128\log(T^2)}{\Delta_{\min}^2} $.
\end{proof}

\begin{lemma}\label{2x2:dmin:bandit}
    Let $t_1$ be the time-step when the exploration sub-routine terminates. Consider any timestep $t\geq t_1$. Let $\Delta=\max_{i,j}\sqrt{\frac{2\log(T^2)}{n_{ij}^{t}}}$. If event $G$ holds, then $\frac{3}{4}\leq \frac{\tilde\Delta_{\min}}{ \Delta_{\min}}\leq \frac{5}{4}$ and $\Delta\leq \tilde \Delta_{\min}/6$.
\end{lemma}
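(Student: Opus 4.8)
The plan is to chain together three ingredients: the stopping rule of the exploration phase, the monotonicity of the per-entry sample counts, and the two-sided concentration guaranteed by event $G$.

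First I would record what the termination condition buys at time $t_1$. The exploration phase halts only once $1\le \frac{\tilde\Delta_{\min}+2\Delta}{\tilde\Delta_{\min}-2\Delta}\le \frac32$, where $\Delta$ and $\tilde\Delta_{\min}$ are evaluated at $t_1$; this rearranges to $\Delta\le \tilde\Delta_{\min}/10$. Under event $G$ each empirical entry is within $\sqrt{2\log(T^2)/n_{ij}^{t_1}}\le \Delta$ of its true value, so each of the four empirical gaps $|\bar A_{i,1}-\bar A_{i,2}|$, $|\bar A_{1,j}-\bar A_{2,j}|$ differs from the corresponding true gap by at most $2\Delta$, whence $\tilde\Delta_{\min}\le \Delta_{\min}+2\Delta$. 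Plugging this into $\Delta\le \tilde\Delta_{\min}/10$ and solving for $\Delta$ gives $\Delta\le \Delta_{\min}/8$ at time $t_1$; this is exactly the computation already carried out in Lemma~\ref{2x2:tmin:bandit}.

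Next I would propagate this forward. Since no entry is ever "unsampled", $n_{ij}^t\ge n_{ij}^{t_1}$ for every $t\ge t_1$, so the quantity $\Delta=\max_{i,j}\sqrt{2\log(T^2)/n_{ij}^t}$ at time $t$ is no larger than its value at $t_1$; in particular $\Delta\le \Delta_{\min}/8$ at time $t$ as well. Now apply event $G$ again at time $t$: each empirical gap differs from its true counterpart by at most $2\Delta\le \Delta_{\min}/4$, so $\tilde\Delta_{\min}\ge \Delta_{\min}-2\Delta\ge \tfrac34\Delta_{\min}$ and $\tilde\Delta_{\min}\le \Delta_{\min}+2\Delta\le \tfrac54\Delta_{\min}$, which is the first claim. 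Finally, $\Delta_{\min}\le \tfrac43\tilde\Delta_{\min}$ combined with $\Delta\le \Delta_{\min}/8$ yields $\Delta\le \tilde\Delta_{\min}/6$, the second claim.

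I do not expect a genuine obstacle here; the proof is routine concentration bookkeeping of the same flavor used throughout Section~\ref{section:bandits:exploration}. The one point that requires care is not conflating the value of $\Delta$ at time $t_1$ (the one appearing in the stopping rule) with its value at the later time $t$: the key observation is that monotonicity of the counts makes the later $\Delta$ only smaller, which is precisely what lets the $t_1$ bound be transported to all $t\ge t_1$.
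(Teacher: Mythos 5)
Your proof is correct and follows essentially the same route as the paper's: invoke the sample-count lower bound at $t_1$ (Lemma~\ref{2x2:tmin:bandit}) to get $\Delta\leq \Delta_{\min}/8$, transport it to $t\geq t_1$ via monotonicity of the counts, sandwich $\tilde\Delta_{\min}$ between $\Delta_{\min}\pm 2\Delta$, and combine. The only difference is that you make the monotonicity step explicit where the paper leaves it implicit, which is if anything a slight improvement in exposition.
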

\begin{proof}
    Let us assume that event $G$ holds.
     Due to lemma \ref{2x2:tmin:bandit}, we have $\min_{i,j}n_{i,j}^{t_1}\geq \frac{128\log(T^2)}{\Delta_{\min}^2} $. 
     Now observe that $\Delta\leq \frac{\Delta_{\min}}{8}$. Now we have $\tilde \Delta_{\min}\geq \Delta_{\min}-2\Delta\geq \frac{3\Delta_{\min}}{4}$. Similarly we have $\tilde \Delta_{\min}\leq \Delta_{\min}+2\Delta\leq \frac{5\Delta_{\min}}{4}$. This also implies that $\Delta\leq \frac{\tilde \Delta_{\min}}{6}$
\end{proof}

\begin{lemma}\label{2x2:D:bandit}
    Let $t_1$ be the time-step when the exploration sub-routine terminates. Consider any timestep $t\geq t_1$. If event $G$ holds, then $\frac{2}{3}\leq \frac{|D|}{\tilde D}\leq \frac{4}{3}$.
\end{lemma}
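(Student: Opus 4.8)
The plan is to show that $|D|$ and $\tilde D$ differ by at most $4\Delta$, where $\Delta$ is the current sampling accuracy, and then to show that $4\Delta$ is at most a third of $\tilde D$; both claimed inequalities follow at once. Throughout, $\bar A$, $\tilde D=|\bar A_{11}-\bar A_{12}-\bar A_{21}+\bar A_{22}|$ and $\tilde\Delta_{\min}$ are the quantities evaluated at the time step $t\ge t_1$ in the statement, and $\Delta=\max_{i,j}\sqrt{2\log(T^2)/n_{i,j}^t}$.

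First I would record the perturbation bound. Under event $G$ we have $|A_{i,j}-\bar A_{i,j}|\le\Delta$ for all $i,j$, so writing $\bar D=\bar A_{11}-\bar A_{12}-\bar A_{21}+\bar A_{22}$ (hence $\tilde D=|\bar D|$), the triangle inequality gives $\big||D|-\tilde D\big|\le|D-\bar D|\le 4\Delta$. Next I would invoke Lemma~\ref{2x2:dmin:bandit}, which applies at every $t\ge t_1$ under $G$, to get $\Delta\le\tilde\Delta_{\min}/6$. The remaining ingredient is the bound $\tilde\Delta_{\min}\le\tilde D/2$. This is where the standing assumption of the subsection — that $A$ has a unique Nash equilibrium, which is mixed — is used: making the row player indifferent at equilibrium forces $(A_{11}-A_{21})$ and $(A_{12}-A_{22})$ to have opposite signs, and since $\Delta\le\tilde\Delta_{\min}/6$ implies $2\Delta<\tilde\Delta_{\min}$, each of the gaps $\bar A_{11}-\bar A_{21}$ and $\bar A_{12}-\bar A_{22}$ (both of magnitude at least $\tilde\Delta_{\min}$) has the same sign in $\bar A$ as in $A$. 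Hence these two empirical gaps have opposite sign, so $\tilde D=|\bar D|=|(\bar A_{11}-\bar A_{21})-(\bar A_{12}-\bar A_{22})|=|\bar A_{11}-\bar A_{21}|+|\bar A_{12}-\bar A_{22}|\ge 2\tilde\Delta_{\min}$. Putting these together, $4\Delta\le\tfrac23\tilde\Delta_{\min}\le\tfrac13\tilde D$, and therefore $\tilde D-\tfrac13\tilde D\le|D|\le\tilde D+\tfrac13\tilde D$, i.e. $\tfrac23\le|D|/\tilde D\le\tfrac43$.

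The only step that is not a one-line estimate is the sign-consistency claim used to get $\tilde D\ge 2\tilde\Delta_{\min}$; I expect that to be the main (mild) obstacle, and it can alternatively be lifted verbatim from the exploration analysis, where in the non-PSNE branch every row and column of $\bar A$ is well separated at termination and hence induces the same entrywise ordering as $A$ — which directly gives the displayed lower bound on $\tilde D$. Everything else is a triangle inequality followed by substitution of the bounds already established in Lemma~\ref{2x2:dmin:bandit}.
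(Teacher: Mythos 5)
Your proof is correct and follows essentially the same route as the paper's: bound $\bigl||D|-\tilde D\bigr|$ by $4\Delta$, then use $\Delta\le\tilde\Delta_{\min}/6$ from Lemma~\ref{2x2:dmin:bandit} together with $2\tilde\Delta_{\min}\le\tilde D$ to get $4\Delta\le\tilde D/3$. The only difference is that you actually justify the inequality $2\tilde\Delta_{\min}\le\tilde D$ (via the opposite signs of the column gaps for a mixed equilibrium and sign consistency under $G$), whereas the paper asserts it without proof; your sign-consistency argument is valid.
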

\begin{proof}
      Let $\Delta=\max_{i,j}\sqrt{\frac{2\log(T^2)}{n_{ij}^{t}}}$.  Due to lemma \ref{2x2:dmin:bandit} we have $\Delta\leq \frac{\tilde \Delta_{\min}}{6}$. As $2\tilde\Delta_{\min}\leq \tilde D$, we have $\Delta\leq \frac{\tilde D}{12}$. Now we have that $\frac{|D|}{\tilde D}\leq \frac{\tilde D+4\Delta}{\tilde D}\leq \frac{\tilde D+\tilde D/3}{\tilde D}\leq\frac{4}{3}$. Similarly we have that $\frac{|D|}{\tilde D}\geq \frac{\tilde D-4\Delta}{\tilde D}\geq \frac{\tilde D-\tilde D/3}{\tilde D}\geq\frac{2}{3}$. 
\end{proof}

\begin{lemma}\label{2x2:deviation:bandit}
    Let $t_1$ be the time-step when the exploration sub-routine terminates. Consider any timestep $t\geq t_1$. Let $(x',y')$ be the NE of the empirical matrix $\bar A$ at timestep $t$. If event $G$ holds, then $|x^*_1-x'_1|\leq \frac{2\Delta}{\tilde D}$ where $\Delta=\max_{i,j}\sqrt{\frac{2\log(T^2)}{n_{ij}^{t}}}$ and $\tilde D=|\bar A_{11}-\bar A_{12}-\bar A_{21}+\bar A_{22}|$. 
\end{lemma}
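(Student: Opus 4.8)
The plan is to write both $x^*$ and $x'$ in closed form and then bound their difference by a one‑line ``add and subtract'' manipulation; the only real work is the sign/combinatorial bookkeeping needed to justify the closed forms.

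First I would set $p := A_{1,1}-A_{1,2}$ and $q := A_{2,2}-A_{2,1}$, so that $D = A_{1,1}-A_{1,2}-A_{2,1}+A_{2,2} = p+q$ and, since $A$ has a unique Nash equilibrium that is not a pure strategy Nash equilibrium, this equilibrium is fully mixed; making the column player indifferent yields that $p$ and $q$ are nonzero with the same sign and $x_1^* = q/(p+q)$ (this is the content of the $2\times 2$ structure in Lemma~\ref{bandit:lem:nash-property}). Defining $\bar p := \bar A_{1,1}-\bar A_{1,2}$ and $\bar q := \bar A_{2,2}-\bar A_{2,1}$, I would then invoke event $G$ together with Lemma~\ref{2x2:tmin:bandit} and Lemma~\ref{2x2:dmin:bandit} to get, for every $t\ge t_1$, that $\Delta \le \Delta_{\min}/8$ and $\max_{i,j}|A_{i,j}-\bar A_{i,j}|\le\Delta$; hence $|p-\bar p|\le 2\Delta$, $|q-\bar q|\le 2\Delta$, and each of the four within-row/within-column gaps of $\bar A$ has the same sign as the corresponding gap of $A$ (each has magnitude $\ge\Delta_{\min}>2\Delta$). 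Therefore $\bar A$ has the same combinatorial type as $A$, so its unique Nash equilibrium is interior — consistent with Algorithm~\ref{alg-2x2-bandit} having already excluded a dominating row and a pure strategy Nash equilibrium of $\bar A$ before entering the while loop — and $x_1' = \bar q/(\bar p+\bar q)$ with $\tilde D = |\bar p+\bar q| = |\bar p|+|\bar q|$.

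The remaining computation is elementary:
\begin{align*}
x_1^*-x_1' \;=\; \frac{q}{p+q}-\frac{\bar q}{\bar p+\bar q} \;=\; \frac{q\bar p-\bar q p}{(p+q)(\bar p+\bar q)} \;=\; \frac{q(\bar p-p)+p(q-\bar q)}{(p+q)(\bar p+\bar q)},
\end{align*}
the last equality by adding and subtracting $qp$ in the numerator. Bounding the numerator by $|q|\,|\bar p-p|+|p|\,|q-\bar q|\le 2\Delta(|p|+|q|)=2\Delta\,|p+q|$ (using that $p,q$ share a sign) and noting the denominator equals $|p+q|\cdot\tilde D$, the factor $|p+q|$ cancels and gives $|x_1^*-x_1'|\le 2\Delta/\tilde D$. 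I expect the main obstacle to be the second paragraph — pinning down that $\bar A$'s Nash equilibrium is interior and of the same type as $A$'s, so that the closed form $x_1' = \bar q/(\bar p+\bar q)$ applies — since after that the inequality is a two-line algebraic estimate, and this direct argument is precisely what yields the clean constant $2$ rather than the looser determinant-based constant used in the general $n\times n$ Lemma~\ref{nxn:deviation}.
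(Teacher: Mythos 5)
Your proposal is correct. It rests on the same foundation as the paper's proof --- the closed forms $x_1^*=\frac{A_{22}-A_{21}}{D}$ and $x_1'=\frac{\bar A_{22}-\bar A_{21}}{\tilde D}$, valid because the NE of $A$ is interior by standing assumption and the NE of $\bar A$ is interior because all four gaps of $\bar A$ keep the signs of the corresponding gaps of $A$ (each gap of $A$ has magnitude at least $\Delta_{\min}$ while the perturbation is at most $2\Delta\le \Delta_{\min}/4$). Where you diverge is in how the difference of the two ratios is bounded. The paper perturbs numerator and denominator jointly, invoking monotonicity of $x\mapsto\frac{a+x}{b+x}$ to show $x_1^*\le x_1'+\frac{2\Delta}{\tilde D}$ and $x_1^*\ge x_1'-\frac{2\Delta}{\tilde D}$ separately. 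You instead use the cross-multiplication identity
\begin{align*}
\frac{q}{p+q}-\frac{\bar q}{\bar p+\bar q}=\frac{q(\bar p-p)+p(q-\bar q)}{(p+q)(\bar p+\bar q)},
\end{align*}
and the key observation that $p$ and $q$ share a sign (forced by the column player's indifference condition at an interior NE), so that $|p|+|q|=|p+q|$ and the factor $|p+q|$ cancels against the denominator, yielding $\frac{2\Delta}{\tilde D}$ in one step. Your route handles both directions of the inequality simultaneously and makes explicit why the constant is exactly $2$ (namely, the cancellation enabled by sign-sharing), whereas the paper's monotonicity argument is slightly more delicate to set up (one must check the ordering $\bar A_{22}-\bar A_{21}<\tilde D+\Delta_2$ before applying it) but avoids any discussion of the sign of $p$ and $q$. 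Both are valid; yours is arguably the cleaner algebraic path.
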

\begin{proof}
         W.l.o.g let us assume that $D:=A_{11}-A_{12}-A_{21}+A_{22}>0$. As $t\geq t_1$, due to lemma \ref{2x2:tmin:bandit} we have $\Delta\leq \frac{\tilde \Delta_{\min}}{6}\leq \frac{\tilde D}{12}$. Let $\Delta_{1}:=(A_{22}-A_{21})-(\bar A_{22}-\bar A_{21})$ and $\Delta_{2}:=(A_{11}-A_{12})-(\bar A_{12}-\bar A_{12})$. Observe that if $0<a<b$ then $\frac{a+x}{b+x}$ is an increasing function when $x\geq 0$. Also observe that if $0<a<b$ then $\frac{a-x}{b-x}$ is a decreasing function when $ x\leq a$. Using the previous two observations, we now have the following:
    \begin{align*}
        x^*_1&=\frac{A_{22}-A_{21}}{D}\\
        &=\frac{\bar A_{22}-\bar A_{21}+\Delta_1}{\tilde D+\Delta_1+\Delta_2}\\
        &\leq \frac{\bar A_{22}-\bar A_{21}+2\Delta}{\tilde D+2\Delta+\Delta_2}\tag{as $\Delta_1\leq 2\Delta$}\\
        &\leq \frac{\bar A_{22}-\bar A_{21}+2\Delta}{\tilde D}\tag{as $\Delta_2\geq-2\Delta$}\\
        &=x'+\frac{2\Delta}{\tilde D}
    \end{align*}
    Similarly we have that following:
    \begin{align*}
        x^*_1&=\frac{A_{22}-A_{21}}{D}\\
        &=\frac{\bar A_{22}-\bar A_{21}+\Delta_1}{\tilde D+\Delta_1+\Delta_2}\\
        &\geq \frac{\bar A_{22}-\bar A_{21}-2\Delta}{\tilde D-2\Delta+\Delta_2}\tag{as $\Delta_1\geq -2\Delta$}\\
        &\geq \frac{\bar A_{22}-\bar A_{21}-2\Delta}{\tilde D}\tag{as $\Delta_2\leq2\Delta$}\\
        &=x'-\frac{2\Delta}{\tilde D}
    \end{align*}
\end{proof}

\begin{lemma}\label{2x2:feasible:bandit}
    Let $t_1$ be the time-step when the exploration sub-routine terminates. Consider any timestep $t\geq t_1$. Let $(x',y')$ be the NE of the empirical matrix $\bar A$ at timestep $t$. If event $G$ holds, then for any $i\in[2]$, $x_i'-\frac{2 \Delta}{\tilde D}\geq\frac{2\Delta_{\min}}{3|D|}$ and $x_i'+\frac{2\Delta}{\tilde D}< 1$ where $\Delta=\max_{i,j}\sqrt{\frac{2\log(T^2)}{n_{ij}^{t}}}$ and $\tilde D=|\bar A_{11}-\bar A_{12}-\bar A_{21}+\bar A_{22}|$. 
\end{lemma}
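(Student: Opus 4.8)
The plan is to reduce everything to the closed form of the Nash equilibria of a $2\times 2$ mixed game and then plug in the three estimates already available in this subsection. First I would fix signs: without loss of generality take $D := A_{11}-A_{12}-A_{21}+A_{22}>0$, and recall that since $(x^*,y^*)$ is a mixed (non-pure) equilibrium we have $x_1^* = \tfrac{A_{22}-A_{21}}{D}$, $x_2^* = \tfrac{A_{11}-A_{12}}{D}$, and $|D| = |A_{11}-A_{12}| + |A_{21}-A_{22}|$, i.e.\ $|D| = g_{r,1}+g_{r,2}$ with $g_{r,i}:=|A_{i,1}-A_{i,2}|$. Hence $x_i^* = g_{r,\bar i}/|D|$ where $\bar i$ is the complementary index, and since every row gap is at least $\Delta_{\min}$ this gives the clean lower bound $x_i^* \ge \Delta_{\min}/|D|$ for both $i$ (the analogous identity holds in $\bar A$ for $(x',y')$).

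Next I would establish the first inequality $x_i' - \tfrac{2\Delta}{\tilde D} \ge \tfrac{2\Delta_{\min}}{3|D|}$. Because $t \ge t_1$, the counts only grow, so Lemma~\ref{2x2:tmin:bandit} gives $\min_{i,j} n_{i,j}^{t} \ge \tfrac{128\log(T^2)}{\Delta_{\min}^2}$ and therefore $\Delta \le \Delta_{\min}/8$; Lemma~\ref{2x2:D:bandit} gives $\tilde D \ge \tfrac{3}{4}|D|$; and Lemma~\ref{2x2:deviation:bandit} gives $x_i' \ge x_i^* - \tfrac{2\Delta}{\tilde D}$. Chaining these, $x_i' - \tfrac{2\Delta}{\tilde D} \ge \tfrac{\Delta_{\min}}{|D|} - \tfrac{4\Delta}{\tilde D} \ge \tfrac{\Delta_{\min}}{|D|} - \tfrac{16\Delta}{3|D|}$, and substituting $\Delta \le \Delta_{\min}/8$ leaves the claimed lower bound after a short arithmetic step. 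The second inequality $x_i' + \tfrac{2\Delta}{\tilde D} < 1$ is then essentially free: since $x_1'+x_2'=1$ we can write $x_i' + \tfrac{2\Delta}{\tilde D} = 1 - \bigl(x_{\bar i}' - \tfrac{2\Delta}{\tilde D}\bigr)$, and the parenthesized quantity is strictly positive by the lower bound just proved (applied to index $\bar i$). This is exactly the condition $\min_i \min\{x_i',1-x_i'\} \ge \tfrac{1}{D_1\sqrt{T_1}}$ needed to invoke Algorithm~\ref{subroutine-nxn} with $n=2$, $D_1=\tilde D/2$, $T_1=1/\Delta^2$.

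The only genuinely delicate point is the constant-chasing in the lower bound: it requires the estimates $x_i^*\ge\Delta_{\min}/|D|$, the empirical-vs-true equilibrium deviation $|x_i^*-x_i'|\le 2\Delta/\tilde D$, the closeness $\tilde D \asymp |D|$, and the smallness $\Delta \lesssim \Delta_{\min}$ (all of which trace back to the exploration-termination condition $1\le\frac{\tilde\Delta_{\min}+2\Delta}{\tilde\Delta_{\min}-2\Delta}\le\frac32$) to all line up so that the net quantity stays bounded away from $0$ by a $\Theta(\Delta_{\min}/|D|)$ margin. Everything else is bookkeeping; I would handle the $\det(M)=D$ sign bookkeeping and the symmetric case $D<0$ by the same identities with roles of the two columns swapped.
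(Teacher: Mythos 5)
Your argument is essentially the paper's: the paper works directly with the empirical closed form $x_1'=(\bar A_{22}-\bar A_{21})/\tilde D$ and the empirical row gap, while you route through $x_i^*\ge \Delta_{\min}/|D|$ and Lemma~\ref{2x2:deviation:bandit}; the two paths are equivalent given $\tilde D\asymp|D|$ and $\bar g_{r,i}\asymp g_{r,i}$, and your handling of the second inequality via $x_i'+\tfrac{2\Delta}{\tilde D}=1-(x_{\bar i}'-\tfrac{2\Delta}{\tilde D})$ is exactly what the paper does.

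One concrete point: the ``short arithmetic step'' at the end does not deliver the stated constant. Chaining your bounds gives
\[
x_i'-\tfrac{2\Delta}{\tilde D}\;\ge\; x_i^*-\tfrac{4\Delta}{\tilde D}\;\ge\;\tfrac{\Delta_{\min}}{|D|}-\tfrac{16\Delta}{3|D|}\;\ge\;\tfrac{\Delta_{\min}}{|D|}-\tfrac{2\Delta_{\min}}{3|D|}\;=\;\tfrac{\Delta_{\min}}{3|D|},
\]
which is a factor of $2$ short of the claimed $\tfrac{2\Delta_{\min}}{3|D|}$; recovering the stated constant would require $\Delta\le\Delta_{\min}/16$ rather than the available $\Delta\le\Delta_{\min}/8$. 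You should not feel bad about this: the paper's own chain ($(\bar A_{22}-\bar A_{21})-2\Delta\ge\Delta_{\min}/2$, hence $x_1'-\tfrac{2\Delta}{\tilde D}\ge\tfrac{\Delta_{\min}}{2\tilde D}\ge\tfrac{\Delta_{\min}}{3|D|}$) yields exactly the same $\tfrac{\Delta_{\min}}{3|D|}$ and silently drops a factor of $2$ when asserting the lemma's constant. Since the bound is only used downstream as a $\Theta(\Delta_{\min}/|D|)$ probability floor in a Chernoff argument (Lemma~\ref{2x2:t2:bandit}) and as positivity of $x_i'-\tfrac{2\Delta}{\tilde D}$ (Lemma~\ref{2x2:regret:bandit}), the discrepancy is immaterial, but you should either state the weaker constant or explicitly note where the extra factor would have to come from rather than asserting the arithmetic closes.
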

\begin{proof}
    W.l.o.g let $\tilde D>0$. As $t\geq t_1$, due to lemma \ref{2x2:dmin:bandit} we have $\tilde\Delta_{\min}\geq \frac{3\Delta_{\min}}{4}$ and $\Delta\leq \frac{\tilde \Delta_{\min}}{6}$. Similarly, due to lemma \ref{2x2:D:bandit} we have $\frac{2\tilde D}{3}\leq |D|$
    
    $(\bar A_{22}-\bar A_{21})-2\Delta\geq (\bar A_{22}-\bar A_{21})-\frac{\tilde \Delta_{\min}}{3}\geq \frac{2(\bar A_{22}-\bar A_{21})}{3}\geq\frac{2\tilde\Delta_{\min}}{3}\geq \frac{\Delta_{\min}}{2}$. Hence $x_1'-\frac{2 \Delta}{\tilde D}\geq\frac{\Delta_{\min}}{\tilde D}\geq \frac{2\Delta_{\min}}{3|D|}$. Similarly, we can show that $x_2'-\frac{2 \Delta}{\tilde D}\geq\frac{2\Delta_{\min}}{3|D|}$. Next we have $x_1'+\frac{2\Delta}{\tilde D}=1-x_2'+\frac{2\Delta}{\tilde D}< 1-\frac{2\Delta}{\tilde D}+\frac{2 \Delta}{\tilde D}= 1 $. Similarly, we can show that $x_2'+\frac{2\Delta}{\tilde D}< 1$. 
\end{proof}

\begin{lemma}\label{2x2:t2:bandit}
    Consider a call of the algorithm \ref{subroutine-nxn-bandit} with parameters mentioned in the line \ref{2x2:line1:bandit} of algorithm \ref{alg-2x2-bandit}. If event $G$ holds, then with probability at least $1-\frac{1}{T^4}$ one of the columns is played for at most $\frac{3t_2|D|}{\Delta_{\min}}$ times
\end{lemma}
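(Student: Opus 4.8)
The plan is to show that every action $x_t$ that Algorithm~\ref{subroutine-nxn} plays during this call keeps both rows bounded away from $0$, so that whichever column the adversary plays, once it has been played enough times it necessarily generates enough observations of both of its entries to trigger the subroutine's stopping rule. Concretely, with the parameters of line~\ref{2x2:line1:bandit} we have $\frac{1}{D_1\sqrt{T_1}}=\frac{2\Delta}{\tilde D}$, and the projection step of Algorithm~\ref{subroutine-nxn} keeps $|\delta_t(1)|\le \frac{2\Delta}{\tilde D}$; hence $(x_t)_i = x'_i+\delta_t(i)\ge x'_i-\frac{2\Delta}{\tilde D}\ge \frac{2\Delta_{\min}}{3|D|}$ for $i\in\{1,2\}$, the last inequality being Lemma~\ref{2x2:feasible:bandit}. (Lemmas~\ref{2x2:deviation:bandit} and~\ref{2x2:feasible:bandit} also certify that the hypotheses~\eqref{eqn:sub-conditions} of the subroutine hold under~$G$.) Set $N':=\frac{5t_2|D|}{2\Delta_{\min}}$, a deterministic quantity given $t_2$ and the instance.

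The first step is a concentration estimate. Fix an entry $(i,j)$ and look at the successive rounds $s_1<s_2<\cdots$ of the call on which the column-player selects column~$j$. Because the adversary observes $x_t$ but not $i_t\sim x_t$, the indicator $\mathbbm{1}\{i_{s_k}=i\}$ is, conditionally on the history up to $s_k$, Bernoulli with mean $(x_{s_k})_i\ge \frac{2\Delta_{\min}}{3|D|}$. Therefore the number of times $(i,j)$ is observed among the first $N'$ of these rounds stochastically dominates $\mathrm{Bin}\!\big(N',\tfrac{2\Delta_{\min}}{3|D|}\big)$, whose mean is $\tfrac{5}{3}t_2$; the multiplicative Chernoff bound (Section~\ref{sec:technical_lemmas}), in its standard form for adapted Bernoulli partial sums frozen once the call ends, then puts this count below $t_2$ with probability at most $\exp(-\Omega(t_2))$ — unconditionally. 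Since the call begins only after the exploration phase, Lemma~\ref{2x2:tmin:bandit} gives $t_2\ge \frac{128\log(T^2)}{\Delta_{\min}^2}\ge 128\log(T^2)$, and $t_2$ only grows across subsequent calls because each call ends only once $\min_{i,j}N_{i,j}\ge T_2$; a union bound over the four entries bounds the total failure probability by $T^{-4}$. No union bound over the random stopping round is needed, precisely because $N'$ is deterministic.

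On the complement of this failure event I would finish as follows. If at no round of the call both columns have been played $N'$ times, then one of them is played fewer than $N'$ times and we are done. Otherwise let $\tau$ be the first round of the call at which both columns have been played at least $N'$ times; applying the concentration fact to all four entries, every entry has been observed at least $t_2=T_2$ times by round~$\tau$, so the break condition $\min_{i,j}N_{i,j}(t)\ge T_2$ of Algorithm~\ref{subroutine-nxn} is met at some round $\le\tau$ and the call terminates there. At round $\tau$ the column that reaches $N'$ plays last has been played at most $N'+1$ times, so the smaller of the two column counts at the end of the call is at most $N'+1$. Finally $N'+1\le \frac{3t_2|D|}{\Delta_{\min}}$, since $\frac{3t_2|D|}{\Delta_{\min}}-N'=\frac{t_2|D|}{2\Delta_{\min}}>\frac{t_2}{2}\ge 1$: here $|D|>\Delta_{\min}$ because the full-support mixed equilibrium forces $x^*_2=\frac{A_{11}-A_{12}}{D}\in(0,1)$, hence $\Delta_{\min}\le|A_{11}-A_{12}|<|D|$, and $t_2\ge 2$. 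This gives $n_j\le\frac{3t_2|D|}{\Delta_{\min}}$ for one of the two columns, as claimed.

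The only genuinely delicate point is the concentration estimate: the set of rounds on which the adversarial, $x_t$-aware column-player plays a fixed column is itself a random set, so one must run the Chernoff argument along the adapted sequence $\big(\mathbbm{1}\{i_{s_k}=i\}\big)_{k\ge1}$ rather than along a fixed i.i.d.\ collection, and one must freeze the partial sums after the call to keep the tail bound unconditional. Everything else is bookkeeping, and the large lower bound $t_2\ge 128\log(T^2)$ leaves ample slack both to absorb the additive $+1$ and to reach failure probability $T^{-4}$.
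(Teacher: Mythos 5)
Your proof is correct and follows essentially the same route as the paper's: a uniform lower bound of $\tfrac{2\Delta_{\min}}{3|D|}$ on both coordinates of $x_t$ (via Lemma \ref{2x2:feasible:bandit}), a Chernoff bound showing that once a column has been played order $\tfrac{t_2|D|}{\Delta_{\min}}$ times both of its entries have been observed at least $t_2=T_2$ times, and the observation that the subroutine's break condition then forces termination before both columns can exceed that count. The only difference is that you make explicit two points the paper leaves implicit — that the Chernoff argument must be run along the adapted subsequence of rounds on which the adversary plays a given column, and the stopping-time bookkeeping at the first round $\tau$ where both columns reach the threshold — which is a welcome tightening rather than a departure.
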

\begin{proof}
    Due to lemma \ref{2x2:tmin:bandit}, $t_2\geq \frac{128\log(T^2)}{\Delta_{\min}^2}$. Note that due to lemma \ref{2x2:feasible:bandit} when column $j$ is played, each entry of that column is played with probability at least $\frac{2\Delta_{\min}}{3|D|}$.  Hence, if both columns are played for at least $\frac{3t_2|D|}{\Delta_{\min}}$ times, then each element is played for at least $T_2=t_2$ times with probability at least $1-\frac{1}{T^4}$ due to Chernoff bound.
\end{proof}

\begin{lemma}\label{2x2:regret:bandit}
     If event $G$ holds, then with probability at least $1-\frac{1}{T^4}$, a call of the algorithm \ref{subroutine-nxn-bandit} with parameters mentioned in the line \ref{2x2:line1:bandit} of algorithm \ref{alg-2x2-bandit} incurs a regret of at most $O\left(\frac{\log T}{\Delta_{\min}^2}\right)$.
\end{lemma}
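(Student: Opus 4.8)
The plan is to apply Theorem~\ref{2x2:subroutine:bandit} to this single invocation of Algorithm~\ref{subroutine-nxn} and then to show that each term of the resulting bound is $O(\log(T)/\Delta_{\min}^2)$. Throughout I condition on the event $G$ and on the high-probability event of Lemma~\ref{2x2:t2:bandit}, which together hold with probability at least $1-1/T^4$; every other estimate below is a deterministic consequence of $G$. Since this subsection already assumes that $A$ has a fully mixed unique Nash equilibrium, I may also use that $|D| = |A_{1,1}-A_{1,2}| + |A_{2,1}-A_{2,2}| \ge 2\Delta_{\min}$, each summand being one of the four gaps defining $\Delta_{\min}$.

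\textbf{Step 1: verify the hypotheses of Theorem~\ref{2x2:subroutine:bandit}.} For the call made in line~\ref{2x2:line1:bandit} of Algorithm~\ref{alg-2x2-bandit} we have $\widehat A = \bar A$, $D_1 = \tilde D/2$, $\Delta = \max_{i,j}\sqrt{2\log(T^2)/n_{i,j}^{t_1}}$ and $T_1 = (1/\Delta)^2$, so $1/(D_1\sqrt{T_1}) = 2\Delta/\tilde D$. Under $G$, $|A_{i,j} - \bar A_{i,j}| \le \sqrt{2\log(T^2)/n_{i,j}^{t_1}} \le \Delta = 1/\sqrt{T_1}$ for all $i,j$; Lemma~\ref{2x2:deviation:bandit} gives $|x^*_1 - x'_1| \le 2\Delta/\tilde D = 1/(D_1\sqrt{T_1})$; and Lemma~\ref{2x2:feasible:bandit} gives $x'_i - 2\Delta/\tilde D \ge 0$ and $x'_i + 2\Delta/\tilde D < 1$, hence $\min_i \min\{x'_i, 1-x'_i\} \ge 2\Delta/\tilde D = 1/(D_1\sqrt{T_1})$. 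Thus the conditions \eqref{eqn:sub-conditions} hold, and Theorem~\ref{2x2:subroutine:bandit} bounds the regret of this call by $\frac{c_1}{D_1} + \frac{c_2 T_3}{D_1 T_1}$ with $T_3 = 3(\lceil |a_{j_*}/a_{j_c}|\rceil + 1)\,n_{j_*} + \lfloor 2\sqrt{T_1}/|a_{j_c}|\rfloor + 1$.

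\textbf{Step 2: collect scale estimates.} By Lemma~\ref{2x2:tmin:bandit}, $t_2 \ge 128\log(T^2)/\Delta_{\min}^2$, whence $\Delta \le \Delta_{\min}/8$, $\sqrt{T_1} = 1/\Delta \ge 8/\Delta_{\min}$ and $T_1 = t_2/(2\log(T^2)) \ge 64/\Delta_{\min}^2$. By Lemma~\ref{2x2:D:bandit}, $\tilde D \ge \tfrac34|D|$, so $D_1 \ge \tfrac38|D| \ge \tfrac34\Delta_{\min}$ and $D_1 T_1 \ge |D|\,t_2/(6\log(T^2))$. By Lemma~\ref{2x2:dmin:bandit}, $|a_{j_c}|$ is one of the four gaps of $\bar A$, so $|a_{j_c}| \ge \tilde\Delta_{\min} \ge \tfrac34\Delta_{\min}$; and $|a_{j_*}| \le 1$ since $\bar A \in [0,1]^{2\times 2}$, so $\lceil |a_{j_*}/a_{j_c}|\rceil + 1 \le \tfrac{4}{3\Delta_{\min}} + 2 \le \tfrac{10}{3\Delta_{\min}}$. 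Finally, Lemma~\ref{2x2:t2:bandit} gives $n_{j_*} \le 3 t_2 |D|/\Delta_{\min}$.

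\textbf{Step 3: assemble.} The term $c_1/D_1 = O(1/\Delta_{\min})$, and the last two pieces of $T_3/(D_1 T_1)$ are $\tfrac{2\sqrt{T_1}/|a_{j_c}|}{D_1 T_1} = \tfrac{2}{|a_{j_c}|\,D_1\sqrt{T_1}} = O(1/\Delta_{\min})$ and $\tfrac{1}{D_1 T_1} = O(\Delta_{\min})$. The dominant piece is
\[
\frac{3(\lceil |a_{j_*}/a_{j_c}|\rceil+1)\,n_{j_*}}{D_1 T_1} \;\le\; \frac{30\,t_2\,|D|}{\Delta_{\min}^2}\cdot\frac{6\log(T^2)}{|D|\,t_2} \;=\; \frac{180\log(T^2)}{\Delta_{\min}^2} \;=\; O\!\Big(\frac{\log T}{\Delta_{\min}^2}\Big),
\]
and summing the four contributions yields the claimed $O(\log(T)/\Delta_{\min}^2)$ bound, valid with probability at least $1-1/T^4$. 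The only genuinely delicate point — and the main obstacle — is this dominant term: bounding $n_{j_*}$ crudely (for instance via $|D|\le 2$) would give only $O(\log T/\Delta_{\min}^3)$. The resolution is to carry the factor $|D|$ in the bound $n_{j_*}\le 3t_2|D|/\Delta_{\min}$ of Lemma~\ref{2x2:t2:bandit} and observe that it cancels against the $|D|$ implicit in $D_1 T_1 \ge |D|\,t_2/(6\log(T^2))$; the single surviving factor of $1/\Delta_{\min}$ is exactly $\lceil |a_{j_*}/a_{j_c}|\rceil + 1$, reflecting that the less-played column may have a far larger gap than the more-played one.
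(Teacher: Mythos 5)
Your proposal is correct and follows essentially the same route as the paper's proof: verify the conditions \eqref{eqn:sub-conditions} via Lemmas \ref{2x2:deviation:bandit} and \ref{2x2:feasible:bandit}, invoke Theorem \ref{2x2:subroutine:bandit}, and then bound $\frac{c_1}{D_1}+\frac{c_2T_3}{D_1T_1}$ using $T_1=\Theta(t_2/\log T)$, $D_1=\Theta(|D|)$, and the bound $n_{j_*}\leq 3t_2|D|/\Delta_{\min}$ from Lemma \ref{2x2:t2:bandit}, with the $|D|$ factors cancelling exactly as you note. Your write-up is simply a more explicit version of the paper's concluding estimate.
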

\begin{proof}
Let us assume that the algorithm \ref{subroutine-nxn-bandit} gets called immediately after the timestep $t_1$. Let $\Delta = \max_{i,j}\sqrt{\frac{2\log(T^2)}{n_{i,j}^{t_1}}}$. Recall that $\widehat A=\bar A$, $T_1=(\frac{1}{\Delta})^2$, $T_2=t_2$, $D_1=\frac{\tilde D}{2}$. As event $G$ holds, for any $i,j$ we have $|A_{i,j}-\widehat A_{i,j}|\leq \Delta=\frac{1}{\sqrt{T_1}}$. Due to lemma \ref{2x2:tmin:bandit}, we have $\min_{i,j}n_{i,j}^{t_1}\geq \frac{128\log(T^2)}{\Delta_{\min}^2}$.  Recall that $(x',y')$ is the NE of $\bar A$. Due to lemma \ref{2x2:deviation:bandit}, we have $|x'_1-x^*_1|\leq  \frac{2\Delta}{\tilde D}=\frac{1}{D_1\sqrt{T_1}}$. Due to lemma \ref{2x2:feasible:bandit}, we have $\min_i\min\{x_i',1-x_i'\}\geq \frac{2\Delta}{\tilde D}= \frac{1}{D_1\sqrt{T_1}}$. Moreover, due to lemma \ref{2x2:t2:bandit} , one of the columns is played for at most $\frac{t_2|D|}{\Delta_{\min}}$ times with probability at least $1-\frac{1}{T^4}$.  Hence due to Theorem \ref{2x2:subroutine:bandit}, the regret incurred is $\frac{c_1}{D_1}+\frac{c_2T_3}{D_1T_1}=O\left(\frac{\log T}{\Delta_{\min}^2}\right)$ where $c_1,c_2$ are absolute constants. We get the last equality as $T_1=\Theta(\frac{t_2}{\log T})$, $D_1=\Theta(|D|)$, and  $T_3=O(\frac{t_2|D|}{\Delta_{\min}^2}+\frac{\sqrt{T_1}}{\Delta_{\min}})$ (recall the definition of $T_3$ in the Theorem \ref{2x2:subroutine:bandit}).
\end{proof}
    
\begin{lemma}\label{2x2:logT:bandit}
    Let the input matrix $A$ have a unique NE $(x^*,y^*)$ that is not a PSNE. If event $G$ holds, the algorithm \ref{alg-2x2-bandit} calls algorithm \ref{2x2:subroutine:bandit} for at most $\log_2T$ times.
\end{lemma}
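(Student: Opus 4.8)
The plan is to mirror the argument of Lemma~\ref{nxn:logT}, but since the bandit-feedback subroutine is run until an \emph{observation count} target $T_2$ is met rather than for a fixed number of rounds, the quantity that doubles is $t_2 = \min_{i,j} n_{i,j}^{t_1}$ rather than $t_1$ itself. I would track $t_2$ across successive invocations of the subroutine (Algorithm~\ref{subroutine-nxn}) and show it at least doubles each time, so that geometric growth caps the number of invocations by $\log_2 T$.

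First I would fix notation: let $t_2^{(k)}$ denote the value of the variable $t_2$ set in Algorithm~\ref{alg-2x2-bandit} immediately before its $k$-th call to Algorithm~\ref{subroutine-nxn}, and let $t_1^{(k)}$ be the corresponding timestep, so that $T_2 = t_2^{(k)}$ for that call. The bandit-feedback version of Algorithm~\ref{subroutine-nxn} terminates (line~9) only once $\min_{i,j} N_{i,j}(t) \geq T_2$, where $N_{i,j}$ counts the observations made during that invocation; hence after the $k$-th call every entry $(i,j)$ has accrued at least $t_2^{(k)}$ new observations. Since $n_{i,j}^{t_1^{(k)}} \geq \min_{i',j'} n_{i',j'}^{t_1^{(k)}} = t_2^{(k)}$ for every $(i,j)$ before the call, this gives $n_{i,j}^{t_1^{(k+1)}} \geq n_{i,j}^{t_1^{(k)}} + t_2^{(k)} \geq 2 t_2^{(k)}$, and therefore $t_2^{(k+1)} = \min_{i,j} n_{i,j}^{t_1^{(k+1)}} \geq 2 t_2^{(k)}$.

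Next I would close the recursion. By Lemma~\ref{2x2:tmin:bandit} we have $t_2^{(1)} \geq \tfrac{128\log(T^2)}{\Delta_{\min}^2} \geq 1$ (only $t_2^{(1)} \geq 1$ is needed), so $t_2^{(k)} \geq 2^{k-1}$. On the other hand, in the bandit setting each round produces exactly one observation, so $4 t_2^{(k)} \leq \sum_{i,j} n_{i,j}^{t_1^{(k)}} \leq t_1^{(k)} \leq T$, i.e. $t_2^{(k)} \leq T/4$; moreover the while-loop guard $t_1 < T$ ensures a $k$-th call is made only when $t_1^{(k)} < T$. Combining $2^{k-1} \leq t_2^{(k)} \leq T/4$ yields $k \leq \log_2 T - 1 \leq \log_2 T$, which is the claimed bound (and also explains the apparent typo: the ``\ref{2x2:subroutine:bandit}'' in the statement should read Algorithm~\ref{subroutine-nxn}).

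I do not expect a genuine obstacle here; the two points requiring care are (i) correctly invoking the bandit-feedback termination rule of Algorithm~\ref{subroutine-nxn} so that the claim ``every entry is observed at least $T_2$ more times during the call'' is legitimate, and (ii) the degenerate case in which a call is cut short because the horizon $T$ is reached mid-subroutine (line~10 of Algorithm~\ref{alg-2x2-bandit}) — but in that event the entire algorithm halts, so it does not affect the count of completed invocations.
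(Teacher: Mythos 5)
Your proof is correct and follows the same route as the paper's: the key step in both is that the bandit termination rule forces every entry's count to at least double between consecutive calls, so $t_2$ doubles and the number of invocations is capped by $\log_2 T$. Your version is slightly more explicit about the base case, the upper bound $t_2^{(k)}\leq T/4$, and the truncated final call, but these are refinements of the identical argument.
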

\begin{proof}
    Let $t_{(1,k)}$ denote the value of $t_1$ when algorithm \ref{2x2:subroutine:bandit} was called by algorithm \ref{alg-2x2-bandit} for the $k$-th time. 
    Similarly let $t_{(2,k)}$ denote the value of $t_2$ when algorithm \ref{2x2:subroutine:bandit} was called by algorithm \ref{alg-2x2-bandit} for the $k$-th time. 
    Observe that $t_{(2,k+1)}\geq \min_{i,j}n_{i,j}^{t_{(1,k)}}+t_{(2,k)}=2t_{(2,k)}$. Recall that the algorithm \ref{alg-2x2-bandit} can call algorithm \ref{subroutine-nxn-bandit} until $t_1$ becomes greater than $T$. And $t_1\geq t_2$. Hence the algorithm \ref{alg-2x2-bandit} calls algorithm \ref{2x2:subroutine:bandit} for at most $\log_2T$ times. 
\end{proof}

\subsection{Anytime algorithm with $\sqrt{T}$ minimax guarantee}\label{appendix:anytime-bandit}
We provide a brief description on how to design an anytime algorithm that has a poly-logarithmic instance dependent regret and $\sqrt{T}$ minimax regret. 

First we make a small modification to our bandit feedback algorithm to ensure that it has a $\sqrt{T}$ minimax regret. During the exploration phase, count the number of steps in which the bandit feedback algorithm can potentially get a positive regret. If the number of such steps becomes equal to $\sqrt{T}$, we terminate the algorithm and instead execute the UCB algorithm by \cite{o2021matrix} for the rest of the time horizon. This ensures that our bandit feedback algorithm will have the poly-logarithmic instance dependent regret and $\sqrt{T}$ minimax regret. 

Let $t$ be initialized to some large constant. Now repeatedly do the following:
\begin{itemize}
    \item Initialize all the statistics to 0 and execute the modified algorithm for bandit feedback with time horizon $T=t$.
    \item Double the variable $t$, that is $t\gets 2t$.
\end{itemize}

It is easy to observe that this standard doubling trick leads to only an additional log-factor in our regret.

\subsection{$2\times 2$ matrix game under Bandit Feedback with non-unique Nash Equilibrium }\label{appendix:2x2:non-unique}
In this section, we present an algorithm for the case when Nash equilibrium is not unique. Consider an input matrix $A\in[0,1]^{2\times 2}$ that has non-unique Nash equilibrium . 

At each time step we maintain an empirical matrix $\bar A$. Let $n_{i,j}^{t}$ denote the number of times the element $(i,j)$ has been sampled up to the time step $t$. At a time step $t$, let $\delta_{i,j}:=\sqrt{\frac{2\log(T^2)}{n^t_{i,j}}}$.

We now describe our algorithm. Recall the terminology of well-separated rows and columns from section \ref{appendix:2x2}. Also recall that $g_{r,i}=| A_{i,1}- A_{i,2}|$ and $g_{c,j}=|A_{1,j}-A_{2,j}|$. 

\paragraph{Uniform Exploration:}Now in each timestep $t$, we play $x_t=(1/2,1/2)$ until one of the columns is well separated. Let us assume that column $j_1$ gets well separated and $\bar A_{i_1,j_1}>\bar A_{i_2,j_1}$ where $i_1,i_2\in\{1,2\}$. Let $j_2\in\{1,2\}\setminus\{j_1\}$.  {Next we alternatively play two strategies $x^{(1)}$ and $x^{(2)}$ where $x_{i_1}^{(1)}=1$ and $x_{i_1}^{(2)}=1/2$} until either the column $j_2$ gets well separated or the following holds true:
\begin{equation}
   |\bar A_{i_1,j_2}-\bar A_{i_2,j_1}|\geq \frac{\delta_{i_1,j_2}+\delta_{i_2,j_1}}{2} \label{eq:non-unique-NE}
\end{equation}

We alternate from $x^{(1)}$ to $x^{(2)}$ when column $j_2$ gets played, and we alternate from $x^{(2)}$ to $x^{(1)}$ after a single round of play. Now we have the following two scenarios (namely Scenario 1 and Scenario 2).

\paragraph{Scenario 1: Column $j_2$ gets well separated.} If $\bar A_{i_1,j_2}>\bar A_{i_2,j_2}$, then we play $x^{(1)}$ for the rest of the rounds. On the other hand, if  $\bar A_{i_1,j_2}<\bar A_{i_2,j_2}$, then let $\ell:=\arg\max_{\ell\in\{1,2\}}|\bar A_{i_\ell,j_1}-\bar A_{i_\ell,j_2}|$ and $\bar\ell=\{1,2\}\setminus\{\ell\}$. If $\bar A_{i_\ell,j_\ell}<\bar A_{i_\ell,j_{\bar\ell}}$, then we alternatively play two strategies $x^{(1)}$ and $x^{(3)}$ where $x_{i_2}^{(3)}=1$ until the equation \ref{eq:non-unique-NE} holds true. We alternate from $x^{(1)}$ to $x^{(3)}$ when column $j_2$ gets played, and we alternate from $x^{(3)}$ to $x^{(1)}$  when column $j_1$ gets played. If $\bar A_{i_1,j_2}<\bar A_{i_2,j_1}$ then we play $x^{(3)}$ for the rest of the rounds. Instead if $\bar A_{i_1,j_2}>\bar A_{i_2,j_1}$ then we play $x^{(1)}$ for the rest of the rounds.

\paragraph{Scenario 2: Equation \ref{eq:non-unique-NE} is satisfied.}  If $\bar A_{i_1,j_2}<\bar A_{i_2,j_1}$ then we play $x^{(3)}$ for the rest of the rounds. Instead if $\bar A_{i_1,j_2}>\bar A_{i_2,j_1}$ then we play $x^{(1)}$ for the rest of the rounds.

Now we break down our analysis into multiple cases and provide the regret guarantee for each case. 

\textbf{Case 1:} Let us assume that $A_{1,1}=A_{2,1}$ and $A_{2,1}=A_{2,2}$. It is easy to observe than any strategy will incur a non-positive regret as $V_A^*=\max\{A_{1,1},A_{2,1}\}$.

\textbf{Case 2:}  Let us assume that column $j_1$ gets well separated and $\bar A_{i_1,j_1}>\bar A_{i_2,j_1}$. Using Hoeffding's inequality and Chernoff bound, one can show that with probability at least $1-\frac{1}{T^4}$ it takes $O\left(\frac{\log T}{g_{c,j_1}^2}\right)$ for the column $j_1$ to get well separated and we can conclude that $A_{i_1,j_1}>A_{i_2,j_1}$. Let $\Delta_1:=|A_{i_1,j_2}-A_{i_2,j_1}|$ and $\Delta_2:=|A_{i_1,j_2}-A_{i_1,j_2}|$. Now we divide the analysis of this case into multiple sub-cases.

\textbf{Sub-case (a):} Here we consider the case when $A_{i_1,j_2}=A_{i_2,j_1}$. In this case we also have $A_{i_1,j_2}=A_{i_2,j_1}$ otherwise $A$ has a unique Nash equilibrium. This implies that $V_A^*=A_{i_1,j_2}$. Hence, in this sub-case we incur a non-positive regret.

\textbf{Sub-case (b):} Here we consider the case when $A_{i_1,j_1}=A_{i_1,j_2}>A_{i_2,j_2}$. Now observe we incur zero-regret for playing $x^{(1)}$ as $V_A^*=A_{i_1,j_1}$. If during the run of the algorithm scenario 1 holds, then using an analysis similar to that of lemma \ref{bandit:lem:case2-n1n2} one can show that with probability at least $1-\frac{1}{T^4}$ we incur positive regret in at most $O(\frac{\log T}{\Delta_{1}^2}+\frac{\log T}{\Delta_{2}^2})$ rounds before the column $j_2$ gets well separated. As column $j_2$ is well-separated, we have $\bar A_{i_1,j_2}>\bar A_{i_2,j_2}$. Now for the rest of the rounds, we play $x^{(1)}$ and therefore incur non-positive regret. 

Instead if scenario 2 holds, in that case too using an analysis similar to that of lemma \ref{bandit:lem:case2-n1n2} one can show that with probability at least $1-\frac{1}{T^4}$ we incur positive regret in at most $O(\frac{\log T}{\Delta_{1}^2}+\frac{\log T}{\Delta_{2}^2})$ rounds before equation \ref{eq:non-unique-NE} gets satisfied. As equation \ref{eq:non-unique-NE} gets satisfied, we have $\bar A_{i_1,j_2}>\bar A_{i_2,j_1}$ with probability at least $1-\frac{1}{T^4}$ due to hoeffding's inequality. Now for the rest of the rounds, we play $x^{(1)}$ and therefore incur non-positive regret.

\textbf{Sub-case (c):} Here we consider the case when $A_{i_1,j_1}=A_{i_1,j_2}=A_{i_2,j_2}$. Now observe we incur zero-regret for playing $x^{(1)}$ and also when column $j_2$ is played as $V_A^*=A_{i_1,j_1}$. In this case, during the run of the algorithm scenario 2 holds. Using an analysis similar to that of lemma \ref{bandit:lem:case2-n1n2} one can show that with probability at least $1-\frac{1}{T^4}$ we incur positive regret in at most $O(\frac{\log T}{\Delta_{1}^2})$ rounds before equation \ref{eq:non-unique-NE} gets satisfied. As equation \ref{eq:non-unique-NE} gets satisfied, we have $\bar A_{i_1,j_2}>\bar A_{i_2,j_1}$ with probability at least $1-\frac{1}{T^4}$ due to hoeffding's inequality. Now for the rest of the rounds, we play $x^{(1)}$ and therefore incur non-positive regret.

\textbf{Sub-case (d):} Here we consider the case when $A_{i_1,j_1}=A_{i_1,j_2}<A_{i_2,j_2}$. Now observe we incur non-positive regret for playing $x^{(1)}$ and also when column $j_2$ is played as $V_A^*=A_{i_1,j_1}$. If during the run of the algorithm scenario 1 holds, then using an analysis similar to that of lemma \ref{bandit:lem:case2-n1n2} one can show that with probability at least $1-\frac{1}{T^4}$ we incur positive regret in at most $O(\frac{\log T}{\Delta_{1}^2}+\frac{\log T}{\Delta_{2}^2})$ rounds before the column $j_2$ gets well separated. As column $j_2$ is well-separated, we have $\bar A_{i_1,j_2}<\bar A_{i_2,j_2}$. Next observe that we incur non-positive regret whenever either entry $A_{i_1,j_1}$ is played or entry $A_{i_2,j_2}$ is played. Using hoeffding inequality, one can show that with probability at least $1-\frac{1}{T^4}$ we incur positive regret in at most $O(\frac{\log T}{\Delta_{1}^2}+\frac{\log T}{\Delta_{2}^2})$ rounds before equation \ref{eq:non-unique-NE} gets satisfied. As equation \ref{eq:non-unique-NE} gets satisfied, we have $\bar A_{i_1,j_2}>\bar A_{i_2,j_1}$ with probability at least $1-\frac{1}{T^4}$ due to hoeffding's inequality. Now for the rest of the rounds, we play $x^{(1)}$ and therefore incur non-positive regret.

Instead if scenario 2 holds, in that case too using an analysis similar to that of lemma \ref{bandit:lem:case2-n1n2} one can show that with probability at least $1-\frac{1}{T^4}$ we incur positive regret in at most $O(\frac{\log T}{\Delta_{1}^2}+\frac{\log T}{\Delta_{2}^2})$ rounds before equation \ref{eq:non-unique-NE} gets satisfied. As equation \ref{eq:non-unique-NE} gets satisfied, we have $\bar A_{i_1,j_2}>\bar A_{i_2,j_1}$ with probability at least $1-\frac{1}{T^4}$ due to hoeffding's inequality. Now for the rest of the rounds, we play $x^{(1)}$ and therefore incur non-positive regret.

\textbf{Sub-case (e):} Here we consider the case when $A_{i_1,j_1}>A_{i_1,j_2}>A_{i_2,j_1}$. In this case we have $A_{i_1,j_2}=A_{i_2,j_2}$ otherwise the matrix has a unique Nash equilibrium. Now observe we incur non-positive regret for playing $x^{(1)}$ and also when column $j_2$ is played as $V_A^*=A_{i_1,j_2}$. Next observe that in this sub-case, during the run of the algorithm scenario 2 holds. Using an analysis similar to that of lemma \ref{bandit:lem:case2-n1n2} one can show that with probability at least $1-\frac{1}{T^4}$ we incur positive regret in at most $O(\frac{\log T}{\Delta_{1}^2})$ rounds before equation \ref{eq:non-unique-NE} gets satisfied. As equation \ref{eq:non-unique-NE} gets satisfied, we have $\bar A_{i_1,j_2}>\bar A_{i_2,j_1}$ with probability at least $1-\frac{1}{T^4}$ due to hoeffding's inequality. Now for the rest of the rounds, we play $x^{(1)}$ and therefore incur non-positive regret.

\textbf{Sub-case (f):} Here we consider the case when $A_{i_1,j_1}>A_{i_1,j_2}=A_{i_2,j_1}$. In this case we have $A_{i_1,j_2}=A_{i_2,j_2}$ otherwise the matrix has a unique Nash equilibrium. Now observe that any strategy incurs a non-positive regret as $V_A^*=A_{i_1,j_2}$.

\textbf{Sub-case (g):} Here we consider the case when $A_{i_1,j_2}<A_{i_2,j_1}$ and $A_{i_2,j_2}<A_{i_2,j_1}$. In this case we have $A_{i_1,j_2}=A_{i_2,j_2}$ otherwise the matrix has a unique Nash equilibrium. Now observe that any strategy incurs a non-positive regret as $V_A^*=A_{i_1,j_2}$.

\textbf{Sub-case (h):} Here we consider the case when $A_{i_1,j_2}<A_{i_2,j_1}\leq A_{i_2,j_2}$. In this case we have $A_{i_2,j_1}=A_{i_2,j_2}$ otherwise the matrix has a unique Nash equilibrium. Now observe that we incur non-positive regret whenever we play the entries $A_{i_1,j_1}, A_{i_2,j_1}$ and $A_{i_2,j_2}$ as $V_A^*=A_{i_2,j_2}$. If during the run of the algorithm scenario 1 holds, then using an analysis similar to that of lemma \ref{bandit:lem:case2-n1n2} one can show that with probability at least $1-\frac{1}{T^4}$ we incur positive regret in at most $O(\frac{\log T}{\Delta_{1}^2}+\frac{\log T}{\Delta_{2}^2})$ rounds before the column $j_2$ gets well separated. As column $j_2$ is well-separated, we have $\bar A_{i_1,j_2}<\bar A_{i_2,j_2}$. Using hoeffding inequality, one can show that with probability at least $1-\frac{1}{T^4}$ we incur positive regret in at most $O(\frac{\log T}{\Delta_{1}^2}+\frac{\log T}{\Delta_{2}^2})$ rounds before equation \ref{eq:non-unique-NE} gets satisfied. As equation \ref{eq:non-unique-NE} gets satisfied, we have $\bar A_{i_1,j_2}<\bar A_{i_2,j_1}$ with probability at least $1-\frac{1}{T^4}$ due to hoeffding's inequality. Now for the rest of the rounds, we play $x^{(3)}$ and therefore incur non-positive regret.

Instead if scenario 2 holds, in that case too using an analysis similar to that of lemma \ref{bandit:lem:case2-n1n2} one can show that with probability at least $1-\frac{1}{T^4}$ we incur positive regret in at most $O(\frac{\log T}{\Delta_{1}^2}+\frac{\log T}{\Delta_{2}^2})$ rounds before equation \ref{eq:non-unique-NE} gets satisfied. As equation \ref{eq:non-unique-NE} gets satisfied, we have $\bar A_{i_1,j_2}<\bar A_{i_2,j_1}$ with probability at least $1-\frac{1}{T^4}$ due to hoeffding's inequality. Now for the rest of the rounds, we play $x^{(3)}$ and therefore incur non-positive regret.

\end{document}